\documentclass{article}
\pdfoutput=1

\usepackage[utf8x]{inputenc}
\usepackage[T1]{fontenc}
\usepackage{microtype}
\usepackage{subfigure}
\usepackage{booktabs} 
\usepackage{amsfonts, amsmath, amssymb, amsthm}
\usepackage{xfrac}
\usepackage{graphicx, xcolor, colortbl}
\usepackage{footnote}
\usepackage{tablefootnote}
\usepackage{natbib}
\usepackage{dsfont}
\usepackage{bbm}
\usepackage{algorithm}
\usepackage{algorithmic}
\usepackage{import}
\usepackage{mathtools}
\usepackage{bm}
\usepackage{xspace}
\usepackage{thmtools,thm-restate} 
\usepackage{accents}
\usepackage{framed}
\usepackage{enumitem}
\usepackage{booktabs}
\usepackage{color, colortbl}




\usepackage[accepted]{sty/icml2022}

\usepackage{sty/notations}
\theoremstyle{plain}
\newtheorem{theorem}{Theorem}[section]
\newtheorem{proposition}[theorem]{Proposition}
\newtheorem{lemma}[theorem]{Lemma}
\newtheorem{corollary}[theorem]{Corollary}
\theoremstyle{definition}

\theoremstyle{remark}
\newtheorem{remark}[theorem]{Remark}


\usepackage{scrextend}

\usepackage{multirow}


\usepackage{minitoc}

\icmltitlerunning{Bayes UCBVI}

\begin{document}

\twocolumn[
\icmltitle{From Dirichlet to Rubin: Optimistic Exploration in RL without Bonuses}



\begin{icmlauthorlist}
\icmlauthor{Daniil Tiapkin}{hse,airi}
\icmlauthor{Denis Belomestny}{essen,hse}
\icmlauthor{\' Eric Moulines}{polytechnique,hse}
\icmlauthor{Alexey Naumov}{hse}
\icmlauthor{Sergey Samsonov}{hse}
\icmlauthor{Yunhao Tang}{deepmind}
\icmlauthor{Michal Valko}{deepmind}
\icmlauthor{Pierre M\' enard}{ovgu}
\end{icmlauthorlist}

\icmlaffiliation{hse}{HSE University}
\icmlaffiliation{airi}{Artificial Intelligence Research Institute}
\icmlaffiliation{essen}{Duisburg-Essen University}
\icmlaffiliation{polytechnique}{\' Ecole Polytechnique}
\icmlaffiliation{deepmind}{DeepMind}
\icmlaffiliation{ovgu}{Otto von Guericke University}

\icmlcorrespondingauthor{Daniil Tiapkin}{dtyapkin@hse.ru}

\icmlkeywords{Machine Learning, ICML}

\vskip 0.3in
]


\printAffiliationsAndNotice{}  

\doparttoc 
\faketableofcontents 

\begin{abstract}
We propose the \BayesUCBVI algorithm for reinforcement learning in tabular, stage-dependent, episodic Markov decision process: a natural extension of the \BayesUCB algorithm by \citet{kaufmann12} for multi-armed bandits. Our method uses the quantile of a Q-value function posterior as upper confidence bound on the optimal Q-value function. For \BayesUCBVI, we prove a regret bound of order $\tcO(\sqrt{H^3SAT})$ where $H$ is the length of one episode, $S$ is the number of states, $A$ the number of actions, $T$ the number of episodes, that matches the lower-bound of $\Omega(\sqrt{H^3SAT})$ up to poly-$\log$ terms in $H,S,A,T$ for a large enough $T$. To the best of our knowledge, this is the first algorithm that obtains an optimal dependence on the horizon $H$ (and $S$) \textit{without the need of an involved Bernstein-like bonus or noise.} Crucial to our analysis is a new fine-grained anti-concentration bound for a weighted Dirichlet sum that can be of independent interest. We then explain how \BayesUCBVI can be easily extended beyond the tabular setting, exhibiting a strong link between our algorithm and Bayesian bootstrap \citep{rubin1981bayesian}.

\end{abstract}

\section{Introduction}
\label{sec:intro}

In reinforcement learning (RL), an agent interacts with an environment with the objective of maximizing the sum of collected rewards \citep{puterman1994markov}. In order to fulfill this objective, the agent should balance between \emph{exploring the environment} and \emph{exploiting the current knowledge} to accumulate rewards. In this paper aim at providing \emph{generic solution} to this exploration-exploitation dilemma.

We model the environment as an unknown episodic tabular Markov decision process (MDP) with $S$ states, $A$ actions, and episodes of length $H$. After $T$ episodes, we measure the performance of the agent by its cumulative regret which is the difference between the total reward collected by an optimal policy and the total reward collected by the agent during the learning. In particular, we study the \emph{non-stationary} setting where rewards and transitions can change within an episode.

An effective and widely used way to solve the exploration-exploitation dilemma is the application of the principle of optimism in face of uncertainty. One line of work \citep{azar2017minimax,dann2017unifying,Zanette19Euler} for episodic MDPs and for non-episodic MDPs \citep{jaksch2010near,fruit2018efficient,talebi2018variance} implements this principle by injecting optimism through \textit{bonuses added to the rewards}. By adding these bonuses we can build upper confidence bounds (UCBs) on the optimal Q-value functions and act greedily with respect to them. Typically, these bonuses are decreasing functions of counts on the number of visits of state-action pairs. Notably, for such approach, \citet{azar2017minimax} proved a regret bound of order\footnote{We translate all the bounds to the \emph{stage-dependent} setting by multiplying by $\sqrt{H}$ the regret bounds in the stage-independent setting.}\footnote{In the $\tcO(\cdot)$ notation we ignore terms poly-$\log$  in $H,S,A,T$.} $\tcO(\sqrt{H^3SAT})$. Note that this upper bound matches, in the first order and up to poly-logarithmic terms, the known lower bound \citep{domingues2021episodic,jin2018is} of order $\Omega(\sqrt{H^3SAT})$ for the considered setting. The exploration based on building UCBs and adding bonuses is besides model-based also used for model-free algorithms \citep{jin2018is,zhang2020advantage}. For example, \citet{zhang2020advantage} proved a regret bound of order $\tcO(\sqrt{H^3SAT})$ for an optimistic version of the Q-learning algorithm \citep{watkins1992q}. One shortcoming of this exploration method is that algorithms with bonuses designed to obtain optimal problem-independent regret bound often perform poorly in practice, even for simple MDPs \citep{obsband2013more,osband2017why}. Furthermore, the notion of count used in the bonuses does not easily generalize beyond the tabular setting\footnote{Or simple linearly parameterized settings.} even if some solutions exist \citep{bellemare2016unifying,tang2017exploration,burda2019exploration};  See Section~\ref{sec:related-work-deepRL} for a thorough review of these methods.

A second line of work introduces optimism by \textit{injecting noise}.  \citet{obsband2013more,osband16generalization,osband2017why,agrawal2020posterior} proposed the posterior sampling for RL (\PSRL), an adaptation of the well-known Thompson sampling \citep{thompson1933on} for multi-armed bandits. Using a Bayesian view, \PSRL maintains a posterior on the MDP parameters and at each episode samples a new parameter from this posterior to act greedily with respect to it. Despite its good empirical performance in comparison to bonus-based algorithms \citep{obsband2013more,osband2017why}, it is not known if \PSRL algorithm can attain the problem-independent lower bound. Indeed, the best regret bound proved by \citet{agrawal2020posterior,qian2020concentration} is of order $\tcO(H^2S\sqrt{AT})$ for \PSRL. \PSRL is close to the randomized least-square value iteration (\RLSVI, \citealp{osband16generalization,russo2019worst}) which injects noise directly in the value iteration through noisy Bellman updates. Specifically, a Gaussian with a variance that shrinks with the number of visits is added at each state-action pair during the value iteration.
Interestingly, \RLSVI also demonstrates good empirical performance in practice but most importantly can easily be extended outside the tabular setting as explained by \citet{russo2019worst,osband2019deep}, in particular, deep RL environment \citet{osband2016deep,osband18randomized,osband2019deep} 
Specifically, they combine \RLSVI with \DQN \citep{mnih2015humanlevel} by replacing the Gaussian noise in \RLSVI with a bootstrap sample \citep{efron1979bootstrap} of the next targets. As a first step to analyze such noise, recently, \citet{pacchiano2021towards} analyzed a version of \RLSVI where the Gaussian noise is replaced by a bootstrap sample of \emph{the past rewards} and adding pseudo rewards in the same fashion as \citet{kveton2019garbage}. Their algorithm, \BootNARL, comes with a regret bound of order $\tcO(H^2S\sqrt{AT})$. Note that \citet{russo2019worst} proved a regret bound of order $\tcO(H^2S^{3/2}\sqrt{AT})$ for the original version of \RLSVI. Later \citet{xiong2021nearoptimal} improved this bound to $\tcO(\sqrt{H^3SAT})$ but at the price of scaling the Gaussian noise by a term similar to the Bernstein bonuses used in \UCBVI. In particular, it is not clear if such variant could also be extended beyond the tabular setting. 
\par
Thus among the above Bayesian-inspired algorithms which both demonstrate good empirical results and are also readily extendable to large-scale environments none of them enjoys such as strong guarantee as problem-independent optimality. Therefore, in this paper, we propose to fill this gap with the \BayesUCBVI algorithm. It is an optimistic algorithm that does not rely on bonuses but uses the \textit{quantile of a of Q-value functions posterior} as UCBs on the optimal Q-value functions. We can think of \BayesUCBVI as a deterministic version of \PSRL, which, in particular, shares with \PSRL the same good empirical performance, see Section~\ref{sec:experiments}. We adopt a \emph{surrogate Bayesian model} for the transitions starting from an (improper) Dirichlet prior. \textit{No assumption is made on the environment} and that the Bayesian model is purely instrumental for \BayesUCBVI. The posterior on the Q-value function is then obtained by the Bellman equations. The prior can be interpreted as prior observations of pseudo-transitions toward an absorbing pseudo-state with maximal reward. As a result, \BayesUCBVI has the advantage of requiring no information on the state space. We note that similar optimistic prior observations were already explored by \citet{Brafman02RMAX,szita2008}. For \BayesUCBVI, we prove a regret bound of order $\tcO(\sqrt{H^3SAT})$ matching the lower bound at first order and up to poly-log terms, see Table~\ref{tab:upper_bounds}. In particular we get a tight dependence on the horizon $H$ \textit{without the need of an involved Bernstein-like bonus} \citep{azar2013minimax,zanette2019tighter,zhang2020advantage} \textit{or Bernstein-type noise} \citep{xiong2021nearoptimal}. Indeed, in \BayesUCBVI the UCBs on the optimal Q-value functions induced by the Dirichlet posteriors over the transitions adapt to the variance automatically. Our proof relies on fine control of the deviations of the posterior. This tight control of the posterior is central in the analysis of the Bayesian inspired algorithm; see \citet{agrawal2020posterior,osband2017gaussian}. In particular, we provide a new anti-concentration inequality for a random Dirichlet-weighted sum that could be of independent interest, see Theorem~\ref{th:two_sided_dbc}. We believe that this anti-concentration inequality could also be used to tighten the bound of the \PSRL algorithm.
\par
As \RLSVI, \BayesUCBVI can be extended in a smooth way beyond the tabular setting. Indeed, we can reinterpret the posterior over the Q-value function of a given state-action pair as the distribution of a Bayesian bootstrap sample of the targets for this pair. Recall that in \textit{Bayesian bootstrap} \citep{rubin1981bayesian} the observations are re-weighted by a Dirichlet sample \textit{instead} of being sampled with replacement as done by the standard bootstrap \citep{efron1979bootstrap}. Consequently, the quantile serving as UCB can be straightforwardly approximated by Monte-Carlo method with Bayesian bootstrap samples. Thus, the exploration procedure of \BayesUCBVI, can also be combined with the \DQN algorithm to tackle large-scale RL: We achieve that by simply re-weighting the regression loss of the Q-value functions by a different Dirichlet sample.  In particular, we explain how to combine the exploration procedure of \BayesUCBVI with the \DQN algorithm for Deep RL. The resulting algorithm is in essence an optimistic version of the one of \citet{osband2019deep}. We show experimentally that the resulting algorithm is competitive with the one introduced by \citet{osband2019deep}.


We highlight our main contributions:
  \begin{itemize}
    \item We propose the \BayesUCBVI algorithm for tabular, stage-dependent, episodic RL. Interestingly \BayesUCBVI is an optimistic algorithm that does not rely on adding bonuses but rather builds UCBs on the optimal Q-value functions by taking the quantile of a well-chosen posterior.  For \BayesUCBVI, we provide a regret bound of order $\tcO(\sqrt{H^3SAT})$ matching the problem independent lower bound up to poly-$\log$ terms.
    \item Central to the analysis of \BayesUCBVI is a new anti-concentration inequality for a Dirichlet weighted sum (Theorem~\ref{th:two_sided_dbc}). We believe this inequality could be of independent interest, e.g., to sharpen the regret bound of other Bayesian inspired algorithms like \PSRL.
    \item We extend \BayesUCBVI beyond the tabular setting, exhibiting a strong link between our algorithm and Bayesian bootstrap \citep{rubin1981bayesian}. In particular, we explain how to combine the exploration procedure of \BayesUCBVI with the \DQN algorithm for Deep RL. We show experimentally that the resulting algorithm is competitive with the one introduced by \citet{osband2019deep}.
 \end{itemize}
 
 \begin{table}[t!]
    \centering
	\label{tab:upper_bounds}
	\resizebox{\columnwidth}{!}{
	\begin{tabular}{@{}ll@{}}
		\toprule
		\bfseries{Algorithm} & {\bf Upper bound}\tiny{ (non-stationary)} \\
		\midrule
		\midrule
    \UCBVI~\citep{azar2017minimax} & \multirow{3}{*}{$\tcO(\sqrt{H^3 SA T})$} \\
		 \UCBAdventage~\citep{zhang2020advantage} &
		 \\
		 \RLSVI~\citep{xiong2021nearoptimal} & \\
		 \midrule
		 \PSRL~\citep{agrawal2020posterior} & \multirow{2}{*}{$\tcO(H^2 S \sqrt{A T})$}\\
		 \BootNARL~\citep{pacchiano2021towards} & \\
		 \midrule
		 \rowcolor{LightGray}
		 \BayesUCBVI~(this paper) & $\tcO(\sqrt{H^3SAT})$ \\
		 \midrule
		 \midrule
		 Lower bound~\citep{jin2018is,domingues2021episodic} & $\Omega(\sqrt{H^3SAT})$\\
		\bottomrule
	\end{tabular}}
\caption{Regret upper bound for episodic, non-stationary, tabular MDPs.}
\end{table}

\section{Setting}
\label{sec:setting}

 We consider a finite episodic MDP $\left(\cS, \cA, H, \{p_h\}_{h\in[H]},\{r_h\}_{h\in[H]}\right)$,  where $\cS$ is the set of states, $\cA$ is the set of actions, $H$ is the number of steps in one episode, $p_h(s'|s,a)$ is the probability transition from state~$s$ to state~$s'$ by taking the action $a$ at step $h,$ and $r_h(s,a)\in[0,1]$ is the bounded deterministic\footnote{We study deterministic rewards to simplify the proofs but our result extend to random rewards as well.} reward received after taking the action $a$ in state $s$ at step $h$. Note that we consider the general case of rewards and transition functions that are possibly non-stationary, i.e., that are allowed to depend on the decision step $h$ in the episode. We denote by $S$ and $A$ the number of states and actions, respectively.

\paragraph{Policy \& value functions} A \emph{deterministic} policy $\pi$ is a collection of functions $\pi_h : \cS \to \cA$ for all $h\in [H]$, where every $\pi_h$  maps each state to a \emph{single} action. The value functions of $\pi$, denoted by $V_h^\pi$, as well as the optimal value functions, denoted by $\Vstar_h$ are given by the Bellman respectively  optimal Bellman equations
\begin{small}
\begin{align*}
	Q_h^{\pi}(s,a) &= r_h(s,a) + p_h V_{h+1}^\pi(s,a) & V_h^\pi(s) &= \pi_h Q_h^\pi (s)\\
  Q_h^\star(s,a) &=  r_h(s,a) + p_h V_{h+1}^\star(s,a) & V_h^\star(s) &= \max_a Q_h^\star (s, a)
\end{align*}
\end{small}%
\!where by definition, $V_{H+1}^\star \triangleq V_{H+1}^\pi \triangleq 0$. Furthermore, $p_{h} f(s, a) \triangleq \E_{s' \sim p_h(\cdot | s, a)} \left[f(s')\right]$   denotes the expectation operator with respect to the transition probabilities $p_h$ and
$\pi_h g(s) \triangleq  g(s,\pi_h(s))$ denotes the composition with the policy~$\pi$ at step $h$.

\paragraph{Learning problem} The agent, to which the transitions are \emph{unknown} (the rewards are assumed to be known for simplicity), interacts with the environment during $T$ episodes of length $H$, with a \emph{fixed} initial state $s_1$.\footnote{As explained by \citet{fiechter1994efficient} and \citet{kaufmann2020adaptive}, if the first state is sampled randomly as $s_1\sim p,$ we can simply add an artificial first state $s_{1'}$ such that for  any action $a$, the transition probability is defined as the distribution $p_{1'}(s_{1'},a) \triangleq p.$} Before each episode $t$ the agent select a policy $\pi^t$ based only on the past observed transitions up to episode $t-1$. At each step $h\in[H]$ in episode $t$, the agent observes a state $s_h^t\in\cS$, takes an action $\pi_h^t(s_h^t) = a_h^t\in\cA$ and  makes a transition to a new state $s_{h+1}^t$ according to the probability distribution $p_h(s_h^t,a_h^t)$ and receives a deterministic reward $r_h(s_h^t,a_h^t)$.

\paragraph{Regret} The quality of an agent is measured through its regret, that is the difference between what it could obtain (in expectation) by acting optimally and what it really gets,
\[
\regret^T \triangleq  \sum_{t=1}^T \Vstar_1(s_1)- V_1^{\pi^t}(s_1)\,.
\]

\paragraph{Counts} $n_h^{t}(s,a) \triangleq  \sum_{i=1}^{t} \ind{\left\{(s_h^i,a_h^i) = (s,a)\right\}}$ are the number of times the state action-pair $(s,a)$ was visited in step $h$ in the first $t$ episodes. Next, we define $n_h^{t}(s'|s,a) \triangleq \sum_{i=1}^{t} \ind{\big\{(s_h^i,a_h^i,s_{h+1}^i) = (s,a,s')\big\}}$ the number of transitions from $s$ to $s'$ at step $h$.

\paragraph{Pseudo counts and improper Dirichlet distribution} We define the pseudo counts as the counts shifted by initial pseudo counts $\upn_h^t(s,a) \triangleq n_h^t(s,a)+n_0$. For $m\in\N^*$ the simplex of dimension $m-1$ is denoted by $\simplex_{m-1}$. For $\alpha \in (\R_{++})^{m}$ we denote by $\Dir(\alpha)$ the Dirichlet distribution on $\simplex_{m-1}$ with parameter $\alpha$. We also extend this distribution to improper parameter $\alpha \in (\R_{+})^{m}$ such that $\sum_{i=1}^{m}  \alpha_i > 0$
 by injecting $\Dir((\alpha_i)_{i:\alpha_i>0})$ into $\simplex_{m-1}$. Precisely we say that $p \sim \Dir(\alpha)$ if $(p_i)_{i:\alpha_i>0}\!\!\sim\!\!\Dir((\alpha_i)_{i:\alpha_i>0})$ and all other coordinates are zero.

\paragraph{Additional notation} For $N\in\N_{++}$ we define the set $[N]\triangleq \{1,\ldots,N\}$. We denote the uniform distribution over this set by $\Unif[N]$. The vector of dimension $N$ with all entries one is $\bOne^N \triangleq  (1,\ldots,1)$. $\hp^{\,t}_h(s,a)$ is an empirical distribution defined as $\hp^{\,t}_h(s'|s,a) = n^{\,t}_h(s'|s,a) / n^{\,t}_h(s,a)$ if $n^{\,t}_h(s,a) >0$ else $\hp^{\,t}_h(s'|s,a)=1/S$, and $\up^{\,t}_h(s,a)$ is an pseudo-empirical measure defined as $\up^{\,t}_h(s,a) = \upn^{\,t}_h(s'|s,a) / \upn^{\,t}_h(s,a)$. Appendix~\ref{app:notations} gives a reference of the notation used.

\section{Algorithm }
\label{sec:algorithm}

In this section we describe the \BayesUCBVI algorithm. Similarly to \UCBVI,  we build upper confidence bounds (UCBs) on the Q-value functions and act greedily with respect to them. However, to construct the UCBs we instead use a  quantile  of certain posterior distribution. The name \BayesUCBVI highlights the link between our algorithm and the one of \citet{kaufmann12} for multi-arm bandits.

First, we extend the state space $\cS$ by an absorbing pseudo-state $s_0$ with reward $r_h(s_0,a) \triangleq  \ur > 1$ for all $h,a$ and transition probability distribution $p_h(s'| s_0,a) \triangleq  \ind{\{s'=s_0\}}$. A similar pseudo-state called ``garden of even'' was used  by \citet{Brafman02RMAX,szita2008}. We denote the extended state space by $\cS'\triangleq \cS\cup\{s_0\}$. The optimal value at $s_0$ is $\Vstar_h(s_0) = \ur(H-h+1)$ by definition. Next, we adopt a Bayesian model on the transition distributions. Note that it is   only a surrogate model used by the algorithm but not the one from which the transition are sampled. Precisely, the improper prior on the probability transition is a Dirichlet\footnote{See Section~\ref{sec:setting} for the extension of a Dirichlet distribution to parameter with coordinates that could be equal to zero.} distribution $\rho^0_h(s,a)= \Dir(\upn_h^0(s'|s,a)_{s\in\cS'})$ parameterized by the initial pseudo-count $\upn_h^0(s'|s,a) = n_0 \ind\{s'=s_0\}$. We recall that the pseudo-counts $\upn_h(s,a)$ are the counts plus a prior observation of a transition to the artificial state~$s_0$.  In fact, the prior is just a Dirac distribution at a deterministic transition $p_0(s') = \ind_{\{s'=s_0\}}$  leading to the artificial state $s_0$. Then the posterior is a Dirichlet distribution $\rho_h^t(s,a) = \Dir(\upn_h^t(s'|s,a)_{s\in\cS'})$. Given an upper bound on the value function at the next step $\uV_{h+1}^t,$ we set the upper confidence bound on the Q-value at step $h$ to the quantile of order $\kappa_h^t(s,a)$ of the distribution of Q-value where the transition probability distribution is sampled according to the posterior,
\[
\uQ_h^t(s,a) \triangleq r_h(s,a)+ \Q_{p\sim \rho_h^t(s,a)}\big(p\uV_{h+1}^t(s,a), \kappa_h^t(s,a)\big),
\]
where  for $\rho \in \simplex_{S'-1}$, $\kappa\in[0,1]$, $V: \cS' \to\R$, the quantile $\Q_{p\sim \rho}(pV,\kappa)$ of order $\kappa$ is defined as  $\P_{p\sim \rho}\big( pV \leq \Q_{p\sim \rho}(pV,\kappa) \big) = \kappa$. 

\vspace{0.25cm}
To compute UCBs on the value and Q-value functions for all $(s,a)\in\cS\times\cA$, we use an optimistic value iteration,
\label{eq: optQval iteration}
\begin{align}
  \uQ_h^{t}(s,a) & = r_h(s,a) + \Q_{p\sim \rho_h^t(s,a)}\big(p\uV_{h+1}^t(s,a), \kappa_h^t(s,a)\big)\nonumber \\
  \uV_h^{t}(s) & \triangleq \max_{a\in\cA} \uQ_h^{t}(s,a)\label{eq:optimistic_planning}\qquad \uV_h^{t}(s_0) \triangleq \Vstar_h(s_0)\\
  \uV_{H+1}^{t}(s) & \triangleq 0\nonumber\,.
\end{align}
\noindent
The complete procedure of \BayesUCBVI is described in Algorithm~\ref{alg:BayesUCBVItabular}.

\begin{algorithm}[h!]
\centering
\caption{\BayesUCBVI}
\label{alg:BayesUCBVItabular}
\begin{algorithmic}[1]
  \STATE {\bfseries Input:} quantile functions $(\kappa^t)_{t\in[T]}$, prior dist.\,$\rho^0$
      \FOR{$t \in[T]$}
      \STATE Optimistic planning, see \eqref{eq:optimistic_planning} 
      \FOR{$h \in [H]$}
        \STATE Play $a_h^t \in\argmax_{a\in\cA} \uQ_h^{t-1}(s_h^t,a)$
        \STATE Observe $s_{h+1}^t\sim p_h(s_h^t,a_h^t)$
        \STATE{ Update the posterior distributions $\rho_h^t(s_h^t,a_h^t)$ with $(s_h^t,a_h^t,s_{h+1}^t)$}
      \ENDFOR
   \ENDFOR
\end{algorithmic}
\end{algorithm}

\subsection{Analysis}
We fix $\delta\in(0,1)$ and the quantile function
\begin{equation}
\kappa_h^t(s,a) \triangleq 1 - \frac{C_\kappa \delta}{SAH [2n^t_h(s,a)+1]^{3}[\upn^t_h(s,a)]^{3/2}}\,
\label{eq:def_quantile_function}
\end{equation}
up to an absolute constant $C_\kappa \triangleq 1/(5(\rme \pi)^{3})$.
We now state the main result of the paper, proved in Appendix~\ref{app:proof-BayesUCBVI} and with a proof sketch in Section~\ref{sec:sketch_proof_BayesUCBVI}.
\begin{theorem}
\label{th:regret_bound_bayesUCBVI} 
Consider a parameter $\delta>0$. Let $n_0 \triangleq \lceil c_{n_0} + \log_{17/16}(T) \rceil$, $\ur \triangleq 2$, where  $c_{n_0}$ is an absolute constant defined in \eqref{eq:constant_c_n0}; see Appendix~\ref{app:optimism}. Then for \BayesUCBVI, with probability at least $1-\delta$, 
\[
    \regret^T = \cO\left( \sqrt{H^3 SAT} L + H^3 S^2 A L^{2} \right),
\]
 where $L \triangleq \cO(\log(HSAT/\delta))$.
\end{theorem}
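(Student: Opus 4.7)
The plan is to follow the classical optimism template for UCBVI-type algorithms, but with every appearance of a Bernstein bonus replaced by a two-sided control of the upper $\kappa_h^t$-quantile of the Dirichlet posterior on transitions. First I would prove by backward induction on $h$ that, on an event of probability at least $1-\delta/2$, $\uV_h^t(s)\geq \Vstar_h(s)$ for every $(t,h,s)\in[T]\times[H]\times \cS'$; the extension to $s_0$ is trivial since both sides equal $\ur(H-h+1)$ by construction. The inductive step reduces, after conditioning on $\uV_{h+1}^t\geq \Vstar_{h+1}$, to showing that
\[
\Q_{p\sim\rho_h^t(s,a)}\!\bigl(p\uV_{h+1}^t(s,a),\kappa_h^t(s,a)\bigr)\;\geq\; p_h\Vstar_{h+1}(s,a),
\]
which is exactly what the anti-concentration inequality for weighted Dirichlet sums (Theorem~\ref{th:two_sided_dbc}) delivers: the upper tail of the posterior mean reaches the level $p_h\Vstar_{h+1}+c\sqrt{\mathrm{Var}_{p_h}(\Vstar_{h+1})/\upn_h^t}+c/\upn_h^t$ with probability at least $1-\kappa_h^t$. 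The calibration of $\kappa_h^t$ in \eqref{eq:def_quantile_function} is chosen so that a union bound over all $(t,h,s,a)$ costs only poly-$\log$ factors, and the pseudo-state $s_0$ together with $n_0=\Theta(\log T)$ ensure optimism in the otherwise problematic low-count regime by seeding the Dirichlet with mass on a value-dominating atom.

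On the optimism event, $\regret^T\leq \sum_{t=1}^T(\uV_1^{t-1}-V_1^{\pi^t})(s_1)$. Writing $\Delta_h^t\triangleq \uV_h^{t-1}(s_h^t)-V_h^{\pi^t}(s_h^t)$, the greedy rule $a_h^t\in\argmax_a \uQ_h^{t-1}(s_h^t,a)$ and the Bellman equations telescope to
\[
\Delta_h^t \;=\; \bigl[\Q_{p\sim\rho_h^{t-1}(s_h^t,a_h^t)}(p\uV_{h+1}^{t-1},\kappa)-p_h\uV_{h+1}^{t-1}(s_h^t,a_h^t)\bigr]+\Delta_{h+1}^t+\xi_h^t,
\]
where $\xi_h^t$ is a bounded martingale difference absorbed by Azuma--Hoeffding. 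I would then bound the local bracket by the matching one-sided estimate: with high probability, for every $V:\cS'\to[0,H]$,
\[
\Q_{p\sim\rho_h^{t-1}(s,a)}(pV,\kappa)-p_h V(s,a) \;\leq\; C\sqrt{\tfrac{\mathrm{Var}_{p_h}(V)(s,a)\,L}{\upn_h^{t-1}(s,a)}} + \tfrac{CHL}{\upn_h^{t-1}(s,a)}.
\]
A standard self-bounding identity $\mathrm{Var}_{p_h}(\uV_{h+1}^{t-1})\leq 2\mathrm{Var}_{p_h}(\Vstar_{h+1})+2p_h(\uV_{h+1}^{t-1}-\Vstar_{h+1})^2$ then swaps $\uV$ for $\Vstar$ inside the variance, pushing the residual into a lower-order recursive term.

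Aggregating over $(t,h)$, the $1/\upn$ terms contribute $O(H^3 S^2 A L^2)$ after pigeonholing and using $\sum_t 1/\upn_h^{t-1}(s_h^t,a_h^t)=O(S\log T)$. For the dominant Bernstein term, Cauchy--Schwarz gives
\[
\sum_{t,h}\sqrt{\tfrac{\mathrm{Var}_{p_h}(\Vstar_{h+1})(s_h^t,a_h^t)L}{\upn_h^{t-1}(s_h^t,a_h^t)}} \;\leq\; \sqrt{L\sum_{t,h}\mathrm{Var}_{p_h}(\Vstar_{h+1})(s_h^t,a_h^t)}\;\sqrt{\sum_{t,h}\tfrac{1}{\upn_h^{t-1}(s_h^t,a_h^t)}}.
\]
By the Law of Total Variance the first factor is $O(\sqrt{TH^2L})$, and by the counting lemma the second is $O(\sqrt{HSAL})$, producing the stated $O(\sqrt{H^3SAT}\,L)$ leading term.

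The main obstacle is to obtain the one-sided concentration displayed above with a genuine $\sqrt{\mathrm{Var}_{p_h}(V)/\upn_h^t}$ factor at the exact quantile level $\kappa_h^t$ of \eqref{eq:def_quantile_function}, rather than a loose Hoeffding-type $H/\sqrt{\upn_h^t}$ bound; this is precisely why the paper develops a new two-sided anti-concentration inequality for weighted Dirichlet sums. Once that inequality is in hand, the remaining ingredients, namely optimism by backward induction, martingale concentration, self-bounding, the Law of Total Variance, and the counting pigeonhole, are routine.
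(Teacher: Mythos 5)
Your overall architecture coincides with the paper's: optimism by backward induction from the anti-concentration lower bound of Theorem~\ref{th:two_sided_dbc}, then a \UCBVI-with-Bernstein-bonuses style decomposition, a variance swap, the law of total variance, pigeonhole counting, Cauchy--Schwarz, and a self-bounding recursion. The one step that does not survive scrutiny as written is the displayed claim that, with high probability, for \emph{every} $V:\cS'\to[0,H]$,
\[
\Q_{p\sim\rho_h^{t-1}(s,a)}(pV,\kappa)-p_h V(s,a) \;\leq\; C\sqrt{\tfrac{\Var_{p_h}(V)(s,a)\,L}{\upn_h^{t-1}(s,a)}} + \tfrac{CHL}{\upn_h^{t-1}(s,a)}.
\]
Decompose the left-hand side as $(\Q-\up_h^{t-1}V)+(\up_h^{t-1}-\hp_h^{t-1})V+(\hp_h^{t-1}-p_h)V$. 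The first piece is a purely distributional statement about the Dirichlet posterior given the counts (the paper's Lemma~\ref{lem:bernstein_dirichlet}), so it needs no uniformity over $V$ — but it comes with $\Var_{\up}(V)$, not $\Var_{p_h}(V)$. The third piece is a statement about the data, and a bound of the form $\sqrt{\Var_{p_h}(V)/n}$ holding simultaneously for all $V\in[0,H]^{\cS}$ is false without an extra $\sqrt{S}$ (take $V$ aligned with the sign pattern of $\hp-p_h$; the best uniform control is $H\|\hp-p_h\|_1\lesssim H\sqrt{S\,L/n}$ via Pinsker). Paying that $\sqrt{S}$ in the leading term would give $\sqrt{H^3S^2AT}$, losing exactly the $\sqrt{S}$ the paper is designed to save. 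The function you actually need it for, $V=\uV_{h+1}^{t-1}$, is data-dependent, so you cannot treat it as fixed either.

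The paper resolves this by splitting the local error into four terms and treating $(\hp_h^{t-1}-p_h)V$ only for two specific functions: for the fixed function $\Vstar_{h+1}$ it uses an empirical-Bernstein event (term $\termD$, contributing the optimal $\sqrt{\Var_{p_h}(\Vstar_{h+1})L/n}$), and for the data-dependent, nonnegative correction $\uV_{h+1}^{t-1}-\Vstar_{h+1}$ it uses the $\KL$ concentration event together with the absorption trick $(\hp-p)f\leq \frac1H p f + \cO(H^2 S L/n)$ (term $\termC$), so that the first-order part is swallowed by the $(1+1/H)^H$ recursion and only an $S/n$ second-order term remains. Your self-bounding variance swap is the right companion move but addresses a different issue (replacing $\Var(\uV)$ by $\Var(\Vstar)$ inside the square root); it does not repair the uniform-over-$V$ claim. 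With that decomposition substituted for your single uniform inequality, the rest of your outline (optimism, Azuma for $\xi_h^t$, total variance, counting, and solving the resulting quadratic inequality in the surrogate regret) matches the paper's proof.
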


Notice that \BayesUCBVI matches the problem-independent lower bound $\Omega(\sqrt{H^3SAT})$ by \citet{jin2018is,domingues2021episodic} for $T\geq H^3S^3A$ and up to poly-$\log$ terms  in $H,S,A,T,1/\delta$.

\paragraph{Complexity} \BayesUCBVI is a model-based algorithm, and thus gets the $\cO(HS^2A)$ space complexity of \UCBVI. Note that there is no closed-form solution to compute the quantile used in the UCB and thus we approximate it e.g., by Monte-Carlo sampling; see Section~\ref{sec:non_tabular_extension}. In particular, if we use $B$ Monte-Carlo samples to approximate one quantile the time complexity of \BayesUCBVI is $\cO(BHS^2AT)$ for~$T$ episodes.

\paragraph{Comparison with \PSRL and \RLSVI} \BayesUCBVI is close to \PSRL  \citep{obsband2013more,agrawal2020posterior}. Instead of computing quantiles, \PSRL directly samples a transition probability distribution from the posterior to compute Q-values. Note that these Q-values may not necessary be UCBs as for \BayesUCBVI. \citet{agrawal2020posterior} proved a regret bound of order $\tcO(H^2S\sqrt{AT})$ for \PSRL. We believe that our analysis for \BayesUCBVI and in particular Theorem~\ref{th:two_sided_dbc}, could be used to improve the regret bound for \PSRL to $\tcO(\sqrt{H^3SAT}),$ thus matching (in terms of its dependence on the number of states $S$ and  the horizon $H$) the \textit{Bayesian} regret bound provided by \citet{osband2017why}. Another Bayesian-inspired algorithm is \RLSVI by \citet{obsband2013more}. It works by injecting Gaussian noise into the Bellman equations. Adding this noise can be seen as sampling accordingly to a certain posterior on the Q-value functions \citep{russo2014learning}. Recently, \citet{xiong2021nearoptimal} improved the dependence on the horizon~$H$ in \RLSVI's regret bound to $\tcO(\sqrt{H^3SAT})$ thanks to a Gaussian noise with a ``Bernstein'' shaped variance. Yet, it is not clear if this variant of \RLSVI 
has a clean extension beyond the tabular setting. The interesting property of  \BayesUCBVI is that its Dirichlet posterior on the transitions \textit{adjusts automatically to the variance without the need of Bernstein bonuses/noises}.
Moreover, \citet{pacchiano2021towards} proposed to replace the Gaussian noise in \RLSVI by bootstrap sampling of past rewards and adding pseudo-rewards as \citet{kveton2019garbage}. They proved\footnote{We hypothesise that the noise should be scaled by $H$ as in \RLSVI for their result to be valid.} a regret bound of order $\tcO(H^2S\sqrt{AT})$ for this type of noise. 
Note that in \BayesUCBVI it is  targets rather than the rewards that are used in the (Bayesian) bootstrap, see Section~\ref{sec:non_tabular_extension}.

\subsection{Proof sketch}
\label{sec:sketch_proof_BayesUCBVI}
We now sketch the proof of Theorem~\ref{th:regret_bound_bayesUCBVI}.
The proof  relies heavily on \textit{boundary-crossing probabilities for weighted sums of the Dirichlet distribution with integer parameter}.
The result below gives tight bounds for these probabilities. The lower bound in particular, 
is one of our main technical contributions.
\par
\paragraph{Step 1. Dirichlet boundary crossing}
\begin{theorem}[see Lemma~\ref{lem:upper_bound_dbc} and Theorem~\ref{thm:lower_bound_dbc}]\label{th:two_sided_dbc}
    For any $\alpha = (\alpha_0, \alpha_1, \ldots, \alpha_m) \in \N^{m+1}$ define $\up \in \simplex_m$ with $\up(\ell) = \alpha_l/\ualpha, \ell=0,\ldots,m$, where $\ualpha = \sum_{j=0}^m \alpha_j$. Assume that $\alpha_0 \geq \log_{17/16}(\ualpha) + c_{n_0}$, where $c_{n_0}$ is defined in \eqref{eq:constant_c_n0}; see Appendix~\ref{app:optimism}, and $\ualpha \geq 2\alpha_0$. Then for any $f \colon \{0,\ldots,m\} \to [0, \ub]$ such that $f(0) = \ub,\, f(\ell) \leq b < \ub/2,\, \ell \in \{1,\ldots,m\}$ and any $\mu \in (\up f, \ub)$ we have
    \begin{small}
    \[
        \frac{\rme^{-\ualpha \Kinf(\up, \mu, f)}}{\ualpha^{3/2}} \leq \P_{w \sim \Dir(\alpha)}[wf \geq \mu] \leq \rme^{-\ualpha \Kinf(\up, \mu, f)}, 
    \]
    \end{small}
    where $\Kinf(p,u,f)$ is given by
    \begin{small}
    \[
     \Kinf(p,u,f) \triangleq \max_{\lambda \in[0,1]} \E_{X\sim p}\left[ \log\left( 1-\lambda \frac{f(X)-u}{b_0-u}\right)\right].
    \]
    \end{small}
\end{theorem}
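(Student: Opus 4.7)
The plan is to rewrite $\Kinf$ as a KL information projection,
\begin{equation*}
\Kinf(\up,\mu,f)\;=\;\min\bigl\{\mathrm{KL}(\up\|q):q\in\simplex_m,\, qf\ge\mu\bigr\},
\end{equation*}
obtained by Lagrangian duality. The minimizer $q^\star$ is unique, has the tilted form $q^\star(\ell)\propto\up(\ell)/(1-\lambda^\star(f(\ell)-\mu)/(\ub-\mu))$ for some $\lambda^\star\in[0,1]$, lies in the relative interior of the face of $\simplex_m$ where $\up$ is supported, and puts strictly more mass at $0$ than $\up$ does (because $f(0)=\ub$ is the unique maximum). The structural hypotheses $f(\ell)\le b<\ub/2$ and $\ualpha\ge 2\alpha_0$ keep $q^\star$ bounded away from the vertex $e_0$.

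\textbf{Upper bound.} I would use a Chernoff argument on the Gamma representation $w_\ell=G_\ell/\sum_j G_j$ with independent $G_\ell\sim\Gamma(\alpha_\ell,1)$. Since $\{wf\ge\mu\}=\{\sum_\ell G_\ell(f(\ell)-\mu)\ge 0\}$, Markov's inequality with exponential parameter $\lambda/(\ub-\mu)$, $\lambda\in[0,1)$, combined with $\E[\rme^{sG_\ell}]=(1-s)^{-\alpha_\ell}$ and factorization of the MGF, yields
\begin{equation*}
\P[wf\ge\mu]\;\le\;\exp\!\Bigl(-\ualpha\,\E_{X\sim\up}\!\bigl[\log(1-\lambda(f(X)-\mu)/(\ub-\mu))\bigr]\Bigr);
\end{equation*}
supremizing in $\lambda$ gives $\exp(-\ualpha\Kinf(\up,\mu,f))$.

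\textbf{Lower bound.} This is the main technical contribution. The guiding heuristic is that a Stirling expansion yields the pointwise Dirichlet density estimate $p_{\Dir(\alpha)}(q)\asymp\sqrt{\ualpha^{m-1}/\prod_\ell q(\ell)}\cdot\exp(-\ualpha\,\mathrm{KL}(\up\|q))$ in the relative interior, so a Laplace analysis of $\int_{wf\ge\mu}p_{\Dir(\alpha)}(w)\,dw$ around $q^\star$ should produce $\exp(-\ualpha\Kinf)$ modulated by a polynomial prefactor. To reach $\ualpha^{-3/2}$, I would split $w$ into the two independent blocks $W_0\sim\mathrm{Beta}(\alpha_0,\ualpha-\alpha_0)$ and $\tilde w\sim\Dir(\alpha_1,\ldots,\alpha_m)$, so that $wf=W_0\ub+(1-W_0)\,\tilde w\tilde f$, and exploit the chain rule $\ualpha\,\mathrm{KL}(\up\|q)=\ualpha\,\mathrm{kl}(\up(0)\|q(0))+(\ualpha-\alpha_0)\,\mathrm{KL}(\tilde\up\|\tilde q)$ together with the analogous chain rule for the information projection, which yields $\ualpha\,\Kinf(\up,\mu,f)=\ualpha\,\mathrm{kl}(\up(0)\|q^\star(0))+(\ualpha-\alpha_0)\,\Kinf(\tilde\up,\mu^\star,\tilde f)$ with $\mu^\star=(\mu-q^\star(0)\ub)/(1-q^\star(0))$. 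On the product sub-event $\{W_0\ge q^\star(0)\}\cap\{\tilde w\tilde f\ge\mu^\star\}$ one has $wf\ge\mu$; independence factorizes $\P$, and I would lower-bound the Beta factor by $\gtrsim\ualpha^{-1/2}\exp(-\ualpha\,\mathrm{kl}(\up(0)\|q^\star(0)))$ via a sharp Stirling/Bahadur--Rao estimate, and the Dirichlet factor by $\gtrsim\ualpha^{-1}\exp(-(\ualpha-\alpha_0)\Kinf(\tilde\up,\mu^\star,\tilde f))$ via a saddle-point density estimate for $\tilde w\tilde f$ integrated over a window of width $\sim 1/\ualpha$ above $\mu^\star$. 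The chain rule then combines the exponents into $\ualpha\,\Kinf(\up,\mu,f)$ and the polynomial prefactors multiply to $\ualpha^{-3/2}$.

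\textbf{Main obstacle.} Uniform Stirling control on the Beta block is the most delicate step. The hypothesis $\alpha_0\ge\log_{17/16}(\ualpha)+c_{n_0}$ is exactly tailored to absorb the Stirling remainder via a geometric $(16/17)^{\alpha_0}$-type bound on the relative error of $\Gamma(\alpha_0)$, and it pins down the explicit constant $C_\kappa=1/(5(\rme\pi)^3)$. The density lower bound for $\tilde w\tilde f$ is also nontrivial because $\tilde f$ lacks the ``jackpot'' structure of $f$; I would derive it directly from the Gamma representation by analyzing the density of $\sum_\ell G_\ell f(\ell)$ conditional on $\sum_\ell G_\ell$ via its explicit Laplace transform, which again admits sharp non-asymptotic saddle-point bounds.
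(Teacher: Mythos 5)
Your upper bound is essentially the paper's proof of Lemma~\ref{lem:upper_bound_dbc}: the Gamma representation plus an exponential change of measure / Chernoff bound, optimized via the variational formula for $\Kinf$. That part is fine.

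For the lower bound your skeleton is genuinely different from the paper's and is, as a skeleton, correct: the neutrality property of the Dirichlet does give independent $W_0\sim\mathrm{Beta}(\alpha_0,\ualpha-\alpha_0)$ and $\tilde w\sim\Dir(\alpha_1,\ldots,\alpha_m)$ with $wf=W_0\ub+(1-W_0)\tilde w\tilde f$; the product sub-event $\{W_0\ge q^\star(0)\}\cap\{\tilde w\tilde f\ge\mu^\star\}$ is contained in $\{wf\ge\mu\}$ (since $\mu^\star<\ub$); and the chain rule $\ualpha\,\KL(\up,q^\star)=\ualpha\operatorname{kl}(\up(0),q^\star(0))+(\ualpha-\alpha_0)\KL(\tilde\up,\widetilde{q^\star})$ together with the restriction argument showing $\widetilde{q^\star}$ is the information projection for the sub-problem does make the exponents recombine into $\ualpha\,\Kinf(\up,\mu,f)$. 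The genuine gap is your ``Dirichlet factor'': the estimate $\P[\tilde w\tilde f\ge\mu^\star]\gtrsim\ualpha^{-1}\exp(-(\ualpha-\alpha_0)\Kinf(\tilde\up,\mu^\star,\tilde f))$ is itself a sharp non-asymptotic Dirichlet anti-concentration bound of exactly the kind the theorem is asserting, and you have deferred it to a sub-problem that is \emph{harder} than the original, because $(\alpha_1,\ldots,\alpha_m)$ need have no large coordinate (they may all equal $1$) and $\tilde f$ has no jackpot value. In the paper's saddle-point analysis (Proposition~\ref{prop:asymptotics_integral}) the remainder terms $R_1,R_3$ are controlled precisely by factors $\exp(-c_\kappa\alpha_0)$ with $c_\kappa\ge\tfrac12\log(17/16)$, which is where the hypotheses $f(0)=\ub$, $f(\ell)\le b<\ub/2$, $\ualpha\ge2\alpha_0$ and $\alpha_0\gtrsim\log_{17/16}\ualpha$ enter; strip those away, as your sub-problem does, and the contour-tail term $R_3$ is no longer negligible relative to the main term $1/\sqrt{\ualpha\sigma^2}$, so ``sharp non-asymptotic saddle-point bounds'' for the density of $\tilde w\tilde f$ do not follow from the same argument. (A secondary, fixable issue: your Bahadur--Rao prefactor $\ualpha^{-1/2}$ for the Beta block needs to be uniform as $q^\star(0)=\up(0)/(1-\lambda^\star)$ approaches $1$, i.e.\ as $\mu\to\ub$, which requires care. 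Also, the condition $\alpha_0\gtrsim\log_{17/16}\ualpha$ is not about Stirling error for $\Gamma(\alpha_0)$, and $C_\kappa$ is a constant of the algorithm's quantile schedule, not of this theorem.)

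For contrast, the paper avoids the splitting entirely: it derives an \emph{exact} one-dimensional contour-integral formula for the density of $Z=wf$ (Proposition~\ref{prop:density_linear_dirichlet}, via slicing the simplex, Laplace and Fourier transforms), runs a non-asymptotic saddle-point analysis at $z_0=\rmi\lambda^\star$ with explicit remainder control (Proposition~\ref{prop:asymptotics_integral}), and then integrates the resulting density lower bound over $u\in(\mu,\ub)$ using $\lambda^\star=\partial_u\Kinf$ and $\frac{1}{\ub-u}-\lambda^\star\ge\frac{\alpha_0}{\ualpha(\ub-u)}$; the $\ualpha^{-3/2}$ arises as $\sqrt{\ualpha}\cdot\frac{\alpha_0}{\ualpha}\cdot\frac{1}{\ualpha}$ with $\alpha_0\ge16\pi$ absorbing the constant. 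Your decomposition would be an attractive alternative if you could prove the sub-Dirichlet tail bound by elementary means, but as written that is precisely the missing — and hardest — ingredient.
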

While the upper bound follows directly from the work of \citet{riou20a},   the lower bound is new. The proof of the lower bound is presented in Theorem~\ref{thm:lower_bound_dbc} and consists of two main steps:

\vspace{-10pt}
\begin{enumerate}
\item Geometrical reduction of the density of $wf$ to 1D complex integral (see \citealp{dirksen2015sections,lasserre2020simple});
\item Sharp non-asymptotic analysis of the integral using the saddle-point method (see \citealp{olver1997asymptotics,fedoryuk1977metod}).
\end{enumerate}
\vspace{-10pt}

Using Theorem~\ref{th:two_sided_dbc}, we show that $\uQ^t$ is an upper confidence bound on the optimal action-value function.

\paragraph{Step 2. Optimism}
Using the lower bound of Theorem~\ref{th:two_sided_dbc}, we show that for our choice of $\kappa^t_h(s,a),$ given in  \eqref{eq:def_quantile_function}, the following result holds.
\begin{lemma}[see Lemma~\ref{lem:first_quantile_bound}]\label{lem: quantile lower bound}
    Let $n_0 \geq \log_{17/16}(\ualpha) + c_{n_0}$ and $\ur \geq 2$, where $c_{n_0}$ is defined in \eqref{eq:constant_c_n0}; see Appendix~\ref{app:optimism}. Then on event $\cE^\star(\delta)$; see Appendix~\ref{app:concentration}, for any $t \in \N, h \in [H], (s,a) \in \cS \times \cA$,
    \begin{small}
    \begin{equation*}
        \Q_{p\sim \rho_h^t(s,a)}\big(p\Vstar_{h+1}(s,a), \kappa_h^t(s,a)\big) \geq p_h \Vstar_{h+1}(s,a).
    \end{equation*}
    \end{small}
\end{lemma}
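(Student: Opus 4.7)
The statement is equivalent to showing
\[
\P_{p\sim \rho_h^t(s,a)}\bigl[p\Vstar_{h+1}(s,a) \geq p_h\Vstar_{h+1}(s,a)\bigr] \geq 1 - \kappa_h^t(s,a),
\]
and my plan is to read this directly off the Dirichlet boundary-crossing lower bound in Theorem~\ref{th:two_sided_dbc}, applied with weights $w = p$, function $f = \Vstar_{h+1}$ on the extended space $\cS'$, parameter $\alpha = \upn_h^t(\cdot\mid s,a)$, and threshold $\mu = p_h\Vstar_{h+1}(s,a)$. Comparing the theorem's lower bound $\rme^{-\upn_h^t(s,a)\, \Kinf}/\upn_h^t(s,a)^{3/2}$ to the explicit form of $1-\kappa_h^t(s,a)$ in \eqref{eq:def_quantile_function}, the $\upn_h^t(s,a)^{-3/2}$ factors cancel and the task reduces to the single scalar inequality
\[
\upn_h^t(s,a)\,\Kinf\bigl(\up_h^t(s,a),\, p_h\Vstar_{h+1}(s,a),\, \Vstar_{h+1}\bigr) \leq \log\!\left(\frac{SAH[2n_h^t(s,a)+1]^{3}}{C_\kappa \delta}\right).
\]

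Before using the theorem I would check its hypotheses. Because $s_0$ is absorbing with reward $\ur$, $\Vstar_{h+1}(s_0) = \ur(H-h) =: \ub$, and $\ur \geq 2$ yields $\Vstar_{h+1}(s) \leq H-h \leq \ub/2$ for $s\in\cS$, so $f$ has the required shape (the boundary case $\ur=2$ is handled by a limiting argument in $\ur$). The pseudo-count hypothesis $n_0 \geq \log_{17/16}(\upn_h^t(s,a)) + c_{n_0}$ is exactly our assumption. The remaining condition $\upn_h^t(s,a) \geq 2 n_0$, i.e.\ $n_h^t(s,a) \geq n_0$, need not always hold; but when $n_h^t(s,a) < n_0$ the pseudo-state carries more than half of the Dirichlet mass, giving $\up_h^t\Vstar_{h+1}(s,a) \geq \ur(H-h)/2 \geq H-h \geq p_h\Vstar_{h+1}(s,a)$, so this small-count regime falls into Case~1 below.

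I would then split on the sign of $\mu - \up_h^t f$. In Case~1, $p_h\Vstar_{h+1}(s,a) \leq \up_h^t\Vstar_{h+1}(s,a)$, so $\Kinf = 0$ (witnessed by $\lambda = 0$); approaching $\up_h^t f$ from above in Theorem~\ref{th:two_sided_dbc} yields $\P[p\Vstar_{h+1} \geq p_h\Vstar_{h+1}] \geq \upn_h^t(s,a)^{-3/2}$, which already exceeds $1 - \kappa_h^t(s,a)$ since $C_\kappa\delta \leq 1 \leq SAH[2n_h^t(s,a)+1]^{3}$. In Case~2, $p_h\Vstar_{h+1}(s,a) > \up_h^t\Vstar_{h+1}(s,a)$; combined with $p_h\Vstar_{h+1} \leq H-h < \ub$ this places $\mu$ in $(\up_h^t f,\ub)$, so Theorem~\ref{th:two_sided_dbc} applies. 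The displayed $\Kinf$-inequality is then exactly the content of the good event $\cE^\star(\delta)$ defined in Appendix~\ref{app:concentration}: a uniform Chernoff/KL-UCB-style deviation bound for $\upn_h^t(s,a)\,\Kinf(\up_h^t(s,a), p_h\Vstar_{h+1}(s,a), \Vstar_{h+1})$ at logarithmic scale $\log(SAH\cdot\mathrm{poly}(\upn_h^t)/\delta)$, obtained by a union bound over $t,h,(s,a)$ and a discretization of the admissible value functions.

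The main obstacle is Case~2: verifying that $\cE^\star(\delta)$ really controls this $\Kinf$-deviation at the precise polynomial-in-$n_h^t$ scale hard-coded into $\kappa_h^t$. The entire design of the quantile choice—the $[2n_h^t+1]^{3}[\upn_h^t]^{3/2}$ factor in \eqref{eq:def_quantile_function}, the constant $c_{n_0}$, and the growth requirement $n_0 \geq \log_{17/16}(\upn_h^t) + c_{n_0}$—is engineered so that the Theorem~\ref{th:two_sided_dbc} prefactor $\upn_h^t(s,a)^{-3/2}$ cancels the quantile's prefactor exactly, leaving just enough logarithmic headroom to absorb both the empirical $\Kinf$-concentration and the pseudo-count bias introduced by the prior mass at $s_0$.
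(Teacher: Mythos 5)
Your overall route is the paper's: reduce to the tail statement $\P_{p\sim\rho_h^t(s,a)}\bigl[p\Vstar_{h+1}\geq p_h\Vstar_{h+1}(s,a)\bigr]\geq 1-\kappa_h^t(s,a)$ via Lemma~\ref{lm:quantile_bound_tail}, invoke the anti-concentration lower bound of Theorem~\ref{th:two_sided_dbc}, and use the event $\cE^\star(\delta)$ to control the exponent; your accounting of the constants ($\log\frac{1}{1-\kappa_h^t}=\betastar(\delta,n_h^t)+\tfrac32\log\upn_h^t$) is exactly right, and your Case~2 is in fact slightly more direct than the paper's, which detours through the KL-UCB quantity $U^{\Kinf}$ and an $\epsilon$-limit. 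However, one step is mischaracterized. The event $\cE^\star(\delta)$ bounds $n_h^t\,\Kinf(\hp_h^t,\,p_h\Vstar_{h+1},\,\Vstar_{h+1})$ for the \emph{empirical} distribution and the \emph{true} count, not the quantity $\upn_h^t\,\Kinf(\up_h^t,\cdot,\cdot)$ that Theorem~\ref{th:two_sided_dbc} produces. The missing link is the pointwise inequality $\upn_h^t\Kinf(\up_h^t,u,\Vstar_{h+1})\leq n_h^t\Kinf(\hp_h^t,u,\Vstar_{h+1})$ (Lemma~\ref{lem:u_kinf_is_ucb}), which follows from the variational formula because the $n_0$ prior observations sit at the maximal value $\ub$ and contribute $n_0\max_{\lambda}\log(1-\lambda)\leq 0$. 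Your stated mechanism --- that the ``pseudo-count bias'' is absorbed by logarithmic headroom in $\kappa_h^t$ --- is not how it works and would not close quantitatively: $n_0=\Theta(\log T)$ pseudo-observations at $\ub$ shift $\up_h^t\Vstar_{h+1}$ by order $n_0H/\upn_h^t$, which is not covered by $O(\log(\cdot))$ slack; the point is rather that the prior can only \emph{decrease} the relevant $\Kinf$, with no slack consumed.

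The second, more substantive gap is the small-count regime $\upn_h^t(s,a)<2n_0$. You correctly note that there $\up_h^t\Vstar_{h+1}(s,a)\geq \ur(H-h)/2\geq p_h\Vstar_{h+1}(s,a)$ and route this into your Case~1 --- but your Case~1 argument is to let $\mu\downarrow\up_h^t\Vstar_{h+1}$ in Theorem~\ref{th:two_sided_dbc}, and that theorem is precisely the tool that is unavailable here, since its hypothesis $\ualpha\geq 2\alpha_0$ fails. Knowing that the posterior \emph{mean} of $p\Vstar_{h+1}$ exceeds the threshold does not by itself give any lower bound on $\P\bigl[p\Vstar_{h+1}\geq p_h\Vstar_{h+1}(s,a)\bigr]$. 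The paper closes this case by a separate elementary argument: the marginal weight $\xi=p(s_0)$ is $\mathrm{Beta}(n_0,\upn_h^t-n_0)$ with $n_0>\upn_h^t/2$, so a mode--median comparison gives median at least $1/2$, hence $\P\bigl[p\Vstar_{h+1}\geq \ur(H-h)/2\bigr]\geq 1/2\geq 1-\kappa_h^t(s,a)$. You need this (or an equivalent) argument to complete the proof.
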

By the decomposition \eqref{eq:optimistic_planning} and the Bellman equation, we see that
\begin{small}
\begin{align*}
    \uQ^t_h&(s,a) - \Qstar_h(s,a) \\
    &\geq \Q_{p\sim \rho_h^t(s,a)}\big(p\uV^t_{h+1}(s,a), \kappa_h^t(s,a)\big) - p_h \Vstar_{h+1}(s,a).
\end{align*}
\end{small}
\!\!Induction over $h$ and Lemma~\ref{lem: quantile lower bound} yield 
that  on event $\cE^\star(\delta),$ $\uQ^t_h(s,a) \geq \Qstar_h(s,a)$ for any $t \leq T, h \in [H], (s,a) \in \cS \times \cA$.

\paragraph{Step 3. Reduction to \UCBVI with Bernstein bonuses}

By optimism we have
\begin{small}
\[
    \regret^T \triangleq \sum_{t=1}^T \Vstar_1(s^t_1) - V^{\pi_t}_1(s^t_1)  \leq \sum_{t=1}^T \delta^t_1,
\]
\end{small}
\!\!where $\delta^t_h \triangleq \uV^{t-1}_h(s^t_1) - V^{\pi_t}_h(s^t_1)$. The quantity $\delta^t_h$ can be decomposed as follows
\begin{scriptsize}
    \begin{align*}
        \delta^t_h &=\underbrace{\Q_{p \sim \rho^{t-1}_h(s^t_h, a^t_h)}( p \uV^{t-1}_{h+1}(s^t_h,a^t_h), \hat\kappa^{t}_h) - \up^{t-1}_h \uV^{t-1}_{h+1}(s^t_h,a^t_h)}_{\termA} \\
        &+ \underbrace{[\up^{t-1}_h - \hp^{t-1}_h] \uV^{t-1}_{h+1}(s^t_h, a^t_h)}_{\termB} \\
        &+  \underbrace{(\hp^{t-1}_h - p_h)  [\uV^{t-1}_{h+1} - \Vstar_{h+1}](s^t_h, a^t_h)}_{\termC} + \underbrace{(\hp^{t-1}_h - p_h) \Vstar_{h+1}(s^t_h, a^t_h)}_{\termD} \\
        &+ \underbrace{p_h[\uV^{t-1}_{h+1} - V^{\pi_t}_{h+1}](s^t_h, a^t_h) - [\uV^{t-1}_{h+1} - V^{\pi_t}_{h+1}](s^t_{h+1})}_{\xi^t_h}+ \delta^t_{h+1},
    \end{align*}
\end{scriptsize}
\!\!where $\hat\kappa^t_h = \kappa^{t-1}_h(s^t_h,a^t_h)$. The terms $\termC, \termD$ and $\xi^t_h$ coincide with similar terms in the analysis of \UCBVI with Bernstein bonuses. The term $\termB$ could be upper-bounded by $\frac{\ur H}{\max\{n^{t-1}_h, 1\}}$ and turns out to be a second-order term. The analysis of the term $\termA$ is novel. Using the upper bound from Theorem 3.2 we may obtain the Bernstein type inequality for the Dirichlet distribution (see Lemma~\ref{lem:bernstein_dirichlet} in Appendix~\ref{app:concentration}) which yields the following key inequality for the quantile $\Q_{p \sim \rho^t_h(s,a)}(p \uV^t_{h+1}(s,a), \kappa^t_h(s,a))$. 
\begin{lemma}[see Corollary~\ref{cor:quantile_bounds}]\label{lem: quantile upper bound}
     Assume conditions of Theorem~\ref{th:regret_bound_bayesUCBVI}. On event $\cE^\star(\delta)$,  
     for any $t \in \N, h \in [H], (s,a) \in \cS \times \cA,$
     \begin{small}
    \begin{align*}
        \Q&_{p \sim \rho^t_h(s,a)}(p \uV^t_{h+1}(s,a), \kappa^t_h(s,a)) \\
        &\leq \up^t_h \uV^t_{h+1}(s,a) +  2\sqrt{ \Var_{\up^t_h}[\uV^t_{h+1}](s,a) \frac{  \log\left( \frac{1}{1 - \kappa^t_h(s,a)}\right)}{\upn^t_h(s,a)}} \\
        &\qquad+ \frac{2 \sqrt{2} \cdot \ur H \log\left( \frac{1}{1 - \kappa^t_h(s,a)}\right)}{\upn^t_h(s,a)}.
    \end{align*}
    \end{small}
\end{lemma}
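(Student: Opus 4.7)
The plan is to combine the \emph{upper} tail inequality of Theorem~\ref{th:two_sided_dbc} with a Bernstein-type lower bound on $\Kinf$ and invert the resulting quadratic. Write $V \triangleq \uV^t_{h+1}$, $\up \triangleq \up^t_h(s,a)$, $\ualpha \triangleq \upn^t_h(s,a)$, $\kappa \triangleq \kappa^t_h(s,a)$ and $Q_\kappa \triangleq \Q_{p \sim \rho^t_h(s,a)}(pV,\kappa)$. By definition, $Q_\kappa$ is the smallest $\mu$ with $\P_{p \sim \rho^t_h(s,a)}[pV \geq \mu] \leq 1-\kappa$. Under the conditions of Theorem~\ref{th:regret_bound_bayesUCBVI} and on $\cE^\star(\delta)$, the hypotheses of Theorem~\ref{th:two_sided_dbc} are met with $\ub \triangleq \ur H$: the pseudo-state satisfies $V(s_0) = \ur(H-h)$ and an easy induction on $h$ using $\ur = 2$ yields $V(s) \leq H$ for $s \neq s_0$, so that $b < \ub/2$. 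Its upper bound then gives $\P[pV \geq \mu] \leq \exp(-\ualpha \Kinf(\up,\mu,V))$, so any $\mu$ satisfying $\ualpha \Kinf(\up,\mu,V) \geq L \triangleq \log(1/(1-\kappa))$ dominates $Q_\kappa$.

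Second, I would turn this into an explicit quantile bound via a Bernstein-type lower bound on $\Kinf$:
\[
\Kinf(\up,\mu,V) \geq \frac{(\mu - \up V)^2}{C_1\, \Var_{\up}[V] + C_2\, \ur H\, (\mu - \up V)},
\]
for absolute constants $C_1, C_2 > 0$. This is the content of Lemma~\ref{lem:bernstein_dirichlet} and is obtained by plugging a well-chosen $\lambda \in [0,1]$ into the variational representation of $\Kinf$ from Theorem~\ref{th:two_sided_dbc} and applying the elementary inequality $\log(1-x) \geq -x - x^2/(2(1-c))$ valid for $x \leq c < 1$, followed by optimization of $\lambda$.

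Third, combining both steps reduces the task to solving $\ualpha x^2 \geq L\,(C_1 \Var_{\up}[V] + C_2 \ur H\, x)$ in $x = \mu - \up V$. Its positive root satisfies $x \leq \sqrt{C_1 \Var_{\up}[V] L/\ualpha} + C_2 \ur H L/\ualpha$ via $\sqrt{a+b} \leq \sqrt a + \sqrt b$. Hence $Q_\kappa \leq \up V + \sqrt{C_1 \Var_{\up}[V] L/\ualpha} + C_2 \ur H L/\ualpha$, and tracking the constants $C_1, C_2$ through the Bernstein lemma recovers the factors $2$ and $2\sqrt 2$ claimed by the lemma.

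The main obstacle is the Bernstein lower bound on $\Kinf$ (Lemma~\ref{lem:bernstein_dirichlet}): the choice of $\lambda$ must be calibrated so that the inequality interpolates between the small-deviation Gaussian regime (where the variance term dominates and the optimal $\lambda$ is small) and the large-deviation regime (where $\lambda$ saturates and the Hoeffding-type linear term takes over). Once this is in hand, verifying the range hypothesis of Theorem~\ref{th:two_sided_dbc} is a short induction on $h$ using $\ur = 2$, and the inversion of the quadratic is routine.
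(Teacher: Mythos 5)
Your skeleton is the same as the paper's: characterize the quantile via the tail (Lemma~\ref{lm:quantile_bound_tail}), dominate the tail by $\exp(-\ualpha\,\Kinf(\up,\mu,f))$ using the \emph{upper} bound of Theorem~\ref{th:two_sided_dbc} (i.e.\ Lemma~\ref{lem:upper_bound_dbc}), and then convert the condition $\ualpha\,\Kinf\ge\log\frac{1}{1-\kappa}$ into an explicit mean--variance bound. Where you diverge is in how that last conversion is proved. You propose the \emph{primal} route: plug a calibrated $\lambda$ into the variational formula for $\Kinf$ and use $\log(1-x)\ge -x - x^2/(2(1-c))$, then invert the resulting quadratic. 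The paper instead goes through the \emph{dual} route in Lemma~\ref{lem:bernstein_dirichlet}: it picks $\mu$ with $\Kinf(\up,\mu,f)=\ualpha^{-1}\log(1/\delta)$, uses Lemma~\ref{lem:kinf_attains} to exhibit a measure $q$ with $qf=\mu$ and $\KL(\up,q)=\Kinf(\up,\mu,f)$, applies the transportation inequality $qf-\up f\le\sqrt{2\Var_q(f)\KL(\up,q)}$ (Lemma~\ref{lem:Bernstein_via_kl}) and switches $\Var_q$ to $\Var_{\up}$ via Lemma~\ref{lem:switch_variance_bis}. Both are legitimate; the dual route delivers the exact constants $2$ and $2\sqrt2$ of the statement with no calibration, whereas your primal route with the naive choice $\log(1-x)\ge -x-x^2$ would give $2$ and $4$, so matching the stated constants requires genuine care in choosing $c$ and handling the regime where $\lambda$ saturates. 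Since the lemma asserts specific constants, this is not entirely cosmetic.

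One concrete error, fortunately harmless: you claim to verify the hypotheses $f(0)=\ub$ and $f(\ell)\le b<\ub/2$ of Theorem~\ref{th:two_sided_dbc} for $f=\uV^t_{h+1}$ by arguing $\uV^t_{h+1}(s)\le H$ for $s\ne s_0$. Those hypotheses belong only to the anti-concentration (lower) bound and are not needed for the upper tail bound you actually use; moreover the claim is false, since $\uV^t_{h+1}$ is an optimistic estimate that can reach roughly $\ur(H-h)-1>\ub/2$ for $s\ne s_0$, so the condition $b<\ub/2$ would fail. The proof survives only because the upper bound of Lemma~\ref{lem:upper_bound_dbc} requires none of these conditions; you should drop that verification rather than rely on it.
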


Since $1-\kappa^t_h(s,a)$ depends on $n^t_h(s,a)$ only polynomially, we see that the term $\termA$ can be upper bounded by a quantity which looks very similar to the Bernstein bonuses in \UCBVI and, moreover, it has the same role in the regret analysis.  After using these upper bounds, the rest of the proof follows from the analysis of \UCBVI with the  Bernstein bonuses; see \citealp{azar2017minimax}.


\section{\texorpdfstring{\BayesUCBVI}{Bayes-UCBVI} for Deep RL}
\label{sec:non_tabular_extension}
We now extend \BayesUCBVI beyond the tabular setting. Fix a state-action pair $(s,a)$. At episode $t$, the targets to estimate the Q-value function at state-action pair $(s,a)$ at step~$h$ are $y_h^n(s,a) \triangleq r_h(s,a) + \uV_{h+1}^t(s_{h+1}^n)$ for $n\in[n_h^t(s,a)]$ where $s_{h+1}^n$ is
the next state observed after taking the action~$a$ in state~$s$ for the $n^{\text{th}}$ time.\footnote{In particular, $n$ is a number of visits of a state-action pair $(s,a)$ and not the global time (the number of episodes).} We also need prior targets\footnote{In the case of unknown rewards we use the sample returns instead. For the pseudo-target, we always set the rewards to $1$ which gives $y_h^0(s,a) \triangleq H-h+1$.} $y_h^n(s,a) \triangleq r_h(s,a) + \uV_h^t(s_0)$ for $(-n+1)\in[n_0]$ corresponding to the pseudo-transition to $s_0$. Using the \textit{aggregation property of the Dirichlet distribution} we can compute the UCB by taking the quantile of randomly re-weighted sum of targets. Precisely, we have that
\begin{small}
\begin{align*}
\uQ_h^t(s,a) & \triangleq  r_h(s,a)+ \Q_{p\sim \rho_h^t(s,a)}\big(p\uV_{h+1}^t(s,a), \kappa_h^t(s,a)\big)\\
&= \Q_{w\sim \Dir(\bOne^{\upn_h^t(s,a)})}\left(\sum_{n=-n_0+1}^{n_h^t(s,a)} w_n y_h^n(s,a), \kappa_h^t(s,a)\right)\,.
\end{align*}
\end{small}
\!\!\!We can approximate this quantile by the quantile of the empirical distribution of $B$ Bayesian bootstrap samples \citep{rubin1981bayesian}. Precisely, if we fix $(w_h^{b}(s,a))_{b\in[B]}$ i.i.d.\,samples from a Dirichlet distribution $\Dir(\bOne^{\upn_h^t(s,a)})$ we have
\begin{align*}
\uQ_h^t(s,a) \approx\ &\Q_{b\sim\Unif([B])}\left(\uQ_h^{t,b}(s,a),\kappa_h^t(s,a)\right) \\
\text{ where }&\uQ_h^{t,b}(s,a) \triangleq \!\!\sum_{n=-n_0+1}^{n_h^t(s,a)} \!w_h^{n,b}(s,a) y_h^n(s,a)\,.
\end{align*}
In particular, using that a uniform Dirichlet distribution can be obtained by normalizing independent samples from the exponential probability distribution $\Exponential(1)$, we can obtain the Bayesian samples by solving a weighted linear regression
\begin{align}
  \uQ_h^{t,b}(s,a) =& \argmin_{x} \sum_{n=-n_0+1}^{n_h^t(s,a)} z_h^{n,b}(s,a) \left(x - y_h^n(s,a)\right)^2 \label{eq:weighted_regression}\\
  &\text{ where }z_h^{n,b}(s,a) \sim \Exponential(1)\text{ i.i.d.}\,.
\nonumber\end{align}
We name this approximation of \BayesUCBVI, \textit{incremental Bayes-UCBVI} (\IncrBayesUCBVI) and provide its detailed pseudo-code as Algorithm~\ref{alg:incrBayesUCBVI} in Appendix~\ref{app:non-tabular-extension-detailed}. Note that this way to generate bootstrap sample is similar to the incremental Bayesian bootstrap by \citealp{osband2015bootstrap} (their Algorithm~5).

A great advantage of this formulation of \BayesUCBVI is that it can be easily extended beyond the tabular setting. Indeed, we can simply replace the weighted linear regression loss in~\eqref{eq:weighted_regression} by the weighted regression loss of any function approximation. Remarkably, except for the initial pseudo-transitions, \IncrBayesUCBVI does not rely on counts but on a easy-to-implement Bayesian bootstrap. As an example,  in Appendix~\ref{app:non-tabular-extension-detailed}, we combine the \IncrBayesUCBVI exploration procedure with  \DQN \citep{mnih2015humanlevel} and call it \BayesUCBDQN, detailed as Algorithm~\ref{alg:BayesUCBDQN} of Appendix~\ref{app:non-tabular-extension-detailed}.

 \subsection{Related work}
 \label{sec:related-work-deepRL}
Generalizing principled solutions of the exploration-exploitation dilemma from the theoretical tabular RL setting to large-scale deep RL is quite challenging \citep{yang2021exploration}. For instance, \citet{bellemare2016unifying,ostrovski2017count} approach the count-based UCBs used in tabular RL by approximating the visits counts using a density estimation. Later, \citet{tang2017exploration} directly map states to hash codes and then count their occurrences in a hash table. Another line of work sets bonuses to the approximation error of some quantities related to the MDP dynamics: the forward dynamics \citep{schmidhuber1991possibility,pathak2017curiosity}, the inverse dynamics \citep{haber2018learning} or simply a constant function \citep{choshen2018dora}. Similarly, \citet{burda2019exploration} builds bonus from he prediction error of a randomly initialized network. 
This can be further combined with the pseudo-counts \citep{badia2020never} leading to impressive results.
As in the tabular setting, a second line of work deals with the exploration-exploitation trade-off by injecting noise. \citet{fortunato2018noisy} add parametric noise to the weights of the agent's network that is learned with the weights. \citet{azizzadenesheli2018efficient} approximate \PSRL by replacing the typical last linear layer of agent's Q-value network with a Bayesian linear regression. Alternatively, bootstrap \DQN (\BootDQN, \citealp{osband2016deep,osband18randomized,osband2019deep}) extends \RLSVI by bootstrap sampling of the transitions to inject noise into \DQN. Specifically, in \BootDQN an ensemble of Q-value functions is learned each on a \textit{different} bootstrap sample of the transitions collected so far. Building on this work, \citet{nikolov2019information} use bootstrap as well but instead combines the bandit algorithm, information direct sampling \citep{russo2014learning}, with \DQN. Recently, \citet{bai21principled} also proposed an optimistic algorithm based on bootstrap, using a bonus that scales with the variance of the ensemble of Q-value functions learned as did \citet{osband2019deep}.

\vspace{-5pt}
\paragraph{Further comparison of \BayesUCBDQN with \BootDQN} \hspace{-10pt}\BayesUCBDQN is close to \BootDQN of \citet{osband2016deep}. In \BootDQN, an ensemble of $B$ bootstrap Q-value functions (or in practice only the ``heads'' of a unique Q-value function) are learned with different sub-sets of transitions. Each transition is randomly assigned to the training of one bootstrap Q-value function with a fixed probability $p\in[0,1]$. In particular they consider $p=0.5$ for the double-or-nothing bootstrap and $p=1$ for no bootstrap. Each bootstrap Q-value function is then trained with targets computed from the \emph{corresponding} bootstrap Q-value function at the next state; see Appendix~\ref{app:experiment_details} for a detailed description. As explained by \citet{osband2016deep}, \BootDQN can be seen as approximation of \RLSVI.  That is why \BayesUCBDQN can be seen as an optimistic version of \BootDQN (as \BayesUCBVI is an optimistic version of \PSRL \& \RLSVI). The main differences between \BootDQN and \BayesUCBDQN are therefore: (i) \BayesUCBDQN acts greedily with respect to the quantile of the bootstrap Q-value functions instead of one bootstrap Q-value function sampled uniformly at random. (ii) \BayesUCBDQN uses Bayesian bootstrap instead of the classical bootstrap \citep{efron1979bootstrap}. (iii) In \BayesUCBDQN, the bootstrap Q-value functions are trained with the same target computed with the quantile of the bootstrap Q-functions at the next step, as in~\eqref{eq:weighted_regression}. We discuss the impacts of these modifications in Section~\ref{sec:experiments}.
\section{Experimental Results}
\label{sec:experiments}

In this section we provide experiments on \BayesUCBVI and its variants. We illustrate two points: First, that \IncrBayesUCBVI performs similarly as other algorithms relying on noise-injection for exploration such that \PSRL and \RLSVI. Second, that \BayesUCBDQN, the deep RL extension of \BayesUCBVI is competitive with \BootDQN. 

\subsection{Tabular environment}
We first evaluate \BayesUCBVI and \IncrBayesUCBVI on a simple tabular environment.

\paragraph{Environment} For the tabular experiments we consider a simple grid-world with $5$ connected rooms of size $5\times5,$ totalling $S=129$ states. The agent starts in the middle room. There is one small deterministic reward in the leftmost room, one large deterministic reward in the rightmost room and zero reward elsewhere. The agent can take $A=4$ actions: moving up, down, left, right. When taking an action, the agent moves in the corresponding direction with probability $0.9$ and moves to a neighboring state at random with probability $0.1$. The horizon is fixed to $H=30$; see Appendix~\ref{app:experiment_details} for details. In this environment the agent must explore efficiently all the room avoiding being lured by the small reward in the  leftmost room.

\paragraph{Baselines} We compare \BayesUCBVI and \IncrBayesUCBVI with the following baselines: \UCBVI \citep{azar2017minimax}, \RLSVI \citep{osband16generalization}, and \PSRL \citep{obsband2013more}; see Appendix~\ref{app:experiment_details} for a full description of the parameters of the algorithms used in the experiments.

\paragraph{Results} In Figure~\ref{fig:regret_baselines}, we plot the regret of the various baselines, \BayesUCBVI and \IncrBayesUCBVI in the aforementioned environment. In this experiment, we observe that both \BayesUCBVI and \IncrBayesUCBVI achieve competitive results with respect to baselines relying on noise-injection for exploration (\PSRL, \RLSVI). This is remarkable, since the \textit{common belief is that optimistic algorithm perform poorly in practice} \citep{osband2017why}. Indeed, \IncrBayesUCBVI exhibits a  regret similar to \PSRL. It is not completely surprising since they share the same model on the transitions (up to the prior). Notice that \BayesUCBVI performs slightly worse than \IncrBayesUCBVI but better than \RLSVI. One possible reason to explain this gap between \BayesUCBVI and \IncrBayesUCBVI is that the incremental implementation of Bayesian bootstrap forgets  the prior  faster than the non-incremental version, resulting in a more aggressive algorithm. We also note that \RLSVI performs slightly worse than \PSRL, \IncrBayesUCBVI but much better than \UCBVI. A possible explanation for this ranking is that \RLSVI is much more aggressive than \UCBVI when they have comparable noise, bonuses; whereas \PSRL, \IncrBayesUCBVI, \BayesUCBVI take better advantage of the small variance of this particular environment than the two last baselines.

\begin{figure}[h!]
      \vspace{-0.2cm}
    \centering
    \includegraphics[keepaspectratio,width=.48\textwidth]{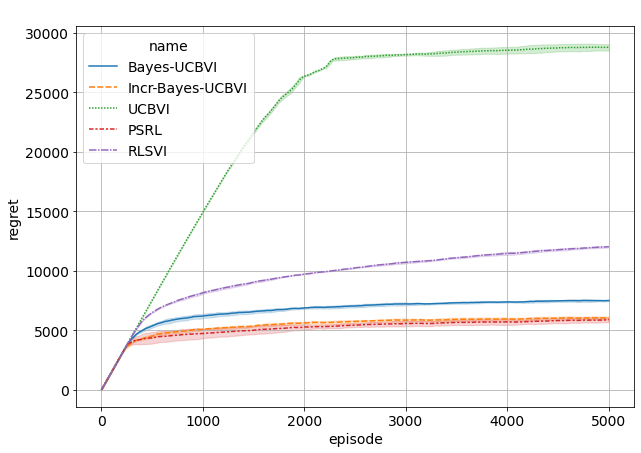}
    \caption{Regret of \BayesUCBVI and \IncrBayesUCBVI compared to baselines for $H=30$ an transitions noise $0.1$. We show $\text{average}$ over $4$ seeds.}
    \label{fig:regret_baselines}
\end{figure}

\subsection{Deep RL experiments}
In this section we evaluate the performance of \BayesUCBDQN in large-scale environments. 

\paragraph{Setup}
All algorithms are based on the architecture of \DQN \citep{mnih2013playing}. In order to implement the bootstrapped ensemble, we follow \BootDQN \citep{osband2015bootstrap} and maintain an ensemble of $B=10$ head networks over a shared torso network. 
For fairness of comparison, all algorithmic variants share hyper-parameters wherever possible; see Appendix~\ref{app:experiment_details} for further details on the detailed architecture and implementation details. 

\paragraph{Environment and evaluation} To evaluate the scalability of \BayesUCBDQN, we train DQN variants over a suite of $57$ Atari games \citep{bellemare2013arcade}. For each algorithm and each game, we train for $200$M frames and record the human normalized scores per game. The overall performance curve in Figure~\ref{fig:deeprl} is calculated as the median score over all games.

\paragraph{Results} We compare \texttt{DoubleDQN\xspace} \citep{van2016deep}, \BootDQN and \BayesUCBDQN. In Figure~\ref{fig:deeprl}, we show the evaluation performance of different algorithms over training, measured in median human normalized scores. We make a few observations: (1) Both \BayesUCBDQN and \BootDQN outperform \DoubleDQN, potentially due to better training stability thanks to more consistent exploration; (2) The performance of \BootDQN converges to about $0.7$, which is consistent with results of \citet{osband2015bootstrap}; (3) Overall, \BayesUCBDQN and \BootDQN perform similarly. We see that \BayesUCBDQN achieves very marginal advantage over \BootDQN towards the end of training, however, more significant gains might require further engineering efforts. Nevertheless, we have established that \BayesUCBDQN, as a theoretically grounded algorithm, is competitive with \BootDQN. This paves the way for future research in this space; see Appendix~\ref{app:experiment_details} for further discussion on the effect of various hyper-parameters. 

\begin{figure}[t]
    \centering
    \includegraphics[keepaspectratio,width=.48\textwidth]{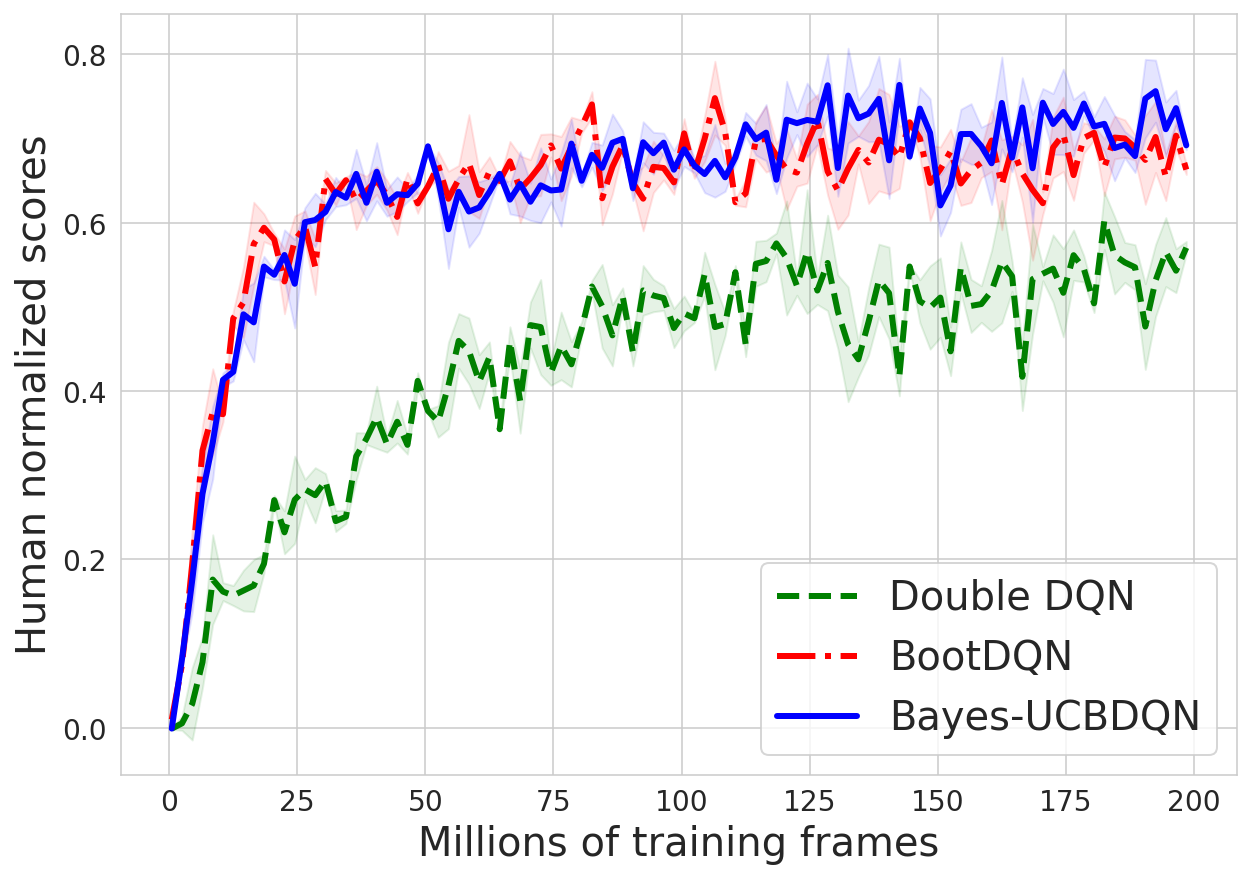}
    \caption{Evaluating deep RL algorithms with median human normalized scores across Atari-57 games. We compare \DoubleDQN, \BootDQN and \BayesUCBDQN. The training curves show the $\text{average}\pm\text{std}$ over $3$ seeds.}
    \label{fig:deeprl}
\end{figure}

\section{Conclusion}
\label{sec:conclusion}

We presented a new algorithm, \BayesUCBVI. It is an optimistic algorithm that does not rely on bonuses but rather uses the quantile of a well-chosen posterior to inject optimism. We proved that this algorithm is problem-independent optimal up to term poly-$\log$ in the horizon~$H$, the number of action $A$, states $S$ and episodes $T$. \BayesUCBVI also exhibits similar empirical performance than other existing Bayesian-inspired algorithms thus bridging the optimal problem-independent theoretical guarantees of optimistic algorithms and the good empirical results of algorithms relying on noise-injection for exploration. Importantly we also demonstrated that \BayesUCBVI could easily be extended beyond the tabular setting. In particular, we provided a new principled algorithm \BayesUCBDQN based on \BayesUCBVI that is competitive with \BootDQN of \citet{osband2019deep} on large-scale environments.

This work also raises the following open question that we think bring interesting future directions.

\paragraph{Problem-independent optimality of \PSRL} Central to the proof of the regret bound of \PSRL is the control of the deviation of a Dirichlet re-weighted sum \citep{agrawal2020posterior}. Thus, we believe that the anti-concentration inequality of Theorem~\ref{th:two_sided_dbc}, or a close variant, could allow to improve the regret bound to $\tcO(\sqrt{H^3SAT})$ (for $T$ large enough). In particular, this would imply that \PSRL is also problem-independent optimal. 

\paragraph{Integration with deep RL agents} Existing deep RL architectures, such as the implementations of base agent's loss functions and training pipeline, might not interact with our proposed exploration techniques in the optimal way (see Appendix~\ref{app:experiment_details_discussion} for details). Thus, an open question is whether we could make more fundamental changes to certain deep RL agents so that the exploration methods can be integrated in a better way.

\section*{Acknowledgements}
D.\,Belomestny acknowledges the financial support from Deutsche Forschungsgemeinschaft (DFG), Grant Nr.\,497300407. P.\,Ménard  is supported by the SFI Sachsen-Anhalt for the project RE-BCI ZS/2019/10/102024 by the Investitionsbank Sachsen-Anhalt. The work of D.\,Tiapkin, A.\,Naumov, and S.\,Samsonov were supported by the grant for research centers in the field of AI provided by the Analytical Center for the Government of the Russian Federation (ACRF) in accordance with the agreement on the provision of subsidies (identifier of the agreement 000000D730321P5Q0002) and the agreement with HSE University No.\,70-2021-00139.

\bibliographystyle{plainnat}
\bibliography{BayesUCBVI-bib.bib}

\appendix
\onecolumn

\part{Appendix}
\parttoc
\newpage
\section{Notation}
\label{app:notations}

\begin{table}[h]
	\centering
	\caption{Table of notation use throughout the paper}
	\begin{tabular}{@{}l|l@{}}
		\toprule
		\thead{Notation} & \thead{Meaning} \\ \midrule
	$\cS$ & state space of size $S$\\
	$\cA$ & action space of size $A$\\
	$H$ & length of one episode\\
	$T$ & number of episodes\\
	$B$ & number of bootstrap samples\\
	\hline
	$r_h(s,a)$ & reward \\
	$p_h(s'|s,a)$ & probability transition \\
	$Q^{\pi}_h(s,a)$ & Q-function of a given policy $\pi$ at step $h$\\
	$V^{\pi}_h(s)$ & V-function of a given policy $\pi$ at step $h$\\
	$\Qstar_h(s,a)$ & optimal Q-function at step $h$\\
	$\Vstar_h(s)$ & optimal V-function at step $h$ \\
	$\regret^T $ & regret \\
	\hline
	$n_0$ & number of fake samples \\
	$s_0$ & fake state \\
	$\ur$ & reward of fake transition \\
	\hline
	$s^{\,t}_h$ & state that was visited at $h$ step during $t$ episode \\
	$a^{\,t}_h$ & action that was picked at $h$ step during $t$ episode \\
	$n_h^t(s,a)$ & number of visits of state-action $n_h^t(s,a) = \sum_{k = 1}^t  \ind{\left\{(s_h^k,a_h^k) = (s,a)\right\}}$\\
	$n_h^t(s'|s,a)$ & number of transition to $s'$ from state-action $n_h^t(s'|s,a) = \sum_{k = 1}^t  \ind{\left\{(s_h^k,a_h^k, s_{h+1}^k) = (s,a,s')\right\}}$. \\
	$\upn_h^t(s,a)$ & pseudo number of visits of state-action $\upn_h^t(s,a)=n_h^t(s,a)+n_0$\\
	$\upn_h^t(s'|s,a)$ & pseudo number of  transition to $s'$ from state-action $\upn_h^t(s'|s,a)=n_h^t(s'|s,a) + \ind{\{s' = s_0\}} \cdot n_0$\\
	$\hp_h^{\,t}(s'|s,a)$ & empirical probability transition $\hp_h^{\,t}(s'|s,a) = n_h^t(s'|s,a) / n_h^t(s,a)$ \\
	$\up_h^t(s'|s,a)$ & pseudo-empirical probability transition $\up_h^t(s'|s,a) = \upn_h^t(s'|s,a) / \upn_h^t(s,a)$ \\
	\hline
    $\uQ_h^t(s,a)$ & upper bound on the optimal Q-value\\
    $\uV_h^t(s,a)$ & upper bound on the optimal V-value\\
    \hline 
    $\R_+$ & non-negative real numbers \\
    $\R_{++}$ & positive real numbers \\
    $\N_{++}$ & positive natural numbers \\
    $[n]$ & set $\{1,2,\ldots, n\}$ \\
    $\simplex_d$ & $d$-dimensional probability simplex: $\simplex_d = \{x \in \R_{+}^{d+1}: \sum_{j=0}^{d} x_j = 1 \}$ \\ 
    $\bOne^N$ & vector of dimension $N$ with all entries one is $\bOne^N \triangleq  (1,\ldots,1)$ \\
    $\norm{x}_1$ & $\ell_1$-norm of vector $\norm{x}_1 = \sum_{j=1}^m \vert x_j \vert$\\
    $\norm{x}_2$ & $\ell_2$-norm of vector $\norm{x}_2 = \sqrt{\sum_{j=1}^m x_j^2}$\\
    $\norm{f}_2 $ & for $f \colon \Xset \to \R$, where $\vert \Xset \vert <\infty$ define $\norm{f}_2 = \sqrt{\sum_{x \in \Xset} f^2(x)}$ \\
    \bottomrule
	\end{tabular}
\end{table}

Let $(\Xset,\Xsigma)$ be a measurable space and $\Pens(\Xset)$ be the set of all probability measures on this space. For $p \in \Pens(\Xset)$ we denote by $\E_p$ the expectation w.r.t. $p$. For random variable $\xi: \Xset \to \R$ notation $\xi \sim p$ means $\operatorname{Law}(\xi) = p$. We also write $\E_{\xi \sim p}$ instead of $\E_{p}$. For independent (resp. i.i.d.) random variables $\xi_\ell \mysim p_\ell$ (resp. $\xi_\ell \mysimiid p$), $\ell = 1, \ldots, d$, we will write $\E_{\xi_\ell \mysim p_\ell}$ (resp. $\E_{\xi_\ell \mysimiid p}$), to denote expectation w.r.t. product measure on $(\Xset^d, \Xsigma^{\otimes d})$. For any $p, q \in \Pens(\Xset)$ the Kullback-Leibler divergence $\KL(p, q)$ is given by
$$
\KL(p, q) = \begin{cases}
\E_{p}[\log \frac{\rmd p}{\rmd q}], & p \ll q \\
+ \infty, & \text{otherwise}
\end{cases} 
$$
For any $p \in \Pens(\Xset)$ and $f: \Xset \to \R$, $p f = \E_p[f]$. In particular, for any $p \in \simplex_d$ and $f: \{0, \ldots, d\}   \to  \R$, $pf =  \sum_{\ell = 0}^d f(\ell) p(\ell)$. Define $\Var_{p}(f) = \E_{s' \sim p} \big[(f(s')-p f)^2\big] = p[f^2] - (pf)^2$. For any $(s,a) \in \cS$, transition kernel $p(s,a) \in \Pens(\cS)$ and $f \colon \cS \to \R$ define $pf(s,a) = \E_{p(s,a)}[f]$ and $\Var_{p}[f](s,a) = \Var_{p(s,a)}[f]$.

We write $f(S,A,H,T) = \cO(g(S,A,H,T,\delta))$ if there exist $ S_0, A_0, H_0, T_0, \delta_0$ and constant $C_{f,g}$ such that for any $S \geq S_0, A \geq A_0, H \geq H_0, T \geq T_0, \delta < \delta_0, f(S,A,H,T,\delta) \leq C_{f,g} \cdot g(S,A,H,T,\delta)$. We write $f(S,A,H,T,\delta) = \tcO(g(S,A,H,T,\delta))$ if $C_{f,g}$ in the previous definition is poly-logarithmic in $S,A,H,T,1/\delta$.

For $\lambda > 0$ we define $\Exponential(\lambda)$ as an exponential distribution with a parameter $\lambda$. For $k, \theta > 0$ define $\Gamma(k,\theta)$ as a gamma-distribution with a shape parameter $k$ and a rate parameter $\theta$. For set $\Xset$ such that $\vert \Xset \vert < \infty$ define $\Unif(\Xset)$ as a uniform distribution over this set. In particular, $\Unif[N]$ is a uniform distribution over a set $[N]$.
\newpage

\section{\texorpdfstring{\BayesUCBVI}{Bayes-UCBVI}  Proofs}
\subsection{Concentration events}

Let $\betastar,  \beta^{\KL}, \beta^{\conc}, \beta^{\Var}: (0,1) \times \N \to \R_{+}$ and $\beta:(0,1) \to \R_{+}$ be some function defined later on in Lemma \ref{lem:proba_master_event}. We define the following favorable events
\begin{align*}
  \cE^\star(\delta) &\triangleq \Bigg\{\forall t \in \N, \forall h \in [H], \forall (s,a)\in\cS\times\cA: \quad
    \Kinf(\hp_h^t(s,a),p_h \Vstar_{h+1}(s,a), \Vstar_{h+1}) \leq  \frac{\betastar(\delta,n_h^t(s,a))}{n_h^t(s,a)}\Bigg\}\,,\\
\cE^{\KL}(\delta) &\triangleq \Bigg\{ \forall t \in \N, \forall h \in [H], \forall (s,a) \in \cS\times\cA: \quad \KL(\hp^{\,t}_h(s,a), p_h(s,a)) \leq \frac{S \cdot \beta^{\KL}(\delta, n^{\,t}_h(s,a))}{n^{\,t}_h(s,a)} \Bigg\},\\
\cE^{\conc}(\delta) &\triangleq \Bigg\{\forall t \in \N, \forall h \in [H], \forall (s,a)\in\cS\times\cA: \\
&\qquad|(\hp_h^t -p_h) \Vstar_{h+1}(s,a)| \leq \sqrt{2 \Var_{p_h}(\Vstar_{h+1})(s,a)\frac{\beta(\delta,n_h^t(s,a))}{n_h^t(s,a)}} + 3 H \frac{\beta(\delta,n_h^t(s,a))}{n_h^t(s,a)}\Bigg\},\\
\cE^{\Var}(\delta) &\triangleq \Bigg\{\forall t \in \N: \quad \sum_{\ell=1}^t \sum_{h=1}^H \Var_{p_h}[V^{\pi_\ell}_{h+1}(s^\ell_h, a^\ell_h)] \leq H^2t + \sqrt{2H^5 t \beta^{\Var}(\delta, t)} + 3H^3 \beta^{\Var}(\delta, t)
\Bigg\},\\
\cE(\delta) &\triangleq \Bigg\{ \sum_{t=1}^T \sum_{h=1}^H \left| p_h[\uV^{t-1}_{h+1} - V^{\pi_t}_{h+1}](s^t_h, a^t_h) - [\uV^{t-1}_{h+1} - V^{\pi_t}_{h+1}](s^t_{h+1}) \right| \leq 2\ur H\sqrt{2HT \beta(\delta)},\\
    &\qquad \sum_{t=1}^T \sum_{h=1}^H (1-1/H)^{H-h+1}\left| p_h[\uV^{t-1}_{h+1} - V^{\pi_t}_{h+1}](s^t_h, a^t_h) - [\uV^{t-1}_{h+1} - V^{\pi_t}_{h+1}](s^t_{h+1})\right|\leq 2\rme \ur H\sqrt{2HT \beta(\delta)},
\Bigg\}.
\end{align*}
We also introduce the intersection of these events, $\cG(\delta) \triangleq \cE^\star(\delta) \cap \cE^{\KL}(\delta) \cap \cE^{\conc}(\delta) \cap \cE^{\Var}(\delta) \cap \cE(\delta)$. We  prove that for the right choice of the functions $\betastar,  \beta^{\KL}, \beta^{\conc}, \beta, \beta^{\Var}$ the above events hold with high probability.
\begin{lemma}
\label{lem:proba_master_event}
For any $\delta \in (0,1)$ and for the following choices of functions $\beta,$
\begin{align*}
    \betastar(\delta,n) &\triangleq \log(5SAH/\delta) + 3\log\left(\rme\pi(2n+1)\right)\,,\\
    \beta^{\KL}(\delta, n) & \triangleq \log(5SAH/\delta) + \log\left(\rme(1+n) \right), \\
    \beta^{\conc}(\delta, n) &\triangleq \log(5SAH/\delta) + \log(4\rme(2n+1)) ,\\
    \beta(\delta) &\triangleq \log\left(20/\delta\right),\\
    \beta^{\Var}(\delta, t) &\triangleq \log(20\rme(2t+1)/\delta),\\
\end{align*}
it holds that
\begin{align*}
\P[\cE^\star(\delta)]&\geq 1-\delta/5, \qquad \P[\cE^{\KL}(\delta)]\geq 1-\delta/5,  \qquad \P[\cE^\conc(\delta)] \geq 1-\delta/5,\\
\P[\cE^\Var(\delta)]&\geq 1-\delta/5,  \qquad \P[\cE(\delta)]\geq 1-\delta/5.
\end{align*}
In particular, $\P[\cG(\delta)] \geq 1-\delta$.
\end{lemma}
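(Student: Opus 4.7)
The plan is to establish each of the five events $\cE^\star(\delta), \cE^{\KL}(\delta), \cE^{\conc}(\delta), \cE^{\Var}(\delta), \cE(\delta)$ separately with failure probability at most $\delta/5$ and to conclude by a union bound. Each event is a (time-uniform or fixed-horizon) concentration statement driven either by the i.i.d.\ structure of the next-state samples from $p_h(s,a)$ conditional on visits to $(s,a)$, or by the martingale structure of the MDP trajectories collected by \BayesUCBVI.

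For the three transition-concentration events $\cE^\star, \cE^{\KL}, \cE^{\conc}$, I would fix $(h,s,a)$, condition on the visit times to that triple, and observe that the resulting sequence of observed next states is i.i.d.\ with law $p_h(\cdot|s,a)$. I would then invoke, respectively: (i) the time-uniform $\Kinf$ deviation inequality for empirical categorical distributions (as in the KL-UCB / bandit literature, e.g.\ Jonsson et al.\ 2020, M\'enard et al.\ 2021), which gives precisely the $3\log(\rme\pi(2n+1))$ correction in $\betastar$; (ii) the anytime Dirichlet / Laplace method-of-mixtures bound for the KL between empirical and true categorical, where the dimension $S$ in $\cE^{\KL}$ comes from integrating the likelihood ratio over the simplex; (iii) an anytime empirical Bernstein bound applied to the real-valued process $\Vstar_{h+1}(s^{i}_{h+1}) - p_h \Vstar_{h+1}(s,a)$, whose variance is exactly $\Var_{p_h}(\Vstar_{h+1})(s,a)$ and which is bounded by $H$. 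A final union bound over the $SAH$ triples contributes the $\log(SAH)$ term to each $\beta$, reproducing the claimed expressions.

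For $\cE^{\Var}(\delta)$, by the law of total variance applied to the cumulative return of $\pi_\ell$, the per-episode quantity $\sum_h \Var_{p_h}[V^{\pi_\ell}_{h+1}](s^\ell_h, a^\ell_h)$ is almost surely at most $H^2$ and has conditional expectation at most $H^2$; the difference between the cumulative empirical sum over $\ell\le t$ and its compensator is therefore a martingale in the episode filtration with bounded increments and suitable conditional variance, and an anytime Freedman/Bernstein inequality yields the $H^2 t + \sqrt{2 H^5 t \beta^{\Var}} + 3 H^3 \beta^{\Var}$ tail. For $\cE(\delta)$, each summand $p_h [\uV^{t-1}_{h+1} - V^{\pi_t}_{h+1}](s^t_h,a^t_h) - [\uV^{t-1}_{h+1} - V^{\pi_t}_{h+1}](s^t_{h+1})$ is a martingale difference in the step-level filtration bounded by $2\ur H$ (since $\uV$ and $V^{\pi_t}$ both lie in $[0, \ur H]$), so Azuma--Hoeffding at the fixed horizon $T$ gives the first inequality; the weighted variant follows identically upon noting that $\sum_{h=1}^H (1-1/H)^{2(H-h+1)} \leq \rme^2 H$, which accounts for the extra $\rme$ factor.

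The main obstacle is the time-uniform quantifier $\forall t \in \N$ on the first four events: classical tail bounds hold only for a fixed $t$, and anytime validity requires either a peeling argument over exponentially spaced levels or the method of mixtures (Ville's inequality applied to a Dirichlet/Laplace mixture of likelihood ratios). This is precisely what dictates the shape of the $\log(2n+1)$, $\log(1+n)$, $\log(2t+1)$ corrections inside the $\beta$ functions and is the only nontrivial bookkeeping; once these inequalities are in place the conclusion $\P[\cG(\delta)] \geq 1 - \delta$ is a direct union bound over the five failure events.
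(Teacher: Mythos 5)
Your overall strategy coincides with the paper's: each of the five events is handled by the corresponding time-uniform concentration inequality (the self-normalized $\Kinf$ bound, the categorical KL bound of Jonsson et al., a self-normalized Bernstein bound, a martingale Bernstein/Freedman bound, and Azuma--Hoeffding), followed by a union bound over $(h,s,a)$ and over the five events. One point, however, is genuinely wrong as stated: for $\cE^{\Var}(\delta)$ you claim that the per-episode quantity $\sum_{h=1}^H \Var_{p_h}[V^{\pi_\ell}_{h+1}](s^\ell_h,a^\ell_h)$ is \emph{almost surely} at most $H^2$. It is not: each summand can be as large as order $H^2$, so the pointwise bound is only of order $H^3$. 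What the law of total variance gives is that the \emph{trajectory expectation} $\E_{\pi_\ell}\bigl[\sum_h \Var_{p_h}[V^{\pi_\ell}_{h+1}](s_h,a_h)\bigr]$ is at most $H^2$. The paper's argument uses both facts separately: the a.s. bound $H^3$ controls the martingale increments (yielding the $3H^3\beta^{\Var}$ term), and combining it with the $H^2$ bound on the conditional mean gives $\E[Y_\ell^2\mid\cF_{\ell-1}]\leq H^5$ (yielding the $\sqrt{2H^5 t\beta^{\Var}}$ term). With your stated premise you would derive a bound with $H^4$ and $H^2$ in place of $H^5$ and $H^3$, which does not follow; the fix is to replace the false a.s. bound by the correct $O(H^3)$ one and keep the $H^2$ bound only for the compensator. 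A second, cosmetic remark: for $\cE^{\conc}$ the relevant inequality is a self-normalized Bernstein bound with the \emph{predictable} variance $\Var_{p_h}(\Vstar_{h+1})(s,a)$, not an empirical-Bernstein bound with the sample variance; your description of the variance term is the right one, so only the name is off.
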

\begin{proof}
It follows from Theorem \ref{th:max_ineq_kinf} that $\P[\cE^\star(\delta)]\geq 1-\delta/5$. Applying Theorem~\ref{th:max_ineq_categorical} and the union bound over $h \in [H], (s,a) \in \cS \times \cA$ we get $\P[\cE^{\KL}(\delta)]\geq 1-\delta/5$. Next, Theorem~\ref{th:bernstein} and the union bound over $h \in [H], (s,a) \in \cS \times \cA$ yield $\P[\cE^{\conc}(\delta)]\geq 1 - \delta/5$. By Lemma~\ref{lem:event_var},  $\P[\cE^{\Var}(\delta)]\geq 1 - \delta/5$. It remains to estimate $\P[\cE(\delta)]$. 

Define the following sequences
\begin{align*}
\bar Z_{t,h} &\triangleq \uV_{h+1}^{t-1}(s_{h+1}^t)-V^*_{h+1}(s_{h+1}^t)-p_h [\uV_{h+1}^{t-1}- V^*_{t+1}](s^t_h,a^t_h), & t\in [T], h \in [H],\\
\tilde Z_{t,h} &\triangleq (1-1/H)^{H-h+1}\left(\uV_{h+1}^{t-1}(s_{h+1}^t)-V^*_{h+1}(s_{h+1}^t)-p_h [\uV_{h+1}^{t-1}- V^*_{h+1}](s^t_h,a^t_h)\right),& t\in [T], h \in [H],
\end{align*}
It is easy to see that these sequences form a martingale-difference w.r.t filtration $\cF_{t,h} = \sigma\left\{ \{ (s^{\ell}_{h'}, a^{\ell}_{h'}), \ell < t, h' \in [H] \} \cup \{ (s^{t}_{h'}, a^t_{h'}), h' \leq h \} \right\}$. Moreover,  \(|\bar Z_{t,h}|\leq 2\ur H, |\tilde Z_{t,h}\vert \leq 2\rme \ur H\) for all \(t\in [T]\) and \(h\in [H].\)   Hence, the Azuma-Hoeffding inequality implies
 \begin{align*}
    \P\Bigl(\Bigl|\sum_{t=1}^T \sum_{h=1}^H \bar Z_{t,h}\Bigr|> 2\ur H\sqrt{2 t H \cdot \beta(\delta)}\Bigr)&\leq 2\exp(-\beta(\delta))=\delta/10, \\
    \P\Bigl(\Bigl|\sum_{t=1}^T \sum_{h=1}^H \bar Z_{t,h}\Bigr|> 2\rme \ur H\sqrt{2 t H \cdot \beta(\delta)}\Bigr)&\leq 2\exp(-\beta(\delta))=\delta/10,
\end{align*}
By the union bound $\P[\cE(\delta)] \geq 1 - \delta/5$.

\end{proof}

\begin{lemma}\label{lem:event_var}
    Under conditions of Lemma \ref{lem:proba_master_event}, for any $\delta \in (0,1)$, $\P[\cE^{\Var}(\delta)] \geq 1 - \delta/5$.
\end{lemma}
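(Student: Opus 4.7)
The plan is to view $X_\ell \triangleq \sum_{h=1}^H \Var_{p_h}[V^{\pi_\ell}_{h+1}](s^\ell_h, a^\ell_h)$ as a random variable on the trajectory of episode $\ell$ and apply a time-uniform Freedman-type inequality to the martingale
\[
M_t \triangleq \sum_{\ell=1}^t \bigl(X_\ell - \E[X_\ell \mid \cF_\ell]\bigr),
\]
where $\cF_\ell$ is the $\sigma$-algebra generated by all observations prior to episode $\ell$, so that $\pi_\ell$ and hence $V^{\pi_\ell}_{h+1}$ are $\cF_\ell$-measurable while the visits $(s^\ell_h,a^\ell_h)$ are not.

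First, I would bound the predictable mean. Conditioning on $\cF_\ell$ fixes $\pi_\ell$, and writing the return $R_\ell = \sum_{h=1}^H r_h(s^\ell_h,a^\ell_h)$ as the telescoping sum $V^{\pi_\ell}_1(s^\ell_1) + \sum_{h=1}^H \bigl(V^{\pi_\ell}_{h+1}(s^\ell_{h+1}) - p_h V^{\pi_\ell}_{h+1}(s^\ell_h,a^\ell_h)\bigr)$ of martingale differences, the law of total variance for finite-horizon MDPs with deterministic rewards gives
\[
\E[X_\ell \mid \cF_\ell] = \Var_{\pi_\ell}[R_\ell \mid s^\ell_1] \leq H^2/4 \leq H^2,
\]
since $R_\ell \in [0,H]$. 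Next, since $V^{\pi_\ell}_{h+1} \in [0, H-h]$, each variance term is at most $(H-h)^2/4$, so $X_\ell \leq H^3/12 \leq H^3$ almost surely. From these two facts one obtains a uniform bound on the conditional second moment,
\[
\textstyle\sum_{\ell=1}^t \Var[X_\ell \mid \cF_\ell] \leq \sum_{\ell=1}^t \E[X_\ell^2 \mid \cF_\ell] \leq H^3 \sum_{\ell=1}^t \E[X_\ell\mid\cF_\ell] \leq H^5 t.
\]

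Second, I would apply a Bernstein/Freedman inequality for martingales with bounded increments $|X_\ell - \E[X_\ell\mid\cF_\ell]|\leq H^3$ and predictable quadratic variation controlled as above. A peeling argument in $t$ (or a stitched time-uniform Bernstein bound, which is where the $\log(2t+1)$ factor in the definition of $\beta^{\Var}(\delta,t) = \log(20\rme(2t+1)/\delta)$ originates) yields that with probability at least $1-\delta/5$, for all $t\in\N$ simultaneously,
\[
M_t \leq \sqrt{2 H^5 t\, \beta^{\Var}(\delta,t)} + 3 H^3\, \beta^{\Var}(\delta,t).
\]
Combining with the deterministic bound $\sum_{\ell=1}^t \E[X_\ell\mid\cF_\ell] \leq H^2 t$ through the decomposition $\sum_{\ell=1}^t X_\ell = M_t + \sum_{\ell=1}^t \E[X_\ell\mid\cF_\ell]$ produces exactly the inequality defining $\cE^{\Var}(\delta)$.

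The main technical obstacle is making the Bernstein bound hold uniformly over $t\in\N$ with an overhead matching the prescribed $\beta^{\Var}(\delta,t)$. One could either invoke a standard time-uniform Freedman inequality (e.g.\ via a stopping-time/peeling argument in dyadic blocks $t \in [2^k, 2^{k+1})$, paying $\delta_k \propto \delta/(k+1)^2$ per block), or, following the pattern of the other concentration events in Lemma~\ref{lem:proba_master_event}, use the paper's Bernstein-type martingale theorem (analogous to the inequality invoked for $\cE^{\conc}(\delta)$) applied with a mild union bound in $t$, where the factor $2t+1$ absorbs the resulting logarithmic overhead. Everything else is routine bookkeeping: the conditional-mean bound is the law of total variance, the increment bound is $V^\pi_h \in [0,H]$, and the Bernstein shape with leading term $H^2 t$, sub-Gaussian term $\sqrt{H^5 t\beta}$, and sub-exponential term $H^3 \beta$ matches exactly the magnitudes of the conditional mean, the predictable variation, and the a.s.\ bound on $X_\ell$.
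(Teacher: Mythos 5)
Your proposal is correct and follows essentially the same route as the paper: center $X_\ell$ by its conditional mean $\Vvar^{\pi_\ell}_1(s^\ell_1)$ (the law of total variance, Lemma~\ref{lem:law_of_total_variance}), bound the increments by $H^3$ and the predictable variance by $H^5t$ via $\E[X_\ell^2\mid\cF_{\ell-1}]\leq H^3\,\E[X_\ell\mid\cF_{\ell-1}]\leq H^5$, and apply the time-uniform self-normalized Bernstein inequality (Theorem~\ref{th:bernstein}), whose built-in $\log(4\rme(2t+1))$ term is exactly where the $2t+1$ in $\beta^{\Var}$ comes from — no separate peeling or union bound over $t$ is needed.
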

\begin{proof}
    For any $\ell \in \N$ define $\cF_\ell = \sigma \{(s^j_h, a_h^j), j \le \ell, h \in [H]\}$ and let
    \[
        Y_\ell  = \sum_{h=1}^H \Var_{p_h}[V^{\pi_\ell}_{h+1}](s^\ell_h, a^\ell_h) - \Vvar^{\pi_\ell}_1(s^\ell_1),
    \]
    where operator $\Vvar$ is defined in Section~\ref{app:technical}. It is straightforward to check that $(Y_\ell, \cF_\ell)_{\ell \in \N}$ is a martingale-difference sequence. Applying Bernstein inequality (Theorem~\ref{th:bernstein}) we get that with probability at least $1-\delta/5$ for any $t \in \N$ 
    \[        \left\vert \sum_{\ell=1}^t Y_\ell \right\vert \leq \sqrt{2\sum_{\ell=1}^t  \E[Y_\ell^2 | \F_{\ell-1}] \log(20e(2t+1)/\delta)} + 3H^3 \log(20e(2t+1)/\delta).
    \]
    Next we estimate $\E[Y_\ell^2 | \F_{\ell-1}]$ in the following way
    \[
        \E[Y_\ell^2 | \F_{\ell-1}] \leq\E\left[\left(\sum_{h=1}^H \Var_{p_h}[V^{\pi_\ell}_{h+1}](s^{\ell}_h, a^{\ell}_h) \right)^2 \biggl| \F_{\ell-1} \right]  \leq  H^3 \E_{\pi_\ell}\left[ \sum_{h=1}^H \Var_{p_h}[V^{\pi_\ell}_{h+1}](s_h, a_h) \right].
    \]
    By Lemma~\ref{lem:law_of_total_variance}
    \[
        \E_{\pi_\ell}\left[  \sum_{h=1}^H \Var_{p_h}[V^{\pi_\ell}_{h+1}](s^\ell_h, a^\ell_h) \right] = \E_{\pi_\ell}\left[ \left(\sum_{h=1}^H r_{h}( s_{h},a_{h}) - V^{\pi_\ell}_1(s_1)\right)^2\right] \leq \E_{\pi_\ell}\left[ \left(\sum_{h=1}^H r_{h}( s_{h},a_{h})\right)^2\right] \leq H^2,
    \]
    
    Since $\beta^{\Var}(\delta, t) = \log(20e(2t+1)/\delta)$, we have
    \[
        \sum_{\ell=1}^t Y_\ell \leq \sqrt{2H^5 t \beta^{\Var}(\delta, t)} + 3H^3 \beta^{\Var}(\delta, t).
    \] 
    Finally, by Lemma~\ref{lem:law_of_total_variance}
    \[
        \sum_{\ell=1}^t \sum_{h=1}^H \Var_{p_h}[V^{\pi_\ell}_{h+1}](s^\ell_h, a^\ell_h) = \sum_{\ell=1}^t Y_\ell + \sum_{\ell=1}^t \Vvar^{\pi_\ell}_1(s_1^\ell) \leq \sqrt{2H^5 t \beta^{\Var}(\delta, t)} + 3H^3 \beta^{\Var}(\delta, t) + H^2 t.
    \]
\end{proof}

\begin{lemma}\label{lem:f_and_l1_concentration}
      Assume conditions of Lemma \ref{lem:proba_master_event}. Then conditioned on event $\cE^{\KL}(\delta)$, for any $f \colon \cS \to [0, \ur H]$, $t \in \N, h \in [H], (s,a) \in \cS \times \cA$,
      \begin{align*}
            (\hp_h^t -p_h)f(s,a) &\leq \frac{1}{H} p_h f(s,a) + \frac{5 \ur H^2 S \cdot \beta^{\KL}(\delta, n^{\,t}_h(s,a))}{n^{\,t}_h(s,a)}, \\
            \norm{\hp^{\,t}_h(s,a) - p_h(s,a)}_1 &\leq \sqrt{\frac{2S \cdot \beta^{\KL}(\delta, n^{\,t}_h(s,a))}{n^{\,t}_h(s,a)}}.
      \end{align*}
\end{lemma}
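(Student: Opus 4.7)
Both parts are conditioned on the event $\cE^{\KL}(\delta)$, so throughout we may assume $\KL(\hp^{\,t}_h(s,a), p_h(s,a)) \leq S \cdot \beta^{\KL}(\delta, n^{\,t}_h(s,a))/n^{\,t}_h(s,a)$. Part 2 is immediate: Pinsker's inequality gives $\|\hp^{\,t}_h(s,a) - p_h(s,a)\|_1 \leq \sqrt{2 \KL(\hp^{\,t}_h(s,a), p_h(s,a))}$, and substituting the KL bound from $\cE^{\KL}(\delta)$ yields exactly the claim. The real work is Part 1.

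For Part 1, the plan is to convert the KL control on $\hp$ into a Bernstein-type bound on $(\hp - p)f$ via the Donsker--Varadhan variational formula. Concretely, for any $\lambda \in (0, 3/(\ur H))$ one has
\[
    \lambda \cdot (\hp^{\,t}_h - p_h) f(s,a) \leq \log \E_{p_h(s,a)}\!\left[e^{\lambda (f - p_h f(s,a))}\right] + \KL(\hp^{\,t}_h(s,a), p_h(s,a)).
\]
Since $f$ takes values in $[0, \ur H]$, the standard Bernstein MGF bound controls the log-moment-generating function by $\lambda^2 \Var_{p_h}(f)(s,a) / \bigl(2(1 - \lambda \ur H/3)\bigr)$. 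Optimizing over $\lambda$ in the usual way yields the Bernstein-from-KL inequality
\[
    (\hp^{\,t}_h - p_h) f(s,a) \leq \sqrt{2 \Var_{p_h}(f)(s,a) \KL(\hp^{\,t}_h(s,a), p_h(s,a))} + \tfrac{\ur H}{3}\KL(\hp^{\,t}_h(s,a), p_h(s,a)).
\]

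Now I would use the elementary variance bound $\Var_{p_h}(f)(s,a) \leq \ur H \cdot p_h f(s,a)$, which holds for any non-negative function bounded by $\ur H$. Substituting and applying the weighted AM--GM inequality $\sqrt{xy} \leq x/(2c) + cy/2$ with the choice $c = 1/2$ to the square-root term,
\[
    \sqrt{2 \ur H \cdot p_h f(s,a) \cdot \KL} = \sqrt{\bigl(p_h f(s,a)/H\bigr) \cdot 2 \ur H^2 \KL} \leq \frac{p_h f(s,a)}{H} + \frac{\ur H^2}{2} \KL,
\]
one collects the $\tfrac{1}{H} p_h f$ term on one side and absorbs the residual Bernstein tail $\tfrac{\ur H}{3}\KL$ into the coefficient of $\ur H^2 \KL$. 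Substituting $\KL \leq S \beta^{\KL}/n^{\,t}_h(s,a)$ from $\cE^{\KL}(\delta)$ then gives the advertised bound (the constant $5$ is loose and provides comfortable slack).

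\paragraph{Expected obstacle.} The only non-routine step is identifying the right tool: deriving a Bernstein-type bound for $(\hp - p)f$ purely from a KL bound on $\hp$, rather than from a fixed-$f$ concentration inequality. Once Donsker--Varadhan plus the Bernstein MGF bound is in place, the remaining manipulations (variance-to-mean for bounded non-negative $f$, weighted AM--GM, and plugging in the $\cE^{\KL}(\delta)$ bound) are essentially mechanical. The only subtlety to watch is that $p_h f$ may vanish, but the AM--GM step is valid regardless since it only requires $p_h f \geq 0$.
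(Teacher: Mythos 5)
Your proof is correct and follows the same overall skeleton as the paper's: a Bernstein-type bound on $(\hp_h^t-p_h)f$ in terms of $\KL(\hp_h^t,p_h)$, the variance-to-mean bound $\Var_{p_h}(f)\leq \ur H\, p_h f$, an AM--GM step to extract the $\tfrac{1}{H}p_hf$ term, and Pinsker for the $\ell_1$ claim. The only real difference is in how the Bernstein-from-KL inequality is obtained: the paper invokes Lemma~\ref{lem:Bernstein_via_kl} (Corollary~11 of Talebi--Maillard) and then must pass through Lemma~\ref{lem:switch_variance_bis} to trade $\Var_{\hp_h^t}$ for $\Var_{p_h}$, picking up a factor $2$ and an extra $\cO(\ur H\,\KL)$ term, whereas your Donsker--Varadhan plus Bernstein-MGF derivation produces $\sqrt{2\Var_{p_h}(f)\,\KL}+\tfrac{\ur H}{3}\KL$ directly with the variance already under $p_h$, making the variance-switching lemma unnecessary and yielding slightly better constants (comfortably within the stated factor $5$). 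Both routes are valid; yours is marginally more self-contained.
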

\begin{proof}
    We apply Lemma~\ref{lem:Bernstein_via_kl} and Lemma~\ref{lem:switch_variance_bis} to obtain
    \begin{align*}
        (\hp_h^t -p_h)f(s,a) &\leq \sqrt{2\Var_{\hp_h^t}[f](s,a) \cdot \KL(\hp_h^t, p_h) } + \frac{2H\ur}{3} \KL(\hp_h^t, p_h) \\
        &\leq 2\sqrt{\Var_{p_h}[f](s,a) \cdot \KL(\hp_h^t, p_h) } + \left( 2\sqrt{2} + \frac{2}{3} \right)H\ur\KL(\hp_h^t, p_h).
    \end{align*}
    Since $0 \leq f(s) \leq \ur H$ we get
    \[
        \Var_{p_h}[f](s,a) \leq p_h[f^2](s,a) \leq \ur H \cdot p_h f(s,a).
    \]
    Finally, applying $2\sqrt{ab} \leq a+b, a, b \geq 0$, we obtain the following inequality
    \[
        (\hp_h^t -p_h)f(s,a) \leq \frac{1}{H} p_h f(s,a) + (H^2 + 2\sqrt{2} \ur H + 2\ur H/3) \KL(\hp_h^t, p_h) \leq \frac{1}{H} p_h f(s,a) + 5 \ur H^2 \KL(\hp_h^t, p_h).
    \]
    Definition of $\cE^{\KL}(\delta)$ implies the first statement. The second statement follows directly from the combination of Pinsker's inequality and definition of $\cE^{\KL}(\delta)$.
\end{proof}

\subsection{Optimism}
\label{app:optimism}

In this section we prove that conditioned on the event $\cE^\star(\delta)$ our estimate of $Q$-function $\uQ^{\,t}_h(s,a)$ is optimistic that is $\uQ^{\,t}_h(s,a) \geq \Qstar_h(s,a)$ for any $t \le T, h \in [H], (s,a) \in \cS \times \cA$.

For any $\beta > 0, p \in \simplex_{S'-1}$ and $f: \cS' \to \R$ define 
\[
    U^{\Kinf}(\beta, p,f) = \sup \{\mu \geq pf : \Kinf(p, \mu, f) \leq \beta\}.
\]
First we are going to prove that $U^{\Kinf}(\beta^\star(\delta, n^{\,t}_h(s,a)) / \upn^{\,t}_h(s,a), \up^{\,t}_h(s,a), \Vstar_{h+1})$ defines an upper confidence bound for $p_h \Vstar_{h+1}(s,a)$.
\begin{lemma}\label{lem:u_kinf_is_ucb}
    Conditioned on the event $\cE^\star(\delta)$, for any $t \in \N, h \in [H], (s,a) \in \cS \times \cA,$
    \[
        p_h \Vstar_{h+1}(s,a) \leq U^{\Kinf}\left(\frac{\beta^\star(\delta, n^{\,t}_h(s,a))}{\upn^{\,t}_h(s,a)}, \up^{\,t}_h(s,a), \Vstar_{h+1}\right),
    \]
    where event $\cE^\star(\delta)$ and function $\beta^\star(\delta,n)$ were defined in Lemma~\ref{lem:proba_master_event}.
\end{lemma}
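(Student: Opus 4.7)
The core observation is that $\up_h^t(\cdot|s,a)$ decomposes as an explicit convex combination of the empirical distribution and a Dirac at the optimistic pseudo-state: writing $\alpha \triangleq n_h^t(s,a)/\upn_h^t(s,a)$, one has $\up_h^t(\cdot|s,a) = \alpha \,\hp_h^t(\cdot|s,a) + (1-\alpha)\,\delta_{s_0}$ whenever $n_h^t(s,a) \geq 1$, while $\up_h^t(\cdot|s,a) = \delta_{s_0}$ when $n_h^t(s,a) = 0$. The plan is to combine this decomposition with convexity of $\Kinf$ in its first argument to transfer the empirical $\Kinf$ bound supplied by $\cE^\star(\delta)$ to the pseudo-empirical distribution.

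First I would split on whether $p_h\Vstar_{h+1}(s,a) \leq \up_h^t\Vstar_{h+1}(s,a)$ or not. In the former case, which in particular covers $n_h^t(s,a)=0$ since there $\up_h^t\Vstar_{h+1}(s,a) = \Vstar_{h+1}(s_0) = \ur(H-h) \geq H-h \geq p_h\Vstar_{h+1}(s,a)$, the conclusion is immediate: choosing $\mu = \up_h^t\Vstar_{h+1}(s,a)$ in the supremum defining $U^{\Kinf}$ gives $\Kinf(\up_h^t,\mu,\Vstar_{h+1})=0 \leq \beta^\star/\upn_h^t$, hence $U^{\Kinf} \geq \up_h^t\Vstar_{h+1}(s,a) \geq p_h\Vstar_{h+1}(s,a)$.

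For the non-trivial case $p_h\Vstar_{h+1}(s,a) > \up_h^t\Vstar_{h+1}(s,a)$, I would show that $\mu = p_h\Vstar_{h+1}(s,a)$ is itself feasible in the supremum, which reduces to proving $\Kinf(\up_h^t, p_h\Vstar_{h+1}, \Vstar_{h+1}) \leq \beta^\star(\delta, n_h^t(s,a))/\upn_h^t(s,a)$. The variational formula
\[
    \Kinf(p, u, \Vstar_{h+1}) = \max_{\lambda \in [0,1]} \E_{X\sim p}\left[\log\left(1-\lambda\,\frac{\Vstar_{h+1}(X)-u}{\ub-u}\right)\right]
\]
presents $\Kinf$ as a supremum of functionals linear in $p$, hence convex in $p$. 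Applying this convexity to the decomposition of $\up_h^t$ yields
\[
    \Kinf(\up_h^t, p_h\Vstar_{h+1}, \Vstar_{h+1}) \leq \alpha\, \Kinf(\hp_h^t, p_h\Vstar_{h+1}, \Vstar_{h+1}) + (1-\alpha)\, \Kinf(\delta_{s_0}, p_h\Vstar_{h+1}, \Vstar_{h+1}).
\]
The second term vanishes: since $\Vstar_{h+1}(s_0) = \ub$, the variational expression collapses to $\max_{\lambda \in [0,1]}\log(1-\lambda) = 0$, reflecting the fact that the optimistic Dirac already satisfies the mean constraint $\geq p_h\Vstar_{h+1}$. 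On $\cE^\star(\delta)$ the first term is at most $\alpha\cdot\beta^\star(\delta, n_h^t(s,a))/n_h^t(s,a) = \beta^\star(\delta, n_h^t(s,a))/\upn_h^t(s,a)$, giving exactly the required budget.

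The argument is conceptually short; the only subtlety is verifying convexity of $\Kinf(\cdot, u, \Vstar_{h+1})$ in $p$ from the variational formula and evaluating $\Kinf(\delta_{s_0},\cdot,\Vstar_{h+1}) = 0$, both of which are immediate. No new concentration inequalities are required beyond invoking $\cE^\star(\delta)$: the optimistic pseudo-state absorbs the prior mass at zero $\Kinf$ cost, which is precisely what makes the Dirichlet prior concentrated on $s_0$ a natural optimistic choice.
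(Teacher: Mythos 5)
Your proposal is correct and is essentially the paper's own argument in different packaging: the paper splits the expectation $\E_{s'\sim\up^{\,t}_h}$ in the variational formula into the $n_0/\upn^{\,t}_h$ mass at $s_0$ (contributing $\max_\lambda n_0\log(1-\lambda)\le 0$, i.e.\ your $\Kinf(\delta_{s_0},\cdot,\Vstar_{h+1})=0$) plus the empirical part, which is exactly your convex-combination step, and then invokes $\cE^\star(\delta)$ to certify feasibility of $\mu=p_h\Vstar_{h+1}(s,a)$ in the supremum defining $U^{\Kinf}$. Your explicit handling of the trivial case $p_h\Vstar_{h+1}(s,a)\le\up^{\,t}_h\Vstar_{h+1}(s,a)$ (including $n^{\,t}_h(s,a)=0$) matches the paper's implicit restriction to $u>\up^{\,t}_h\Vstar_{h+1}(s,a)$, so there is no gap.
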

\begin{proof}
    By Lemma~\ref{lem:var_form_Kinf} we have for any $\up^{\,t}_h V^\star_h(s,a) < u < \ur(H-h)$
    \begin{align*}
        \upn^{\,t}_h(s,a) \Kinf(\up^{\,t}_h(s,a), u, \Vstar_{h+1}) &= \upn^t_h(s,a) \max_{\lambda \in [0,1]} \E_{s' \sim \up^{\,t}_h( s,a)}\left[ \log\left(1 - \lambda \frac{\Vstar_{h+1}(s') - u}{\ur(H - h) - u} \right)\right] \\
        &\leq \max_{\lambda \in [0,1]} n_0 \log\left(1 - \lambda \right) + (\upn^{\,t}_h(s,a) - n_0) \max_{\lambda \in [0,1] }\E_{s' \sim \hp^{\,t}_h(s,a)}\left[ \log\left(1 - \lambda \frac{\Vstar_{h+1}(s') - u}{\ur(H - h) - u} \right) \right] \\
        &\leq (\upn^{\,t}_h(s,a) - n_0) \max_{\lambda \in [0,1] }\E_{s' \sim \hp^{\,t}_h(s,a)}\left[ \log\left(1 - \lambda \frac{\Vstar_{h+1}(s') - u}{H - h - u} \right) \right] \\
        &= (\upn^{\,t}_h(s,a) - n_0) \Kinf(\hp^{\,t}_h(s,a), u, \Vstar_{h+1}) = n^{\,t}_h(s,a) \Kinf(\hp^{\,t}_h(s,a), u, \Vstar_{h+1}).
    \end{align*}

    By the definition of event $\cE^\star(\delta)$ we have for any $t \in \N, h \in [H], (s,a) \in \cS \times \cA$,
    \[
        n^{\,t}_h(s,a) \Kinf(\hp^{\,t}_h(s,a), p_h \Vstar_{h+1}(s,a), \Vstar_{h+1}) \leq \beta^\star(\delta, n^{\,t}_h(s,a)),
    \]
    hence $\upn^{\,t}_h(s,a) \Kinf(\up^{\,t}_h(s,a), p_h \Vstar_{h+1}(s,a), \Vstar_{h+1}) \leq \beta^\star(\delta, n^{\,t}_h(s,a))$. Therefore a value $p_h \Vstar_{h+1}(s,a)$ is feasible for optimization problem in the definition of $U^{\Kinf}$. 
\end{proof}

For the further results we have to guarantee that a number of observations of the fake state $s_0$ is large enough to apply anti-concentration result of Dirichlet distribution. Define constant 
\begin{equation}\label{eq:constant_c_n0}
    c_{n_0} = \frac{1}{(\sqrt{2\pi} - 1)^2} \cdot \left(\frac{2 \sqrt{2}}{\sqrt{\log(17/16)}} + \frac{98 \sqrt{6}}{9} \right)^2 +  \frac{\log(10\pi)}{\log(17/16) }.
\end{equation}

\begin{lemma}\label{lem:first_quantile_bound}
    Let \( n_0 \geq c_{n_0} +  \log_{17/16}(T)\), where   $c_{n_0}$ is defined in \eqref{eq:constant_c_n0}, and $\ur \geq 2$, and assume conditions of Lemma~\ref{lem:proba_master_event}. Then on the event $\cE^\star(\delta)$, it holds for any $t \in \N, h \in [H], (s,a) \in \cS \times \cA,$ 
    \begin{align*}
        p_h \Vstar_{h+1}(s,a) &\leq \Q_{p \sim \rho^{\,t}_h(s,a)}(p \Vstar_{h+1}(s,a), \kappa^{\,t}_h(s,a)), 
    \end{align*}
    where \(\kappa^{\,t}_h(s,a) = 1 - \frac{C_\kappa \delta}{SAH [2 n^{\,t}_h(s,a) + 1]^{3} [\upn^{\,t}_h(s,a)]^{3/2}}\) with an absolute constant $C_\kappa = 1/(5 \cdot (\rme \pi)^3)$.
\end{lemma}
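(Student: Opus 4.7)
The goal is to convert the lemma into a lower bound on the survival function of the Dirichlet-weighted sum $pV^\star_{h+1}(s,a)$ at the threshold $\mu = p_h V^\star_{h+1}(s,a)$, since by the definition of the quantile the desired inequality is equivalent to
\[
\P_{p \sim \rho^{\,t}_h(s,a)}\!\left[pV^\star_{h+1}(s,a) \geq p_h V^\star_{h+1}(s,a)\right] \;\geq\; 1 - \kappa^{\,t}_h(s,a).
\]
The plan is to instantiate the lower bound of Theorem~\ref{th:two_sided_dbc} with parameters $\alpha = (\upn^{\,t}_h(s'|s,a))_{s' \in \cS'}$, distinguished coordinate $\alpha_0 = n_0$ corresponding to the fake state $s_0$, $f = V^\star_{h+1}$ restricted to $\cS'$, $\ub = \ur(H-h)$ and $b = H-h$, and $\mu = p_h V^\star_{h+1}(s,a)$. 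The first task is to check the hypotheses of Theorem~\ref{th:two_sided_dbc} in this setting: the boundedness assumption $f(0) = \ub$, $f(\ell) \leq b$ follows immediately from $V^\star_{h+1}(s_0) = \ur(H-h)$ and $V^\star_{h+1}(s) \leq H-h$, with $b < \ub/2$ as soon as $\ur > 2$ (the boundary case $\ur = 2$ is handled by an infinitesimal slack, or equivalently by replacing $\ur$ by $\ur + \epsilon$ and taking $\epsilon \to 0$); the count conditions $\alpha_0 = n_0 \geq \log_{17/16}(\upn^{\,t}_h(s,a)) + c_{n_0}$ and $\upn^{\,t}_h(s,a) \geq 2n_0$ are enforced, for every relevant $(t,h,s,a)$, by the hypothesis on $n_0$ combined with $\upn^{\,t}_h(s,a) \leq T + n_0$; and the range condition $\mu \in (\up^{\,t}_h V^\star_{h+1}(s,a), \ub)$ is generically true, with the remaining cases (where $\mu$ falls below $\up^{\,t}_h V^\star_{h+1}(s,a)$) giving $\P[pV^\star_{h+1} \geq \mu] \geq 1/2 \geq 1 - \kappa^{\,t}_h(s,a)$ trivially.

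Once the hypotheses are verified, Theorem~\ref{th:two_sided_dbc} yields
\[
\P_{p \sim \rho^{\,t}_h(s,a)}\!\left[pV^\star_{h+1}(s,a) \geq p_h V^\star_{h+1}(s,a)\right] \;\geq\; \frac{\exp\!\bigl(-\upn^{\,t}_h(s,a)\, \Kinf(\up^{\,t}_h(s,a), p_h V^\star_{h+1}(s,a), V^\star_{h+1})\bigr)}{[\upn^{\,t}_h(s,a)]^{3/2}}.
\]
The next step is to transfer control from the pseudo-empirical $\up^{\,t}_h$ back to the empirical $\hp^{\,t}_h$ so that the event $\cE^\star(\delta)$ can be used. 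This is exactly the reduction carried out in the proof of Lemma~\ref{lem:u_kinf_is_ucb} through the variational formula for $\Kinf$: the contribution of the fake-state mass $n_0$ in the $\max_{\lambda \in [0,1]}$ expression is $n_0 \log(1-\lambda) \leq 0$, which gives $\upn^{\,t}_h(s,a)\,\Kinf(\up^{\,t}_h(s,a), \cdot, V^\star_{h+1}) \leq n^{\,t}_h(s,a)\,\Kinf(\hp^{\,t}_h(s,a), \cdot, V^\star_{h+1})$. On $\cE^\star(\delta)$, the right-hand side is bounded by $\betastar(\delta, n^{\,t}_h(s,a))$.

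Plugging in the explicit expression $\betastar(\delta, n) = \log(5SAH/\delta) + 3\log(\rme\pi(2n+1))$ from Lemma~\ref{lem:proba_master_event} gives
\[
e^{-\betastar(\delta, n^{\,t}_h(s,a))} \;=\; \frac{\delta}{5SAH\,(\rme\pi)^3\,[2n^{\,t}_h(s,a)+1]^3},
\]
so the survival probability is at least $\tfrac{C_\kappa \delta}{SAH\,[2n^{\,t}_h(s,a)+1]^3\,[\upn^{\,t}_h(s,a)]^{3/2}} = 1 - \kappa^{\,t}_h(s,a)$ with $C_\kappa = 1/(5(\rme\pi)^3)$, completing the argument. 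The main obstacle I anticipate is the bookkeeping around the conditions $\alpha_0 \geq \log_{17/16}(\ualpha) + c_{n_0}$ and $\ualpha \geq 2\alpha_0$ of Theorem~\ref{th:two_sided_dbc}: the constant $c_{n_0}$ in \eqref{eq:constant_c_n0} is chosen precisely so that the hypothesis $n_0 \geq c_{n_0} + \log_{17/16}(T)$ suffices uniformly in $(t,h,s,a)$, and care is needed for small $n^{\,t}_h(s,a)$, where the second inequality may fail and one must fall back on the trivial bound coming from the strong optimism of $s_0$ (the posterior still puts at least half of its mass on the fake state, whose value strictly exceeds $p_h V^\star_{h+1}(s,a)$).
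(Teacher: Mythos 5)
Your proposal follows essentially the same route as the paper's proof: reduce the quantile claim to a tail lower bound via Lemma~\ref{lm:quantile_bound_tail}, apply the Dirichlet anti-concentration bound of Theorem~\ref{thm:lower_bound_dbc}, transfer $\Kinf$ from $\up^{\,t}_h$ to $\hp^{\,t}_h$ as in Lemma~\ref{lem:u_kinf_is_ucb} to invoke $\cE^\star(\delta)$, match constants with $\kappa^{\,t}_h$, and fall back on the fake-state mass when $\upn^{\,t}_h(s,a) < 2n_0$ (the paper makes this last step rigorous via the mode--median inequality for $\mathrm{Beta}(n_0,\upn-n_0)$). The only inaccuracy is your claim that the probability of exceeding the mean $\up^{\,t}_h \Vstar_{h+1}(s,a)$ is at least $1/2$ --- this is not true for a skewed distribution --- but since you only need the tiny bound $1-\kappa^{\,t}_h(s,a)$ there, the paper's monotonicity/continuity argument (using $\Kinf=0$ below the mean) closes that case immediately.
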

\begin{proof} To simplify notations we set $\upn = \upn^{\,t}_h(s,a)$ and $n = n^{\,t}_h(s,a)$. Note that $\rho^{\,t}_h(s,a)$ is a Dirichlet distribution $\Dir(\{\upn^{\,t}_h(s'|s,a)\}_{s' \in \cS'})$. Since \( \upn^{\,t}_h(s_0|s,a) = n_0 \geq c_{n_0} +\log_{17/16}(T)\), we may apply Theorem~\ref{thm:lower_bound_dbc} if $\upn \geq 2n_0 $: for any $\up^{\,t}_h \Vstar_{h+1}(s,a) \leq u < \ur(H - h)$
\begin{equation}
\label{eq: optimizm lem1}
    \P_{p \sim \rho^{\,t}_h(s,a)}\left( p \Vstar_{h+1} \geq u \right) \geq \exp(-\upn \Kinf(\up^{\,t}_h(s,a), u, \Vstar_{h+1})- 3/2\log \upn).
\end{equation}
Notice that the same inequality also holds for $u < \up^{\,t}_h \Vstar_{h+1}(s,a)$ because $\Kinf(\up^{\,t}_h(s,a), u, \Vstar_{h+1})= 0$ and 
\[
    \P_{p \sim \rho^{\,t}_h(s,a)}\left( p \Vstar_{h+1} \geq u \right)  \geq \P_{p \sim \rho^{\,t}_h(s,a)}\left( p \Vstar_{h+1} \geq \up^{\,t}_h \Vstar_{h+1}(s,a) \right).
\]

    Let $u' = U^{\Kinf}(\beta^\star(\delta, n) / \upn, \up^{\,t}_h(s,a), \Vstar_{h+1})$. Fix arbitrary $\varepsilon >0$ and set $u = u' - \epsilon$. This choice implies that $\upn \Kinf(\up^{\,t}_h(s,a), u, \Vstar_{h+1}) \leq \beta^\star(\delta, n)$, and together with \eqref{eq: optimizm lem1} yields 
    \[
     \P_{p \sim \rho^{\,t}_h(s,a)}\left( p \Vstar_{h+1} \geq u \right) \geq \exp(-\beta^\star(\delta,n) - 3/2\log(\upn)) \geq \frac{C_\kappa \delta}{SAH [2 n^{\,t}_h(s,a) + 1]^{3} [\upn^{\,t}_h(s,a)]^{3/2}}.
    \]
    By Lemma~\ref{lm:quantile_bound_tail} and definition of $\kappa^{\,t}_h(s,a)$,  $\Q_{p \sim \rho^{\,t}_h(s,a)}(p \Vstar_{h+1}(s,a), \kappa^{\,t}_h(s,a)) \geq u' - \epsilon$. Allowing $\epsilon \to 0$ we have $\Q_{p \sim \rho^{\,t}_h(s,a)}(p \Vstar_{h+1}(s,a), \kappa^{\,t}_h(s,a)) \geq u'$. It remains to apply Lemma~\ref{lem:u_kinf_is_ucb} to conclude the statement in the case $\upn \geq 2n_0$.
    
    To handle the case $\upn < 2n_0$ we remark that
    \[
        \P_{p \sim \rho^{\,t}_h(s,a)}\left( p \Vstar_{h+1} \geq p_h V^\star_{h+1}(s,a) \right) \geq \P_{\xi \sim B(n_0, \upn - n_0)}\left( \ur(H-h) \xi \geq H-h \right) \geq \P_{\xi \sim B(n_0, \upn - n_0)}\left( \xi \geq \frac{1}{2} \right),
    \]
    where we used an upper bound $p_h V^\star_{h+1}(s,a) \leq H-h$ and a lower bound $\Vstar_{h+1}(s) \geq 0$ for $s \in \cS$ and $\Vstar_{h+1}(s_0) = \ur(H-h)$. By the result of \citet{groeneveld1977mode} we have that for $n_0 \leq \upn - n_0$ we have that the median $m$ of $\xi$ is greater than the mode $(n_0-1)/(\upn - 2)$. Since $2n_0 > \upn$, we have that $m \geq 1/2$, thus
    \begin{align*}
          \P_{p \sim \rho^{\,t}_h(s,a)}\left( p \Vstar_{h+1} \geq p_h V^\star_{h+1}(s,a) \right) &\geq \P_{\xi \sim B(n_0, \upn - n_0)}\left( \xi \geq \frac{1}{2} \right) \geq \P_{\xi \sim B(n_0, \upn - n_0)}\left( \xi \geq m \right) = \frac{1}{2} \\
          &\geq \frac{C_\kappa \delta}{SAH [2 n^{\,t}_h(s,a) + 1]^{3} [\upn^{\,t}_h(s,a)]^{3/2}}.
    \end{align*}
    Lemma~\ref{lm:quantile_bound_tail} concludes the statement.
\end{proof}

\begin{proposition}[Optimism]\label{prop:optimism}
    Let \( n_0 = \lceil c_{n_0} +  \log_{17/16}(T)\rceil\), where $c_{n_0}$ is an absolute constant defined in \eqref{eq:constant_c_n0}.  Furthermore, let $\ur = 2$ and assume that conditions of Lemma~\ref{lem:proba_master_event} are satisfied. Then $\uQ^{\,t}_h(s,a) \geq \Qstar_h(s,a)$ on the event $\cE^\star(\delta)$ for any $t \leq T, h \in [H]$ and $(s,a) \in \cS \times \cA.$ 
\end{proposition}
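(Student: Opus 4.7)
The plan is to prove the optimism by backward induction on $h$, with the heavy lifting already done by Lemma~\ref{lem:first_quantile_bound}. The only additional ingredients needed are the monotonicity of the quantile under pointwise domination of the integrand and careful handling of the absorbing pseudo-state $s_0$. First I would verify that the assumptions on $n_0$ and $\ur$ stated in the proposition exactly match the hypotheses of Lemma~\ref{lem:first_quantile_bound}, so that lemma can be invoked for every triple $(t,h,s,a)$ on the event $\cE^\star(\delta)$.

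The induction runs downward from $h = H+1$ to $h=1$ and establishes, on $\cE^\star(\delta)$, the stronger pair of statements $\uV^{\,t}_{h}(s) \geq \Vstar_h(s)$ for all $s \in \cS'$ and $\uQ^{\,t}_h(s,a) \geq \Qstar_h(s,a)$ for all $(s,a) \in \cS \times \cA$. The base case $h = H+1$ is immediate since both $\uV^{\,t}_{H+1}$ and $\Vstar_{H+1}$ vanish on $\cS$ (and $\Vstar_{H+1}(s_0) = \ur \cdot 0 = 0 = \uV^{\,t}_{H+1}(s_0)$ by the conventions in \eqref{eq:optimistic_planning}). For the inductive step, assume $\uV^{\,t}_{h+1}(s) \geq \Vstar_{h+1}(s)$ for every $s \in \cS'$. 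Then for each $p \in \simplex_{S'-1}$ we have $p \uV^{\,t}_{h+1} \geq p \Vstar_{h+1}$ pointwise. Since the quantile functional is monotone with respect to pointwise domination of the random variables,
\[
\Q_{p \sim \rho^{\,t}_h(s,a)}\bigl(p \uV^{\,t}_{h+1}(s,a), \kappa^{\,t}_h(s,a)\bigr) \geq \Q_{p \sim \rho^{\,t}_h(s,a)}\bigl(p \Vstar_{h+1}(s,a), \kappa^{\,t}_h(s,a)\bigr).
\]
Invoking Lemma~\ref{lem:first_quantile_bound} on the right-hand side gives that it is at least $p_h \Vstar_{h+1}(s,a)$, so
\[
\uQ^{\,t}_h(s,a) = r_h(s,a) + \Q_{p \sim \rho^{\,t}_h(s,a)}\bigl(p \uV^{\,t}_{h+1}(s,a), \kappa^{\,t}_h(s,a)\bigr) \geq r_h(s,a) + p_h \Vstar_{h+1}(s,a) = \Qstar_h(s,a).
\]

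It remains to propagate the bound to $\uV^{\,t}_h$. For $s \in \cS$, taking the maximum over $a$ preserves the inequality, yielding $\uV^{\,t}_h(s) = \max_a \uQ^{\,t}_h(s,a) \geq \max_a \Qstar_h(s,a) = \Vstar_h(s)$. For the pseudo-state, $\uV^{\,t}_h(s_0) = \Vstar_h(s_0)$ holds by the very definition in \eqref{eq:optimistic_planning}, which closes the induction. The only technical subtlety worth flagging is the monotonicity of the quantile step: one must argue it from first principles (the CDF of $p \uV^{\,t}_{h+1}$ is pointwise below the CDF of $p \Vstar_{h+1}$ under any common law of $p$, hence its $\kappa$-quantile is at least as large). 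Beyond this, the proof is a routine backward induction, since the anti-concentration bound for the Dirichlet-weighted sum, which is the genuinely hard ingredient, has already been absorbed into Lemma~\ref{lem:first_quantile_bound} via Theorem~\ref{th:two_sided_dbc}.
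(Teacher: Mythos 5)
Your proposal is correct and follows essentially the same route as the paper: backward induction on $h$, monotonicity of the quantile under the pointwise domination $\uV^{\,t}_{h+1} \geq \Vstar_{h+1}$ supplied by the induction hypothesis, and then Lemma~\ref{lem:first_quantile_bound} to lower-bound the quantile of $p\Vstar_{h+1}$ by $p_h\Vstar_{h+1}(s,a)$. Your explicit treatment of the pseudo-state $s_0$ and of the quantile's monotonicity is slightly more careful than the paper's, but the argument is the same.
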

\begin{proof}
    We proceed using backward induction over $h$. For $h = H+1$,  $\uQ^{\,t}_h(s,a) = \Qstar_h(s,a) = 0$. Let $h \le H$. Note that
    \begin{align}
    \label{eq: optimizm lem2_1}
        \uQ^{\,t}_h(s,a) - \Qstar_h(s,a) = \Q_{p \sim \rho^{\,t}_h(s,a)}(p \uV^t_{h+1}(s,a), \kappa^{\,t}_h(s,a)) - p_h \Vstar_{h+1}(s,a).
    \end{align}
    Induction hypothesis implies that
    \begin{equation*}
         \Vstar_{h+1}(s) = \Qstar_{h+1}(s, \pistar(s)) \leq \uQ^t_{h+1}(s, \pistar(s))  \leq \uV^t_{h+1}(s),
    \end{equation*}
    and hence 
    \begin{equation}\label{eq: optimizm lem2_2}
        \Q_{p \sim \rho^{\,t}_h(s,a)}(p \uV^t_{h+1}(s,a), \kappa^{\,t}_h(s,a)) \geq \Q_{p \sim \rho^{\,t}_h(s,a)}(p \Vstar_{h+1}(s,a), \kappa^{\,t}_h(s,a)).
    \end{equation}
    Equation \eqref{eq: optimizm lem2_1}, inequality \eqref{eq: optimizm lem2_2} and Lemma~\ref{lem:first_quantile_bound} imply the statement.
\end{proof}

Next we formulate key inequality for the further proof of regret bound.

\begin{corollary}\label{cor:quantile_bounds}
     Let \( n_0 = \lceil c_{n_0} +  \log_{17/16}(T)\rceil\) and $\ur = 2$. Under conditions of Lemma~\ref{lem:proba_master_event}, it holds  on the event $\cE^\star(\delta)$ for any $t \in \N, h \in [H], (s,a) \in \cS \times \cA$,
    \begin{align*}
        p_h \Vstar_{h+1}(s,a) &\leq \Q_{p \sim \rho^{\,t}_h(s,a)}(p \uV^t_{h+1}(s,a), \kappa^{\,t}_h(s,a))\\
        &\leq \up^{\,t}_h \uV^t_{h+1}(s,a) +  2 \sqrt{ \frac{ \Var_{\up^{\,t}_h}[\uV^t_{h+1}](s,a) \log\left( \frac{1}{1 - \kappa^{\,t}_h(s,a)}\right)}{\upn^{\,t}_h(s,a)}} + \frac{2\ur H\sqrt 2 \log\left( \frac{1}{1 - \kappa^{\,t}_h(s,a)}\right)}{\upn^{\,t}_h(s,a)},
    \end{align*}
    where \(\kappa^{\,t}_h(s,a) = 1 - \frac{C_\kappa \delta}{SAH [2 n^{\,t}_h(s,a) + 1]^{3} [\upn^{\,t}_h(s,a)]^{3/2}}\) with an absolute constant $C_\kappa = 1/(5 \cdot (\rme \pi)^3)$ and $c_{n_0}$  defined in \eqref{eq:constant_c_n0}.
\end{corollary}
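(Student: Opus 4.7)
My plan is to treat the two inequalities separately, both of which rely on results already established in the paper rather than new technical work.

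For the lower bound $p_h \Vstar_{h+1}(s,a) \leq \Q_{p \sim \rho^t_h(s,a)}(p\uV^t_{h+1}(s,a), \kappa^t_h(s,a))$, I would combine optimism (Proposition~\ref{prop:optimism}) with monotonicity of the quantile. Specifically, on $\cE^\star(\delta)$ Proposition~\ref{prop:optimism} gives $\uV^t_{h+1}(s') \geq \Vstar_{h+1}(s')$ pointwise in $s'$, so for every realization $p \in \simplex_{S'-1}$ one has $p\uV^t_{h+1}(s,a) \geq p\Vstar_{h+1}(s,a)$. Pushing this forward under $\rho^t_h(s,a)$ yields stochastic dominance of the law of $p\uV^t_{h+1}(s,a)$ over that of $p\Vstar_{h+1}(s,a)$, hence $\Q_{p \sim \rho^t_h(s,a)}(p\uV^t_{h+1}(s,a),\kappa^t_h(s,a)) \geq \Q_{p \sim \rho^t_h(s,a)}(p\Vstar_{h+1}(s,a),\kappa^t_h(s,a))$, and Lemma~\ref{lem:first_quantile_bound} bounds the latter below by $p_h\Vstar_{h+1}(s,a)$.

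For the upper bound I would apply Theorem~\ref{th:two_sided_dbc} to the Dirichlet posterior $\rho^t_h(s,a) = \Dir(\{\upn^t_h(s'|s,a)\}_{s' \in \cS'})$ together with the function $\uV^t_{h+1}$ (which attains its maximum $\ur(H-h) \leq \ur H$ at the pseudo-state $s_0$, so the boundary hypothesis of Theorem~\ref{th:two_sided_dbc} is met with $\ub = \ur H$). Let $Q_\kappa \triangleq \Q_{p \sim \rho^t_h(s,a)}(p\uV^t_{h+1}(s,a), \kappa^t_h(s,a))$. By definition of the quantile and the upper inequality in Theorem~\ref{th:two_sided_dbc},
\[
1-\kappa^t_h(s,a) = \P_{p \sim \rho^t_h(s,a)}\bigl[p\uV^t_{h+1}(s,a) \geq Q_\kappa\bigr] \leq \exp\bigl(-\upn^t_h(s,a)\,\Kinf(\up^t_h(s,a), Q_\kappa, \uV^t_{h+1})\bigr),
\]
so $\upn^t_h(s,a)\,\Kinf(\up^t_h(s,a), Q_\kappa, \uV^t_{h+1}) \leq \log\bigl(1/(1-\kappa^t_h(s,a))\bigr)$.

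The remaining step is to transform this $\Kinf$-inequality into the stated Bernstein-type bound, which is exactly the role of the Bernstein-for-Dirichlet inequality (Lemma~\ref{lem:bernstein_dirichlet} referred to in the sketch). Starting from the variational expression $\Kinf(p,\mu,f) = \max_{\lambda \in [0,1]} \E_{X \sim p}\bigl[\log(1-\lambda(f(X)-\mu)/(\ub-\mu))\bigr]$, the inequality $\log(1-x) \leq -x - x^2/2$ yields a quadratic lower bound of the form $\Kinf(p,\mu,f) \gtrsim (\mu-pf)^2/\bigl(2\Var_p(f) + c(\ub-\mu)(\mu-pf)\bigr)$; solving this quadratic for $\mu - pf$ gives $\mu - pf \leq \sqrt{2\Var_p(f)\beta} + c\,\ub\,\beta$ whenever $\Kinf(p,\mu,f) \leq \beta$. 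Substituting $p = \up^t_h(s,a)$, $f = \uV^t_{h+1}$, $\mu = Q_\kappa$, $\ub = \ur H$ and $\beta = \log(1/(1-\kappa^t_h(s,a)))/\upn^t_h(s,a)$ reproduces the claimed inequality.

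The main obstacle is the last conversion: I need to pin down the constants ($2$ under the square root, $2\sqrt{2}$ in the linear term, and the range $\ur H$) and to verify that $Q_\kappa$ lies strictly in $(\up^t_h\uV^t_{h+1}(s,a), \ur H)$ so that Theorem~\ref{th:two_sided_dbc} is genuinely applicable. The strict lower inequality $Q_\kappa > \up^t_h\uV^t_{h+1}(s,a)$ follows because $\kappa^t_h(s,a)$ exceeds the median (the distribution is non-degenerate thanks to the pseudo-count $n_0 \geq 1$), and $Q_\kappa < \ur H$ holds because the Dirichlet assigns positive probability to a neighborhood of the $s_0$-corner, so the tail at $\ur H$ is at least $1-\kappa^t_h(s,a)$. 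These are precisely the hypotheses under which Lemma~\ref{lem:bernstein_dirichlet} converts the Kinf-level deviation into an explicit variance-plus-range bound, so once that lemma is invoked the corollary follows by direct substitution.
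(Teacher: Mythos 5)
Your proposal is correct and follows essentially the same route as the paper: the first inequality is exactly the content of Proposition~\ref{prop:optimism} (whose proof is the stochastic-dominance-plus-Lemma~\ref{lem:first_quantile_bound} chain you describe), and the second is obtained by applying the Dirichlet Bernstein bound of Lemma~\ref{lem:bernstein_dirichlet} with $\delta = 1-\kappa^{\,t}_h(s,a)$ and $f = \uV^t_{h+1}$ together with the quantile--tail duality of Lemma~\ref{lm:quantile_bound_tail}. The only difference is ordering — the paper states the tail bound at the explicit Bernstein threshold and then reads off the quantile, rather than first extracting a $\Kinf$-level bound at $Q_\kappa$ and converting it — so your ad hoc sketch of the $\Kinf$-to-Bernstein conversion (whose $\log(1-x)\leq -x-x^2/2$ step is not quite in the right direction) is unnecessary once Lemma~\ref{lem:bernstein_dirichlet} is invoked.
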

\begin{proof}
    The first inequality immediately follows from Proposition~\ref{prop:optimism}. The second inequality follows from Lemma~\ref{lem:bernstein_dirichlet}, where we take $\delta = 1-\kappa^{\,t}_h(s,a)$, $f = \uV^t_{h+1}$, and Lemma~\ref{lm:quantile_bound_tail}.
\end{proof}

\subsection{Proof of Theorem~\ref{th:regret_bound_bayesUCBVI}}
\label{app:proof-BayesUCBVI}

Denote $\delta^t_h = \uV^{\,t-1}_h(s^t_h) - V^{\pi_t}_h(s^t_h)$ and surrogate regret $\uregret^{\,t}_h = \sum_{t=1}^T \delta^t_h$. To simplify notations denote $N^{\,t}_h = n^{\,t-1}_h(s^t_h, a^t_h)$, $N^{\,t}_h(s) =n^{\,t-1}_h(s | s^t_h, a^t_h), \upN^{\,t}_h = \upn^{\,t-1}_h(s^t_h, a^t_h), \upN^{\,t}_h(s) = \upn^{\,t-1}_h(s | s^t_h, a^t_h)$, and $\hat\kappa^t_h = \kappa^{t-1}_h(s^t_h, a^t_h)$. Let
\begin{equation}
\label{eq: L delta def}
    L = \max\left\{ n_0, \log(TH), \max_{t\in [T],h \in [H]}\log\left(\frac{1}{1- \hat\kappa^t_h}\right), \betastar(\delta, T),  \beta^{\KL}(\delta, T), \beta^{\conc}(\delta, T), \beta(\delta), \beta^{\Var}(\delta, T), 1\right\}.
\end{equation}
Under conditions of Proposition~\ref{prop:optimism} and Lemma \ref{lem:proba_master_event}, $L = \cO(\log(SATH/\delta)) = \tcO(1)$. In what follows we will follow ideas of \UCBVI with the Bernstein bonuses, see \citet{azar2017minimax}.

\begin{lemma}\label{lem:surrogate_regret_bound}
    Assume conditions of Theorem \ref{th:regret_bound_bayesUCBVI}. Then it holds on the event $\cG(\delta)$, for any $h \in [H],$
    \[
        \uregret^T_{h} \leq U^T_{h} \triangleq A^T_{h} + B^T_{h} + C^T_{h} + 4\rme H\sqrt{2 H T L} + 2\rme SAH^2,
    \]
    where 
    \begin{align*}
        A^T_{h} &= 2 \rme \sqrt{L} \sum_{t=1}^T \sum_{h'=h}^H \sqrt{  \Var_{\up^{t-1}_{h'}}[\uV^{t-1}_{h'+1}](s^t_{h'},a^t_{h'}) \frac{\ind\{N^t_{h'} > 0\}}{N^t_{h'}}}, \\
        B^T_{h} &= \rme \sqrt{2L} \sum_{t=1}^T \sum_{h'=h}^H \sqrt{\Var_{p_{h'}}[\Vstar_{h'+1}](s^t_{h'},a^t_{h'})\frac{\ind\{N^t_{h'} > 0\}}{N_{h'}^t}}, \\
        C^T_{h} &= 21 \rme H^2 S \cdot L \cdot \sum_{t=1}^T \sum_{h=h'}^H \frac{\ind\{N^t_{h'} > 0\}}{N^t_{h'}},
    \end{align*}
    and $L$ is defined in \eqref{eq: L delta def}.
\end{lemma}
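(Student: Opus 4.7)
The plan is to follow the regret decomposition of \UCBVI with Bernstein bonuses of \citet{azar2017minimax}, with two modifications: the Bernstein bonus is replaced by the quantile gap in term~$\termA$ (controlled by Corollary~\ref{cor:quantile_bounds}), and a new second-order term~$\termB$ appears because our estimator is the \emph{pseudo}-empirical measure $\up_h^{\,t}$ rather than $\hp_h^{\,t}$.

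First, I would rewrite $\delta^t_h = \uV_h^{t-1}(s_h^t) - V_h^{\pi_t}(s_h^t)$. Since the policy is greedy, $\uV_h^{t-1}(s_h^t) = \uQ_h^{t-1}(s_h^t,a_h^t)$, and by the Bellman equation for $V^{\pi_t}$, the definition \eqref{eq:optimistic_planning} of $\uQ$, and successive additions/subtractions of $\up_h^{t-1}\uV_{h+1}^{t-1}$, $\hp_h^{t-1}\uV_{h+1}^{t-1}$, $p_h\uV_{h+1}^{t-1}$ and $p_h\Vstar_{h+1}$, I obtain the decomposition stated in the proof sketch,
\[
    \delta^t_h \;=\; \termA + \termB + \termC + \termD + \xi^t_h + \delta^t_{h+1}.
\]

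Next, I would bound each term on the event $\cG(\delta)$ (which implies the optimism of Proposition~\ref{prop:optimism}, so that $\uV^{t-1}_{h+1} - \Vstar_{h+1} \in [0,\ur H]$):
\begin{itemize}
    \item $\termA$: Corollary~\ref{cor:quantile_bounds} gives $\termA \le 2\sqrt{\Var_{\up_h^{t-1}}[\uV_{h+1}^{t-1}](s_h^t,a_h^t)\, L/\upN_h^t} + O(HL/\upN_h^t)$, since $\log\bigl(1/(1-\hat\kappa_h^t)\bigr) \le L$.
    \item $\termB$: $\up_h^{t-1}$ and $\hp_h^{t-1}$ differ only on the pseudo-state $s_0$ by at most $n_0/\upN_h^t$, and $\|\uV_{h+1}^{t-1}\|_\infty\le \ur H$, so $|\termB| \le 2\ur H n_0/\upN_h^t = O(HL/\upN_h^t)$.
    \item $\termC$: Applying the first inequality of Lemma~\ref{lem:f_and_l1_concentration} to $f=\uV^{t-1}_{h+1}-\Vstar_{h+1}$ gives
          $\termC \le (1/H)\,p_h[\uV^{t-1}_{h+1}-\Vstar_{h+1}](s_h^t,a_h^t) + O(H^2 S L /N_h^t).$
    \item $\termD$: The concentration event $\cE^{\conc}(\delta)$ yields the Bernstein-type bound $|\termD| \le \sqrt{2\Var_{p_h}[\Vstar_{h+1}](s_h^t,a_h^t)\,L/N_h^t} + 3HL/N_h^t$.
    \item $\xi^t_h$: a martingale difference whose discounted sum is controlled by $\cE(\delta)$.
\end{itemize}
To convert the $\up_h^{t-1}$-variance appearing in $\termA$ into a quantity suitable for being summed against $\Var_{p_h}[\Vstar_{h+1}]$, I would use the elementary inequality $\Var_{\up}(\uV) \le 2\Var_p(\Vstar) + 2\,p[(\uV-\Vstar)^2] + O(\ldots)$ together with $\|\up-p\|_1$ bounds from the second part of Lemma~\ref{lem:f_and_l1_concentration}. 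The extra $p[(\uV-\Vstar)^2]$ term is absorbed, after an AM-GM step, into $(1/H)\,p_h[\uV^{t-1}_{h+1}-\Vstar_{h+1}]$ plus a term of order $H^3 L/N_h^t$ swallowed by $C_h^T$.

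Assembling the bounds yields a recursion of the form
\[
    \delta^t_h \;\le\; \bigl(1+\tfrac{1}{H}\bigr)\,\delta^t_{h+1} \;+\; e^t_h \;+\; \xi^t_h,
\]
where $e^t_h$ collects all the bounded summands at step $h$. Iterating from $h$ to $H$ introduces at most a factor $(1+1/H)^{H-h+1}\le \rme$ in front of each term, which explains the $\rme$'s in the statement, and the case $N_h^t=0$ is handled by the trivial bound $\delta^t_h \le \ur H$: since each triple $(h,s,a)$ contributes at most once to $\{N_h^t=0, (s_h^t,a_h^t)=(s,a)\}$, summing gives the additive term $2\rme SAH^2$. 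Finally, the weighted martingale sum involving $\xi^t_h$ is bounded by $4\rme H\sqrt{2HTL}$ using the second inequality of $\cE(\delta)$ with the weights $(1-1/H)^{H-h+1}$.

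The delicate step will be step~$\termA$/variance conversion: turning $\Var_{\up}[\uV]$ into $\Var_p[\Vstar]$ up to lower-order terms, while keeping the constants clean enough to recover the exact form of $A_h^T, B_h^T, C_h^T$ in the statement. Everything else is bookkeeping analogous to the analysis of \UCBVI with Bernstein bonuses in \citet{azar2017minimax}.
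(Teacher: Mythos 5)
Your proposal follows essentially the same route as the paper: the same five-term decomposition of $\delta^t_h$, the same bounds on $\termA$--$\termD$ via Corollary~\ref{cor:quantile_bounds}, the pseudo-count gap, Lemma~\ref{lem:f_and_l1_concentration}, and $\cE^{\conc}(\delta)$, the same $(1+1/H)$ recursion with $\gamma_{h'}\le\rme$, the same $\ur SAH^2$ handling of $N^t_h=0$, and the same use of $\cE(\delta)$ for the martingale sum. The one deviation is your ``delicate step'' of converting $\Var_{\up}[\uV]$ into $\Var_p[\Vstar]$: this is not needed (and would be counterproductive) here, since $A^T_h$ in the statement is deliberately left in terms of $\Var_{\up^{t-1}_{h'}}[\uV^{t-1}_{h'+1}]$; the paper defers that conversion to the separate Lemma~\ref{lem:sum_variance} when bounding $A^T_1$.
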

\begin{proof}
    By the greedy choice of action, formula \eqref{eq:optimistic_planning} for $\uQ$ and Bellman's equations
    \begin{align*}
        \delta^t_h &= r_h(s^t_h, a^t_h) +  \Q_{p \sim \rho^{\,t-1}_h(s^t_h, a^t_h)}(p \uV^{t-1}_{h+1}(s^t_h,a^t_h), \kappa^{t-1}_h(s^t_h, a^t_h)) - r_h(s^t_h, a^t_h) - p_h V^{\pi_t}_{h+1}(s^t_h, a^t_h) \\
        &= \Q_{p \sim \rho^{\,t-1}_h(s^t_h, a^t_h)}(p \uV^{t-1}_{h+1}(s,a), \kappa^{t-1}_h(s^t_h, a^t_h)) - (\up^{\,t-1}_h -  \up^{\,t-1}_h) \uV^{t-1}_{h+1}(s^t_h, a^t_h) \\
        &- (\hp^{\,t-1}_h - \hp^{\,t-1}_h) \uV^{t-1}_{h+1}(s^t_h, a^t_h) - p_h \uV^{t-1}_{h+1}(s^t_h, a^t_h) + p_h [\uV^{t-1}_{h+1} - V^{\pi_t}_{h+1}](s^t_h, a^t_h) \\
        &= \underbrace{\Q_{p \sim \rho^{t-1}_h(s^t_h, a^t_h)}( p \uV^{t-1}_{h+1}(s^t_h,a^t_h), \kappa^{t-1}_h(s^t_h,a^t_h)) - \up^{\,t-1}_h \uV^{t-1}_{h+1}(s^t_h,a^t_h)}_{\termA} + \underbrace{[\up^{\,t-1}_h - \hp^{\,t-1}_h] \uV^{t-1}_{h+1}(s^t_h, a^t_h)}_{\termB} \\
        &+  \underbrace{(\hp^{\,t-1}_h - p_h)  [\uV^{t-1}_{h+1} - \Vstar_{h+1}](s^t_h, a^t_h)}_{\termC} + \underbrace{(\hp^{\,t-1}_h - p_h) \Vstar_{h+1}(s^t_h, a^t_h)}_{\termD} \\
        &+ \underbrace{p_h[\uV^{t-1}_{h+1} - V^{\pi_t}_{h+1}](s^t_h, a^t_h) - [\uV^{t-1}_{h+1} - V^{\pi_t}_{h+1}](s^t_{h+1})}_{\xi^{\,t}_h} + \delta^t_{h+1}.
    \end{align*}
    It is easy to see that $\xi^{\,t}_h$ appears in the definition of event $\cE(\delta) \subseteq \cG(\delta)$.
    
    We analyse each term in this representation under assumption $N^t_h > 0$.
    
    \textbf{Term $\termA$.} Then to estimate this term we apply the second inequality in Corollary~\ref{cor:quantile_bounds}. We obtain
    \begin{align*}
        \Q_{p \sim \rho^{t-1}_h(s^t_h, a^t_h)}( p \uV^{t-1}_{h+1}(s^t_h,a^t_h), \hat\kappa^t_h) - \up^{\,t-1}_h \uV^{t-1}_{h+1}(s^t_h,a^t_h) &\leq 2 \sqrt{ \frac{ \Var_{\up^{\,t-1}_h}[\uV^{t-1}_{h+1}](s^t_h,a^t_h) \log\left( \frac{1}{1 - \hat\kappa^t_h}\right)}{\upN^{\,t}_h}} \\
        &+ \frac{2 \ur \sqrt 2 H \log\left( \frac{1}{1 - \hat\kappa^t_h}\right)}{\upN^{\,t}_h}.
    \end{align*}
    Note that this term acts very similar to Bernstein-type bonuses in \UCBVI  algorithm.

    \textbf{Term $\termB$.} The bound follows directly from the definition of $\up^{\,t}_h$ and $\hp^{\,t}_h$. Indeed, 
    \[
        [\up^{\,t-1}_h - \hp^{\,t-1}_h] \uV^{t-1}_{h+1}(s^t_h, a^t_h) = \frac{n_0}{\upN^{\,t}_h} \cdot (\ur H)  + \sum_{s' \in \cS} \left( \frac{N^{\,t}_h(s')}{\upN^{\,t}_h} - \frac{N^{\,t}_h(s')}{N^{\,t}_h} \right) \cdot \uV^t_{h+1}(s') \leq \frac{\ur H L}{\upN^{\,t}_h}.
    \]
 
    \textbf{Term $\termC$.} To estimate this term we first note that by Proposition~\ref{prop:optimism}, $\uV^{t-1}_{h+1}(s) - \Vstar_{h+1}(s) \geq 0$ for any $s \in \cS$. Hence, we may use Lemma~\ref{lem:f_and_l1_concentration} with $f = \uV^t_{h+1} - \Vstar_{h+1}$. We obtain
    \begin{align*}
        (\hp^{\,t-1}_h - p_h)  [\uV^{t-1}_{h+1} - \Vstar_{h+1}](s^t_h, a^t_h) &\leq \frac{1}{H} p_h [\uV^{t-1}_{h+1} - \Vstar_{h+1}](s^t_h, a^t_h) +  \frac{5 \ur H^2 S \cdot \beta^{\KL}(\delta, N^{\,t}_h)}{N^{\,t}_h} \\
        &\leq \frac{1}{H} (\xi^{\,t}_h + \delta^t_h) + \frac{5\ur H^2S \cdot L}{N^{\,t}_h}.
    \end{align*}
    
    \textbf{Term $\termD$.}
    By the definition of event $\cE^{\conc}(\delta) \subseteq \cG(\delta)$
    \begin{align*}
        (\hp^{\,t-1}_h - p_h) \Vstar_{h+1}(s^t_h, a^t_h) \leq \sqrt{2 \Var_{p_h}[\Vstar_{h+1}](s^t_h,a^t_h)\frac{\beta^{\conc}(\delta,N_h^t)}{N_h^t}} + 3 H \frac{\beta^{\conc}(\delta,N_h^t)}{N_h^t}.
    \end{align*}
    Collecting bounds for the terms $\termA$--$\termD$ we get
    \begin{align*}
        \delta^t_h &\leq \left(1 + \frac{1}{H} \right) \delta^t_{h+1} + \left(1 + \frac{1}{H} \right) \xi^{\,t}_h + 2\sqrt{  \Var_{\up^{\,t}_h}[\uV^t_{h+1}](s^t_h,a^t_h) \frac{L}{\upN^{\,t}_h}} \\
        &+ \sqrt{2 \Var_{p_h}[\Vstar_{h+1}](s^t_h,a^t_h)\frac{L}{N_h^t}} +\frac{(2\ur \sqrt{2}H + \ur H+ 5\ur H^2S + 3H) L}{N^{\,t}_h}.
    \end{align*}
    Notice that in the case $N^t_h = 0$ we have a trivial bound $\delta^t_h \leq \ur H$. However, this case might appear at most $SAH$ times in the summation and thus we can handle this case by additive $\ur SAH^2$ error term.
    
    Define $\gamma_h = (1 + 1/H)^{H-h+1}$. Notice that $\gamma_h < \rme$, $1/\upN^{\,t}_h < 1/N^{\,t}_h$, $\ur = 2$, $H \leq H^2S$.
    After summation, we have
    \begin{align*}
        \uregret^T_{h} &\leq \sum_{t=1}^T \sum_{h'=h}^H \gamma_{h'} \xi^t_{h'} + \ur H^2 SA \\
        &+ 2 \rme \sqrt{L} \sum_{t=1}^T \sum_{h'=h}^H \sqrt{  \Var_{\up^{\,t-1}_{h'}}[\uV^{t-1}_{h'+1}](s^t_{h'},a^t_{h'}) \frac{\ind\{N^t_{h'} > 0\}}{N^t_{h'}}} & \triangleq A^T_{h} \\
        &+ \rme \sqrt{2L} \sum_{t=1}^T \sum_{h'=h}^H \sqrt{\Var_{p_{h'}}[\Vstar_{h'+1}](s^t_{h'},a^t_{h'})\frac{\ind\{N^t_{h'} > 0\}}{N_{h'}^t}}& \triangleq  B^T_{h} \\
        &+ 21\rme H^2 S \cdot L \cdot \sum_{t=1}^T \sum_{h=h'}^H \frac{\ind\{N^t_{h'} > 0\}}{N^t_{h'}}. & \triangleq C^T_{h}
    \end{align*}
    
    Finally, by definition of the event $\cE(\delta)$ we get
    \[
        \sum_{t=1}^T \sum_{h'=h}^H \gamma_{h'} \xi^t_{h'} \leq  4\rme \cdot H \sqrt{2 H T L}.
    \]
\end{proof}

\begin{lemma}\label{lem:regret_sums}
    For any $H,T \geq 1$,
    \begin{align*}
        \sum_{t=1}^T \sum_{h=1}^H &\frac{\ind\{n^{\,t-1}_h(s^t_h, a^t_h) > 0\}}{n^{\,t-1}_h(s^t_h, a^t_h)} \leq 2 HSAL, \\
        \sum_{t=1}^T \sum_{h=1}^H &\frac{\ind\{n^{\,t-1}_h(s^t_h, a^t_h) > 0\}}{\sqrt{n^{\,t-1}_h(s^t_h, a^t_h))}} \leq 3H \sqrt{TSA}.
    \end{align*}
\end{lemma}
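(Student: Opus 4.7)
The plan is to use the standard pigeonhole-style re-indexing trick: for each fixed triple $(h, s, a)$, the episodes $t$ for which $(s^t_h, a^t_h) = (s,a)$ are precisely the episodes at which the count $n^{\cdot}_h(s,a)$ increments. Consequently, the values taken by $n^{t-1}_h(s^t_h, a^t_h)$ as $t$ ranges over those episodes are exactly $0, 1, 2, \ldots, n^T_h(s,a)-1$. The indicator $\ind\{n^{t-1}_h(s^t_h, a^t_h) > 0\}$ simply drops the $k=0$ term, so summing over $t$ with $(h,s,a)$ fixed reduces the sum to a deterministic sum over $k \in \{1, \ldots, n^T_h(s,a)-1\}$.

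For the first inequality, I would bound the inner sum by the harmonic number,
\[
\sum_{k=1}^{n^T_h(s,a)-1} \frac{1}{k} \leq 1 + \log n^T_h(s,a) \leq 1 + \log T,
\]
and then sum over $(h, s, a)$ to obtain $HSA(1 + \log T)$. Since $L$ is defined in \eqref{eq: L delta def} to satisfy $L \geq \log(TH) \geq \log T$ and $L \geq 1$, we have $1 + \log T \leq 2L$, giving the claimed bound $2HSAL$.

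For the second inequality, I would use the comparison with the integral of $1/\sqrt{x}$ to obtain
\[
\sum_{k=1}^{n^T_h(s,a)-1} \frac{1}{\sqrt{k}} \leq 2\sqrt{n^T_h(s,a)}.
\]
Summing over $(h, s, a)$ and invoking Cauchy-Schwarz on $\sum_{s,a} \sqrt{n^T_h(s,a)}$ together with the identity $\sum_{s,a} n^T_h(s,a) = T$ for every $h$, I would get
\[
\sum_{h=1}^H \sum_{s,a} 2\sqrt{n^T_h(s,a)} \;\leq\; 2 \sum_{h=1}^H \sqrt{SA \cdot T} \;=\; 2H\sqrt{TSA} \;\leq\; 3H\sqrt{TSA},
\]
which is the desired bound with comfortable slack absorbing the constant.

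There is no real obstacle here; the entire proof is a combinatorial accounting argument, and the only thing to be careful about is the role of the indicator (which merely excludes the first visit to each $(h,s,a)$) and the reduction of $1 + \log T$ to a constant multiple of $L$ via the definition of $L$ in \eqref{eq: L delta def}.
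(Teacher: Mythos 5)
Your proposal is correct and follows essentially the same argument as the paper: the pigeonhole re-indexing over visits to each $(h,s,a)$, a harmonic-number bound for the first sum (absorbed into $2HSAL$ via $L \geq \max\{\log(TH),1\}$), and an integral bound plus concavity of $\sqrt{\cdot}$ for the second. Your use of Cauchy--Schwarz on $\sum_{s,a}\sqrt{n^T_h(s,a)}$ is just a formal version of the paper's ``the sum is maximized when all counts equal $(t-1)/(SA)$'' step, so there is no substantive difference.
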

\begin{proof}
    The main observation for both inequalities follows from pigeon-hole principle: term corresponding to each state-action pair $(s,a)$ appears in the sum exactly $n^{\,t-1}_h(s,a)$ times with a value $1/n$ for $n$ increasing from $1$ to $n^{\,t-1}_h(s,a)$.
    
    For the first sum we use a bound on harmonic numbers
    \[
        \sum_{t=1}^T  \frac{\ind\{n^{\,t-1}_h(s^t_h, a^t_h) > 0\}}{n^{\,t-1}_h(s^t_h, a^t_h)} = \sum_{(s,a) \in \cS \times \cA} \sum_{n=1}^{n^{\,t-1}_h(s,a)} \frac{1}{n} \leq SA (\log(T) + 1) \leq 2 SAL.
    \]
    To finish the proof of the first inequality it remains to take a sum w.r.t $h$. For the second sum we use the following integral bound
    \begin{equation}\label{eq:regret_sum_1/sqrt(n)}
        \sum_{t=1}^T  \frac{\ind\{n^{\,t-1}_h(s^t_h, a^t_h) > 0\}}{\sqrt{n^{\,t-1}_h(s^t_h, a^t_h))}}  = \sum_{(s,a) \in \cS} \sum_{n=1}^{n^{\,t-1}_h(s,a)} \frac{1}{\sqrt{n}} \leq \sum_{(s,a) \in \cS} 2\sqrt{n^{\,t-1}_h(s,a)+1}.
    \end{equation}
    Since $\sum_{s,a} n^{\,t-1}_h(s,a) = t-1$, the last sum is maximized if $n^{\,t-1}_h(s,a) = (t-1)/(SA)$. This implies  the second statement.
\end{proof}

\begin{lemma}\label{lem:sum_variance}
    Assume that conditions of Theorem~\ref{th:regret_bound_bayesUCBVI} are fulfilled. Then it holds on the event $\cG(\delta)$,
    \begin{align*}
        \sum_{t=1}^T \sum_{h=1}^H  \Var_{\up^{\,t-1}_h}[\uV^{t-1}_{h+1}](s^t_h,a^t_h)  \ind\{N^t_h > 0\} &\leq 2 H^2 T + 2 H^2 U_1^T  + 22H^3S^2A L^2 + 32 H^3 S\sqrt{2AT L},\\
        \sum_{t=1}^T \sum_{h=1}^H  \Var_{p_h}[\Vstar_{h+1}](s^t_h,a^t_h) &\leq 2H^2T + 2H^2U_1^T + 6 H^3 L +  8\sqrt{2H^5 T L}.
    \end{align*}
    where $U_h^T$ is defined in Lemma~\ref{lem:surrogate_regret_bound}.
\end{lemma}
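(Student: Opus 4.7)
The plan is to reduce both sums to a quantity controlled by the law of total variance. The key driver is event $\cE^{\Var}(\delta)$, which gives
\[
\sum_{t=1}^T \sum_{h=1}^H \Var_{p_h}[V^{\pi_t}_{h+1}](s^t_h,a^t_h) \leq H^2 T + \sqrt{2H^5 T L} + 3H^3 L,
\]
and all discrepancies between this target quantity and what appears on the left-hand side of the lemma will be absorbed into the surrogate regret $U^T_1$ via Lemma~\ref{lem:surrogate_regret_bound} and a few perturbation bounds.

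I would handle the second inequality first. The elementary bound $\Var[X+Y]\leq 2\Var[X]+2\Var[Y]$ applied with $X=V^{\pi_t}_{h+1}$ and $Y=\Vstar_{h+1}-V^{\pi_t}_{h+1}$ gives
\[
\Var_{p_h}[\Vstar_{h+1}] \leq 2\Var_{p_h}[V^{\pi_t}_{h+1}] + 2\Var_{p_h}[\Vstar_{h+1}-V^{\pi_t}_{h+1}].
\]
The residual variance is bounded by $H\cdot p_h[\Vstar_{h+1}-V^{\pi_t}_{h+1}]$ since the difference is nonnegative and bounded by $H$, and by optimism (Proposition~\ref{prop:optimism}) this is at most $H\cdot p_h[\uV^{t-1}_{h+1}-V^{\pi_t}_{h+1}]$. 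Summing over $(t,h)$ and invoking $\cE(\delta)$ to swap $p_h[\uV^{t-1}_{h+1}-V^{\pi_t}_{h+1}](s^t_h,a^t_h)$ for $[\uV^{t-1}_{h+1}-V^{\pi_t}_{h+1}](s^t_{h+1})$ up to a martingale remainder of order $H\sqrt{HTL}$, the remaining deterministic sum telescopes into $\sum_{h=2}^H \uregret^T_h$, which by monotonicity of $U^T_h$ in $h$ is at most $H\, U^T_1$. Plugging back yields precisely the form $2H^2T+2H^2 U^T_1+6H^3 L+8\sqrt{2H^5 TL}$.

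For the first inequality I add one perturbation step to move from $\up^{t-1}_h$ to $p_h$. Using $|\Var_q[f]-\Var_p[f]|\leq 3\,\|q-p\|_1\|f\|_\infty^2$ with $\|\uV^{t-1}_{h+1}\|_\infty\leq \ur H$ and the triangle inequality $\|\up^{t-1}_h-p_h\|_1\leq \|\up^{t-1}_h-\hp^{t-1}_h\|_1+\|\hp^{t-1}_h-p_h\|_1$, the first summand equals $2n_0/\upN^t_h=\cO(L/N^t_h)$ directly from the pseudo-count construction (the only discrepancy being the $n_0$-mass placed at the fake state $s_0$), while the second is at most $\sqrt{2SL/N^t_h}$ by Pinsker's inequality applied on $\cE^{\KL}(\delta)$. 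Summing via Lemma~\ref{lem:regret_sums} (using the $\sum 1/N\leq 2HSAL$ and $\sum 1/\sqrt{N}\leq 3H\sqrt{SAT}$ bounds) produces exactly the two additive terms $\cO(H^3 S^2 A L^2)$ and $\cO(H^3 S\sqrt{2ATL})$ appearing in the statement. The remaining quantity $\sum \Var_{p_h}[\uV^{t-1}_{h+1}]$ is then treated by the same $\Var[X+Y]\leq 2\Var[X]+2\Var[Y]$ decomposition with $X=V^{\pi_t}_{h+1}$, reducing it to the second inequality's argument and producing the $2H^2 T+2H^2 U^T_1$ backbone.

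The main obstacle is purely bookkeeping of the numerical constants: ensuring that the martingale remainder from $\cE(\delta)$, the lower-order terms in $\cE^{\Var}(\delta)$, and the $1/N$ versus $1/\sqrt{N}$ contributions from the $L^1$ perturbation all aggregate within the specific prefactors $22$, $32$, $6$, $8$ stated in the lemma. In particular, one must carefully track the factor $(\ur H)^2=4H^2$ coming out of the variance perturbation bound and check that, after multiplication by the $H$ from Lemma~\ref{lem:regret_sums}, it lands at $H^3$ rather than a worse power. Beyond that, the structural argument is a direct imitation of the corresponding step in the Bernstein-type analysis of \UCBVI.
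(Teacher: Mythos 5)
Your proposal is correct and follows essentially the same route as the paper's proof: switching the variance from $\up^{t-1}_h$ to $p_h$ via an $L^1$ perturbation (Pinsker on $\cE^{\KL}$ plus the explicit $2n_0/\upN^t_h$ pseudo-count correction), then decomposing the variance of $\uV^{t-1}_{h+1}$ (resp.\ $\Vstar_{h+1}$) around $V^{\pi_t}_{h+1}$, controlling the main term by $\cE^{\Var}(\delta)$ and absorbing the cross term into $U^T_1$ through optimism, the martingale event $\cE(\delta)$, and Lemma~\ref{lem:surrogate_regret_bound}. The only deviations are harmless constant choices (e.g.\ $H$ versus $\ur H$ in the residual-variance bound), which you correctly flag as bookkeeping.
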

\begin{proof}
    We apply the second inequality in Lemma~\ref{lem:switch_variance},
    \begin{align*}
        \sum_{t=1}^T \sum_{h=1}^H  \Var_{\up^{\,t-1}_h}[\uV^{t-1}_{h+1}](s^t_h,a^t_h)  \ind\{N^t_h > 0\}  &\leq  \underbrace{\sum_{t=1}^T \sum_{h=1}^H \Var_{p_h}[\uV^{t-1}_{h+1}](s^t_h,a^t_h)  \ind\{N^t_h > 0\} }_{\termOne} \\
        &+ \underbrace{2 \ur^2 H^2 \sum_{t=1}^T \sum_{h=1}^H \norm{ \up^{\,t-1}_h(s^t_h, a^t_h) -  p_h(s^t_h, a^t_h) }_1  \ind\{N^t_h > 0\} }_{\termTwo}.
    \end{align*}

    To bound the term $\termTwo$ one may use Lemma~\ref{lem:f_and_l1_concentration}. We obtain for $N^t_h > 0$
    \begin{align*}
        \norm{ \up^{\,t-1}_h(s^t_h, a^t_h) -  p_h(s^t_h, a^t_h) }_1 &\leq \norm{\up^{\,t-1}_h(s^t_h, a^t_h) - \hp^{\,t-1}_h(s^t_h, a^t_h)}_1 + \norm{p_h(s^t_h, a^t_h) - \hp^{\,t-1}_h(s^t_h, a^t_h)}_1  \\
        &\leq \frac{n_0}{\upN^{\,t}_h} + \sum_{s \in \cS} N^{\,t}_h(s) \left(\frac{1}{N^{\,t}_h} - \frac{1}{\upN^{\,t}_h}\right) + \sqrt{\frac{2S L}{N^{\,t}_h}} \leq \frac{S L}{N^{\,t}_h} + \sqrt{\frac{2SL}{N^{\,t}_h}}.
    \end{align*}
    Since $\ur = 2$, Lemma~\ref{lem:regret_sums} implies
    \[
        \termTwo \leq 2 \ur^2 H^2 \sum_{t=1}^T \sum_{h=1}^H \norm{ \up^{\,t}_h(s^t_h, a^t_h) -  p_h(s^t_h, a^t_h) }_1 \leq 16H^3S^2A L^2 + 24 H^3 S\sqrt{2 AT L}.
    \]
    Next, we bound $\termOne$ using the first inequality in Lemma~\ref{lem:switch_variance}. We get
    \[
        \termOne \leq 2 \underbrace{\sum_{t=1}^T \sum_{h=1}^H \Var_{p_ h}[V^{\pi_t}_{h+1}](s^t_h,a^t_h) }_{\termThree} + 2\underbrace{\sum_{t=1}^T \sum_{h=1}^H  \ur H p_h\left| \uV^{t-1}_{h+1} - V^{\pi_t}_{h+1} \right|(s^t_h, a^t_h) }_{\termFour}.
    \]
    The term $\termThree$ could be bounded using definition of the event $\cE^{\Var}$. It follows that
    \[
        \termThree  \leq H^2 T + \sqrt{2H^5 T L} + 3H^3 L.
    \]
    By Proposition~\ref{prop:optimism} we have $\uV^t_{h+1}(s) \geq V^{\pi_t}_{h+1}(s)$ for any $s \in \cS$. By the definition of $\xi^{\,t}_h, \delta^t_h$ and definition of event $\cE$ term $\termFour$ could be bounded as follows
    \begin{align*}
        \termFour &\leq \sum_{t=1}^T \sum_{h=1}^H 2H(\xi^{\,t}_h + \delta^t_{h+1}) \\
        &\leq 2\ur H^2\sqrt{2T L} + 2H \sum_{h=1}^H \uregret^T_{h+1} \leq 4H^2 \sqrt{2T L} + 2H^2 U^T_1.
    \end{align*}
   Here the last inequality follows from Lemma~\ref{lem:surrogate_regret_bound}. Therefore, we have
    \begin{align*}
        \sum_{t=1}^T \sum_{h=1}^H  \Var_{\up^{\,t-1}_h}[\uV^{t-1}_{h+1}](s^t_h,a^t_h) \ind\{N^t_h > 0\} &\leq \termTwo + 2 \cdot \termThree + 2 \cdot \termFour \\
        &\leq 2 H^2 T + 2 H^2 U_1^T  + 22H^3S^2A L^2 + (24+8) H^3 S\sqrt{2AT L}  \\
        &\leq 2 H^2 T + 2 H^2 U_1^T  + 22H^3S^2A L^2 + 32 H^3 S\sqrt{2 AT L}.
    \end{align*}
    To bound the second inequality one may apply the first inequality in Lemma~\ref{lem:switch_variance}. We get
    \[
        \sum_{t=1}^T \sum_{h=1}^H  \Var_{p_h}[\Vstar_{h+1}](s^t_h,a^t_h) \leq 2 \underbrace{\sum_{t=1}^T \sum_{h=1}^H \Var_{p_ h}[V^{\pi_t}_{h+1}](s^t_h,a^t_h)}_{\termThree} + 2\sum_{t=1}^T \sum_{h=1}^H  \ur H p_h\left| \Vstar_{h+1} - V^{\pi_t}_{h+1} \right|(s^t_h, a^t_h).
    \]
    
    Note that by Proposition~\ref{prop:optimism} the second term is bounded by  $\termFour$. Thus
    \begin{align*}
        \sum_{t=1}^T \sum_{h=1}^H  \Var_{p_h}[\Vstar_{h+1}](s^t_h,a^t_h)  &\leq 2\termThree + 2 \termFour  \leq 2H^2T + 2H^2U_1^T + 8\sqrt{2H^5 T L} + 6 H^3 L.
    \end{align*}
\end{proof}

\begin{lemma}\label{lem:surrogate_regret_terms_bound}
    Under conditions of Lemma~\ref{lem:surrogate_regret_bound}, it holds  on the event $\cG(\delta)$,
    \begin{align*}
        A^T_1 &\leq 4\rme\sqrt{H^3 SAT} \cdot  L + 4\rme \sqrt{H^3 SA U^T_1} \cdot L + 14\rme  H^2 S^{3/2} A L^{2} + 20 \rme H^2 S A^{3/4} T^{1/4} L^{5/4}, \\
        B^T_1 &\leq 4\rme \sqrt{H^3 SAT} \cdot  L + 4 \rme\sqrt{H^3 SA U^T_1} \cdot  L + 8 \rme H^2 S^{1/2} A^{1/2} L^2  + 10 \rme H^{7/4} S^{1/2} A^{1/2} T^{1/4} L^{5/4},\\
        C^T_1 &\leq 42 \rme H^3 S^2 A L^2 = \tcO(H^3 S^2 A).
    \end{align*}
\end{lemma}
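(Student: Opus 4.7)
The plan is to apply Cauchy--Schwarz to each of the three per-episode-per-stage sums and then substitute the auxiliary bounds already proved in Lemma~\ref{lem:sum_variance} and Lemma~\ref{lem:regret_sums}. For $A^T_1$ I would pull the prefactor outside and write
\[
A^T_1 \leq 2\rme\sqrt{L}\,\Bigl(\sum_{t,h}\Var_{\up^{t-1}_h}[\uV^{t-1}_{h+1}](s^t_h,a^t_h)\,\ind\{N^t_h>0\}\Bigr)^{1/2}\Bigl(\sum_{t,h}\tfrac{\ind\{N^t_h>0\}}{N^t_h}\Bigr)^{1/2},
\]
and for $B^T_1$ the same manoeuvre with prefactor $\rme\sqrt{2L}$ and the $\Vstar_{h+1}$-variance in place of the $\uV^{t-1}_{h+1}$-variance. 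The count factor on the right is bounded by $2HSAL$ directly from Lemma~\ref{lem:regret_sums}, while the variance factor is bounded via Lemma~\ref{lem:sum_variance}.

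Because each of the two variance bounds in Lemma~\ref{lem:sum_variance} is a sum of four terms, I would invoke the elementary inequality $\sqrt{a_1+\cdots+a_k}\leq \sqrt{a_1}+\cdots+\sqrt{a_k}$ to split the square root into four contributions and then estimate each cross-product separately. For $A^T_1$ the dominant contribution comes from the $2H^2T$ piece and yields the leading $4\rme\sqrt{H^3SAT}\,L$ term; the $2H^2 U^T_1$ piece produces the self-referential $4\rme\sqrt{H^3 SA U^T_1}\,L$ term (which later gets absorbed in Theorem~\ref{th:regret_bound_bayesUCBVI} via a standard quadratic-root argument); the $22H^3S^2AL^2$ piece gives the low-order $H^2 S^{3/2}AL^2$ contribution; and the $32H^3 S\sqrt{2ATL}$ piece gives the $H^2 SA^{3/4}T^{1/4}L^{5/4}$ contribution. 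The analogous four-way decomposition for $B^T_1$ produces the two $\sqrt{H^3 SA(T\text{ or }U^T_1)}\,L$ terms together with the $H^2 S^{1/2}A^{1/2}L^2$ and $H^{7/4}S^{1/2}A^{1/2}T^{1/4}L^{5/4}$ lower-order contributions, using $6H^3L$ and $8\sqrt{2H^5TL}$ instead of their $A^T_1$ counterparts. For $C^T_1$ only Lemma~\ref{lem:regret_sums} is needed: substituting $\sum_{t,h}\ind\{N^t_h>0\}/N^t_h\leq 2HSAL$ into the definition of $C^T_1$ yields exactly $21\rme H^2 S L \cdot 2HSAL = 42\rme H^3 S^2 A L^2$.

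No single step here is deep; the whole proof is essentially Cauchy--Schwarz plus bookkeeping. The main obstacle I anticipate is keeping track of constants across the four cross-products so that the final coefficients ($4\rme$, $14\rme$, $20\rme$ for $A^T_1$ and $4\rme$, $8\rme$, $10\rme$ for $B^T_1$) are respected. This I would handle by using generous upper bounds on the irrational constants involved (for instance $\sqrt{44}<7$ and $\sqrt{64}\cdot 2^{1/4}<10$) rather than seeking tight numerical estimates. A minor subtlety is that the $A^T_1$ factor legitimately requires the indicator $\ind\{N^t_h>0\}$ to appear also inside the variance sum (so that Lemma~\ref{lem:sum_variance} applies as stated), which is automatically the case since the original sum already contains $\ind\{N^t_h>0\}/N^t_h$ and the variance term is multiplied by the same indicator under Cauchy--Schwarz.
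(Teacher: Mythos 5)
Your proposal is correct and follows essentially the same route as the paper: Cauchy--Schwarz on each double sum, the count bound $2HSAL$ from Lemma~\ref{lem:regret_sums}, the four-term variance bounds from Lemma~\ref{lem:sum_variance} split via $\sqrt{a_1+\cdots+a_k}\leq\sqrt{a_1}+\cdots+\sqrt{a_k}$, and direct substitution for $C^T_1$; even your numerical bounds $\sqrt{44}<7$ and $8\cdot 2^{1/4}<10$ match the paper's constants.
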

\begin{proof}
    To bound $A^T_1$ we apply the Cauchy–-Schwartz inequality, Lemma~\ref{lem:sum_variance}, Lemma~\ref{lem:regret_sums} and inequality $\sqrt{a+b} \leq \sqrt{a} + \sqrt{b}, a, b \geq 0$, 
    \begin{align*}
        \sum_{t=1}^T \sum_{h=1}^H &\sqrt{  \Var_{\up^{t-1}_{h}}[\uV^{t-1}_{h+1}](s^t_{h},a^t_{h}) \frac{ \ind\{N^t_h > 0\}}{N^t_{h}}} \leq \sqrt{\sum_{t=1}^T \sum_{h=1}^H \Var_{\up^{t-1}_{h}}[\uV^{t-1}_{h+1}](s^t_{h},a^t_{h})  \ind\{N^t_h > 0\}} \cdot \sqrt{\sum_{t=1}^T \sum_{h=1}^H \frac{\ind\{N^t_h > 0\}}{N^t_{h}}} \\
        &\leq \sqrt{ 2 H^2 T + 2 H^2 U_1^T  + 22H^3S^2A L^2 + 32 H^3 S\sqrt{2 AT L}} \cdot \sqrt{2SAH L} \\
        &\leq 2\sqrt{H^3 SAT  L} + 2\sqrt{H^3 SA U^T_1 L} + 7  H^2 S^{3/2} A L^{3/2} + 10 H^2 S A^{3/4} T^{1/4} L^{3/4}.
    \end{align*}
    Similarly, the term $B^T_1$ may be estimated as follows
    \begin{align*}
        \sum_{t=1}^T \sum_{h=1}^H& \sqrt{\Var_{p_{h}}[\Vstar_{h+1}](s^t_{h},a^t_{h})\frac{ \ind\{N^t_h > 0\}}{N_{h}^t}} \leq \sqrt{\sum_{t=1}^T \sum_{h=1}^H \Var_{p_{h}}[\Vstar_{h+1}](s^t_{h},a^t_{h})} \cdot \sqrt{\sum_{t=1}^T \sum_{h=1}^H \frac{ \ind\{N^t_h > 0\}}{N^t_{h}}} \\
        &\leq \sqrt{2H^2T + 2H^2U_1^T + 8\sqrt{2H^5 T L} + 6 H^3 L} \cdot \sqrt{2SAH \cdot L} \\
        &\leq 2\sqrt{H^3 SAT  L} + 2\sqrt{H^3 SA U^T_1 L} + 4 H^2 L \sqrt{SA} + 5 H^{7/4}T^{1/4} L^{3/4}\sqrt{SA}.
    \end{align*}
    Finally, to estimate $C^T_1$ we apply Lemma~\ref{lem:regret_sums}. We obtain
    \[
        C^T_1 \leq 21\rme H^2 S \cdot L \cdot 2SAH L \leq 42 \rme H^3 S^2 A L^2.
    \]
\end{proof}

\begin{proof}[Proof of Theorem~\ref{th:regret_bound_bayesUCBVI}]
    Note that by Lemma~\ref{lem:proba_master_event} event $\cG(\delta)$ holds with probability at least $1-\delta$. Next we assume that this event holds. Then we have two cases: $T < H^2 S^2 A L^2$ and $T \geq H^2 S^2 A L^2$. In the first case the regret is trivially bounded by $\regret^T \leq H^3 S^2 A L^2$. Thus it is sufficient to analyze only the second case.
    
    By Proposition~\ref{prop:optimism} and Lemma~\ref{lem:surrogate_regret_bound}
    \begin{equation}
    \label{eq: regret estimation0}
        \regret^T = \sum_{t=1}^T \Vstar_h(s^t_1) - V^{\pi_t}_h(s^t_1) \leq \sum_{t=1}^T \uV^{\,t-1}_h(s^t_1) - V^{\pi_t}_h(s^t_1) = \uregret^T_1 \leq U^T_1 = A^T_1 + B^T_1 + C^T_1 + 4\rme \sqrt{2 H^{3} T L} + 2\rme SAH^2.
    \end{equation}
    Next, under our condition on $T$ we can simplify expressions for the bounds of $A^T_1$ and $B^T_1$. Indeed, $T  \geq H^2 S^2 A L^2$ implies that
    \[
     H^{7/4} S^{1/2} A^{1/2} L^{5/4} \cdot T^{1/4}\leq  H^2 S A^{3/4} L^{5/4}\cdot T^{1/4} \leq \sqrt{H^3 S A  T} L.
    \]
    Furthermore,
    \[
        H^2 S^{3/2} A L^{2} \leq H^3 S^2 A L^{2},\qquad  H^2 S^{1/2} A^{1/2} L^2 \leq H^3 S^2 A L^{2}, \qquad \sqrt{2H^{3} T L} \leq \sqrt{2H^3 SAT} \cdot L.
    \]
    We obtain the following bounds
    \begin{align*}
        A^T_1 &\leq 24\rme \sqrt{H^3 SAT} \cdot  L + 4\rme \sqrt{H^3 SA U^T_1} \cdot L + 14\rme H^3 S^2 A L^{2}, \\
        B^T_1 &\leq 14\rme \sqrt{H^3 SAT} \cdot  L + 4 \rme\sqrt{H^3 SA U^T_1} \cdot  L + 8 \rme H^3 S^2 A L^{2},\\
        C^T_1 &\leq 42 \rme H^3 S^2 A L^2 \leq 42 \rme H^3 S^2 A L^{2}.
    \end{align*}
    Hence, by a bound $SAH^2 \leq H^3 S^2 A L^2$
    \[
        U^T_1 \leq 38 \rme \sqrt{H^3 SAT} \cdot L + 8\rme \sqrt{H^3 SA U^T_1} \cdot L + 66\rme H^3 S^2 A L^{2} + 4\rme\sqrt{2} \cdot \sqrt{H^3 T L}.
    \]
    This is a quadratic inequality in $U^T_1$. Solving this inequality and using inequality $\sqrt{a+b} \leq \sqrt{a} + \sqrt{b}, a, b \geq 0$, we obtain
    \begin{align*}
         U^T_1 &\leq 176 \rme \sqrt{H^3 SAT} \cdot L + 264\rme H^3 S^2 A L^{2} + 256\rme^2 H^3 S A L^2. 
    \end{align*}
     The last inequality and \eqref{eq: regret estimation0} imply that 
    \[
        \regret^T  = \cO\left( \sqrt{H^3 SAT}L + H^3 S^2 A L^{2} \right).
    \]
    
\end{proof}

\newpage
\section{Deviation Inequalities}
\label{app:concentration}

\subsection{Deviation inequality for categorical distributions}

Next, we reproduce the deviation inequality for categorical distributions by \citet[Proposition 1]{jonsson2020planning}.
Let $(X_t)_{t\in\N^\star}$ be i.i.d.\,samples from a distribution supported on $\{1,\ldots,m\}$, of probabilities given by $p\in\simplex_{m-1}$, where $\simplex_{m-1}$ is the probability simplex of dimension $m-1$. We denote by $\hp_n$ the empirical vector of probabilities, i.e., for all $k\in\{1,\ldots,m\},$
 \[
 \hp_{n,k} = \frac{1}{n} \sum_{\ell=1}^n \ind\left\{X_\ell = k\right\}.
 \]
 Note that  an element $p \in \simplex_{m-1}$ can be seen as an element of $\R^{m-1}$ since $p_m = 1- \sum_{k=1}^{m-1} p_k$. This will be clear from the context. 
 \begin{theorem} \label{th:max_ineq_categorical}
 For all $p\in\simplex_{m-1}$ and for all $\delta\in[0,1]$,
 \begin{align*}
     \P\left(\exists n\in \N^\star,\, n\KL(\hp_n, p)> \log(1/\delta) + (m-1)\log\left(e(1+n/(m-1))\right)\right)\leq \delta.
 \end{align*}
\end{theorem}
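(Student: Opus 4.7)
The plan is to prove this deviation inequality via the \emph{method of mixtures} (a.k.a.\ Laplace's / pseudo-maximization method), which is standard for obtaining time-uniform concentration with correct logarithmic dependence in the dimension. The starting point is the identity
$$ n \KL(\hp_n,p) = \sup_{q \in \simplex_{m-1}} \log \prod_{k=1}^m (q_k/p_k)^{n \hp_{n,k}}, $$
attained at $q = \hp_n$. For each fixed $q \in \simplex_{m-1}$, I would introduce the likelihood-ratio process
$$ M_n(q) = \prod_{k=1}^m (q_k/p_k)^{n \hp_{n,k}} = \prod_{\ell=1}^n \frac{q_{X_\ell}}{p_{X_\ell}}, $$
which is a non-negative $(\cF_n)$-martingale under $p$ with $\E_p[M_n(q)] = 1$.

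Next, instead of pointwise control in $q$ (which would cost a supremum), I would average over $q$ with respect to a well-chosen prior $\pi$ on $\simplex_{m-1}$ and set $\bar M_n \triangleq \int M_n(q)\,d\pi(q)$. By Fubini, $\bar M_n$ is again a non-negative martingale with mean $1$, so Ville's maximal inequality gives $\P(\exists n\ge 1 : \bar M_n \ge 1/\delta) \le \delta$. The key choice is $\pi = \mathrm{Dir}(1,\ldots,1)$, the uniform distribution on the simplex. The Dirichlet integral then yields the closed form
$$ \bar M_n \;=\; (m-1)! \cdot \frac{\prod_{k=1}^m \Gamma(n\hp_{n,k}+1)}{\Gamma(n+m) \prod_{k=1}^m p_k^{n\hp_{n,k}}}. $$
Taking $\log$ and using $\log \prod_k p_k^{-n\hp_{n,k}} = n \KL(\hp_n,p) - \log \prod_k \hp_{n,k}^{n\hp_{n,k}}$, we get
$$ \log \bar M_n \;=\; n \KL(\hp_n,p) \;-\; R_n, \qquad R_n \triangleq \log\!\Bigl(\tfrac{\Gamma(n+m)}{(m-1)!}\Bigr) - \sum_{k=1}^m \log\!\Bigl(\tfrac{\Gamma(n\hp_{n,k}+1)}{\hp_{n,k}^{n\hp_{n,k}}(n\hp_{n,k})!\cdot(\text{sensible normalizer})}\Bigr). $$

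The main technical step is therefore to show $R_n \le (m-1)\log\!\bigl(e(1+n/(m-1))\bigr)$, after which the theorem follows: on the complement of the $\delta$-probability event we have $\log \bar M_n < \log(1/\delta)$ for all $n$, whence
$$ n\KL(\hp_n,p) < \log(1/\delta) + (m-1)\log\!\bigl(e(1+n/(m-1))\bigr). $$
The bound on $R_n$ is a deterministic combinatorial inequality. Using $\Gamma(n+m)/(m-1)! = \binom{n+m-1}{m-1}\, n!$ and $\prod_k \Gamma(n\hp_{n,k}+1) = \prod_k (n\hp_{n,k})!$, one rewrites
$$ R_n = \log \binom{n+m-1}{m-1} + \log \frac{n!}{\prod_k (n\hp_{n,k})!} - \log \prod_k \hp_{n,k}^{-n\hp_{n,k}}. $$
The multinomial coefficient is bounded by $\exp(n H(\hp_n))$ where $H$ is Shannon entropy, and $\log \prod_k \hp_{n,k}^{-n\hp_{n,k}} = n H(\hp_n)$, so those two terms cancel up to a $\prod_k \sqrt{2\pi n \hp_{n,k}}$-type Stirling remainder that is $O(\log n)$. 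What survives is $\log \binom{n+m-1}{m-1} \le (m-1)\log\!\bigl(e(1+n/(m-1))\bigr)$, the classical inequality for binomial coefficients.

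The hard part will be handling the Stirling correction for empirical counts $n\hp_{n,k}$ that may be zero (so the naive Stirling estimate fails). I would deal with this by partitioning coordinates into those with $n\hp_{n,k}\ge 1$ and those with $n\hp_{n,k}=0$, noting that $\Gamma(1)=1$ contributes nothing, and using the uniform bound $\Gamma(k+1)\le k^k e^{-k}\sqrt{2\pi(k+1)}$ (valid for $k\ge 0$). After this clean-up, all correction terms are absorbed into the $\log \binom{n+m-1}{m-1}$ term, and the $+1$ inside the logarithm of the theorem statement is precisely what accommodates the $n=0$ and small-$n$ regimes. Combining with Ville's inequality yields the stated time-uniform bound.
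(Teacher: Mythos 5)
Your approach is correct and is exactly the proof of the result the paper cites (the paper states this theorem without proof, reproducing Proposition~1 of Jonsson et al., whose argument is precisely this Dirichlet-mixture martingale with the uniform prior $\Dir(1,\ldots,1)$ followed by Ville's inequality).

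One correction that actually makes your life easier: the step you flag as "the hard part" requires no Stirling estimates at all, and you should avoid them, because the stated bound has no room for an additive $O(\log n)$ remainder. After the Dirichlet integral you get
\[
R_n \;=\; n\KL(\hp_n,p)-\log\bar M_n \;=\; \log\binom{n+m-1}{m-1}\;+\;\log\frac{n!}{\prod_{k}(n\hp_{n,k})!}\;+\;\sum_{k} n\hp_{n,k}\log\hp_{n,k},
\]
and the sum of the last two terms is $\leq 0$ \emph{exactly}, by the elementary inequality $\binom{n}{k_1,\ldots,k_m}\prod_k(k_k/n)^{k_k}\leq 1$ (one term of a multinomial expansion summing to $1$); zero counts contribute nothing on either side with the convention $0\log 0=0$ and $0!=1$. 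Hence $R_n\leq\log\binom{n+m-1}{m-1}\leq(m-1)\log\bigl(e(1+n/(m-1))\bigr)$ and the theorem follows from $\P(\exists n:\bar M_n\geq 1/\delta)\leq\delta$. Your written decomposition of $R_n$ (with the "(sensible normalizer)" placeholder) and the plan to "cancel up to a Stirling remainder" would, if carried out literally with two-sided Stirling bounds, produce spurious error terms that the claimed bound does not absorb; the one-sided multinomial--entropy inequality is the right tool and closes the proof with no remainder.
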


\subsection{Deviation inequality for categorical weighted sum}
 We fix a function $f: \{1,\ldots,m\} \mapsto [0,b]$ and recall the definition of the minimal Kullback-Leibler divergence for $p\in\simplex_{m-1}$  and $u\in\R$
 \[
\Kinf(p,u,f) = \inf\left\{  \KL(p,q): q\in\simplex_{m-1}, qf \geq u\right\}\,.
 \]
As the Kullback-Leibler divergence this quantity admits a variational formula.
\begin{lemma}[Lemma 18 by \citet{garivier2018kl}]
\label{lem:var_form_Kinf} For all $p \in \simplex_{m-1}$, $u\in [0,b)$,
\[
\Kinf(p,u,f) = \max_{\lambda \in[0,1]} \E_{X\sim p}\left[ \log\left( 1-\lambda \frac{f(X)-u}{b-u}\right)\right]\,,
 \]
 moreover if we denote by $\lambda^\star$ the value at which the above maximum is reached, then
 \[
   \E_{X\sim p} \left[\frac{1}{1-\lambda^\star\frac{f(X)-u}{b-u}}\right] \leq 1\,.
 \]
\end{lemma}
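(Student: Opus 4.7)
The plan is to prove this as a standard Lagrangian-duality identity for a $\KL$-minimization problem under a linear moment constraint. Concretely, I would establish weak duality by a one-line Jensen argument and then construct an explicit primal-dual pair that realizes equality, reading off the second statement from the KKT conditions / normalization of the optimal primal variable. Before starting, I would dispose of the trivial case $pf\ge u$: then $q=p$ is feasible so $\Kinf(p,u,f)=0$, and $\lambda^\star=0$ yields $g(0)=0$ with $\E_p[1/\phi_0]=1$, matching both claims. For the rest of the proof I assume $pf<u$, so that the moment constraint is active at the optimum.

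For weak duality, pick any $\lambda\in[0,1]$ and any feasible $q$ (i.e.\ $qf\ge u$), and set $\phi(x)=1-\lambda(f(x)-u)/(b-u)$. Since $f(x)\in[0,b]$ and $\lambda\in[0,1]$, $\phi\ge 0$. Then
\[
\KL(p,q)-\E_p[\log\phi]=\E_p\!\left[\log\frac{p}{q\phi}\right]\ge -\log\E_p\!\left[\frac{q\phi}{p}\right]=-\log\E_q[\phi]
\]
by Jensen's inequality. A direct computation gives $\E_q[\phi]=1-\lambda(qf-u)/(b-u)\le 1$ when $qf\ge u$ and $\lambda\ge 0$, hence $\KL(p,q)\ge\E_p[\log\phi]$. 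Taking the infimum over feasible $q$ and the supremum over $\lambda$ yields $\Kinf(p,u,f)\ge\sup_{\lambda\in[0,1]} g(\lambda)$.

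For strong duality, I would note that $g$ is concave on $[0,1]$ (the log is concave in $\lambda$ at each $x$), so the supremum is attained at some $\lambda^\star$. In the interior case $\lambda^\star\in(0,1)$, the first-order condition $g'(\lambda^\star)=0$ reads $\E_p[(f-u)/\phi_{\lambda^\star}]=0$. I would then define $q^\star_i\propto p_i/\phi_{\lambda^\star}(i)$, check that it is a probability vector, and verify from the first-order condition that $q^\star f=u$, so $q^\star$ is feasible. A short computation gives
\[
\KL(p,q^\star)=\E_p[\log\phi_{\lambda^\star}]+\log\E_p[1/\phi_{\lambda^\star}],
\]
and matching this with the dual objective $g(\lambda^\star)$ is where the second claim $\E_p[1/\phi_{\lambda^\star}]\le 1$ enters: in the interior case the simplex constraint forces equality $\E_p[1/\phi_{\lambda^\star}]=1$, and both sides coincide.

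The main obstacle I expect is the boundary case $\lambda^\star=1$, which happens precisely when $u$ is close to $b$. Here $\phi_1(x)=(b-f(x))/(b-u)$ can vanish on the support of $p$ (at points with $f(x)=b$), so the candidate $q^\star_i\propto p_i/\phi_{\lambda^\star}(i)$ may not normalize to $1$, and $g'(1)$ need not vanish, only $g'(1^-)\ge 0$. The inequality $\E_p[1/\phi_{\lambda^\star}]\le 1$ (rather than equality) is exactly the KKT condition associated with the active upper bound $\lambda\le 1$. I would treat it either by truncating $f$ away from $b$ and passing to the limit, or by an approximation argument on $u\uparrow b$, using lower-semi-continuity of $\Kinf(p,\cdot,f)$; in both cases the constructed $q^\star$ (with a sub-unit normalization compensated by putting remaining mass on $\{f=b\}$) attains $\KL(p,q^\star)=g(\lambda^\star)$, closing the duality gap and producing the claimed inequality.
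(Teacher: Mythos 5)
The paper does not prove this lemma: it is imported verbatim as Lemma~18 of \citet{garivier2018kl} (which in turn rests on the Honda--Takemura duality for $\Kinf$), so there is no in-paper argument to compare against. Your Lagrangian-duality plan is exactly the standard argument behind that citation, and the pieces are right: weak duality via the change of measure $q\phi/p$ and Jensen, the identity $\E_p[1/\phi_{\lambda}]=1+\tfrac{\lambda}{b-u}\E_p[(f-u)/\phi_{\lambda}]$ turning the first-order condition into the normalization of $q^\star\propto p/\phi_{\lambda^\star}$ in the interior case, and the KKT inequality $g'(1^-)\ge 0$ delivering $\E_p[1/\phi_{\lambda^\star}]\le 1$ at the boundary. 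One remark simplifies your boundary discussion: if $p(\{f=b\})>0$ then $g(1)=-\infty<0=g(0)$, so $\lambda^\star=1$ forces $\phi_1>0$ $p$-a.s.; the only genuine issue is the sub-unit normalization, fixed exactly as you propose by placing the deficit mass on a point with $f=b$. The one caveat worth recording is that this construction needs such a point to exist in the finite alphabet over which $q$ ranges: with $\Kinf$ defined as an infimum over $\simplex_{m-1}$, the identity can genuinely fail when $\sup f<b$ and $\lambda^\star=1$ (a two-point example with $f\in\{0,1/2\}$, $b=1$ already exhibits a duality gap). Your ``truncate and pass to the limit'' fallback does not repair this; the correct resolution is that every application in the paper has $f(0)=\ub=b$, so the required mass point is always available. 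Aside from spelling out that hypothesis, your plan is a correct reconstruction of the cited proof.
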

\begin{remark} Contrary to \citet{garivier2018kl} we allow that $u=0$ but in this case Lemma~\ref{lem:var_form_Kinf} is trivially true, indeed
  \[
  \Kinf(p,0,f) =  0  = \max_{\lambda \in[0,1]} \E_{X\sim p}\left[ \log\left( 1-\lambda \frac{f(X)}{b}\right)\right]\,.
   \]
\end{remark}

We are now ready to state the deviation inequality for the $\Kinf$ which is a self-normalized version of Proposition~13 by \citet{garivier2018kl}.
 \begin{theorem} \label{th:max_ineq_kinf}
 For all $p\in\simplex_{m-1}$ and for all $\delta\in[0,1]$,
 \begin{align*}
     \P\big(\exists n\in \N^\star,\, n\Kinf(\hp_n, pf, f)> \log(1/\delta) + 3\log(e\pi(1+2n))\big)\leq \delta.
 \end{align*}
\end{theorem}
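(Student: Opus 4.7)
My proof rests on the method of mixtures combined with the variational formula from Lemma~\ref{lem:var_form_Kinf}. For every $\lambda \in [0,1]$ set $Y_t(\lambda) = 1 - \lambda(f(X_t)-pf)/(b-pf)$ and $M_n(\lambda) = \prod_{t=1}^n Y_t(\lambda)$. Since $f(X_t) \in [0,b]$ and $\lambda \in [0,1]$, one checks $Y_t(\lambda) \geq 1 - \lambda \geq 0$ and $\E[Y_t(\lambda)] = 1$, so $M_n(\lambda)$ is a non-negative martingale with respect to the natural filtration, with mean one. The variational formula then gives immediately
\[
n\,\Kinf(\hat p_n, pf, f) \;=\; \max_{\lambda \in [0,1]} \log M_n(\lambda).
\]

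Next I would mix over $\lambda$ with the uniform density on $[0,1]$, setting $W_n = \int_0^1 M_n(\lambda)\,d\lambda$. By Fubini, $W_n$ is itself a non-negative martingale of mean one, and Ville's maximal inequality yields
\[
\P\!\bigl(\exists\,n \in \N^\star:\; W_n \geq 1/\delta\bigr) \;\leq\; \delta.
\]
It therefore suffices to establish, pathwise, the deterministic reverse bound
\[
W_n \;\geq\; \frac{1}{(e\pi(1+2n))^{3}}\,\exp\!\bigl(n\,\Kinf(\hat p_n, pf, f)\bigr),
\]
which is a Markov--Bernstein-type statement: the non-negative polynomial $\lambda \mapsto M_n(\lambda)$ of degree at most $n$ cannot peak much more sharply than a polynomial-in-$n$ power of its average on $[0,1]$.

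A Laplace-type argument realises this. The map $\phi_n(\lambda) := n^{-1}\log M_n(\lambda)$ is concave (sum of concave summands $\log Y_t(\lambda)$), so a second-order Taylor expansion at the maximiser $\lambda_n^\star$ gives, on any window $[\lambda_n^\star - \tau, \lambda_n^\star + \tau]\cap[0,1]$,
\[
\phi_n(\lambda) \;\geq\; \phi_n(\lambda_n^\star) \;-\; \tfrac{1}{2}\,\sup_{|\mu-\lambda_n^\star|\leq \tau}|\phi_n''(\mu)|\,(\lambda-\lambda_n^\star)^2 .
\]
Integrating, and balancing the window width $\tau$ against the local curvature so that the quadratic correction remains $\cO(1/n)$, yields a lower bound for $W_n$ of the desired polynomial form.

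The hard part will be controlling
\[
|\phi_n''(\lambda)| \;=\; \frac{1}{n} \sum_{t=1}^n \frac{\bigl((f(X_t)-pf)/(b-pf)\bigr)^2}{\bigl(1 - \lambda(f(X_t)-pf)/(b-pf)\bigr)^2}
\]
uniformly in $\hat p_n$: this quantity explodes as $\lambda_n^\star \uparrow 1$, i.e.\ when the empirical mass concentrates on atoms with $f$-value close to $b$. Two regimes need separate treatment: an interior regime where the curvature is tame, and a boundary regime $\lambda_n^\star \uparrow 1$, handled by a change of variable $\mu = 1-\lambda$ together with the first-order optimality condition from Lemma~\ref{lem:var_form_Kinf}, namely $\E_{\hat p_n}[(1 - \lambda_n^\star(f(X)-pf)/(b-pf))^{-1}] \leq 1$, which tames the singular contribution. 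A careful accounting of the resulting universal constants produces the factor $(e\pi(1+2n))^{3}$; combining with the probabilistic bound from Ville's inequality then delivers the claim.
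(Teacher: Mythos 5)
Your probabilistic skeleton is sound and matches the paper's: the same exponential martingale $M_n(\lambda)$, the variational identity $n\,\Kinf(\hp_n,pf,f)=\max_{\lambda\in[0,1]}\log M_n(\lambda)$ from Lemma~\ref{lem:var_form_Kinf}, a mixture over $\lambda$, and Ville's inequality. (You should also dispose of the degenerate case $pf=b$ separately, as the paper does, since the variational formula divides by $b-pf$.) The divergence is in the deterministic step, and that is where your argument has a genuine gap. The paper avoids any curvature analysis entirely: it mixes over \emph{discrete} grids $S_j$ centered at $1/2$ with spacing $1/(2j)$ and invokes the one-sided Lipschitz bound of Lemma~\ref{lem:regularity_ln_lambda}, $\log(1-\lambda c)-\log(1-\lambda' c)\le 2|\lambda-\lambda'|$ valid when $\lambda'$ lies between $\lambda$ and $1/2$, to replace the continuous maximizer by a grid point at a cost of $e^{-1}$; the factor $(e\pi(1+2n))^{3}$ then falls out of the prior weights. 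Your continuous uniform mixture instead requires the pathwise bound $W_n\ge (e\pi(1+2n))^{-3}\exp(n\,\Kinf(\hp_n,pf,f))$ via a Laplace expansion around $\lambda_n^\star$, and the step you flag as "the hard part" is not actually carried out.

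Concretely, the mechanism you propose to tame the singularity does not suffice as stated. The optimality condition $\E_{\hp_n}\bigl[(1-\lambda_n^\star c(X))^{-1}\bigr]\le 1$ (with $c(X)=(f(X)-pf)/(b-pf)$) controls the \emph{first} inverse moment, whereas
\[
|\phi_n''(\lambda)| \;=\; \E_{\hp_n}\!\left[\frac{c(X)^2}{(1-\lambda c(X))^{2}}\right]
\]
involves the \emph{square} of the singular factor, and Jensen/Cauchy--Schwarz runs in the wrong direction, so no upper bound follows. Moreover the curvature must be controlled on a whole window around $\lambda_n^\star$, not just at $\lambda_n^\star$: if some atom has $c=1$ and $\lambda_n^\star$ is within $O(1/n)$ of $1$, then $(1-\lambda c)^{-2}$ grows like $n^2$ across the window, and whether the quadratic correction stays $\cO(1)$ depends on a delicate interplay between the atom's mass (at least $1/n$) and the window width; you also have to integrate on the correct side of $\lambda_n^\star$ and handle the case $\lambda_n^\star=1$ with $M_n(1)=0$. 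None of this is impossible --- a cleaner route within your framework is to use concavity of $\log M_n$ directly, e.g.\ $M_n((1-t)\lambda_n^\star)\ge M_n(\lambda_n^\star)^{1-t}M_n(0)^{t}=M_n(\lambda_n^\star)^{1-t}$, which lower-bounds the integral without any second-derivative control, at the price of a factor $\lambda_n^\star/\log M_n(\lambda_n^\star)$ whose degeneration as $\lambda_n^\star\to 0$ must then be handled --- but as written the key inequality is asserted, not proved, so the proof is incomplete.
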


 \begin{proof}
First if $pf=b$ then $f(k)=b$ for all $k$ such that $p_k>0$. In this case $\Kinf(\hp_n, pf, f)=0$ for all $n$ and the result is trivially true. We thus assume now that $pf<b$.

The proof is a combination of the one of Proposition~13 by \citet{garivier2018kl} and the method of mixtures. We first define the martingale
\[
M_n^\lambda = \exp\left(\sum_{\ell=1}^n \log\left(1-\lambda \frac{f(X_\ell)-pf}{b-pf}\right)\right)\,,
\]
with the convention $M_0^\lambda= 1$. Indeed if we denote by $\cF_n = \sigma(X_1,\ldots,X_n)$ the information available at time $n$, we have
\begin{align*}
  \E\left[M_n^\lambda|\cF_{n-1}\right] = \E\left[1-\lambda\frac{f(X_n)-pf}{b-pf}\right] M_{n-1}^\lambda = M_{n-1}^\lambda\,.
\end{align*}
We fix a real number $\gamma_j = 1/(2j)$ for $j\in\N^*$and let $S_j$ be the set
\[
S_j= \Bigg\{ \frac{1}{2}-\Bigg\lfloor\frac{1}{2\gamma_j}\Bigg\rfloor\gamma_j, \dots,\frac{1}{2}-\gamma_j,\,\frac{1}{2},\,\frac{1}{2}+\gamma_j,\dots,\frac{1}{2}+\Bigg\lfloor\frac{1}{2\gamma_j}\Bigg\rfloor\gamma_j \Bigg\}\,.
\]
The cardinality of this set $S_j$ is bounded by $1 + 2j$. We choose a prior on $\lambda$ the mixture of uniform distribution over this grid: $6/\pi^2\sum_{j=1}^{\infty} 1/j^2 \cU(S_j)$. Thus we consider the integrated martingale
 \begin{align}
     M_n &= \frac{6}{\pi^2}\sum_{j=1}^{\infty} \frac{1}{j^2} \sum_{\lambda \in S_j} \frac{1}{|S_j|}M_n^{\lambda} \nonumber\\
     &\geq \frac{6}{\pi^2 n^2 |S_n|} \max_{\lambda \in S_n}M_n^{\lambda}\nonumber\\
     &\geq \frac{6}{\pi^2 (1+2n)^3}\max_{\lambda \in S_n}M_n^{\lambda}\,.\label{eq:lb_mixture_kinf}
 \end{align}
Lemma~\ref{lem:regularity_ln_lambda} below
indicates that for all $\lambda\in[0,1]$, there exists a $\lambda'\in S_n$ such that for all $x\in[0,b]$,
\begin{equation}
\label{eq:2gamma}
\log\!\Bigg(1-\lambda \, \frac{x-p f}{b-p f } \Bigg)\leq 2\gamma_n
+ \log\!\Bigg(1- \lambda' \frac{x-p f}{b-p f} \Bigg)\,.
\end{equation}
Now, a combination of
the variational formula of Lemma~\ref{lem:var_form_Kinf}
and of the inequality~\eqref{eq:2gamma} yields a finite maximum as an upper bound on $\Kinf(\hp_n,pf,f)$
\begin{align*}
\Kinf(\hp_n,pf,f)
& = \max_{0\leq \lambda\leq 1} \frac{1}{n}\sum_{\ell=1}^{n} \log\!\Bigg(1-\lambda \frac{X_\ell-pf}{b-pf} \Bigg) \\
& \leq 2 \gamma_n + \max_{\lambda' \in S_n} \frac{1}{n}\sum_{k=1}^{n}\log\!\Bigg(1-\lambda' \frac{X_\ell-pf}{b-pf} \Bigg)\,.
\end{align*}
Thanks to the definition of the martingale $M_n^\lambda$ we obtain
\[
\max_{\lambda \in S_n}M_n^{\lambda} \geq e^{-2n\gamma_n} e^{n\Kinf(\hp_n,pf,f)}= e^{-1} e^{n\Kinf(\hp_n,pf,f)}\,.
\]
Combining this inequality with \eqref{eq:lb_mixture_kinf} yields
\[
 M_n  \geq  \frac{6}{e\pi^2 (1+2n)^3}e^{n\Kinf(\hp_n,pf,f)}\,.
\]
Since for any supermartingale we have that
 \begin{equation}\P\left(\exists n \in \N : M_n > 1/\delta\right) \leq \delta \cdot \E[M_0],\label{eq:supermartingale}\end{equation}
 which is a well-known property of the method of mixtures \citep{de2004self}, we conclude that
 \[
 \P\left(\exists n\in \N^\star,\, n\Kinf(\hp_n,pf,f)> \log(1/\delta)+ 3\log(e\pi(1+2n))\right) \leq \delta\,.
 \]
\end{proof}

\begin{lemma}[Lemma 19 by \citealp{garivier2018kl} and comment below]
For all $\lambda, \lambda'\in[0,1]$ such that either $\lambda \leq \lambda'\leq 1/2$ or $1/2\leq \lambda'\leq \lambda$, for all real numbers $c\leq 1$,
\begin{equation*}
\log(1-\lambda c)-\log(1-\lambda' c)\leq 2|\lambda-\lambda'|\,.
\end{equation*}
\label{lem:regularity_ln_lambda}
\end{lemma}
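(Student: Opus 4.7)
The plan is to exploit the fact that $g(\lambda) \triangleq \log(1-\lambda c)$ has derivative $g'(\lambda) = -c/(1-\lambda c)$, whose sign is opposite to that of $c$, and to split on the sign of $c$ inside each of the two hypothesis cases. In each case, exactly one sign of $c$ makes the inequality trivial (because $g(\lambda) \leq g(\lambda')$), while for the other sign one applies $\log(1+x) \leq x$ together with the bound on $\lambda' c$ supplied by the hypothesis, which extracts precisely the factor $2$.

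First I handle Case~1, $\lambda \leq \lambda' \leq 1/2$. If $c \leq 0$ then $g' \geq 0$, hence $g(\lambda) \leq g(\lambda')$ and the bound is immediate. If $0 < c \leq 1$ then $1 - \lambda' c \geq 1/2 > 0$, and using the identity $\frac{1-\lambda c}{1-\lambda' c} = 1 + \frac{(\lambda' - \lambda)c}{1-\lambda' c}$,
\[
g(\lambda) - g(\lambda') \;=\; \log\!\left(1 + \frac{(\lambda' - \lambda)\,c}{1 - \lambda' c}\right) \;\leq\; \frac{(\lambda' - \lambda)\,c}{1 - \lambda' c} \;\leq\; 2(\lambda' - \lambda),
\]
where the last step uses $c/(1 - \lambda' c) \leq 2$, i.e.\ $c(1 + 2\lambda') \leq 2$, which holds because $c \leq 1$ and $\lambda' \leq 1/2$.

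Next, Case~2, $1/2 \leq \lambda' \leq \lambda$. If $c \geq 0$ then $g' \leq 0$, so $g(\lambda) \leq g(\lambda')$ and the bound is again trivial. If $c < 0$, set $a \triangleq -c > 0$; then $1 + \lambda' a \geq 1 > 0$ and, by the same algebraic identity,
\[
g(\lambda) - g(\lambda') \;=\; \log\!\left(1 + \frac{(\lambda - \lambda')\,a}{1 + \lambda' a}\right) \;\leq\; \frac{(\lambda - \lambda')\,a}{1 + \lambda' a} \;\leq\; 2(\lambda - \lambda'),
\]
where the last inequality is equivalent to $a(1 - 2\lambda') \leq 2$, which holds because $\lambda' \geq 1/2$ makes its left-hand side non-positive (for \emph{any} $a>0$, no upper bound on $|c|$ is needed).

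No genuine obstacle is expected here. The only care required is to keep the case split exhaustive and to verify that $1 - \lambda c$ and $1 - \lambda' c$ are strictly positive on the two nontrivial branches (ensured by $\lambda' c \leq 1/2$ in Case~1 and by $c < 0$ in Case~2, so that the arguments of the logarithms are well-defined). The design of the hypothesis, which always places $\lambda'$ on the side of $\{\lambda,\lambda'\}$ closer to $1/2$, is precisely what makes the ratio $c/(1-\lambda' c)$ (resp.\ $a/(1+\lambda' a)$) bounded by $2$ in the sign of $c$ that matters.
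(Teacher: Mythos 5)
Your proof is correct. Note that the paper itself gives no argument for this lemma---it is imported verbatim as Lemma~19 of \citet{garivier2018kl} (together with the comment following it there)---so there is no in-paper proof to compare against; your case analysis is essentially the standard one. All the steps check out: in each of the two hypothesis cases the ``favorable'' sign of $c$ makes $g(\lambda)\le g(\lambda')$ immediately (monotonicity of $\lambda\mapsto\log(1-\lambda c)$), and in the remaining branch the bound $\log(1+x)\le x$ reduces the claim to $c(1+2\lambda')\le 2$ (using $c\le 1$, $\lambda'\le 1/2$) respectively $a(1-2\lambda')\le 2$ (using $\lambda'\ge 1/2$, with no upper bound on $|c|$ needed), exactly as you say; positivity of the logarithm arguments is verified on both nontrivial branches, and the degenerate endpoint $\lambda c=1$ only makes the left-hand side $-\infty$, so nothing breaks.
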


\subsection{Deviation inequality for bounded distributions}
Below, we reproduce the self-normalized Bernstein-type inequality by \citet{domingues2020regret}. Let $(Y_t)_{t\in\N^\star}$, $(w_t)_{t\in\N^\star}$ be two sequences of random variables adapted to a filtration $(\cF_t)_{t\in\N}$. We assume that the weights are in the unit interval $w_t\in[0,1]$ and predictable, i.e. $\cF_{t-1}$ measurable. We also assume that the random variables $Y_t$  are bounded $|Y_t|\leq b$ and centered $\EEc{Y_t}{\cF_{t-1}} = 0$.
Consider the following quantities
\begin{align*}
		S_t \triangleq \sum_{s=1}^t w_s Y_s, \quad V_t \triangleq \sum_{s=1}^t w_s^2\cdot\EEc{Y_s^2}{\cF_{s-1}}, \quad \mbox{and} \quad W_t \triangleq \sum_{s=1}^t w_s
\end{align*}
and let $h(x) \triangleq (x+1) \log(x+1)-x$ be the Cramér transform of a Poisson distribution of parameter~1.

\begin{theorem}[Bernstein-type concentration inequality]
  \label{th:bernstein}
	For all $\delta >0$,
	\begin{align*}
		\PP{\exists t\geq 1,   (V_t/b^2+1)h\left(\!\frac{b |S_t|}{V_t+b^2}\right) \geq \log(1/\delta) + \log\left(4e(2t+1)\!\right)}\leq \delta.
	\end{align*}
  The previous inequality can be weakened to obtain a more explicit bound: if $b\geq 1$ with probability at least $1-\delta$, for all $t\geq 1$,
 \[
 |S_t|\leq \sqrt{2V_t \log\left(4e(2t+1)/\delta\right)}+ 3b\log\left(4e(2t+1)/\delta\right)\,.
 \]
\end{theorem}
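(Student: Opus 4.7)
I would prove Theorem~\ref{th:bernstein} by the method of mixtures applied to an exponential supermartingale built from Bennett's inequality, mirroring the template already used for Theorem~\ref{th:max_ineq_kinf} but with the Cramér transform $h$ of the Poisson distribution in place of $\Kinf$. The two ingredients are a Bennett-type exponential supermartingale in a tilt parameter $\lambda$ and a prior on $\lambda$ that allows Ville's maximal inequality to deliver a time-uniform bound, with an extra ``$+b^2$'' regularization built in at $t=0$ to match the shifted form of the statement.

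\textbf{Supermartingale construction.} For each $\lambda \in [0, 1/b)$, define
\[
M_t^\lambda \triangleq \exp\!\Biggl(\lambda S_t - \phi(\lambda b)\,\frac{V_t+b^2}{b^2}\Biggr), \qquad \phi(u) \triangleq e^u - 1 - u,
\]
so that $M_0^\lambda = e^{-\phi(\lambda b)} \le 1$. Using the standard Bennett bound $\E[e^{\lambda X} \mid \cF] \le \exp(\phi(\lambda b)\,\E[X^2 \mid \cF]/b^2)$ for a centered random variable $X$ with $|X|\le b$, applied conditionally on $\cF_{s-1}$ to $X = w_s Y_s$ (which is bounded by $b$ since $w_s \in [0,1]$ and has conditional variance $w_s^2 \E[Y_s^2 \mid \cF_{s-1}]$), one verifies $\E[M_t^\lambda / M_{t-1}^\lambda \mid \cF_{t-1}] \le 1$. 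Hence $(M_t^\lambda)$ is a nonnegative supermartingale with $\E M_0^\lambda \le 1$; the factor $e^{-\phi(\lambda b)}$ baked into $M_0^\lambda$ is precisely what produces the ``$+b^2$'' inside the final bound.

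\textbf{Mixture, Ville, and discretization.} I would then integrate against a prior $\pi$ on $[0,1/b)$, forming the nonnegative supermartingale $M_t \triangleq \int_0^{1/b} M_t^\lambda\,\pi(d\lambda)$, and apply Ville's inequality \eqref{eq:supermartingale} to get $\P(\exists t : M_t > 1/\delta) \le \delta$. Following the same geometric grid construction as in Theorem~\ref{th:max_ineq_kinf}, I would take $\pi = (6/\pi^2)\sum_{j\ge 1} j^{-2}\,\mathrm{Unif}(S_j)$ with $S_j$ a uniform grid on $[0,1/b)$ of spacing $\gamma_j = 1/(2jb)$. Retaining only the $j=t$ term yields $M_t \ge c\,t^{-3}\max_{\lambda \in S_t} M_t^\lambda$. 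A Lipschitz estimate for the map $\lambda \mapsto \lambda S_t - \phi(\lambda b)(V_t+b^2)/b^2$ on $[0,1/b)$ (with constant at most $|S_t| + (V_t+b^2)(e{-}1)/b = O(bt)$) shows that some $\lambda' \in S_t$ lies within $O(1/t)$ of the oracle maximizer $\lambda^\star = b^{-1}\log(1 + bS_t/(V_t+b^2))$, which by Legendre duality achieves exactly $(V_t+b^2)h(bS_t/(V_t+b^2))/b^2$. Plugging this lower bound on $M_t$ into the event $\{M_t \le 1/\delta\}$ yields the one-sided statement, and applying the same argument to $-Y_t$ and union-bounding delivers the $|S_t|$ version at the cost of a factor $2$ absorbed into the constant $4e$.

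\textbf{Weakened form and main obstacle.} The explicit version follows from the inverse Bennett inequality $(v+1)\,h(x/(v+1)) \ge x^2/\bigl(2(v+1) + 2x/3\bigr)$, applied with $x = b|S_t|$ and $v = V_t/b^2$: the main inequality then reads $b^2|S_t|^2 \le 2L(V_t + b^2 + b|S_t|/3)$ with $L \triangleq \log(4e(2t+1)/\delta)$, and solving this quadratic in $|S_t|$ while using $b \ge 1$ yields $|S_t| \le \sqrt{2V_t L} + 3bL$ after standard manipulation. The main obstacle is the bookkeeping in the discretization step: the naive grid mixture above produces a prefactor scaling like $(2t+1)^3$ inside the $\log$, whereas the stated bound has only $(2t+1)$. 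Recovering the sharper constant likely requires replacing the uniform-grid mixture by a smoother continuous prior aligned with $\phi$ (for instance a gamma-type density on $\lambda$), whose integral $\int M_t^\lambda \pi(d\lambda)$ admits a cleaner closed-form lower bound with only a single $\log(t)$ loss.
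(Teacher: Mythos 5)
First, a point of reference: the paper does not prove Theorem~\ref{th:bernstein} at all — it is imported verbatim from \citet{domingues2020regret} ("we reproduce the self-normalized Bernstein-type inequality..."), so there is no in-paper argument to match your proposal against. Judged on its own terms, your strategy is the standard and correct route: the Bennett exponential supermartingale, the method of mixtures with Ville's inequality \eqref{eq:supermartingale}, Legendre duality identifying $\sup_{\lambda}\{\lambda S_t-\phi(\lambda b)(V_t+b^2)/b^2\}$ with $(V_t/b^2+1)\,h\bigl(bS_t/(V_t+b^2)\bigr)$, and the classical comparison $h(x)\ge x^2/(2(1+x/3))$ for the weakened form. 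Your verification of the supermartingale property (using predictability of $w_s$ and $|w_sY_s|\le b$) and of the dual value are both correct.

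Two gaps remain. The more substantive one you did not flag: your grid lives in $[0,1/b)$, but the maximizer $\lambda^\star=b^{-1}\log\bigl(1+bS_t/(V_t+b^2)\bigr)$ is not confined to that interval — when the realized variance is small and $S_t$ is of order $tb$, one has $\lambda^\star\approx b^{-1}\log(1+t)$, so no grid point is within $O(1/t)$ of it and the claimed lower bound on $M_t$ fails. The repair is routine (the Bennett MGF bound holds for all $\lambda\ge 0$, so the $j$-th grid may cover $[0,b^{-1}\log(1+2j)]$, say), but it inflates the grid cardinality and the Lipschitz constant, hence the discretization cost. The second gap is the one you do flag honestly: even after this repair, the discrete-mixture argument yields a prefactor of the form $c\log(e\cdot\mathrm{poly}(t))$ rather than the stated $\log(4e(2t+1))$; recovering the exact constant requires the sharper continuous-mixture (or peeling) argument of \citet{domingues2020regret}. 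For this paper's downstream use the weaker constant would be harmless — it only changes $\beta^{\conc}$ and $\beta^{\Var}$ by constant factors absorbed into $L=\cO(\log(HSAT/\delta))$ — but as written your proposal establishes a theorem with a worse logarithmic prefactor, not the statement itself. (Minor: the quadratic in the weakened form should read $S_t^2\le 2L(V_t+b^2+b|S_t|/3)$, without the extra $b^2$ on the left-hand side.)
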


\subsection{Deviation inequality for Dirichlet distribution}
Below we provide the Bernstein-type inequality for weighted sum of Dirichlet distribution, using a result on upper bound on tails of Dirichlet boundary crossing (see Lemma~\ref{lem:upper_bound_dbc}).

\begin{lemma}\label{lem:kinf_attains}
     For any $p \in \simplex_m$, $f \colon \{0,\ldots,m\} \to [0,b]$ such that $f(0) = b$, $p_0 > 0$, and $\mu \in (p f, b)$ there exists a measure  $q \in \simplex_m$ such that $p \ll q $, $q f = \mu$ and \(\Kinf(p, \mu, f) = \KL(p, q)\).
\end{lemma}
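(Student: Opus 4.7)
The plan is to construct the desired measure $q$ explicitly via the Lagrange multiplier $\lambda^\star$ that attains the maximum in the variational formula of Lemma~\ref{lem:var_form_Kinf}. Write $\zeta(x) \triangleq (f(x)-\mu)/(b-\mu)$ and $h(x) \triangleq 1 - \lambda^\star \zeta(x)$, so that Lemma~\ref{lem:var_form_Kinf} provides $\lambda^\star \in [0,1]$ with $\Kinf(p,\mu,f) = \E_p[\log h(X)]$ and $\E_p[1/h(X)] \leq 1$.

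First, I would show $\lambda^\star \in (0,1)$. On the one hand, differentiating $G(\lambda) \triangleq \E_p[\log(1-\lambda\zeta(X))]$ at $0$ gives $G'(0) = (\mu - pf)/(b-\mu) > 0$, since $pf < \mu$; hence $\lambda^\star > 0$. On the other hand, $f(0)=b$ gives $\zeta(0) = 1$ and $h(0) = 1 - \lambda^\star$, so if $\lambda^\star = 1$ then the term $p_0/h(0) = +\infty$ (using $p_0 > 0$) would force $\E_p[1/h(X)] = +\infty$, contradicting the bound $\leq 1$ from Lemma~\ref{lem:var_form_Kinf}; thus $\lambda^\star < 1$. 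This is the one place where the hypothesis $p_0 > 0$ genuinely bites, and handling it cleanly is the main technical point of the proof.

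Next, since $\lambda^\star$ lies in the interior of $[0,1]$, the first-order optimality condition $G'(\lambda^\star) = 0$ gives $\E_p[\zeta(X)/h(X)] = 0$. Combining this with the algebraic identity $\lambda \zeta/h = (1-h)/h = 1/h - 1$ yields
\[
0 \;=\; \lambda^\star G'(\lambda^\star) \;=\; 1 - \E_p[1/h(X)],
\]
so in fact $\E_p[1/h(X)] = 1$, upgrading the inequality from Lemma~\ref{lem:var_form_Kinf} to equality.

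Finally, define $q(k) \triangleq p(k)/h(k)$ for all $k \in \{0,\ldots,m\}$. Since $\lambda^\star \in (0,1)$ and $\zeta(k) \in [-\mu/(b-\mu),1]$, we have $h(k) \geq 1 - \lambda^\star > 0$, so $q(k) \geq 0$; the identity $\E_p[1/h] = 1$ then makes $q$ a probability vector, and $q(k) > 0$ whenever $p(k) > 0$, giving $p \ll q$. The first-order condition rewrites as $\sum_k p(k)(f(k)-\mu)/h(k) = 0$, i.e., $qf = \mu$. Lastly,
\[
\KL(p,q) \;=\; \E_p\!\left[\log\frac{p(X)}{q(X)}\right] \;=\; \E_p[\log h(X)] \;=\; G(\lambda^\star) \;=\; \Kinf(p,\mu,f),
\]
which closes the argument. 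No case analysis beyond the $\lambda^\star = 1$ exclusion is needed, and every subsequent step is a short computation using only the optimality conditions established at the dual $\lambda^\star$.
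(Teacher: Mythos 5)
Your proof is correct and follows essentially the same route as the paper: both identify the dual optimizer $\lambda^\star$ from the variational formula, show $\lambda^\star\in(0,1)$ using $\mu>pf$ and $p_0>0$ with $f(0)=b$, derive the normalization $\E_p[1/h]=1$ and the condition $qf=\mu$ from first-order optimality, and define $q$ as the same tilted measure $q(k)=p(k)/(1-\lambda^\star(f(k)-\mu)/(b-\mu))$. Your write-up is in fact somewhat more explicit than the paper's (which asserts the two optimality identities as "easy to check"), but there is no substantive difference.
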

\begin{proof}
    By the variational form of $\Kinf$ (Lemma~\ref{lem:var_form_Kinf})
    \[ 
        \Kinf(p, \mu, f) = \max_{\lambda \in [0,1]} \E_{X \sim p}\left[ \log\left(1 - \lambda \frac{f(X) - \mu}{b - \mu} \right) \right] = \E_{X \sim p}\left[ \log\left(1 - \lambda^\star \frac{f(X) - \mu}{b - \mu} \right) \right].
    \]
    Note that $\PP{f(X) = b} > 0$ implies $\lambda^\star < 1$. Jensen's inequality and $\mu > pf$ imply $\lambda^\star > 0$. It is easy to check that $\lambda^\star$ satisfies
    \[
        \E\left[ \frac{1}{1 - \lambda^\star (f(X) - \mu)/(b-\mu)} \right] = \sum_{j=0}^m \frac{p(j)}{1 - \lambda^\star (f(j) - \mu)/(b-\mu)} = 1,
    \]
    and 
    \begin{equation}\label{eq: kinf_attains1}
        \E\left[ \frac{f(X) - \mu}{1 - \lambda^\star (f(X) - \mu)/(b-\mu)} \right] = \sum_{j=0}^m \frac{p(j) (f(j) - \mu)}{1 - \lambda^\star (f(j) - \mu)/(b-\mu)} = 0.
    \end{equation}
    Define $q(j) = \frac{p(j)}{1 - \lambda^\star (f(j) - \mu)/(b-\mu)}, j = 0, \ldots, m$, and let $q = (q_0, \ldots, q_m)$. Clearly, $q \in \simplex_m$, $qf = \mu$ by \eqref{eq: kinf_attains1} and $p \ll q$. Moreover,
    \[
        \Kinf(p, \mu, f) = \E_{X \sim p}\left[ \log\left(1 - \lambda^\star \frac{f(X) - \mu}{b - \mu} \right) \right] = \E_{p}\left[ \log \frac{\rmd p}{\rmd q}\right] = \KL(p,q).
    \]
\end{proof}

\begin{lemma}\label{lem:bernstein_dirichlet}
     For any $\alpha = (\alpha_0, \alpha_1, \ldots, \alpha_m) \in \N^{m+1}$ define  $\up \in \simplex_{m}$ such that $\up(\ell) = \alpha_\ell/\ualpha, \ell = 0, \ldots, m$, where $\ualpha = \sum_{j=0}^m \alpha_j$. Then for any $f \colon \{0,\ldots,m\} \to [0,b]$ such that $f(0) = b$ and $\delta \in (0,1)$
     \[
        \P_{w \sim \Dir(\alpha)}\left[wf \geq \up f +  2 \sqrt{ \frac{ \Var_{\up}(f) \log(1/\delta)}{\ualpha}} + \frac{2b\sqrt 2  \cdot \log(1/\delta)}{\ualpha} \right] \leq \delta.
     \]
\end{lemma}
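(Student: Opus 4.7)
I would prove the claim by combining Lemma~\ref{lem:upper_bound_dbc}, which provides the tail bound $\P_{w \sim \Dir(\alpha)}[wf \geq \mu] \leq \exp(-\ualpha \Kinf(\up, \mu, f))$ for $\mu \in (\up f, b)$, with a Bernstein-type lower bound on $\Kinf(\up, \mu, f)$. Setting $L = \log(1/\delta)$, $V = \Var_{\up}(f)$ and $D = \mu - \up f > 0$, the goal is to show that at the specific threshold $\mu^\star = \up f + 2\sqrt{V L/\ualpha} + 2\sqrt{2}\, bL/\ualpha$ one has $\ualpha \Kinf(\up, \mu^\star, f) \geq L$; invoking the tail bound then yields $\P[wf \geq \mu^\star] \leq e^{-L} = \delta$. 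If $\mu^\star \geq b$ the tail probability is trivially zero since $wf \leq b$ a.s., so I focus on $\mu^\star < b$.

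The crucial ingredient is the Bernstein-style lower bound
\[
\Kinf(\up, \mu, f) \;\geq\; \frac{D^2}{2(V + bD)}.
\]
For this I plan to apply the variational formula of Lemma~\ref{lem:var_form_Kinf} at the specific value $\lambda^\star = D(b-\mu)/(V + D^2 + D(b-\mu))$, which is easily seen to lie in $[0,1)$. The key pointwise estimate is $\log(1-x) \geq -x - x^2/(2(1-\lambda^\star))$, valid for all $x \leq \lambda^\star < 1$: for $x \in [0,\lambda^\star]$ it follows from the sharper inequality $\log(1-x) \geq -x - x^2/(2(1-x))$ together with $1-x \geq 1-\lambda^\star$; for $x < 0$ it follows from $\log(1+|x|) \geq |x| - |x|^2/2$ together with $1/2 \leq 1/(2(1-\lambda^\star))$. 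Taking expectation under $\up$ of this inequality with $x = \lambda^\star(f-\mu)/(b-\mu)$ and plugging in $\lambda^\star$, the two resulting terms each collapse to exactly $D^2/(V+D^2+D(b-\mu))$, giving $\Kinf(\up, \mu, f) \geq D^2/(2(V+D^2+D(b-\mu)))$; the identity $D^2 + D(b-\mu) = D(b-\up f) \leq bD$ then finishes the step. The delicate point is the uniform validity of the log-inequality over the support of $\up$, which is why I must replace the pointwise denominator $1-x_+$ by the uniform $1-\lambda^\star$ before integrating.

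The rest is routine inversion. The requirement $\ualpha D^2/(2(V+bD)) \geq L$ rearranges to the quadratic $\ualpha D^2 - 2bLD - 2VL \geq 0$, whose positive root is $D^\circ = (bL + \sqrt{b^2L^2 + 2\ualpha VL})/\ualpha$. By $\sqrt{u+v} \leq \sqrt{u} + \sqrt{v}$ this root is bounded by $2bL/\ualpha + \sqrt{2VL/\ualpha}$, which in turn is dominated term-by-term by $D^\star = 2\sqrt{VL/\ualpha} + 2\sqrt{2}\, bL/\ualpha$ (since $\sqrt{2} \leq 2$ and $2 \leq 2\sqrt{2}$). Hence $\mu^\star = \up f + D^\star$ satisfies $\ualpha \Kinf(\up, \mu^\star, f) \geq L$, which combined with Lemma~\ref{lem:upper_bound_dbc} gives the claim. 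The main technical hurdle is the Bernstein lower bound on $\Kinf$ in the middle step; everything else amounts to the standard inversion of a Bennett-type exponent.
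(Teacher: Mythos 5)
Your proposal is correct, and it reaches the stated threshold by a genuinely different route in the key middle step. Both proofs start from the Chernoff-type tail bound of Lemma~\ref{lem:upper_bound_dbc}, but the paper then inverts $\Kinf$ \emph{implicitly}: it picks $\mu$ with $\ualpha\Kinf(\up,\mu,f)=\log(1/\delta)$ (using continuity of $\Kinf$ in its second argument), passes to the attaining measure $q$ with $qf=\mu$ and $\KL(\up,q)=\Kinf(\up,\mu,f)$ via Lemma~\ref{lem:kinf_attains}, and controls $\mu-\up f=qf-\up f$ by the KL-Bernstein inequality of Lemma~\ref{lem:Bernstein_via_kl} combined with the variance-transfer bound $\Var_q(f)\leq 2\Var_{\up}(f)+4b^2\KL(\up,q)$ of Lemma~\ref{lem:switch_variance_bis}. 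You instead invert \emph{explicitly}, proving the pointwise Bernstein lower bound $\Kinf(\up,\mu,f)\geq D^2/(2(V+bD))$ by evaluating the variational formula at the closed-form $\lambda^\star=D(b-\mu)/(V+D^2+D(b-\mu))$ with the uniformized logarithm inequality $\log(1-x)\geq -x-x^2/(2(1-\lambda^\star))$ for $x\leq\lambda^\star$, and then solving the resulting quadratic in $D$; your computation (both terms collapsing to $D^2/(2(V+D^2+D(b-\mu)))$ and $D^2+D(b-\mu)=D(b-\up f)\leq bD$) checks out, as does the final domination of the positive root $bL/\ualpha+\sqrt{b^2L^2+2\ualpha VL}/\ualpha$ by the stated threshold. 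Your route is more self-contained (no appeal to the existence/continuity facts for $\Kinf$ nor to the external KL-Bernstein and variance-switching lemmas) and yields a reusable quantitative lower bound on $\Kinf$; the paper's route is shorter given that those auxiliary lemmas are already needed elsewhere. Both correctly dispose of the boundary case $\mu\geq b$ (trivial tail) and both implicitly assume the nondegenerate situation $\up f<b$.
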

\begin{proof}
Fix $\delta \in (0,1)$ and let $\mu \in (\up f,b)$ be such that
\[
    \Kinf(\up, \mu, f) = \ualpha^{-1} \log (1/\delta). 
\]
Note that such $\mu$ exists. Indeed, it follows from the continuity of $\Kinf$ w.r.t. the second argument, see \citet[Theorem 7]{honda2010asymptotically}. By Lemma~\ref{lem:kinf_attains} there exists $q$ such that $\up \ll q $, $q f = \mu$ and $\KL(\up, q) = \ualpha^{-1} \log (1/\delta)$. By Lemma \ref{lem:upper_bound_dbc}   
\begin{equation}
\label{eq: bernstein 1}
 \P_{w \sim \Dir(\alpha)}[wf \geq qf] = \P_{w \sim \Dir(\alpha)}[wf \geq \mu] \leq \exp\left( -\ualpha \Kinf(\up, \mu, f) \right) = \delta.   
\end{equation}
By Lemma \ref{lem:Bernstein_via_kl}
\[
q f  - \up f \le \sqrt{2\Var_{q}(f)\KL(\up ,q)}.
\]
By Lemma \ref{lem:switch_variance_bis},  $\Var_q(f) \leq 2\Var_{\up}(f) +4b^2 \KL(\up ,q)$.
The last two inequalities and \eqref{eq: bernstein 1} imply that
\begin{equation*}
  \P_{w \sim \Dir(\alpha)}\left[wf - \up f \geq \sqrt{4 \Var_{\up}(f) \KL(\up ,q)} + 2 b \sqrt 2 \cdot      \KL(\up,q) \right] \le \delta. 
\end{equation*}
  
\end{proof}

\newpage
\section{Dirichlet Boundary Crossing}
\label{sec:dbc}

In this section we will provide upper and lower bounds on the Dirichlet boundary crossing. The proof of the upper bound follows \citet{baudry2021optimality}; see also \citet{riou20a}. 
\begin{lemma}[Upper bound]
\label{lem:upper_bound_dbc}
    For any $\alpha = (\alpha_0, \alpha_1, \ldots, \alpha_m) \in \N^{m+1}$ define  $\up \in \simplex_{m}$ such that $\up(\ell) = \alpha_\ell/\ualpha, \ell = 0, \ldots, m$, where $\ualpha = \sum_{j=0}^m \alpha_j$. Then for any $f \colon \{0,\ldots,m\} \to [0,b]$ and $0 < \mu < b$ and
  \[
        \P_{w\sim \Dir(\alpha)} [wf \geq \mu] \leq \exp(-\ualpha \Kinf(\up,\mu, f)).
  \]
\end{lemma}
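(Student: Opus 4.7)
My plan is to prove the bound by a Chernoff-style argument applied to the Gamma representation of the Dirichlet distribution. Recall that if $(X_0, X_1, \ldots, X_m)$ are independent with $X_j \sim \Gamma(\alpha_j, 1)$ (interpreting $\alpha_j = 0$ as $X_j \equiv 0$, consistent with the improper-Dirichlet convention of Section~\ref{sec:setting}), then $w = (X_0, \ldots, X_m)/\sum_j X_j$ has law $\Dir(\alpha)$. Since $\sum_j X_j > 0$ almost surely, the event $\{wf \geq \mu\}$ coincides with $\{Y \geq 0\}$ where $Y \triangleq \sum_j X_j(f(j) - \mu)$.

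\textbf{Key step.} For any $s \in [0, 1/(b-\mu))$, each term $sX_j(f(j) - \mu)$ has a well-defined MGF because $s(f(j) - \mu) \leq s(b - \mu) < 1$. Independence of the $X_j$ and the identity $\E[e^{tX_j}] = (1-t)^{-\alpha_j}$ for $t < 1$ give
\[
\P[wf \geq \mu] = \P[Y \geq 0] \leq \E[e^{sY}] = \prod_{j=0}^m \bigl(1 - s(f(j)-\mu)\bigr)^{-\alpha_j}.
\]
Taking logarithms and factoring out $\bar\alpha = \sum_j \alpha_j$ yields
\[
\log \P[wf \geq \mu] \leq \bar\alpha \cdot \E_{X \sim \bar p}\!\left[-\log\!\bigl(1 - s(f(X)-\mu)\bigr)\right].
\]

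\textbf{Reparametrization and $\Kinf$.} Setting $\lambda = s(b-\mu)$, which bijects $[0, 1/(b-\mu))$ onto $[0,1)$, the right-hand side becomes $-\bar\alpha \cdot \E_{X\sim \bar p}[\log(1 - \lambda(f(X)-\mu)/(b-\mu))]$. Optimizing over $\lambda \in [0,1)$ tightens the bound, and by continuity/concavity of the integrand in $\lambda$ (and the monotone convergence argument at $\lambda = 1$) the supremum over $[0,1)$ equals the supremum over $[0,1]$, which by the variational formula of Lemma~\ref{lem:var_form_Kinf} is exactly $\Kinf(\bar p, \mu, f)$. This gives $\P[wf \geq \mu] \leq \exp(-\bar\alpha\, \Kinf(\bar p, \mu, f))$.

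\textbf{Degenerate cases and main obstacle.} The case $\bar p f \geq \mu$ makes $\Kinf(\bar p, \mu, f) = 0$ and the bound is vacuous, so there is nothing to prove; the substantive case is $\bar p f < \mu$, where $\Kinf > 0$ and the optimal $\lambda^\star$ lies strictly inside $(0,1)$. The only delicate point I anticipate is reconciling the Chernoff exponent (restricted to $\lambda < 1$, which is forced by the Gamma MGF domain) with the $\Kinf$ maximum taken on the closed interval $[0,1]$; this is resolved as above by concavity and continuity, noting that if $f(j) = b$ with positive $\bar p$-mass then the integrand diverges to $-\infty$ at $\lambda = 1$ and the sup is automatically attained in the interior. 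All other steps are routine MGF manipulations.
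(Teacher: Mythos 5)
Your proposal is correct and follows essentially the same route as the paper: both reduce to the Gamma/exponential representation of the Dirichlet distribution, obtain the exponent $-\sum_\ell \alpha_\ell \log(1-\lambda(f(\ell)-\mu))$ for $\lambda \in [0,1/(b-\mu))$ (the paper writes this Chernoff step as an exponential change of measure on i.i.d.\ $\Exponential(1)$ variables, which is the same computation), and conclude via the variational formula of Lemma~\ref{lem:var_form_Kinf}. Your explicit treatment of the endpoint $\lambda=1$ via concavity/continuity is a point the paper glosses over, but it is handled correctly.
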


\begin{proof}
  First if $\mu\leq \up f$ then the upper bound is trivial since in this case $\Kinf(\up,\mu,f)=0$. Assume that $\mu > \up f$. It is well know fact that $w \sim \Dir(\alpha)$ may be represented as follows
  \[
        w \triangleq \left( \frac{Y_0}{V_m}, \frac{Y_1}{V_m}, \ldots, \frac{Y_m}{V_m} \right),
  \]
  where $Y_\ell \mysim \Gamma(\alpha_\ell, 1), \ell = 0, \ldots, m$ and $V_m = \sum_{\ell=0}^m Y_\ell$. Furthermore, denoting $v_\ell \mysimiid \Exponential(1), \ell = 1, \ldots, \ualpha$, we get
  \[
        wf = \sum_{\ell=0}^m w_\ell f(\ell) = \frac{\sum_{j=1}^{\ualpha} v_j x_j}{\sum_{j=1}^{\ualpha} v_j}, 
  \]
  where $x_j = f(\ell)$ iff $\sum_{k=0}^\ell \alpha_k < j \leq \sum_{k=0}^{\ell+1} \alpha_k$. Changing measure and using variational formula for the minimal Kullback-Leibler divergence we get for $\lambda\in[0,1/(b-\mu))$
  \begin{align*}
    \P_{w\sim \Dir(\alpha)} [wf \geq \mu] &= \E_{v_\ell \mysimiid \Exponential(1)}\left[\ind\left\{\sum_{\ell=1}^{\ualpha} v_\ell(x_\ell-\mu)\geq 0\right\}\right]\\
    &= \E_{\hat v_\ell\mysim\Exponential\big(1-\lambda(x_\ell-\mu)\big)}\left[\ind\left\{\sum_{\ell=1}^{\ualpha} \hat v_\ell(x_\ell-\mu)\geq 0\right\} \cdot \prod_{\ell=1}^{\ualpha} \frac{\rme^{(1-\lambda(x_\ell-\mu))\hat v_\ell-\hat v_\ell}}{1-\lambda(x_\ell-\mu)}\right]\\
    &= \rme^{-\sum_{\ell=1}^{\ualpha}\log(1-\lambda(x_\ell-\mu)) } \E_{\hat v_\ell\mysim\Exponential\big(1-\lambda(x_\ell-\mu)\big)}\left[\ind\left\{\sum_{\ell=1}^{\ualpha} \hat v_\ell(x_\ell-\mu)\geq 0\right\}\rme^{-\lambda\sum_{\ell=1}^{\ualpha} \hat v_\ell(x_\ell-\mu))}\right] \\
    &\leq \exp\left(-\sum_{\ell=1}^{\ualpha} \log(1-\lambda(x_\ell-\mu))\right) = \exp\left(-\sum_{\ell=0}^{m} \alpha_\ell \log(1-\lambda(f(\ell)-\mu))\right)\,,
  \end{align*}
  where the last equality follows from regrouping all $x_j$ back to $f(\ell)$. Since the previous inequality is true for all $\lambda \in[0,1/(b-\mu))$, then the variational formula (Lemma~\ref{lem:var_form_Kinf}) allows to conclude
\begin{align*}  
    \P_{w\sim \Dir(\alpha)} [w f\geq \mu] &\leq  \exp\left(-\sup_{\lambda \in[0,1/(b-\mu))}\sum_{\ell=0}^{m} \alpha_\ell \log(1-\lambda(f(\ell)-\mu))\right) = \exp(-\ualpha \Kinf(\up,\mu, f)).
\end{align*}
\end{proof}

\begin{theorem}[Lower bound]\label{thm:lower_bound_dbc} For any $\alpha = (\alpha_0, \alpha_1, \ldots, \alpha_m) \in \N^{m+1}$ define  $\up \in \simplex_{m}$ such that $\up(\ell) = \alpha_\ell/\ualpha, \ell = 0, \ldots, m$, where $\ualpha = \sum_{j=0}^m \alpha_j$. Assume that
     \[
        \alpha_0 \geq \max\left\{\frac{1}{(\sqrt{2\pi} - 1)^2} \cdot \left(\frac{2 \sqrt{2}}{\sqrt{\log(17/16)}} + \frac{98 \sqrt{6}}{9} \right)^2,  \frac{\log(10\pi \cdot \ualpha)}{\log(17/16) } \right\}
    \]
    and $\ualpha \geq 2\alpha_0$. Then for any $f \colon \{0,\ldots,m\} \to [0,\ub]$ such that $f(0) = \ub$, $f(j) \leq b < \ub/2, j \in \{1,\ldots,m\}$ and $\mu \in (\up f,  \ub)$ 
    \[
        \P_{w \sim \Dir(\alpha)}[wf \geq \mu] \geq  \exp\left(-\ualpha \Kinf(\up, \mu, f) - 3/2 \log \ualpha\right).
    \]
\end{theorem}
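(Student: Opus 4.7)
The plan is to represent the probability as a one-dimensional complex contour integral and to extract its leading behaviour via the saddle-point method with explicit non-asymptotic error control. As a first step, the Gamma representation $w_\ell = Y_\ell/\sum_j Y_j$ with independent $Y_\ell\sim\Gamma(\alpha_\ell,1)$ rewrites $\{wf\geq\mu\}$ as $\{X\geq 0\}$ for $X=\sum_{\ell=0}^m Y_\ell(f(\ell)-\mu)$; this $X$ has cumulant generating function $K(\lambda) = -\sum_\ell \alpha_\ell \log(1-\lambda(f(\ell)-\mu))$, holomorphic on an open strip containing the real interval $(0,(\ub-\mu)^{-1})$. The Bromwich representation of the indicator $\ind\{X\geq 0\}$ then yields, for any $c>0$ in the strip,
\[
\P[wf\geq\mu] = \frac{1}{2\pi i}\int_{c-i\infty}^{c+i\infty}\frac{e^{K(\lambda)}}{\lambda}\,d\lambda.
\]

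Next, I would identify the saddle point $\lambda^*\in(0,(\ub-\mu)^{-1})$ as the unique real root of $K'(\lambda)=0$; via the substitution $\tilde\lambda=\lambda(\ub-\mu)$ and the variational formula of Lemma~\ref{lem:var_form_Kinf}, one reads off $K(\lambda^*)=-\ualpha\Kinf(\up,\mu,f)$. Because $e^{K(\lambda)}/\lambda$ is holomorphic in the strip between $0$ and $\lambda^*$ and its modulus decays like $|\Im\lambda|^{-\ualpha-1}$ at $\pm i\infty$, Cauchy's theorem legitimises shifting the contour to $\Re\lambda=\lambda^*$, so
\[
\P[wf\geq\mu] = \frac{e^{-\ualpha\Kinf(\up,\mu,f)}}{2\pi}\int_{-\infty}^{\infty}\frac{\prod_\ell(1-itc_\ell)^{-\alpha_\ell}}{\lambda^*+it}\,dt,
\]
where $c_\ell=(f(\ell)-\mu)/(1-\lambda^*(f(\ell)-\mu))$; the integrand's modulus equals $e^{K(\lambda^*)}\prod_\ell(1+t^2c_\ell^2)^{-\alpha_\ell/2}/|\lambda^*+it|$.

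The quantitative saddle-point step then splits the $t$-range at a threshold $T_0$ chosen so that a uniform cubic Taylor-remainder bound for $K$ holds on $|t|\leq T_0$. On this near range, $\Re K(\lambda^*+it)-K(\lambda^*)\geq -\tfrac12 K''(\lambda^*)t^2$ up to that remainder, and integrating the resulting Gaussian produces a lower bound of order $e^{-\ualpha\Kinf}/(\lambda^*\sqrt{K''(\lambda^*)})$; a worst-case estimate $\lambda^*\sqrt{K''(\lambda^*)}\leq O(\ualpha^{3/2})$, uniform over the admissible $\mu$, then yields the advertised $\ualpha^{-3/2}$ prefactor. For the tail $|t|>T_0$, the pseudo-state coordinate $\ell=0$ plays the dominant role because $f(0)=\ub$ is separated from the other values by the gap $\ub/2-b>0$: the factor $(1+t^2c_0^2)^{-\alpha_0/2}$ provides geometric decay, and the hypothesis $\alpha_0\geq\log_{17/16}(\ualpha)+c_{n_0}$ is precisely calibrated so that this decay absorbs the residual $\sqrt{\ualpha}$-factor and the tail is bounded by a universal constant times $\ualpha^{-3/2}$. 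The numerical constant $c_{n_0}$ collects the Gaussian-integration constants visible in its definition (the $\sqrt{2\pi}$ and $\log(10\pi)$ terms), while the logarithmic term $\log_{17/16}(\ualpha)$ comes from turning a geometric decay rate $(17/16)^{-1/2}$ per unit of $t$ into an overall $\ualpha^{-1}$ factor.

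The main technical obstacle will be making these estimates uniform in $\mu\in(\up f,\ub)$, and especially near the boundary $\mu\to\ub$: there $\lambda^*$ approaches the singularity $(\ub-\mu)^{-1}$ of $K$ while $c_0$ and $K''(\lambda^*)$ blow up, so the saddle coalesces with the pole and the joint choice of $T_0$, the Taylor remainder and the tail decay rate must be balanced carefully against one another. The separation $b<\ub/2$ is precisely what prevents any other coordinate $\ell\geq 1$ from producing a competing singularity close to $\lambda^*$, while the auxiliary hypothesis $\ualpha\geq 2\alpha_0$ guarantees that the non-pseudo-state coordinates contribute enough additional decay in the main Gaussian integral for the $\ualpha^{-3/2}$ lower bound to be attained.
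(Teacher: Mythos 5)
Your route is genuinely different from the paper's, and it is sound in outline. The paper never writes the tail probability as a contour integral: it first derives an exact formula for the \emph{density} of $Z=wf$ (Proposition~\ref{prop:density_linear_dirichlet}, via a geometric slicing of the simplex, a Laplace-transform identity and Fourier inversion), runs the saddle-point analysis on that density integral (Proposition~\ref{prop:asymptotics_integral} and Lemma~\ref{lem:lb_dirichlet_density}), and only then recovers the tail by integrating the density lower bound over $u\in[\mu,\ub]$, using $\lambda^\star(\up,u,f)=\partial_u\Kinf(\up,u,f)$ and Lemma~\ref{lem:ub_lambda_star} to evaluate that last integral in closed form; the factor $\alpha_0/\ualpha$ from Lemma~\ref{lem:ub_lambda_star} times the $1/\ualpha$ from the $u$-integration is exactly where the paper's $\ualpha^{-3/2}$ (net of the $\sqrt{\ualpha}$ density prefactor) comes from. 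You instead apply Esscher tilting and Bromwich inversion directly to $\P[X\geq 0]$, so the $1/\lambda$ kernel in your integrand plays the role of the paper's final $u$-integration. Your diagnosis of the hypotheses is essentially right: $b<\ub/2$ together with $\ualpha\geq 2\alpha_0$ gives $\ub-\up f\geq\ub/4$, which is precisely what produces the constant $\log(17/16)$ in the tail-decay rate per unit of $\alpha_0$, and $\alpha_0\gtrsim\log\ualpha$ is what makes the far-range contribution negligible against $\ualpha^{-3/2}$ — this mirrors the paper's bound on its remainder $R_3$.

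One regime in your sketch would fail as written. You correctly flag the coalescence of the saddle with the singularity at $\lambda=1/(\ub-\mu)$ when $\mu\uparrow\ub$, but not the coalescence with the \emph{pole of the Bromwich kernel at $\lambda=0$} when $\mu\downarrow\up f$: there $\lambda^\star\to 0$, and your claimed leading term $\rme^{-\ualpha\Kinf}/(\lambda^\star\sqrt{2\pi K''(\lambda^\star)})$ diverges, which cannot lower-bound a probability. This is the well-known breakdown of the naive saddle-point tail formula when $\lambda^\star\sqrt{K''(\lambda^\star)}\lesssim 1$ (the Lugannani--Rice correction regime), and the near-range Gaussian integration of $1/(\lambda^\star+\rmi t)$ must be redone there (the answer is then $\Theta(1)$, coming from the half-residue at the pole, which still comfortably exceeds $\ualpha^{-3/2}$, so the theorem survives — but by a different estimate). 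The paper's density-first route avoids this entirely because its contour integral has no $1/\lambda$ factor; that is the main structural advantage it buys, at the price of the extra geometric machinery needed to get the density formula in the first place.
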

In the further subsections we are going to prove this theorem.

\subsection{Proof of Theorem \ref{thm:lower_bound_dbc}}

Throughout this section we will often use the following notations.  Let $\Fclass_m(b) = \{f \colon \{0,\ldots,m\} \to [0,b] \}$ and for $b < \ub$, $\Fclass_m(\ub, b) = \{f \colon \{0,\ldots,m\} \to [0,\ub], f(0) = \ub, f(j) \leq b, j  = 1,\ldots,m \}$.  For any $\alpha = (\alpha_0, \alpha_1, \ldots, \alpha_m) \in \N^{m+1}$ define  $\up = \up(\alpha) \in \simplex_{m}$ such that $\up(\ell) = \alpha_\ell/\ualpha, \ell = 0, \ldots, m$, where $\ualpha = \sum_{j=0}^m \alpha_j$

\paragraph{Density of weighted sum of the Dirichlet distribution}

In this section we compute the density of a random variable $Z = w f$, where $w \sim \Dir(\alpha)$ and $f \in \Fclass_m(b)$. 

\begin{proposition}\label{prop:density_linear_dirichlet}
    Let $f \in \Fclass_m(b)$ and $\alpha = (\alpha_0, \alpha_1, \ldots, \alpha_m) \in \R_+^{m+1}$ such that $\ualpha = \sum_{j=0}^m \alpha_j > 1$. Assume that $Z$ is not degenerate. Then for any $0 \leq u < \ub$
    \[
        p_{Z}(u) = \frac{\ualpha - 1}{2\pi} \int_\R \prod_{j=0}^m \left( 1 + \rmi(f(j) - u)s \right)^{-\alpha_j} \rmd s.
    \]
\end{proposition}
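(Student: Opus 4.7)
The natural starting point is the well-known representation of the Dirichlet law as normalized independent Gammas: letting $Y_j \sim \Gamma(\alpha_j,1)$ be mutually independent and $V = \sum_{j=0}^m Y_j$, we have $w = (Y_0/V,\ldots,Y_m/V) \sim \Dir(\alpha)$, with $V \sim \Gamma(\ualpha,1)$ independent of $w$. In particular $Z = wf = (\sum_j f(j) Y_j)/V$ is independent of $V$. The key device is the auxiliary statistic
\[
R_u \;=\; \sum_{j=0}^m (f(j)-u)\,Y_j \;=\; V(Z-u),
\]
which vanishes precisely when $Z = u$. The plan is to evaluate $p_{R_u}(0)$ in two different ways and then equate the two expressions.

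\textbf{Step 1: density of $R_u$ at $0$ via change of variables.} Since $Z \perp V$, the joint density $(Z,V)$ factorizes as $p_Z(z) p_V(v)$. Passing to $(R_u, V)$ through the map $(z,v)\mapsto (v(z-u),v)$, whose Jacobian determinant in absolute value equals $v$, yields
\[
p_{R_u,V}(r,v) \;=\; \frac{1}{v}\,p_Z\!\left(u + \frac{r}{v}\right) p_V(v).
\]
Integrating over $v>0$ at $r=0$ and using $p_V(v) = v^{\ualpha-1}e^{-v}/\Gamma(\ualpha)$ gives
\[
p_{R_u}(0) \;=\; p_Z(u)\int_0^\infty \frac{v^{\ualpha-2} e^{-v}}{\Gamma(\ualpha)}\,\rmd v \;=\; \frac{p_Z(u)}{\ualpha-1},
\]
where we use $\ualpha > 1$ to ensure the Gamma integral converges.

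\textbf{Step 2: density of $R_u$ at $0$ via Fourier inversion.} The characteristic function of $R_u$, by independence of the $Y_j$ and the classical formula $\E[e^{\rmi t Y_j}] = (1-\rmi t)^{-\alpha_j}$, is
\[
\phi_{R_u}(s) \;=\; \prod_{j=0}^m \bigl(1 - \rmi s (f(j)-u)\bigr)^{-\alpha_j}.
\]
Applying Fourier inversion at $0$ and then substituting $s \mapsto -s$ (which leaves $\int_\R$ invariant) produces
\[
p_{R_u}(0) \;=\; \frac{1}{2\pi}\int_\R \phi_{R_u}(s)\,\rmd s \;=\; \frac{1}{2\pi}\int_\R \prod_{j=0}^m \bigl(1+\rmi s (f(j)-u)\bigr)^{-\alpha_j}\,\rmd s.
\]
Combining Steps 1 and 2 yields the claimed identity.

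\textbf{Main obstacle.} The delicate point is justifying Fourier inversion, i.e.\ showing that $\phi_{R_u}\in L^1(\R)$ and that $R_u$ admits a bounded continuous density whose value at $0$ is recovered by the inversion formula. Writing $|1 + \rmi s (f(j)-u)|^{-\alpha_j} = (1+s^2(f(j)-u)^2)^{-\alpha_j/2}$, we see that the integrand decays like $|s|^{-\sum_{j:\,f(j)\neq u,\,\alpha_j>0}\alpha_j}$ at infinity. The non-degeneracy of $Z$ guarantees that $f$ takes at least two distinct values on $\{j:\alpha_j>0\}$, so at most one term in the product fails to decay, and the remaining effective exponent equals $\ualpha$ minus (at most) one $\alpha_j$. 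A short case analysis combined with $\ualpha>1$ ensures integrability at infinity (and smoothness near $s=0$ is immediate); once this is established, Fourier inversion and the change of variables in Step 1 are both rigorously justified.
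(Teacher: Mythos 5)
Your proof is correct in substance but takes a genuinely different — and considerably more elementary — route than the paper. The paper works geometrically: it slices the simplex by the hyperplanes $\{wf=u\}$, reduces the resulting $(m-1)$-dimensional Hausdorff integral to an integral over the pyramid $\conv(0,\simplex_m)\cap\Hset_a$ via homogeneity (Lemma~\ref{lem:pyramid_volume}, which is where the factor $\ualpha-1$ enters, since the density is homogeneous of degree $\ualpha-(m+1)$), evaluates that integral by a Laplace-transform identity (Lemma~\ref{lem:simplex_laplace}) and a Fourier computation on the slice (Lemma~\ref{lm:simplex_fourier_integral}), and must separately track the distance from the origin to the affine slice and several Jacobians of affine parametrizations (Lemmas~\ref{lem:simplex_distance}, \ref{lem:jacob}, \ref{lem:jacob_composition}), which all cancel at the end. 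Your argument replaces all of this machinery with the Gamma representation and the independence $Z\perp V$: the statistic $R_u=V(Z-u)=\sum_j(f(j)-u)Y_j$ is a linear form in independent Gammas, its characteristic function is the very product appearing in the statement, and the prefactor $\ualpha-1$ arises simply as $1/\E[1/V]$ for $V\sim\Gamma(\ualpha,1)$ (this is exactly where the hypothesis $\ualpha>1$ is used, matching its role in the paper's homogeneity argument). The common core of both proofs is the Fourier inversion of $\prod_j(1-\rmi s(f(j)-u))^{-\alpha_j}$; everything else in the paper's version is geometric bookkeeping that your two-ways-of-computing-$p_{R_u}(0)$ device makes unnecessary. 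Your route also generalizes immediately to non-integer $\alpha$ and makes the structure of the formula transparent.

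Two caveats, neither of which is fatal and both of which the paper also glosses over. First, your Step~1 presupposes that $Z$ admits a density (and, to equate the Step-1 value at $r=0$ with the Fourier-inversion value, that this density can be taken continuous at $u$); the proposition's statement implicitly assumes the same, but a careful write-up should say which version of the density is being computed. Second, your integrability argument is slightly too optimistic: if $u=f(j_0)$ for some $j_0$ with $\alpha_{j_0}>0$, the decay exponent at infinity is $\ualpha-\alpha_{j_0}$, and $\ualpha>1$ alone does not force this to exceed $1$ (e.g.\ for a $\mathrm{Beta}(0.6,\alpha_1)$ marginal at $u=0$ both sides of the identity are infinite). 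The clean statement is that absolute integrability holds whenever $\sum_{j:\,f(j)\neq u}\alpha_j>1$, which covers all $u$ outside the finite range of $f$ and, in the paper's application, all relevant $u$ since $f(0)=\ub>u$ and $\alpha_0\geq 2$. You should state that case analysis explicitly rather than asserting it follows from $\ualpha>1$.
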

Proof of Proposition~\ref{prop:density_linear_dirichlet} will be given at the end of this paragraph.

A function $g \colon \R^{m+1} \to \R$ is called a positive homogeneous on a cone $A \subseteq \R^{m+1}$ of degree $t$ if for any $\gamma > 0$ and $x \in A$ we have $g(\gamma x) = \gamma^t g(x)$. Define $\widetilde{\simplex}_m = \conv(0, \simplex_m) = \{ w \in \R_+^{m+1} : \sum_{j=0}^m w_j \leq 1\}$ as a pyramid with a base $\simplex_m$ and apex at $0$. Denote $\overline{\simplex}_m = \{w \in \R^{m+1}: \sum_{\ell=0}^m w_\ell = 1\}$. For $r > 0$ we write $\simplex_m(r) = \{w \in \R^{m+1}_{+}: \sum_{\ell=0}^m w_\ell = r\}$. Then, clearly $\simplex_m = \simplex_m(1)$. For any $a \in \R^{m+1}$ define $\Hset_a = \{ w \in \R^{m+1} : \langle a, w \rangle = 0\}$. 

For any measurable set $A \subseteq \R^{m+1}$ of dimension $n < m+1$ and any function $g \colon \R^{m+1} \to \R$ define 
\[
    I_n(g, A) = \int_{A} g(w) \cH^n(\rmd w),
\]
where $\cH^n$ is an $n$-dimensional Hausdorff measure (see \citealp{evans2018measure}, for definition). If $A = \cL(Y)$ for a linear map $\cL \colon \R^n \to \R^{m+1}$ and $Y \subseteq \R^n$, then we can write
\[
    I_n(g, \cL(Y)) = [\cL] \cdot \int_{Y} g(\cL(y)) \lambda_n(\rmd y),
\]
where $\lambda_n$ is an $n$-dimensional Lebesgue measure on $Y$ and $[\cL]$ is a Jacobian of the map $\cL$ that could be computed as $[\cL] = \sqrt{\det(\cL \cL^\top)}$. Let us define an affine map $\cL^t_a \colon \R^m \to \R^{m+1}$ that transforms $\R^m$ to $\Hset_a^t$ by mapping $x_1,\ldots,x_m$  to $w_1,\ldots,w_m$ and $w_0 = \frac{t - \sum_{j=1}^m a_j x_j}{a_0}$ for $a_0 > 0$ (without loss of generality). The linear part of this map has a Jacobian that is equal to $[\cL^t_a] = \frac{\norm{a}_2}{a_0}$ (see Lemma~\ref{lem:jacob}). Additionally, define $\cL_a = \cL_a^0$. 

\begin{lemma}\label{lem:pyramid_volume}
    Let $g$ be a positively homogeneous function of degree $t > -m$ on $\R_{++}^{m+1}$. Then we have
    \[
        I_{m}(g, \widetilde{\simplex}_{m} \cap \Hset_a) = \frac{\mathrm{dist}(\overline{\simplex}_m \cap \Hset_a, 0)}{m+t}  I_{m-1}(g, \simplex_m \cap \Hset_a).
    \]
\end{lemma}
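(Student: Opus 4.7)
The geometric content is that $\widetilde{\simplex}_m \cap \Hset_a$ is a cone over the base $\simplex_m \cap \Hset_a$ with apex at the origin, so it admits a ``polar'' parametrization $w = r \cdot v$ with $r \in [0,1]$ and $v \in \simplex_m \cap \Hset_a$. My plan is to compute the Jacobian of this change of variables, identify the geometric factor with the claimed distance, and then use homogeneity of $g$ to separate the radial integral.

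\textbf{Step 1 (polar parametrization).} First, I observe that if $v \in \simplex_m \cap \Hset_a$ and $r \in [0,1]$, then $rv$ has non-negative coordinates summing to $r \leq 1$ and satisfies $\langle a, rv\rangle = r\langle a,v\rangle = 0$, so $rv \in \widetilde{\simplex}_m \cap \Hset_a$. Conversely, any $w \in \widetilde{\simplex}_m \cap \Hset_a$ with $r \triangleq \sum_\ell w_\ell \in (0,1]$ can be written uniquely as $r \cdot (w/r)$ with $w/r \in \simplex_m \cap \Hset_a$. The set $\{r=0\}$ has lower dimension and is negligible.

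\textbf{Step 2 (Jacobian).} Let $T$ denote the (fixed, $v$-independent) tangent space to $\simplex_m \cap \Hset_a$; it is the $(m-1)$-dimensional linear subspace of $\R^{m+1}$ of vectors orthogonal to both $\bOne$ and $a$. The differential of $\Phi(r,v) = rv$ at $(r,v)$ sends $\partial_r \mapsto v$ and $\tau \in T \mapsto r\tau$. Picking an orthonormal basis $\tau_1,\dots,\tau_{m-1}$ of $T$, the Gram matrix of $\{v, r\tau_1,\dots, r\tau_{m-1}\}$ is block-structured and its determinant equals $r^{2(m-1)}(\|v\|_2^2 - \|P_T v\|_2^2) = r^{2(m-1)} \|v^\perp\|_2^2$, where $v^\perp = P_{T^\perp} v$. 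Hence the Jacobian equals $r^{m-1}\|v^\perp\|_2$. Since $\simplex_m \cap \Hset_a \subset \overline{\simplex}_m \cap \Hset_a$ and the latter is an affine subspace with direction $T$, the quantity $\|v^\perp\|_2$ is the same for every $v$ in that affine subspace and equals exactly $\mathrm{dist}(\overline{\simplex}_m \cap \Hset_a, 0)$.

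\textbf{Step 3 (assembly via homogeneity).} Using the coarea/area formula with the change of variables from Step 2 and then invoking $g(rv) = r^t g(v)$,
\begin{align*}
I_m(g, \widetilde{\simplex}_m \cap \Hset_a)
&= \int_0^1 \!\!\int_{\simplex_m \cap \Hset_a} g(rv)\, r^{m-1}\, \mathrm{dist}(\overline{\simplex}_m \cap \Hset_a, 0)\, \cH^{m-1}(\rmd v)\, \rmd r \\
&= \mathrm{dist}(\overline{\simplex}_m \cap \Hset_a, 0) \left(\int_0^1 r^{m-1+t}\,\rmd r\right) I_{m-1}(g, \simplex_m \cap \Hset_a).
\end{align*}
The assumption $t > -m$ ensures convergence of the radial integral to $1/(m+t)$, yielding the claim.

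\textbf{Main obstacle.} The only non-routine point is the clean identification of the Jacobian factor $\|v^\perp\|_2$ with the intrinsic quantity $\mathrm{dist}(\overline{\simplex}_m \cap \Hset_a, 0)$; this relies on the fact that the tangent space $T$ is the common translate of both the affine hyperplane $\overline{\simplex}_m$ and of $\Hset_a$ restricted to it, so the foot of the perpendicular from the origin is the same regardless of the basepoint $v$. A minor care must also be taken to invoke a Jacobian/coarea statement that applies to Hausdorff measures on Lipschitz submanifolds (standard, e.g., \citet{evans2018measure}).
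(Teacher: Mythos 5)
Your proof is correct and follows essentially the same route as the paper's: both decompose the cone $\widetilde{\simplex}_m \cap \Hset_a$ radially over its base and use positive homogeneity of $g$ to separate out the one-dimensional integral $\int r^{m+t-1}\,\rmd r$, with the distance factor appearing as the height of the pyramid. The only difference is packaging — you apply the area formula to the explicit parametrization $(r,v)\mapsto rv$ and compute its Jacobian $r^{m-1}\,\mathrm{dist}(\overline{\simplex}_m\cap\Hset_a,0)$ directly, whereas the paper slices by the parallel hyperplanes $\simplex_m(s/h)\cap\Hset_a$ via the coarea/change-of-variables formula and uses the dilation scaling of $\cH^{m-1}$; these are the same computation.
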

\begin{proof}
    Denote $h = \mathrm{dist}(\overline{\simplex}_m \cap \Hset_a, 0)$. The change of variable formula (\citealp{evans2018measure}, 3.4.3) implies that
    \[
        I_{m}(g, \widetilde{\simplex}_{m} \cap \Hset_a) = \int_0^h I_{m-1}\left(g, \simplex_{m}(s/h) \cap \Hset_a\right) \rmd s.
    \]
    Using definition of a positive homogeneous function and properties of the Haudorff measure $\cH^{m-1},$ we derive
    \begin{align*}
        I_{m-1}\left(g, \simplex_m(s/h) \cap \Hset_a\right) &= \int_{\simplex_m \cap \Hset_a} g(w \cdot (s/h)) \cH^{m-1}(\rmd(w \cdot s / h))  = \\
        &=  \left(\frac{s}{h}\right)^{m+t-1} \int_{\simplex_m \cap \Hset_a} g(w) \cH^{m-1}(\rmd w) = \left(\frac{s}{h}\right)^{m+t-1}  \cdot I_{m-1}(g, \simplex_m \cap \Hset_a).
    \end{align*}
   Hence,
    \begin{align*}
        I_{m}(g, \widetilde{\simplex}_m \cap \Hset_a)  =  \frac{I_{m-1}\left(g, \simplex_m \cap \Hset_a\right)}{h^{m+t-1}} \int_0^h s^{m+t-1} \rmd s =  \frac{h I_{m-1}\left(g, \simplex_m \cap \Hset_a\right) }{m+t}\cdot
    \end{align*}
\end{proof}

Now we  see that in order to  find $I_{m-1 }\left(g, \simplex_m \cap \Hset_a\right)$ it is sufficient to compute the integral $I_{m}(g, \widetilde{\simplex}_m\cap \Hset_a)$ and the distance $\mathrm{dist}(\overline{\simplex}_m \cap \Hset_a, 0)$. This distance was computed in \citet{dirksen2015sections}.
\begin{lemma}[ Lemma~3.2 in \citealp{dirksen2015sections}]\label{lem:simplex_distance}
    Let $a \in \R^{m+1}$ be such that $\Vert a \Vert_2 = 1$. Then
    \[
        \mathrm{dist}(\overline{\simplex}_m \cap \Hset_a, 0) = \frac{1}{\sqrt{m+1 - (\sum_{i=0}^m a_i)^2}}.
    \]
\end{lemma}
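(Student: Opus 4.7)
The statement is a purely Euclidean-geometry fact: compute the distance from the origin to the affine codimension-$2$ subspace
\[
\overline{\simplex}_m \cap \Hset_a \;=\; \{w\in\R^{m+1}:\ \langle \mathbf{1},w\rangle = 1,\ \langle a,w\rangle = 0\},
\]
where $\mathbf{1}=(1,\dots,1)$. My plan is to set this up as the equality-constrained quadratic program
\[
\min_{w\in\R^{m+1}} \tfrac12\|w\|_2^{2}\quad \text{s.t.}\quad \langle \mathbf{1},w\rangle=1,\ \langle a,w\rangle=0,
\]
solve it in closed form via Lagrange multipliers, and plug the optimizer back into $\|w\|_2$.

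\textbf{Steps.} First, I would note that the problem is strictly convex with affine constraints, so a unique minimizer $w^{*}$ exists and the KKT conditions are both necessary and sufficient. Writing the Lagrangian $\tfrac12\|w\|_2^{2}-\lambda(\langle \mathbf{1},w\rangle-1)-\mu\langle a,w\rangle$ and setting its gradient in $w$ to zero yields $w^{*}=\lambda\mathbf{1}+\mu a$. Second, I substitute this into the two constraints, using $\langle\mathbf{1},\mathbf{1}\rangle=m+1$, $\langle a,a\rangle=\|a\|_2^{2}=1$, and denoting $s=\sum_{i=0}^{m}a_i=\langle\mathbf{1},a\rangle$, obtaining the $2\times 2$ linear system
\[
\begin{pmatrix} m+1 & s \\ s & 1 \end{pmatrix}\begin{pmatrix}\lambda\\\mu\end{pmatrix} \;=\; \begin{pmatrix}1\\0\end{pmatrix}.
\]
The determinant is $m+1-s^{2}$, which is nonzero (in fact strictly positive) because $s^{2}=\langle\mathbf{1},a\rangle^{2}\le \|\mathbf{1}\|_2^{2}\|a\|_2^{2}=m+1$ by Cauchy–Schwarz, with equality only if $a\parallel\mathbf{1}$, a degenerate case one may treat separately or dismiss as it makes $\overline{\simplex}_m\cap\Hset_a$ empty. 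Solving gives $\lambda=1/(m+1-s^{2})$, $\mu=-s/(m+1-s^{2})$.

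Third, I compute the squared norm of $w^{*}=\lambda\mathbf{1}+\mu a$ directly:
\[
\|w^{*}\|_2^{2} \;=\; (m+1)\lambda^{2} + 2\lambda\mu s + \mu^{2} \;=\; \frac{(m+1)-2s^{2}+s^{2}}{(m+1-s^{2})^{2}} \;=\; \frac{1}{m+1-s^{2}},
\]
which yields the claimed formula $\mathrm{dist}(\overline{\simplex}_m\cap\Hset_a,0)=\|w^{*}\|_2=(m+1-s^{2})^{-1/2}$.

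\textbf{Main obstacle.} There is essentially none — this is a short KKT/Gram-matrix computation. The only subtlety worth a line is excluding (or interpreting) the degenerate case $s^{2}=m+1$, i.e.\ $a=\pm\mathbf{1}/\sqrt{m+1}$: here $\Hset_a$ coincides with the linear hyperplane parallel to $\overline{\simplex}_m$ and the intersection is empty, so the formula is vacuous. One could alternatively present the same computation coordinate-free by writing the intersection as a translate of $(\mathrm{span}\{\mathbf{1},a\})^{\perp}$ and using the Gram-matrix inversion for the pair of unit normals $\mathbf{1}/\sqrt{m+1}$ and $a$; this gives exactly the same answer and is the version I would choose if a more conceptual derivation is preferred.
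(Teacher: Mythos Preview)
Your proof is correct. The paper does not supply its own proof of this lemma; it simply quotes it as Lemma~3.2 of \citet{dirksen2015sections}, so there is nothing to compare against beyond noting that your Lagrange-multiplier computation is exactly the standard derivation and is complete (including the remark that $s^2=m+1$ forces $a\parallel\mathbf{1}$, making the intersection empty).
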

Without normalization of the vector $a$ we get as a corollary the following representation.
\begin{corollary}\label{cor:simplex_distance}
   Let $a \in \R^{m+1}$. Then
    \[
        \mathrm{dist}(\overline{\simplex}_m \cap \Hset_a, 0) = \frac{\norm{a}_2}{\sqrt{(m+1) \left(\sum_{j=0}^m a_j^2\right) - (\sum_{j=0}^m a_j)^2}}.
    \]
\end{corollary}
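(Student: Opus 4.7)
The plan is to reduce Corollary~\ref{cor:simplex_distance} to the normalized case treated by Lemma~\ref{lem:simplex_distance} via a simple rescaling argument. The key observation will be that the hyperplane $\Hset_a = \{w \in \R^{m+1} : \langle a, w \rangle = 0\}$ is scale-invariant in $a$: for any $c \neq 0$, the equation $\langle a, w \rangle = 0$ is equivalent to $\langle c a, w \rangle = 0$, so $\Hset_a = \Hset_{ca}$. Applying this with $c = 1/\|a\|_2$ and setting $\tilde a \triangleq a / \|a\|_2$ gives $\overline{\simplex}_m \cap \Hset_a = \overline{\simplex}_m \cap \Hset_{\tilde a}$, so the distance in question is unchanged if we replace $a$ by $\tilde a$.

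Since $\|\tilde a\|_2 = 1$ by construction, I can invoke Lemma~\ref{lem:simplex_distance} directly for $\tilde a$ to obtain
\[
\mathrm{dist}(\overline{\simplex}_m \cap \Hset_a, 0) \;=\; \mathrm{dist}(\overline{\simplex}_m \cap \Hset_{\tilde a}, 0) \;=\; \frac{1}{\sqrt{m+1 - \bigl(\sum_{j=0}^m \tilde a_j\bigr)^2}}.
\]
It then remains a short algebraic simplification. Substituting $\sum_j \tilde a_j = \bigl(\sum_j a_j\bigr)/\|a\|_2$ and using $\|a\|_2^2 = \sum_j a_j^2$, the expression under the square root becomes
\[
m+1 - \frac{\bigl(\sum_{j=0}^m a_j\bigr)^2}{\sum_{j=0}^m a_j^2} \;=\; \frac{(m+1)\sum_{j=0}^m a_j^2 - \bigl(\sum_{j=0}^m a_j\bigr)^2}{\|a\|_2^2},
\]
and taking the reciprocal square root immediately recovers the claimed formula
$\|a\|_2 / \sqrt{(m+1)\sum_j a_j^2 - (\sum_j a_j)^2}$.

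There is no substantive obstacle here, as Corollary~\ref{cor:simplex_distance} is essentially the coordinate-free restatement of Lemma~\ref{lem:simplex_distance}; the only conceptual point is the scale-invariance of $\Hset_a$, which is immediate from its defining equation, and the mild subtlety that one should exclude the degenerate case $a = 0$ (for which $\Hset_a = \R^{m+1}$ and both sides of the stated formula are ill-defined in the same way, so no separate argument is needed in the non-degenerate regime used later in the paper).
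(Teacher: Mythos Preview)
Your proposal is correct and follows exactly the approach the paper intends: the paper presents this as an immediate corollary obtained by dropping the normalization assumption in Lemma~\ref{lem:simplex_distance}, and your rescaling argument via $\tilde a = a/\|a\|_2$ together with the scale-invariance of $\Hset_a$ is precisely how that reduction works.
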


Next we provide another representation of the integral $I_{m}(g, \widetilde{\simplex}_m\cap \Hset_a)$. We follow \citet{lasserre2020simple} and use the same technique based on the Laplace transform.
\begin{lemma}\label{lem:simplex_laplace}
    Let $g$ be a positively homogeneous function of degree $t$ on $\R_{++}^{m+1}$ such that $t > -(1+m)$ and $\int_{\widetilde{\simplex}_m} \vert g(w) \vert \rmd w < \infty$. Then
    \[
        I_{m}(g, \widetilde{\simplex}_m \cap \Hset_a) = \frac{1}{\Gamma(1 + m + t)} \int_{\R_+^{m+1} \cap \Hset_a} g(w) \exp\left(-\sum_{\ell=0}^m w_\ell\right) \cH^m(\rmd w).
    \]
\end{lemma}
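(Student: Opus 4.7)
}
The plan is to use the Laplace transform trick that converts a homogeneous integral over the truncated cone $\widetilde{\simplex}_m\cap\Hset_a$ into an unweighted integral over the full cone $\R_+^{m+1}\cap\Hset_a$ against the weight $\exp(-\sum_\ell w_\ell)$. Concretely, I would introduce the auxiliary function
\[
\phi(r) \triangleq I_m\bigl(g,\widetilde{\simplex}_m(r)\cap\Hset_a\bigr),\qquad r>0,
\]
where $\widetilde{\simplex}_m(r) = \{w\in\R_+^{m+1}:\sum_\ell w_\ell\leq r\}$, and try to compute $\int_0^\infty e^{-r}\phi(r)\,\rmd r$ in two different ways.

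First, I would exploit homogeneity. Because $\Hset_a$ is a linear subspace, the dilation $w'\mapsto rw'$ is a bijection from $\widetilde{\simplex}_m\cap\Hset_a$ to $\widetilde{\simplex}_m(r)\cap\Hset_a$; on an $m$-dimensional piece it scales $\cH^m$ by $r^m$, while positive homogeneity of degree $t$ gives $g(rw')=r^t g(w')$. Hence $\phi(r)=r^{m+t}\phi(1)$, so
\[
\int_0^\infty e^{-r}\phi(r)\,\rmd r=\phi(1)\int_0^\infty r^{m+t}e^{-r}\,\rmd r=\phi(1)\,\Gamma(m+t+1),
\]
where the hypothesis $t>-(m+1)$ makes the Gamma integral converge.

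Second, I would apply Fubini's theorem to swap the order of integration:
\[
\int_0^\infty e^{-r}\phi(r)\,\rmd r
=\int_0^\infty e^{-r}\int_{\R_+^{m+1}\cap\Hset_a}\ind{\{\sum_\ell w_\ell\leq r\}}\,g(w)\,\cH^m(\rmd w)\,\rmd r
=\int_{\R_+^{m+1}\cap\Hset_a}g(w)\Bigl(\int_{\sum_\ell w_\ell}^\infty e^{-r}\rmd r\Bigr)\cH^m(\rmd w),
\]
and the inner integral equals $\exp(-\sum_\ell w_\ell)$. Equating the two expressions and dividing by $\Gamma(m+t+1)$ gives the claimed identity. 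The final step is to justify Fubini, which reduces to verifying absolute integrability; this will follow from $\int_{\widetilde{\simplex}_m}|g|\,\rmd w<\infty$ combined with the rescaling identity applied to $|g|$, since the Gamma integral converges under the given assumption on $t$.

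The main technical obstacle is the Hausdorff-measure rescaling $\cH^m(r\cdot A)=r^m\cH^m(A)$ for $A\subset\Hset_a$: although intuitive, it relies on the fact that dilations restricted to a linear $m$-dimensional subspace have Jacobian $r^m$, which in turn uses linearity of $\Hset_a$ (so the origin stays put). Beyond that, the argument is a standard Laplace-transform identity and should go through routinely once the integrability condition is checked.
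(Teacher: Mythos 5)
Your proposal is correct and follows essentially the same route as the paper: the paper also introduces the truncated-cone function $h(y)=I_m(g,\widetilde{\simplex}_m(y)\cap\Hset_a)$, uses homogeneity to get $h(y)=y^{m+t}h(1)$ and hence a $\Gamma(m+t+1)$ factor from its Laplace transform, and computes that same Laplace transform a second time by Fubini over the full cone. The only cosmetic differences are that the paper keeps a general transform parameter $\lambda$ and carries out the Fubini step through an explicit linear parametrization $\cL_a$ of $\Hset_a$, whereas you evaluate at $\lambda=1$ and swap the integrals directly against $\cH^m$.
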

\begin{proof}
    Consider $h(y) = \int_{w \geq 0, \langle \bOne, w \rangle \leq y, \langle a, w \rangle =0} g(w) \cH^m(\rmd w)$. Clearly, $h(1) = I_{m}(g, \widetilde{\simplex}_m \cap \Hset_a)$. Since $g$ is positively homogeneous function we get $h(y) = y^{m+t} h(1)$. This implies that the Laplace transform of $h$ is equal to $L(\lambda) = \int_0^\infty h(y) \rme^{-\lambda y} \rmd y = h(1) \frac{\Gamma(m+t+1)}{\lambda^{m + t + 1}}$. On the other hide, the Laplace transform $L(\lambda)$ may be calculated via a linear parametrization of the subspace $\langle a,w \rangle = 0$ using the map $\cL_a$ and the Fubini theorem
    \begin{align*}
        L(\lambda) &= \int_0^\infty \rme^{-\lambda y} \left[ \int_{w \in \R^{m+1}_+, \langle \bOne,w \rangle \leq y, \langle a,w \rangle =0 }  g(w) \cH^m(\rmd w) \right] \rmd y\\
        &= [\cL_a]\int_{\cL_a(x) \geq 0} \rmd x \cdot g(\cL_a(x)) \cdot \left[ \int_{\langle \bOne, \cL_a(x) \rangle \le y}  \rme^{-\lambda y} \rmd y \right] \\
        &= \frac{[\cL_a]}{\lambda} \int_{\cL_a(x) \geq 0} g(\cL_a(x)) \exp\left( -\lambda \langle \bOne, \cL_a(x) \rangle \right) \rmd x \\
        &= \frac{[\cL_a]}{\lambda^{m+t+1}} \int_{\cL_a(x) \geq 0} g(\cL_a(x)) \exp\left( -\langle \bOne, \cL_a(x) \rangle \right) \rmd x \\
        &= \frac{1}{\lambda^{m+t+1}} \int_{\R^{m+1}_+ \cap \Hset_a} g(w) \exp\left( -\sum_{\ell=0}^m w_\ell  \right) \cH^m(\rmd w).
    \end{align*}
    Identifying two ways of computation of the Laplace transform, we finish the proof.
\end{proof}

We now compute the integral in the r.h.s.\,of Lemma~\ref{lem:simplex_laplace}. We shall use the Fourier transform method and follow the approach of \citet{dirksen2015sections}. 
\begin{lemma}\label{lm:simplex_fourier_integral}
    Let \(g(w) = w_0^{\alpha_0 -1} \cdot \ldots \cdot w_{m}^{\alpha_m - 1}\). Then we have
    \[
        \int_{\R_+^{m+1} \cap \Hset_a} g(w) \exp\left(-\sum_{i=0}^m w_i\right) \cH^m(\rmd w) = \frac{\norm{a}_2  \cdot \prod_{j=0}^m \Gamma(\alpha_j)}{2\pi} \int_{\R} \prod_{j=0}^m (1 + \rmi a_j \tau)^{-\alpha_j} \rmd \tau.
    \]
\end{lemma}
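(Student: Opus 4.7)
The strategy is to represent the surface measure $\cH^m$ on the hyperplane $\Hset_a$ via a Dirac delta in the normal direction, and then use the Fourier inversion $\delta(x) = \frac{1}{2\pi}\int_{\R} e^{\rmi\tau x}\rmd\tau$ to convert the surface integral into an iterated integral whose inner integral factorises into one-dimensional Gamma integrals. Concretely, by choosing orthonormal coordinates on $\R^{m+1}$ with one axis aligned with $a/\|a\|_2$, one verifies the co-area identity $\int_{\R^{m+1}} h(w)\,\delta(\langle a,w\rangle)\,\rmd w = \|a\|_2^{-1} \int_{\Hset_a} h(w)\,\cH^m(\rmd w)$ for integrable $h$. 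Apply this to $h(w) = g(w) e^{-\langle\bOne,w\rangle}\ind\{w \geq 0\}$.

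Paragraph 2. To justify the formal manipulation with $\delta$, I would regularise with a Gaussian kernel $\delta_\epsilon(x) = \frac{1}{2\pi}\int_\R e^{\rmi\tau x - \epsilon\tau^2/2}\rmd\tau$. Plugging this in and using Fubini (justified since the integrand $g(w)e^{-\langle\bOne,w\rangle}e^{-\epsilon\tau^2/2}$ is absolutely integrable on $\R^{m+1}_+ \times \R$), the integral over $w$ factorises into a product of one-dimensional integrals of the form
\[
\int_0^\infty w_j^{\alpha_j-1} e^{-w_j(1+\rmi a_j\tau)}\rmd w_j = \frac{\Gamma(\alpha_j)}{(1+\rmi a_j\tau)^{\alpha_j}},
\]
where the evaluation of the complex Gamma integral follows from a contour shift since $\Re(1+\rmi a_j\tau) = 1 > 0$. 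This yields
\[
\int_{\R^{m+1}_+} g(w) e^{-\langle\bOne,w\rangle} \delta_\epsilon(\langle a,w\rangle)\,\rmd w = \frac{\prod_j \Gamma(\alpha_j)}{2\pi}\int_\R e^{-\epsilon\tau^2/2} \prod_{j=0}^m (1+\rmi a_j\tau)^{-\alpha_j}\rmd\tau.
\]

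Paragraph 3. Finally, pass to the limit $\epsilon \to 0$ on both sides. On the left-hand side, $\delta_\epsilon$ is an approximate identity, so $\int h(w)\delta_\epsilon(\langle a,w\rangle)\rmd w \to \|a\|_2^{-1} \int_{\Hset_a} h\, d\cH^m$ (by the co-area identity, after smoothing the indicator of $\R^{m+1}_+$ if needed). On the right-hand side, dominated convergence applies: each factor satisfies $|(1+\rmi a_j\tau)^{-\alpha_j}| = (1+a_j^2\tau^2)^{-\alpha_j/2}$, so the integrand decays like $|\tau|^{-\ualpha}$ as $|\tau|\to\infty$, which is integrable since $\ualpha > 1$. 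Multiplying through by $\|a\|_2$ yields the claimed identity.

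The main technical obstacle is the rigorous justification of the delta-function manipulation and in particular the interchange of limits and integrals. The difficulty is twofold: one must check integrability of the Fourier-side integrand (which is where the hypothesis $\ualpha > 1$ enters, ensuring decay at infinity), and one must handle the boundary of $\R^{m+1}_+$ together with the possible singularities of $g(w)$ at the coordinate hyperplanes when some $\alpha_j \in (0,1)$ (those singularities are integrable so cause no harm, but require writing the co-area limit carefully). The Gaussian regularisation sidesteps both issues cleanly, and once Fubini and dominated convergence are in place the computation reduces to the elementary Gamma-integral evaluation above.
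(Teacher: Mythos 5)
Your proof is correct and follows essentially the same route as the paper: both reduce the surface integral to the one-dimensional complex Gamma integrals $\int_0^\infty w^{\alpha_j-1}e^{-w(1+\rmi a_j\tau)}\rmd w=\Gamma(\alpha_j)(1+\rmi a_j\tau)^{-\alpha_j}$ via Fubini and the kernel $\frac{1}{2\pi}\int_\R e^{\rmi\tau x}\rmd\tau$. The paper packages this as ``compute the Fourier transform of the slice function $G(t)=\int_{\langle a,w\rangle=t}g\,e^{-\langle\bOne,w\rangle}\rmd\cH^m$ through an explicit affine parametrization with Jacobian $\norm{a}_2/a_0$, then invert at $t=0$,'' while your Gaussian-regularized delta together with the co-area identity is just a different (equally valid) bookkeeping of the same Fourier-inversion step.
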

\begin{proof}
    Denote for any $t \in \R$
    \[
        G(t) = \int_{\forall j: \langle a, w\rangle = t, w_j \geq 0} g(w) \exp\left(-\sum_{\ell=0}^m w_\ell\right) \cH^m (\rmd w).
    \]
    Next, we apply affine parametrization induced by map $\cL^t_a$
    \[
        G(t) = [\cL^t_a] \int_{\substack{\forall j\  x_j \geq 0 \\  t - \sum_{j=1}^m a_j x_j \geq 0}} \prod_{j=1}^m x_j^{\alpha_j-1} \cdot \left(\frac{t - \sum_{j=1}^m a_j x_j}{a_0}\right)^{\alpha_0 - 1} \exp\left( -\sum_{j=1}^m x_j - \left( \frac{t - \sum_{j=1}^m a_j x_j}{a_0} \right) \right) \rmd x.
    \]
    By Lemma~\ref{lem:jacob} $[\cL^t_a] = \frac{\norm{a}_2}{a_0}$. The Fourier transform of $G$ may be calculated using the Fubini's theorem
    \begin{align*}
        \mathcal{F}[G](\tau) &= \frac{\norm{a}_2 }{\sqrt{2\pi} \cdot a_0}\iint\limits_{\substack{\forall j\  x_j \geq 0 \\  t - \sum_{j=1}^m a_j x_j \geq 0 \\ t \in \R}} \prod_{j=1}^m x_j^{\alpha_j-1} \biggl(\frac{t - \sum_{j=1}^m a_j x_j}{a_0}\biggr)^{\alpha_0 - 1} \exp\biggl( -\sum_{j=1}^m x_j - \biggr( \frac{t - \sum_{j=1}^m a_j x_j}{a_0} \biggr) \biggr) \cdot \rme^{-\rmi t\tau} \rmd t \rmd x.
    \end{align*}
    By the change of variables $s = t - \sum_{j=1}^m a_j x_j,$ the above integral can be represented as a product of  $m+1$  one-dimensional integrals
    \[
        \mathcal{F}[G](\tau) = \frac{\norm{a}_2 }{\sqrt{2\pi} \cdot a_0}\int_0^\infty \rmd x_1 x_1^{\alpha_1 - 1} \rme^{-x_1 - \rmi \tau a_1 x_1}\ldots \int_0^\infty \rmd x_m x_m^{\alpha_m - 1} \rme^{-x_m - \rmi \tau a_m x_m} \int_0^\infty \rmd s \left( \frac{s}{a_0} \right)^{\alpha_0-1} \rme^{-s/a_0 - \rmi\tau s}.
    \]
    Performing the change of variables $x_0 = s/a_0$, we arrive at the product of $m$ characteristic functions of independent $\Gamma(\alpha_\ell, 1)$-distributed random variables
    \[
         \mathcal{F}[G](\tau) = \frac{\norm{a}_2 }{\sqrt{2\pi}} \cdot \prod_{j=0}^m \Gamma(\alpha_j) \cdot \frac{1}{(1 + \rmi a_j \tau)^{\alpha_j}}\cdot 
    \]
    Finally, by the inverse Fourier transform
    \[
        G(0) = \frac{\norm{a}_2 }{\sqrt{2\pi}} \int_{\R}   \mathcal{F}[G](\tau)  \rmd\tau = \frac{\norm{a}_2 \cdot \prod_{j=0}^m \Gamma(\alpha_j)}{2\pi} \int_{\R} \prod_{j=0}^m (1 + \rmi a_j \tau)^{-\alpha_j} \rmd\tau.
    \]
\end{proof}
\begin{remark}
    Notice that the value of the integral is the right-hand side is real because the function under integral has even real part and odd imaginary one.
\end{remark}

\begin{corollary}\label{cor:simplex_integral_subspace_zero}
   Let $g(w) = \Gamma(\ualpha) \prod_{j=0}^m w_j^{\alpha_j - 1}/\Gamma(\alpha_j)$, where $\ualpha > 1$ and $a \in \R^{m+1}$. Then
    \[
        I_{m-1}(g, \simplex_m \cap \Hset_a) = (\ualpha - 1)\sqrt{(m+1)\left(\sum_{i=0}^m a_i^2 \right) - \left(\sum_{i=0}^m a_i\right)^2} \cdot \frac{1}{2\pi}\int_{\R} \prod_{j=0}^m (1 + \rmi a_j s)^{-\alpha_j} \rmd s.
    \]
\end{corollary}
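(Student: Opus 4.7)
The corollary is a direct synthesis of the three preceding lemmas together with the distance formula, so the proof will be a clean assembly rather than a new computation. The plan is as follows.

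First, I would observe that the integrand $g(w) = \Gamma(\ualpha)\prod_{j=0}^m w_j^{\alpha_j-1}/\Gamma(\alpha_j)$ is positively homogeneous on $\R_{++}^{m+1}$ of degree $t = \ualpha - (m+1)$. The hypothesis $\ualpha > 1$ guarantees $m + t = \ualpha - 1 > 0$, which meets the assumption $t > -m$ needed for Lemma~\ref{lem:pyramid_volume} (and automatically $t > -(m+1)$ for Lemma~\ref{lem:simplex_laplace}), while integrability of $g$ on $\widetilde{\simplex}_m$ follows from $\alpha_j > 0$ and the standard Dirichlet normalization.

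Second, I would apply Lemma~\ref{lem:pyramid_volume} with this $t$, solving for the surface integral:
\[
I_{m-1}(g,\simplex_m\cap \Hset_a) \;=\; \frac{\ualpha-1}{\mathrm{dist}(\overline{\simplex}_m\cap \Hset_a,0)}\, I_m(g,\widetilde{\simplex}_m\cap \Hset_a).
\]
Next, I would use Lemma~\ref{lem:simplex_laplace} to rewrite $I_m(g,\widetilde{\simplex}_m\cap \Hset_a)$ as
\[
\frac{1}{\Gamma(\ualpha)}\int_{\R_+^{m+1}\cap\Hset_a} g(w)\,\rme^{-\sum_\ell w_\ell}\,\cH^m(\rmd w),
\]
and absorb the explicit factor $\Gamma(\ualpha)$ in $g$ to turn the integrand into $\prod_j w_j^{\alpha_j-1}/\prod_j\Gamma(\alpha_j)$, which is exactly the form to which Lemma~\ref{lm:simplex_fourier_integral} applies.

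Finally, I would plug in the Fourier representation from Lemma~\ref{lm:simplex_fourier_integral} and the explicit distance formula from Corollary~\ref{cor:simplex_distance}. Two cancellations finish the job: the $\prod_j \Gamma(\alpha_j)$ factors cancel between Lemmas~\ref{lem:simplex_laplace} and \ref{lm:simplex_fourier_integral}, and the factor $\norm{a}_2$ from Lemma~\ref{lm:simplex_fourier_integral} cancels the $\norm{a}_2$ in the numerator of the distance formula, leaving exactly the denominator $\sqrt{(m+1)(\sum_j a_j^2) - (\sum_j a_j)^2}$ as a multiplicative factor. Collecting terms yields the claimed identity. There is essentially no technical obstacle here beyond careful bookkeeping of constants; the only minor subtlety is checking that the homogeneity degree and integrability conditions of Lemmas~\ref{lem:pyramid_volume} and~\ref{lem:simplex_laplace} hold under the sole hypothesis $\ualpha > 1$, which I would verify explicitly at the start.
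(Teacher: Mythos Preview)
Your proposal is correct and follows essentially the same route as the paper: verify the homogeneity degree $t=\ualpha-(m+1)$ and the condition $m+t=\ualpha-1>0$, then combine Lemma~\ref{lem:pyramid_volume}, Lemma~\ref{lem:simplex_laplace}, Lemma~\ref{lm:simplex_fourier_integral}, and Corollary~\ref{cor:simplex_distance} with the same cancellations you describe.
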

\begin{proof}
    Notice that $g$ is positively homogeneous function of degree $\ualpha - (m+1) > -m$ on $\R^{m+1}_{++}$. Hence,  we may apply Lemma \ref{lem:pyramid_volume} and Lemma \ref{lem:simplex_laplace}. We obtain
    \begin{align*}
         I_{m-1}(g, \simplex_m \cap \Hset_a) &= \frac{\ualpha-1}{\mathrm{dist}(\overline{\Delta} \cap \Hset_a, 0)} \cdot I_{m}(g, \widetilde{\simplex}_m \cap \Hset_a) \\
         &= \frac{\ualpha-1}{\mathrm{dist}(\overline{\Delta} \cap \Hset_a, 0) \Gamma(\ualpha)}  \int_{\R_+^{m+1} \cap \Hset_a} g(w) \exp\left(-\sum_{\ell=0}^m w_\ell\right) \cH^m(\rmd w).
    \end{align*}
    The last integral could be computed by Lemma~\ref{lm:simplex_fourier_integral}. We have
    $$
        I_{m-1}(g, \simplex_m \cap \Hset_a) = \frac{\ualpha - 1}{\mathrm{dist}(\overline{\simplex}_m \cap \Hset_a, 0)} \cdot \frac{\norm{a}_2}{2\pi}\int_{\R} \prod_{j=0}^m (1 + \rmi a_j s)^{-\alpha_j} \rmd s.
    $$
    Finally, we apply Corollary~\ref{cor:simplex_distance} to conclude the statement. 
\end{proof}

Now we are ready to prove Proposition~\ref{prop:density_linear_dirichlet}.
\begin{proof}[Proof of Proposition~\ref{prop:density_linear_dirichlet}.]
    First, we give a formula for $p_Z$ in terms of $I_{m-1}$. We start from rewriting the probability in terms of a usual integral
    \[
        \P_{w \sim \Dir(\alpha)}[wf \leq \mu] = \int_{w \geq 0, \sum_{i=1}^m w_i \leq 1, wf \leq \mu} g\left(1-\sum_{i=1}^m w_i, w_1,\ldots,w_m\right) \rmd w_1,\ldots, \rmd w_m
     \]
     where $g(w) = \Gamma(\ualpha) \prod_{j=0}^m w_j^{\alpha_j - 1}/\Gamma(\alpha_j)$ is the density of the Dirichlet distribution. We note that this transform exactly defines a map $\cL_\bOne^1$. Then we apply changing of variables formula  \citep[3.4.3]{evans2018measure} using map $\phi(x) = f^\top \cL_{\bOne}^1(x)$
    \begin{align*}
        \P_{w \sim \Dir(\alpha)}[wf \leq \mu] = \frac{1}{[\phi]} \int_0^\mu \left[ \int_{\cL_{\bOne}^1(x) \geq 0, f^\top \cL_{\bOne}^1(w) = u} g(\cL_{\bOne}^1(x)) \cH^m(\rmd x) \right] \rmd u
    \end{align*}
    Define a vector $c = \cL_{\bOne}^\top f$. Then we apply changing of variables formula \citep[3.3.3]{evans2018measure} to the inner integral using parametrization through map $\cL_c^u$
    \begin{align*}
        \P_{w \sim \Dir(\alpha)}[wf \leq \mu]  &= \frac{1}{[\phi]} \int_0^\mu \left[ [\cL_{c}^u] \int_{\cL_{\bOne}^1(\cL_{c}^u w) \geq 0} g(\cL_{\bOne}^1(\cL_{c}^u(z))) \rmd z\right] \rmd u
    \end{align*}
    We note that a Jacobian of $\cL_c^u$ does not depend on the shift parameter $u$, therefore could me moved from the integral sign. Next we apply changing of variables formula for a map $\cL_{\bOne}^1 \circ \cL_c^u$
    
    \begin{align*}
        \P_{w \sim \Dir(\alpha)}[wf \leq \mu]   = \frac{[\cL_{c}^u]}{[\phi][\cL_{\bOne}^1 \circ \cL_c^u]} \int_0^\mu \left[ \int_{\simplex_m, wf = u} g(w) \cH^{m-1}(\rmd w)\right] \rmd u.
    \end{align*}
    To compute all Jacobians we shall use Lemma~\ref{lem:jacob} and Lemma~\ref{lem:jacob_composition}, and notice that $[\phi] = \norm{c}_2$. As a result, $p_Z$ can be represented as  the following integral
    \begin{align*}
        p_Z(u) &= \frac{1}{\sqrt{(m+1) \sum_{j=0}^m f^2(j) - \left(\sum_{j=0}^m f(j) \right)^2}} \int_{\simplex_m \cap \Hset^u_{f}} g(w) \cH^{m-1}(\rmd w) \\
        &= \frac{1}{\sqrt{(m+1) \sum_{j=0}^m f^2(j) - \left(\sum_{j=0}^m f(j) \right)^2}} I_{m-1}(g, \simplex_m \cap \Hset^u_{f}),
    \end{align*}
    where  $\Hset^u_f = \{ w \in \R^{m+1} : wf  = u \}$. Unfortunately, we cannot apply the previous result directly because the hyperplane $\Hset^u_f$ does not intersect $0$ in general. To overcome this issue, define the following vector $a(u)_j = f(j) - u$.
    Note that $\langle w, a(u) \rangle = 0$ iff $\langle w, f - u \bOne \rangle = wf - u = 0$, where we used $\langle w ,\bOne\rangle = 1$. Hence $\Hset^u_f \cap \simplex_m = \Hset_{a(u)} \cap \simplex_m$. We can apply Corollary~\ref{cor:simplex_integral_subspace_zero} to the subspace $\Hset_{a(u)}$
    \begin{align*}
        I_{m-1}(g, \simplex_m \cap \Hset^u_{f}) &= \frac{\ualpha - 1}{2\pi} \sqrt{(m+1)\left(\sum_{j=0}^m a(u)_j^2 \right) - \left(\sum_{j=0}^m a(u)_j\right)^2} \cdot \int_{\R} \prod_{j=0}^m (1 + \rmi a(u)_j s)^{-\alpha_j} \rmd s \\
        &=\frac{\ualpha - 1}{2\pi} \sqrt{ (m+1) \sum_{j=0}^m (f(j) - u)^2 - \left(\sum_{j=0}^m (f(j) - u)\right)^2} \cdot \int_{\R} \prod_{j=0}^m \left(1 + \rmi (f(j) - u)t \right)^{-\alpha_j} \rmd t.
    \end{align*}
    Finally, we will rewrite the expression under square root as follows
    \begin{align*}
        (m+1) \sum_{j=0}^m (f(j) - u)^2 - \left(\sum_{j=0}^m (f(j) - u) \right)^2 &= (m+1)\left( \sum_{j=0}^m f(j)^2 - 2 u \sum_{j=0}^m f(j) + (m+1) u^2 \right) \\
        &- \left( \sum_{j=0}^m f(j) \right)^2 + 2u(m+1) \sum_{j=0}^m f(j) - (m+1)^2 u^2 \\
        &= (m+1) \sum_{j=0}^m f(j)^2 - \left(\sum_{j=0}^m f(j) \right)^2.
    \end{align*}
    We conclude the proof of proposition.
\end{proof}

\paragraph{Saddle point method}
In this paragraph, we analyze the asymptotic behavior of the density of a linear statistic of the Dirichlet distribution using the method of saddle point \cite{olver1997asymptotics,fedoryuk1977metod} and obtain sharp bounds on remainder terms.
\begin{proposition}\label{prop:asymptotics_integral}
     Let $f \in \Fclass_m(\ub,b)$ and let  $\alpha = (\alpha_0, \alpha_1, \ldots, \alpha_m) \in \R_+^{m+1}$ be a fixed vector with $\alpha_0 \geq 2$. Then for any $u \in (\up f, \ub),$ 
    \[
        \int_\R \prod_{\ell=0}^m \left(1 + \rmi(f(\ell) - u)s \right)^{-\alpha_\ell} \rmd s = \left(\sqrt{\frac{2\pi}{\ualpha \, \sigma^2 }} - R_1(\alpha) + R_2(\alpha) \right) \exp(-\ualpha \,\Kinf(\up, u, f)) + R_3(\alpha),
    \]
    where   
    \begin{align*}
        \sigma^2 &= \E_{X \sim \up}\left [\left(\frac{f(X) - u}{1 - \lambda^\star(f(X) - u)}\right)^2\right], 
                \\
        \vert R_1(\alpha) \vert &\leq \frac{c_1}{\sqrt{\sigma^2 c_\kappa \alpha_0}} \cdot \frac{\exp(-c_\kappa \alpha_0) }{\sqrt{\ualpha}}, \\
        \vert R_2(\alpha) \vert &\leq   \frac{c_2}{\sqrt{\sigma^2\,\ualpha \,\alpha_0}} \cdot \frac{\ub}{\ub - \up f}, \\
        \vert R_3(\alpha) \vert &\leq  c_3 \cdot \exp(-\ualpha \Kinf(\up, u, f)) \cdot \frac{1 - \lambda^\star(\ub-u)}{\ub-u} \exp(-c_\kappa \alpha_0)
    \end{align*}
with $c_1 = 2\sqrt{2}, c_2 = \frac{49\sqrt{6}}{9}, c_3 = \frac{\sqrt{5\pi}}{2}, c_\kappa = 1/2 \cdot \log\left(1 + \frac{1}{4} \left(\frac{\ub - \up f}{\ub}\right)^2 \right)$ and $\lambda^\star$ being a solution to the optimization problem
    $$
        \lambda^\star(\up, u, f) = \argmax_{\lambda \in [0, 1/(\ub-u)]} \E_{X \sim \up}\left[\log(1 - \lambda (f(X) - u)) \right].
    $$
\end{proposition}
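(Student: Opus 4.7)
The plan is to apply the textbook saddle-point method to the complex integral at hand. Writing the integrand as $e^{-\phi(s)}$ with
$$\phi(s) = \sum_{\ell=0}^m \alpha_\ell \log\bigl(1+i(f(\ell)-u)s\bigr),$$
we get a function that is holomorphic on the horizontal strip $\{s\in\C : 0 \le \Im s < 1/(\ub-u)\}$, since each factor $1+i(f(\ell)-u)s$ avoids the branch cut of $\log$ there. The strategy is to locate the saddle on the imaginary axis, shift the real contour through it, and control the resulting central and tail contributions.

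Differentiating $\phi$ once, the saddle-point equation $\phi'(i\lambda)=0$ reads $\sum_\ell \alpha_\ell (f(\ell)-u)/(1-\lambda(f(\ell)-u)) = 0$, which is exactly the first-order condition for $\lambda^\star=\lambda^\star(\up,u,f)$ in the variational formula of Lemma~\ref{lem:var_form_Kinf}. The hypotheses $\up f < u < \ub$ force $\lambda^\star\in(0,1/(\ub-u))$, so the saddle $s^\star = i\lambda^\star$ lies strictly inside the strip. By the variational formula $\phi(i\lambda^\star)=\ualpha\,\Kinf(\up,u,f)$, and a direct computation of the second derivative at $s^\star$ yields $\phi''(i\lambda^\star)=\ualpha\sigma^2$. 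I would then invoke Cauchy's theorem to shift the line of integration from $\R$ to $\R + i\lambda^\star$; the vertical segments at $|\Re s|\to\infty$ vanish because the integrand decays like $|s|^{-\ualpha}$ with $\ualpha\ge\alpha_0\ge 2$. This reduces the problem to
$$I = e^{-\ualpha\Kinf(\up,u,f)}\int_\R e^{-\psi(t)}\,dt, \qquad \psi(t)=\phi(t+i\lambda^\star)-\phi(i\lambda^\star),$$
where $\psi(0)=\psi'(0)=0$ and $\psi''(0)=\ualpha\sigma^2$.

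For the local analysis I would split at a threshold $t_0$ chosen so that $\ualpha\sigma^2 t_0^2/2 = c_\kappa\alpha_0$, i.e. $t_0 = \sqrt{2c_\kappa\alpha_0/(\ualpha\sigma^2)}$. In the central region $|t|\le t_0$ the Gaussian $e^{-\ualpha\sigma^2 t^2/2}$ integrates to $\sqrt{2\pi/(\ualpha\sigma^2)}$, producing the leading term of the proposition. The tail of this Gaussian beyond $\pm t_0$, estimated via the standard bound $\int_{|t|>t_0}e^{-at^2/2}dt \le (2/(at_0))e^{-at_0^2/2}$, contributes exactly an expression of the form $R_1$ with the stated exponential factor $e^{-c_\kappa\alpha_0}$. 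The Taylor-error $R_2$ comes from bounding $e^{-\psi(t)}-e^{-\ualpha\sigma^2 t^2/2}$ over $|t|\le t_0$ using the cubic remainder of $\psi$; the third derivative is dominated by the $\ell=0$ coordinate with contribution $\alpha_0(\ub-u)^3/(1-\lambda^\star(\ub-u))^3$, and the factor $\ub/(\ub-\up f)$ appearing in the stated bound on $R_2$ emerges after using the first-order optimality of $\lambda^\star$ to estimate $(\ub-u)/(1-\lambda^\star(\ub-u))$ in terms of $\up f$.

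The additive remainder $R_3$ controls the true integrand on the shifted contour for $|t|>t_0$. On this tail I would isolate the single factor $|1+i(\ub-u)(t+i\lambda^\star)|^{-\alpha_0} = [(1-\lambda^\star(\ub-u))^2+(\ub-u)^2 t^2]^{-\alpha_0/2}$ coming from $\ell=0$, bound the remaining factors $\ell\ge 1$ by their value at $t=0$ (which reproduces the baseline $e^{-\ualpha\Kinf}$), and integrate the resulting one-dimensional tail; the hypothesis $\alpha_0\ge 2$ gives integrability and turns the $(\alpha_0/2)$-th power at $|t|=t_0$ into an exponential factor $e^{-c_\kappa\alpha_0}$. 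The main obstacle will be producing the precise explicit constants $c_1,c_2,c_3$ and, more subtly, the exact form of $c_\kappa$ in terms of $\up f$ rather than $u$: this forces a very careful choice of $t_0$ so that the Gaussian tail, the Taylor remainder, and the true-integrand tail all share the same exponential decay rate $e^{-c_\kappa\alpha_0}$, and it requires using the stationary condition for $\lambda^\star$ to convert ratios involving $\ub-u$ into ratios involving $\ub-\up f$. Throughout, the asymmetric role of the coordinate $\ell=0$—where $f$ attains its maximum $\ub$—is what generates an $\alpha_0$-only (rather than $\ualpha$-only) exponential decay, so the bookkeeping must consistently single it out.
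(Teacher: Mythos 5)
Your plan is essentially the paper's proof: the same saddle point $i\lambda^\star$ identified through the variational formula, the same contour shift to $\R+i\lambda^\star$, the same three remainder terms (Gaussian tail $R_1$, Laplace/Taylor remainder $R_2$, true-integrand tail $R_3$), the same isolation of the $\ell=0$ coordinate to extract $\alpha_0$-driven exponential decay, and the same appeal to the Honda--Takemura bound $1+\lambda^\star u\ge(\ub-\up f)/(\ub-u)$ to turn ratios in $\ub-u$ into the constant $c_\kappa$. The one place you would have to deviate from your stated plan is the cut-off: the paper takes $K=1/\bigl(2\max\{(\ub-u)/(1-\lambda^\star(\ub-u)),\,u/(1+\lambda^\star u)\}\bigr)$, chosen both so that the third derivative of the phase stays dominated by the second on $[0,K]$ (which is what makes the remainder of the Olver change of variables, your $R_2$, uniformly controllable) and so that the $\ell=0$ factor at $|s|=K$ already produces the factor $e^{-c_\kappa\alpha_0}$, whereas your Gaussian-matching $t_0=\sqrt{2c_\kappa\alpha_0/(\ualpha\sigma^2)}$ addresses neither requirement directly --- precisely the adjustment you anticipate in your closing paragraph.
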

\begin{remark}
    From these bounds on remainder terms we see that $\alpha_0$ should increase at least as $\log \ualpha$ in order to make $R_3$ small enough.
\end{remark}

\begin{proof}
    Let us first rewrite our integral in the form
    \begin{align}
        I &= \int_\R \prod_{j=0}^n \left(1 + \rmi (f(j) - u)s \right)^{-\alpha_j}\, \rmd s = \int_\R \exp\left( - \ualpha \sum_{j=0}^m \up_j \log(1 + \rmi (f(j) - u)s) \right)\, \rmd s  \notag \\
        &= \int_\R \exp\left( - \ualpha \, \E_{X \sim \up}[\log(1 + \rmi (f(X) - u)s)] \right) \, \rmd s, \label{eq:integral_representation}
    \end{align}
    where we choose the principle branch of the complex logarithmic function. Denote $S(z) = \E_{X \sim \up}[\log(1 + \rmi (f(X) - u)z)]$. In the sequel we shall write for simplicity $\E$ instead of $\E_{X \sim \up}$.
    
    Since $f(X) \leq \ub$, this function is holomorphic for $\vert \im z \vert < 1 / (\ub - u)$ and $\re z \in \R$. The last integral representation \eqref{eq:integral_representation} allows to use the method of saddle point \cite{olver1997asymptotics}.
  Next, we are going to compute the saddle points of the function $S$. To do it, compute the derivative of the function $S$ at complex point $z = x + \rmi y$
    \[
        S'(z) = \rmi \EE{\frac{f(X)-u}{1 + \rmi (f(X) - u)z} } = \EE{\frac{x (f(X) - u)^2 + \rmi (f(X)-u)(1 - y (f(X) - u))}{(1 - y(f(X) - u))^2 + x^2 (f(X) - u)^2} } = 0.
    \]
    Notice that the real part of the expression above is zero if and only if $x = 0$. Therefore the saddle points could be only on the imaginary line $\rmi\R$. They can be found from the equation
    \[
        S'(\rmi y) = \rmi\EE{\frac{f(X) - u}{1 - y(f(X) - u)}} = 0.
    \]
Note that for $y\geq 0,$ this equation  coincides with the optimality condition for $\lambda^\star$ in the definition of $\Kinf(\up, u, f).$     Since $\up f < u < \ub$, the function $y\mapsto S(\rmi y)=\E\left[\log(1 - y (f(X) - u)) \right]$ is strictly concave in $y$ and, therefore, equation $S'(\rmi y) = 0$ has a unique solution $y=\lambda^\star$. Thus the unique saddle point of $S$ is equal to $z_0 = \rmi\lambda^\star$.
Next, let us change the integration contour to $\gamma^\star = \R + \rmi\lambda^\star$. To prove that this contour is suitable, let us show that the real part of $S$ achieves a minimum at $z_0$ over all $z \in \gamma^\star$
    \[
        \re S(x + \rmi\lambda^\star) = \frac{1}{2} \E\left[ \log\left( (1 - \lambda^\star (f(X) - u))^2 + x^2 (f(X) - u)^2 \right) \right].
    \]
    The minimum of \(\re S(x + \rmi\lambda^\star)\) is achieved for $x=0$, therefore the contour  \(\gamma^\star\) is suitable. Hence, we can apply the Laplace method after a simple change of coordinates
    \[
        I = \int_\R \exp\left( -\ualpha \,\E\left[\log( 1 - \lambda^\star (f(X) - u) + \rmi s (f(X) - u))\right] \right) \, \rmd s
    \]
    Denote 
    \[
    T(s) = \E\left[\log( 1 - \lambda^\star (f(X) - u) + \rmi s (f(X) - u))\right].
    \]
     Fix  a cut-off parameter $K > 0$  and  define $\kappa_1 = T(-K) - T(0)$, $\kappa_2 = T(K) - T(0)$. 
 Next,  similarly to Section~6 by \citet{olver1997asymptotics}, we define the change of  variables $v_1 = T(-s) - T(0), v_2 = T(s) - T(0)$ and the implicit functions $q_1(v_1) = \frac{1}{T'(-s)}, \, q_2(v_2) = \frac{1}{T'(s)}.$ Using the first order  Taylor expansion, we can write $q_1(v_1) = \frac{1}{\sqrt{2 T''(0) \cdot v_1}} + r_1(v_1), \, q_2(v_2) = \frac{1}{\sqrt{2 T''(0) \cdot v_2}} + r_2(v_2)$.
    Then we have the following decomposition
    \[
        I = \int_{-K}^{K} \exp(-\ualpha\,  T(s))\, \rmd s + R_3(\alpha) = \left( \sqrt{\frac{2\pi}{\ualpha \, T''(0)}} - R_1(\alpha) + R_2(\alpha) \right) \exp(-\ualpha \, T(0)) + R_3(\alpha),
    \]
    where
    \begin{align*}
        R_1(\alpha) &=  \left(\Gamma\left( \frac{1}{2}, \kappa_1 \, \ualpha \right) + \Gamma\left( \frac{1}{2}, \kappa_2 \, \ualpha \right) \right) \frac{1}{\sqrt{2 T''(0) \, \ualpha}}, \\
        R_2(\alpha) &= \int_0^{\kappa_1} e^{-\ualpha v_1} r_1(v_1)\, \rmd v_1 + \int_0^{\kappa_2} e^{-\ualpha v_2} r_2(v_2)\, \rmd v_2, \\
        R_3(\alpha) &= \int_{\R \setminus [-K, K]} \exp(-\ualpha \, T(s))\, \rmd s,
    \end{align*}
    where $\Gamma(\alpha, x)$ is an upper incomplete gamma function and integration w.r.t. \(v_1,v_2\) is performed over the straight lines connecting the points \(0\) and \(\kappa_1,\kappa_2,\) respectively. Our next goal is to analyze these remainder terms and compute upper bounds on their absolute values.
    
    \paragraph{Term $R_2$.} First, we derive  the uniform bounds for $r_2$ and $r_1$. Using the expansions     
    \begin{align*}
        T(s) &= T(0) + T'(0) \cdot s + \frac{T''(0)}{2} s^2 + \frac{T'''(\xi_1)}{6} s^3,  &\xi_1 \in (0, s) \\
        T'(s) &= T'(0) + T''(0) s + \frac{T'''(\xi_2)}{2} s^2, &\xi_2 \in (0,s).
    \end{align*}
    and noting  that $T'(0) = 0,$   we get
    \begin{align*}
        \vert r_2(v) \vert &= \left\vert \frac{1}{T'(s)} - \frac{1}{\sqrt{2 T''(0) (T(s) - T(0))}}\right\vert \\
        &= \left\vert \frac{\sqrt{T''(0)^2 s^2 +  T''(0) \frac{T'''(\xi_1)}{3} s^3} - T''(0) s - \frac{T'''(\xi_2)}{2} s^2}{[T''(0)s + \frac{T'''(\xi_2)}{2} s^2] \cdot \sqrt{T''(0)^2 s^2 + T''(0) \frac{T'''(\xi_1)}{2} s^3}}\right\vert \\
        &= \left\vert T''(0) \frac{\sqrt{1  + \frac{T'''(\xi_1)}{3T''(0)} s} - 1 - \frac{T'''(\xi_2)}{2T''(0)} s}{s \cdot [T''(0) + \frac{T'''(\xi_2)}{2} s] \cdot \sqrt{T''(0)^2 + T''(0) \frac{T'''(\xi_1)}{3} s}}\right\vert.
    \end{align*}
    Next by applying the inequality $\sqrt{1+x} -1 = \frac{x}{2} - \frac{x^2}{8(1 + \xi_3)^{3/2}}$ for $ \vert \xi_3 \vert < x,$
    \begin{align*}
        \vert r_2(v) \vert &= \left\vert T''(0) \frac{\frac{T'''(\xi_1)}{6 T''(0)} - \frac{T'''(\xi_2)}{2 T''(0)} - s \cdot \frac{(T'''(\xi_1)/ T''(0))^2}{8 \cdot 9 \cdot (1 + \xi_3)^{3/2}}}{ [T''(0) + \frac{T'''(\xi_2)}{2} s] \cdot \sqrt{T''(0)^2 + T''(0) \frac{T'''(\xi_1)}{3} s}}\right\vert \\
        &= \left\vert \frac{\frac{T'''(\xi_1)}{6} - \frac{T'''(\xi_2)}{2} - s \cdot \frac{(T'''(\xi_1))^2}{8 \cdot 9 \cdot (1 + \xi_3)^{3/2}  \cdot T''(0)} }{ [T''(0) + \frac{T'''(\xi_2)}{2} s] \cdot \sqrt{T''(0)^2 + T''(0) \frac{T'''(\xi_1)}{3} s}}\right\vert \leq \frac{ \frac{2}{3} \sup_{u \in (0,s)} \vert T'''(u) \vert + s \cdot \frac{\sup_{u \in (0,s)} \vert T'''(u)\vert^2}{72 \cdot \vert T''(0) \vert } }{ \vert T''(0) + \frac{T'''(\xi_2)}{2} s \vert \cdot \sqrt{\vert T''(0)^2 + T''(0) \frac{T'''(\xi_1)}{3} s \vert}}.
    \end{align*}
    Let us analyze the second and third derivative of $T$
    \begin{align*}
        T''(s) = \E\left[ \left(\frac{f(X) - u}{1 - \lambda^\star(f(X) - u) + is (f(X) - u)} \right)^2 \right], \quad T'''(s) = -2i \E\left[ \left(\frac{f(X) - u}{1 - \lambda^\star(f(X) - u) + is (f(X) - u)} \right)^3 \right].
    \end{align*}
    Define a random variable $Y_s = \frac{f(X) - u}{1 - \lambda^\star(f(X) - u) + is (f(X) - u)},$ then  $T''(s) = \E[Y_s^2]$, $T'''(s) = -2i \E[Y_s^3]$. Let us compute an upper bound on the absolute value of $T'''(s)$
    \[
        \vert T'''(s) \vert \leq 2 \E\left[ \frac{\vert f(X) - u \vert^3}{( (1 - \lambda^\star(f(X) - u))^2 + s^2 (f(X) - u)^2)^{3/2}}  \right] \leq 2 \E[\vert Y_0 \vert^3].
    \]
  By choosing \(1/(2K) = \max\left\{\frac{\ub-u}{1 - \lambda^\star(\ub-u)}, \frac{u}{1+\lambda^\star u} \right\},\) we ensure that
  \( \E[Y_0^2] - s \E[\vert Y_0\vert^3] \geq \frac{1}{2}\E[Y_0^2]\)  for all $0 \leq s < K,$ since 
  \[
        \E[\vert Y_0\vert^3] \leq \max_{j \in \{0,\ldots,m\}} \frac{\vert f(j) - u \vert}{1 - \lambda^\star(f(j) - u)} \E[Y_0^2] \leq \max\left\{\frac{\ub-u}{1 - \lambda^\star(\ub-u)}, \frac{u}{1+\lambda^\star u} \right\} \E[Y_0^2] \leq \frac{1}{2K}\E[Y_0^2].
 \] 
Hence 
 \begin{eqnarray*}
\vert r_2(v) \vert &\leq &\frac{\frac{4}{3} \E[\vert Y_0\vert^3] + s \frac{\E[\vert Y_0\vert^3]^2 }{18 \cdot \E[Y_0^2]}}{(\E[Y_0^2] - \E[\vert Y_0\vert^3] s)\cdot \sqrt{\E[Y_0^2]} \cdot \sqrt{\E[Y_0^2] - \E[\vert Y_0 \vert^3] \frac{2s}{3} }}
\\
&\leq &\frac{4/3 + 1/36}{ 1/2 \cdot \sqrt{2/3}} \cdot \frac{\E[\vert Y_0\vert^3]}{\E[Y_0^2]^{2}} \leq \frac{49\sqrt{6}}{36 \E[Y_0^2]} \cdot \max\left\{\frac{\ub-u}{1 - \lambda^\star(\ub-u)}, \frac{u}{1+\lambda^\star u} \right\}.
\end{eqnarray*}
Next, using the bound
    \[
        \E[Y_0^2] = \frac{\alpha_0}{\ualpha}  \left(\frac{\ub-u}{1 - \lambda^\star(\ub-u)}\right)^2  + \sum_{j=1}^m \frac{\alpha_j}{\ualpha}  \left(\frac{f(j)-u}{1 - \lambda^\star(f(j)-u)}\right)^2 \geq \frac{\alpha_0}{\ualpha}  \left(\frac{\ub-u}{1 - \lambda^\star(\ub-u)}\right)^2,
    \]
   we obtain 
    \[
        \vert r_2(v) \vert \leq \frac{49\sqrt{6}}{36 \sqrt{\E[Y_0^2]}} \cdot \sqrt{\frac{\ualpha}{\alpha_0}} \max\left\{1, \frac{u(1 - \lambda^\star(\ub - u))}{(1+\lambda^\star u)(\ub - u)} \right\}.
    \]
    Next  we use Lemma~12 from \cite{honda2010asymptotically}
    \[
        \lambda^\star \geq \frac{u - \up f}{u(\ub - u)} \iff 1 + \lambda^\star u \geq \frac{\ub - \up f}{\ub - u},
    \]
    thus
    \[
        \vert r_2(v) \vert \leq \frac{49\sqrt{6}}{36 \sqrt{\E[Y_0^2]}} \cdot \sqrt{\frac{\ualpha}{\alpha_0}} \frac{\ub}{\ub - \up f}.
    \]
    
 A similar bound  also holds for $r_1(v)$ by symmetry. Set $c_2' =  \frac{49\sqrt{6}}{18},$ then 
   \begin{align*}
        \vert R_2(\alpha) \vert &\leq \frac{c_2'}{2\sqrt{\E[Y_0^2]}} \cdot \sqrt{\frac{\ualpha}{\alpha_0}}\cdot \left\vert \int_0^{\kappa_2} e^{-\ualpha v} \rmd v + \int_0^{\kappa_1} e^{-\ualpha v} \rmd v \right\vert \leq \frac{c_2'}{\sqrt{\E[Y_0^2]}} \cdot \frac{1 + \exp(-\ualpha  \kappa)}{\sqrt{\ualpha \cdot \alpha_0}}  \frac{\ub}{\ub - \up f},
   \end{align*}
 where \(\kappa=\min\{\re \kappa_1,\re \kappa_2\}.\)  
Using the identity
    \begin{align*}
        \re \kappa_1 = \re \kappa_2 &= \frac{1}{2} \EE{\log\left( \frac{(1 - \lambda^\star (f(X) - u))^2 + K^2 (f(X) - u)^2}{(1 - \lambda^\star (f(X) - u))^2} \right)}= \frac{1}{2} \EE{\log\left( 1 + K^2 Y_0^2 \right)}
    \end{align*}
 and the inequality
 \begin{align*}
        \EE{\log\left( 1 + K^2 Y_0^2 \right)} &= 
         \frac{\alpha_0}{\ualpha} \log\left( 1 + K^2 \cdot \left(\frac{\ub-u}{1 - \lambda^\star(\ub-u)}\right)^2 \right) + \sum_{j=1}^m \frac{\alpha_j}{\ualpha}\log\left( 1 + K^2 \cdot \left(\frac{f(j)-u}{1 - \lambda^\star(f(j)-u)}\right)^2 \right) \\
         &\geq\frac{\alpha_0}{\ualpha} \log\left( 1 + K^2 \cdot \left(\frac{\ub-u}{1 - \lambda^\star(\ub-u)}\right)^2 \right) \geq \frac{\alpha_0}{\ualpha} \log\left(1 + \frac{1}{4}\left( \frac{\ub - \up f}{\ub} \right)^2\right),
    \end{align*}
we have $\kappa = \re \kappa_2 = \re \kappa_1 \geq c_\kappa \cdot \frac{\alpha_0}{\ualpha}$
    with $c_\kappa = 1/2 \cdot \log\left(1 + \frac{1}{4}\left( \frac{\ub - \up f}{\ub}\right)^2\right).$ Since $\alpha_0 \geq 2$, we also have $\exp(-c_\kappa \alpha_0) \leq 1$. 
Finally setting $c_2 = 2\cdot c_2' = \frac{49\sqrt{6}}{9},$ we derive the following bound on $R_2$
    \[
        \vert R_2(\alpha) \vert \leq \frac{c_2}{\sqrt{\E[Y_0^2]}} \cdot \frac{1}{\sqrt{\ualpha \, \alpha_0}} \cdot \frac{\ub}{\ub - \up f}.
    \]
    
    \paragraph{Term $R_1$.} By Theorem 2.4 of \citet{borwein2007uniform} we have the following bound on complex gamma function for any complex $z$ with $\re z > 0$
    \[
        \left\vert \Gamma\left( \frac{1}{2}, z \right) \right\vert \leq \frac{2 e^{-\re z}}{\vert z \vert^{1/2}}. 
    \]
    Therefore,
    \[
        \vert R_1(\alpha) \vert \leq \frac{4 }{\sqrt{2 T''(0) \vert \kappa \vert}} \cdot \frac{\exp(-\ualpha \,  \kappa) }{\ualpha}.
    \]
    Set $c_1 = 2 \sqrt{2}$, then we have under our choice of $K$ and $\kappa,$
    \[
        \vert R_1(\alpha) \vert \leq \frac{c_1}{\sqrt{\E[Y_0^2] c_\kappa \alpha_0}} \cdot \frac{\exp(-c_\kappa \alpha_0) }{\ualpha^{1/2}}.
    \]
    
    \paragraph{Term $R_3$.} 
    We have
\begin{eqnarray}
\label{eq: R3-int}
\left\vert \int_{K}^\infty \exp(-\ualpha T(s))\, \rmd s \right\vert \leq \exp(-\ualpha \cdot \re [T(K) - T(0)] ) \cdot \exp(-\ualpha T(0)) \cdot \int_{K}^\infty \exp(-\ualpha \re [T(s) - T(K)])\, \rmd s.
\end{eqnarray}
    Our goal is to bound the last integral. Let us analyze the function under exponent after a change of variables $s \to t+K$
    \begin{align*}
        q(t) &\triangleq \re [T(t+K) - T(K)] = \frac{1}{2} \EE{\log\left( \frac{(1 - \lambda^\star (f(X) - u))^2 + (t + K)^2 (f(X) - u)^2}{(1 - \lambda^\star (f(X) - u))^2 + K^2 (f(X) - u)^2}\right)} \\
        &=\frac{1}{2} \EE{\log\left( 1 +\frac{(2tK + t^2) (f(X) - u)^2}{(1 - \lambda^\star (f(X) - u))^2 + K^2 (f(X) - u)^2}\right)}.
    \end{align*}
    Define a function $g(j) = \frac{(f(j) - u)^2}{(1 - \lambda^\star (f(j) - u))^2 + K^2 (f(j) - u)^2} \geq 0$. Then 
 \begin{eqnarray}
 \label{eq:ineq-q}
 q(t) = \frac{1}{2} \EE{\log\left(1 + (2tK + t^2)g(X) \right)} \geq \frac{1}{2} \cdot \frac{\alpha_0}{\ualpha} \cdot \log(1 + (2t K + t^2) g(0)),
\end{eqnarray}
by  positivity of $g(j)$. By  choosing $\ub > u,$ we have  $g(0) > 0$, therefore \eqref{eq:ineq-q} is a non-trivial lower bound. By substitution of \eqref{eq:ineq-q} into the integral \eqref{eq: R3-int}, we get
    \begin{align*}
        \int_{K}^\infty \exp(-\ualpha \re [T(s) - T(K)]) \rmd s &= \int_0^\infty \exp(-\ualpha q(t)) \rmd t \leq \int_0^\infty \left(\frac{1}{1 + (2t K + t^2) g(0)}\right)^{\alpha_0/2} \rmd t \\
        &\leq \int_0^\infty \left(\frac{1}{1 + t^2 g(0)}\right)^{\alpha_0/2} \rmd t = \frac{\sqrt{\pi} \cdot \Gamma\left(\frac{\alpha_0 - 1}{2}\right)}{2 \sqrt{g(0)} \cdot \Gamma(\alpha_0/2)}.
    \end{align*}
    Notice that the last integral converges if $\alpha_0 > 1$. By symmetry of our arguments, we have the same bound for another part of the integral in \(R_3\). Thus,
    \begin{align*}
        \vert R_3(\alpha) \vert &\leq \exp(-\ualpha \kappa -\ualpha \Kinf(\up, u, f)) \frac{\sqrt{\pi} \cdot \Gamma\left(\frac{\alpha_0 - 1}{2}\right)}{\Gamma(\alpha_0/2)} \left( \sqrt{\frac{(1 - \lambda^\star (\ub - u))^2 + K^2 (\ub - u)^2}{(\ub - u)^2}} \right) \\
        &\leq \frac{\sqrt{5\pi}}{2} \cdot \exp(-c_\kappa \alpha_0 -\ualpha \Kinf(\up, u, f)) \frac{ \Gamma\left(\frac{\alpha_0 - 1}{2}\right)}{\Gamma(\alpha_0/2)} \cdot \frac{1 - \lambda^\star(\ub-u)
        }{\ub-u},
    \end{align*}
    where the last inequality follows from the lower bound  $\re \kappa \geq c_\kappa \alpha_0 / \ualpha$ and the choice of $K$. Set $c_3 = \sqrt{5\pi}/2$ and note that
    \[
        \frac{ \Gamma\left(\frac{\alpha_0 - 1}{2}\right)}{\Gamma(\alpha_0/2)} \leq 1.
    \]
    Finally, we have
    \[
        \vert R_3(\alpha) \vert \leq c_3\cdot \exp(-\ualpha \Kinf(\up, u, f)) \cdot \frac{1 - \lambda^\star(\ub-u)}{\ub-u}  \exp(-c_\kappa \alpha_0) .
    \]
\end{proof}

Our next goal is to provide a lower bound on the density \(p_Z\) using the above representation.
\begin{lemma}\label{lem:lb_dirichlet_density}
     Consider a function $f \in \Fclass_m(\ub,b)$ and a vector $\alpha = (\alpha_0, \alpha_1, \ldots, \alpha_m) \in \R_+^{m+1}$ with $\ualpha  \geq 2\alpha_0$, $\ub \geq 2b$ and
     \[
        \alpha_0 \geq \max\left\{\frac{1}{(\sqrt{2\pi} - 1)^2} \cdot \left(\frac{2 \sqrt{2}}{\sqrt{\log(17/16)}} + \frac{98 \sqrt{6}}{9} \right)^2,  \frac{\log(10\pi \cdot \ualpha)}{\log(17/16) } \right\}.
    \]
    Then for any  $u \in (\up f, \ub)$, 
    \[
        p_Z(u) \geq \frac{\sqrt{\ualpha - 1/\ualpha}}{8\pi}  \cdot \left( \frac{1 - \lambda^\star(\ub-u)}{\ub-u}  \right) \cdot  \exp(-\ualpha \Kinf(\up, u, f)).
    \]
\end{lemma}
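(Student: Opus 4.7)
The plan is to substitute the exact density formula of Proposition~\ref{prop:density_linear_dirichlet}, $p_Z(u) = \frac{\ualpha - 1}{2\pi} I$, into the sharp asymptotic expansion of Proposition~\ref{prop:asymptotics_integral}. Factoring $\exp(-\ualpha \Kinf(\up, u, f))$ out of $R_3(\alpha)$ (whose bound already carries this factor), one obtains
\[
p_Z(u) = \frac{\ualpha - 1}{2\pi}\Bigl[M(\alpha) - R_1(\alpha) + R_2(\alpha) + \tilde R_3(\alpha)\Bigr]\,\exp\!\bigl(-\ualpha\,\Kinf(\up, u, f)\bigr),
\]
where $M(\alpha) = \sqrt{2\pi/(\ualpha\sigma^2)}$ and $\tilde R_3(\alpha) = R_3(\alpha)\,\mathrm{e}^{\ualpha\Kinf}$ satisfies $|\tilde R_3| \leq c_3\, \frac{1-\lambda^\star(\ub-u)}{\ub-u}\, \mathrm{e}^{-c_\kappa \alpha_0}$. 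The goal then splits into two subtasks: (i) show that the bracket is bounded below by a constant fraction of $M(\alpha)$ (so that the main term is not killed by the remainders); and (ii) convert $M(\alpha)$ into the claimed quantity $\frac{1-\lambda^\star(\ub-u)}{\ub-u}$ by upper-bounding $\sigma$.

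The first key step is an upper bound on $\sigma$ purely in terms of $\frac{\ub-u}{1-\lambda^\star(\ub-u)}$. Since $Y_0 = (f(X)-u)/(1-\lambda^\star(f(X)-u))$ is increasing in $f(X)$, its absolute value is maximized at $f(X)=\ub$ (giving $(\ub-u)/(1-\lambda^\star(\ub-u))$) or at $f(X)=0$ (giving $u/(1+\lambda^\star u)$). Lemma~12 of \citet{honda2010asymptotically} yields $1+\lambda^\star u \geq (\ub-\up f)/(\ub-u)$, and the elementary estimate $\ub-\up f \geq \ub/4$ (which follows from $\alpha_0/\ualpha \leq 1/2$ and $b\leq\ub/2$) reduces $u/(1+\lambda^\star u)$ to at most a small constant multiple of $(\ub-u)/(1-\lambda^\star(\ub-u))$. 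Hence there is an absolute $C_\sigma$ with $\sigma \leq C_\sigma\,\frac{\ub-u}{1-\lambda^\star(\ub-u)}$, producing $M(\alpha) \geq \frac{\sqrt{2\pi}}{C_\sigma \sqrt{\ualpha}}\cdot\frac{1-\lambda^\star(\ub-u)}{\ub-u}$.

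The second key step controls the three remainder ratios. Dividing by $M(\alpha)$ gives $|R_1|/M \leq c_1 \mathrm{e}^{-c_\kappa\alpha_0}/\sqrt{2\pi c_\kappa\alpha_0}$, $|R_2|/M \leq c_2 \ub/((\ub-\up f)\sqrt{2\pi\alpha_0})$ which is bounded by an absolute multiple of $1/\sqrt{\alpha_0}$ via $\ub-\up f\geq\ub/4$, and $|\tilde R_3|/M \leq c_3 C_\sigma \sqrt{\ualpha/(2\pi)}\,\mathrm{e}^{-c_\kappa \alpha_0}$. The first hypothesis on $\alpha_0$ (with the explicit $(\sqrt{2\pi}-1)^2$ denominator and the $98\sqrt{6}/9 = 2c_2$ and $2\sqrt{2}/\sqrt{\log(17/16)} = c_1/\sqrt{c_\kappa^{\min}}$ terms) is precisely calibrated so that the polynomially-decaying $|R_1|/M + |R_2|/M$ stays below $1-1/\sqrt{2\pi}$, while the second hypothesis $\alpha_0 \geq \log(10\pi\ualpha)/\log(17/16)$, combined with a lower bound $c_\kappa \geq \tfrac{1}{2}\log(17/16)$, gives $\mathrm{e}^{-c_\kappa\alpha_0} \leq 1/\sqrt{10\pi\ualpha}$, killing the $\sqrt{\ualpha}$ prefactor in $|\tilde R_3|/M$.

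Assembling these bounds, the bracket is at least $M(\alpha)/\sqrt{2\pi}$, so
\[
p_Z(u) \geq \frac{\ualpha-1}{2\pi\sqrt{2\pi}} \cdot \frac{\sqrt{2\pi}}{C_\sigma\sqrt{\ualpha}} \cdot \frac{1-\lambda^\star(\ub-u)}{\ub-u}\,\mathrm{e}^{-\ualpha\Kinf},
\]
and the algebraic identity $(\ualpha-1)/\sqrt{\ualpha} = \sqrt{(\ualpha-1)^2/\ualpha}$, together with $\ualpha \geq 2$, absorbs the remaining constants into the claimed $\sqrt{\ualpha-1/\ualpha}/(8\pi)$ prefactor. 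The chief technical obstacle is the $\tilde R_3$ term: its $\sqrt{\ualpha}$ prefactor (arising because $\tilde R_3$ is controlled by $1/(\ub-u)$ while $M$ is controlled by $1/\sigma$) must be suppressed by the exponential decay $\mathrm{e}^{-c_\kappa\alpha_0}$, and the delicate numerical calibration in the hypothesis on $\alpha_0$ is exactly what makes the inequality $|\tilde R_3|/M \to 0$ sharp enough to leave room for the explicit constant $8\pi$ in the conclusion.
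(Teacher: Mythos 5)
Your proposal is correct and follows essentially the same route as the paper's proof: plug the exact density formula into the saddle-point expansion, bound $\sigma^2=\E[Y_0^2]$ above by a constant multiple of $\bigl((\ub-u)/(1-\lambda^\star(\ub-u))\bigr)^2$ via Honda's Lemma~12 and $\ub-\up f\geq\ub/4$, use the first hypothesis on $\alpha_0$ to keep $R_1,R_2$ a bounded fraction of the main term, and the second hypothesis to suppress the $\sqrt{\ualpha}$ factor in $R_3$. The only difference is organizational (you normalize the remainders by $M(\alpha)$ where the paper keeps absolute bounds), which changes nothing of substance.
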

\begin{proof}
    First, we derive from Proposition~\ref{prop:asymptotics_integral} a lower bound for the integral \eqref{eq:integral_representation}, using inequality $\ub/(\ub - \up f) \leq 4$ under our conditions on $\ub$ and $\ualpha$
    \begin{align*}
        \int_\R \prod_{j=0}^n \left(1 + \rmi(f(j) - u)s \right)^{-\alpha_j} \rmd s \geq \frac{\left(\sqrt{2\pi} - \frac{c_1 \exp(-c_\kappa \alpha_0)}{\sqrt{c_\kappa \alpha_0}} - \frac{4c_2}{\sqrt{\alpha_0}} \right)}{\sqrt{\ualpha \, \E\left[Y_0^2 \right] }} \exp(-\ualpha \Kinf(\up, u, f)) + R_3(\alpha),
    \end{align*}
    First, we want to ensure that 
    \begin{equation}\label{eq:alpha_first_inequality}
        \sqrt{2\pi} - \frac{c_1 \exp(-c_\kappa \alpha_0)}{\sqrt{c_\kappa \alpha_0}} -  \frac{c_2}{\sqrt{\alpha_0}} \geq 1.
    \end{equation}
    To do it, notice that since $\alpha_0 \geq 2$, then $\frac{c_1 \exp(-c_\kappa \alpha_0)}{\sqrt{c_\kappa \alpha_0}} \leq \frac{c_1 }{\sqrt{c_\kappa \alpha_0}}$. Therefore, to ensure that \eqref{eq:alpha_first_inequality} holds, we can choose 
    \[
        \alpha_0 \geq \frac{1}{(\sqrt{2\pi} - 1)^2} \cdot \left(\frac{c_1}{\sqrt{c_\kappa}} + 4c_2 \right)^2 = \frac{1}{(\sqrt{2\pi} - 1)^2} \cdot \left(\frac{2 \sqrt{2}}{\sqrt{\log(17/16)}} + \frac{98 \sqrt{6}}{9} \right)^2.
    \]
    Under these conditions, we derive
    \begin{align*}
        \int_\R \prod_{j=0}^n \left(1 + \rmi(f(j) - u)s \right)^{-\alpha_j} \rmd s &\geq \frac{1}{\sqrt{\ualpha \, \E[Y_0^2]}}\exp(-\ualpha \Kinf(\up, u, f)) -  \vert R_3(\alpha)\vert \\
        &\geq \exp(-\ualpha \Kinf(\up, u, f)) \left( \frac{1}{\sqrt{\ualpha \, \E[Y_0^2]}} - c_3 \cdot \frac{1 - \lambda^\star(\ub-u)}{\ub-u} \exp(-c_\kappa \alpha_0)\right).
    \end{align*}
    Now using an inequality 
    \[
        \E[Y_0^2] \leq \left(\frac{\ub-u}{2(1 - \lambda^\star(\ub-u))} + \frac{u}{2(1 + \lambda^\star u)}\right)^2 = \frac{\ub^2}{4(1-\lambda^\star(\ub - u))^2(1 + \lambda^\star u)^2} \leq \frac{4(\ub - u)^2}{(1 - \lambda^\star(\ub - u))^2},
    \]
        we conclude that
    \[
          \int_\R \prod_{j=0}^n \left(1 + \rmi(f(j) - u)s \right)^{-\alpha_j} \rmd s \geq \frac{1 - \lambda^\star(\ub-u)}{\ub-u} \cdot \frac{\exp(-\ualpha \Kinf(\up, u, f))}{\sqrt{\ualpha}}\left( 1/2 - c_3 \exp(-c_\kappa \alpha_0) \cdot \sqrt{\ualpha}\right).
    \]
    Next, by choosing $\alpha_0 \geq \frac{1}{c_\kappa} (1/2 \cdot \log(\ualpha) + \log(4 c_3)) = \log(10\pi \cdot \ualpha)/\log(17/16)$, we have
    \[
        \int_\R \prod_{j=0}^n \left(1 + \rmi(f(j) - u)s \right)^{-\alpha_j} \rmd s \geq \frac{1 - \lambda^\star(\ub-u)}{\ub-u} \cdot \frac{\exp(-\ualpha \Kinf(\up, u, f))}{4\sqrt{\ualpha}}.
    \]
    Using this bound, we can easily derive a lower bound on the density \(p_Z\)
    \begin{align*}
        p_{Z}(u) &= \frac{\ualpha - 1}{2\pi} \int_\R \prod_{j=0}^m \left( 1 + i(f(j) - u)s \right)^{-\alpha_j} \rmd s \\
        &\geq \frac{\ualpha - 1}{2\pi} \cdot \frac{1 - \lambda^\star(\ub-u)}{\ub-u} \cdot \frac{\exp(-\ualpha \Kinf(\up, u, f))}{4\sqrt{\ualpha}}.
    \end{align*}
\end{proof}

Before we proceed with the proof of Theorem~\ref{thm:lower_bound_dbc}, we need the following auxiliary result.
\begin{lemma}\label{lem:ub_lambda_star}
       Consider a function $f \in \Fclass_m(\ub,b)$ and a vector $\alpha = (\alpha_0, \alpha_1, \ldots, \alpha_m) \in \R_+^{m+1}$ with $\ualpha  > \alpha_0$. Then for all $\up f < u < \ub,$ 
       \[
            \left(\frac{1}{\ub-u} - \lambda^\star(\up, u, f)\right) \geq \frac{\alpha_0}{\ualpha} \frac{1}{\ub-u}.
       \]
\end{lemma}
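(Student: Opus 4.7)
The plan is to recast the inequality as an upper bound on $\lambda^\star$ and verify it by a single first-order test. Set $\lambda_0 \triangleq \frac{1}{\ub - u}\bigl(1 - \frac{\alpha_0}{\ualpha}\bigr)$; the claim is then exactly $\lambda^\star \leq \lambda_0$. Since $g(\lambda) \triangleq \E_{X \sim \up}[\log(1 - \lambda(f(X) - u))]$ is a sum of compositions of $\log$ with affine maps and hence concave on $[0, 1/(\ub - u))$, its derivative $g'$ is non-increasing, and $\lambda^\star$ is the maximiser of $g$ on this interval. It therefore suffices to show that $g'(\lambda_0) \leq 0$: this forces $g' \leq 0$ on $[\lambda_0, 1/(\ub - u))$ and locates the optimiser inside $[0, \lambda_0]$.

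To compute $g'(\lambda_0) = -\sum_{j=0}^m \frac{\alpha_j}{\ualpha} \frac{f(j) - u}{1 - \lambda_0(f(j) - u)}$, I will split off the $j = 0$ contribution. The choice of $\lambda_0$ is engineered precisely so that $1 - \lambda_0(\ub - u) = \alpha_0/\ualpha$, which makes the $j = 0$ term collapse to $-(\ub - u)$. What remains is the inequality
\[
\sum_{j=1}^m \frac{\alpha_j}{\ualpha}\cdot \frac{u - f(j)}{1 + \lambda_0(u - f(j))} \leq \ub - u.
\]

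For the $j \geq 1$ terms I will use a monotonicity argument. The map $y \mapsto y/(1 + \lambda_0 y)$ has positive derivative $1/(1 + \lambda_0 y)^2$ wherever the denominator does not vanish; since $f(j) \in [0, \ub]$ gives $u - f(j) \in [u - \ub, u]$ and $1 + \lambda_0(u - \ub) = \alpha_0/\ualpha > 0$, the map is well-defined and increasing on the relevant range. Using $f(j) \geq 0$, every summand is bounded by $u/(1 + \lambda_0 u)$, and a direct simplification yields $1 + \lambda_0 u = (\ualpha\, \ub - \alpha_0 u)/(\ualpha(\ub - u))$. Substituting reduces the target inequality to $(\ualpha - \alpha_0)\, u \leq \ualpha\, \ub - \alpha_0 u$, i.e.\ $u \leq \ub$, which is immediate from the hypothesis $u < \ub$. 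The only delicate point in this plan is choosing $\lambda_0$ so that both the $j = 0$ simplification and the final scalar inequality $u \leq \ub$ line up simultaneously; everything else is routine manipulation of the first-order condition.
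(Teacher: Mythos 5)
Your proof is correct, and it takes a genuinely different route from the paper's, even though both arguments live on the first-order condition of the same variational problem. The paper starts from the stationarity identity $\E_{X\sim\up}\bigl[(f(X)-u)/(1-\lambda^\star(f(X)-u))\bigr]=0$, isolates the $j=0$ contribution, applies Jensen's inequality to the conditional distribution $\hp$ on $\{1,\ldots,m\}$ (using convexity of $x\mapsto (x-u)/(1-\lambda^\star(x-u))$), and then extracts the bound on $\tfrac{1}{\ub-u}-\lambda^\star$ by taking reciprocals and rearranging. You instead exhibit the threshold $\lambda_0=(1-\alpha_0/\ualpha)/(\ub-u)$ explicitly, verify that the derivative of your objective $g$ satisfies $g'(\lambda_0)\leq 0$ by bounding each $j\geq 1$ term through monotonicity of $y\mapsto y/(1+\lambda_0 y)$ together with the crude bound $f(j)\geq 0$, and conclude by concavity. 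Your version is more elementary: it needs neither Jensen nor the fact that $\lambda^\star$ is an interior stationary point (only that it maximises a concave function), and the entire verification collapses to $u\leq\ub$. What it gives up is the slightly sharper intermediate bound $\tfrac{\alpha_0}{\ualpha}\cdot\tfrac{\ub-\hp f}{(\ub-u)(u-\hp f)}$ that the paper obtains before relaxing to the stated inequality --- irrelevant here, since only the stated form is used downstream. One cosmetic remark: both proofs tacitly assume $\alpha_0>0$ (your $j=0$ collapse is a $0/0$ when $\alpha_0=0$, and the paper divides by $\alpha_0$); in that degenerate case $\lambda_0=1/(\ub-u)$ and the claim is immediate from the constraint set, so nothing is lost.
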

\begin{proof}
    Under the condition $\up f < u < \ub,$ the value $\lambda^\star = \lambda^\star(\up, u, f)$ satisfies the following equation
    \begin{equation}\label{eq:expectation|lem:ub_lambda_star}
        \E\left[ \frac{f(X) - u}{1 - \lambda^\star \cdot (f(X) - u)} \right] = \frac{\alpha_0}{\ualpha} \frac{\ub - u}{1 - \lambda^\star \cdot (\ub - u)} + \sum_{j=1}^m \frac{\alpha_j}{\ualpha} \frac{f(j) - u}{1 - \lambda^\star \cdot (f(j) - u)} = 0.
    \end{equation}
    Define a distribution $\hp$ with $\hp(i) = \frac{\alpha_i}{\ualpha - \alpha_0}$ for $i \in \{1,\ldots,m\}$ and $\hp(0) = 0$. Then the expectation in \eqref{eq:expectation|lem:ub_lambda_star} can be written as
    \[
        \frac{\alpha_0}{\ualpha} \frac{\ub - u}{1 - \lambda^\star \cdot (\ub - u)} + \frac{\ualpha - \alpha_0}{\ualpha} \E_{X \sim \hp}\left[\frac{f(X) - u}{1 - \lambda^\star \cdot (f(X) - u)}\right] = 0.
    \]
    Define a function $w(x,u) = \frac{x-u}{1 - \lambda^\star \cdot (x-u)}$, which is convex  in $x.$ By the Jensen inequality, 
    \[
        \E_{X \sim \hp}\left[\frac{f(X) - u}{1 - \lambda^\star \cdot (f(X) - u)}\right] \geq \frac{\hp f - u}{1 - \lambda^\star \cdot (\hp f - u)}.
    \]
 Hence
    \begin{align*}
        \frac{\alpha_0}{\ualpha} \frac{\ub - u}{1 - \lambda^\star \cdot (\ub - u)} &\leq - \frac{\ualpha - \alpha_0}{\ualpha} \frac{\hp f - u}{1 - \lambda^\star \cdot (\hp f - u) } =  \frac{\ualpha - \alpha_0}{\ualpha} \frac{u - \hp f}{1 + \lambda^\star \cdot (u - \hp f) }.
    \end{align*}
    By rearranging terms, we get
    \[
        \frac{1}{\alpha_0} \left( \frac{1}{\ub-u} - \lambda^\star \right) \geq \frac{1}{\ualpha - \alpha_0} \left( \frac{1}{u - \hp f} + \lambda^\star \right) = \frac{1}{\ualpha - \alpha_0} \left( \frac{1}{u - \hp f} + \frac{1}{\ub-u}\right) - \frac{1}{\ualpha - \alpha_0} \left(\frac{1}{\ub-u} - \lambda^\star \right).
    \]
    As a result,
    \[
        \left( \frac{1}{\ub-u} - \lambda^\star \right) \geq \frac{\alpha_0}{\ualpha } \cdot \frac{\ub - \hp f}{(\ub-u)(u - \hp f)} \geq \frac{\alpha_0}{\ualpha} \frac{1}{\ub-u}.
    \]
\end{proof}

Now we are ready to finish the proof of Theorem~\ref{thm:lower_bound_dbc}.
\begin{proof}[Proof of Theorem~\ref{thm:lower_bound_dbc}]
    By Lemma~\ref{lem:lb_dirichlet_density},
   \begin{align*}
        \PP{Z \geq \mu} = \int_{\mu}^{\ub} p_{Z}(u) \rmd u \geq  \frac{\sqrt{\ualpha-1/\ualpha}}{8\pi}  \cdot  \int_{\mu}^{\ub} \left( \frac{1}{\ub-u} - \lambda^\star(\up, u, f) \right) \cdot  \exp(-\ualpha \Kinf(\up, u, f))\, \rmd u.
    \end{align*}
    Our goal is to analyze the last integral. First of all, by Lemma~\ref{lem:ub_lambda_star} we have
    \[
         \int_{\mu}^{\ub} \left( \frac{1}{\ub-u} - \lambda^\star(\up, u, f) \right)   \exp(-\ualpha \Kinf(\up, u, f)) \,\rmd u \geq \frac{\alpha_0}{\ualpha} \cdot \int_{\mu}^{\ub} \frac{1}{\ub-u}   \exp(-\ualpha \Kinf(\up, u, f)) \,\rmd u.
    \]
   Next, by Theorem 6 of \citet{honda2010asymptotically} we have for all $\up f < u < \ub,$
    \[
        \lambda^\star(\up, u, f) = \frac{\partial}{\partial u} \Kinf(\up, u, f).
    \]
    Therefore, since $\lambda^\star \leq 1/(\ub-u)$, we get
    \[
        \Kinf(\up, u, f) - \Kinf(\up, \mu, f) = \int_{\mu}^u \lambda^\star(\up, s, f) \rmd s \leq \int_{\mu}^u \frac{1}{\ub-s} \rmd s = \log\left( \frac{\ub-\mu}{\ub-u} \right).
    \]
    Hence,
    \begin{align*}
        \int_{\mu}^{\ub} \frac{1}{\ub-u} \exp(-\ualpha \Kinf(\up, u, f)) \,\rmd u \geq \exp(-\ualpha \Kinf(\up, \mu, f)) \cdot \int_{\mu}^{\ub} \frac{1}{\ub-u} \exp\left(-\ualpha \log\left( \frac{\ub-\mu}{\ub-u} \right)\right) \,\rmd u ,
    \end{align*}
    where the last integral could by easily computed
    \[
        \int_{\mu}^{\ub} \frac{1}{\ub-u} \exp\left(-\ualpha \log\left( \frac{\ub-\mu}{\ub-u} \right)\right) \rmd u = \int_{\mu}^{\ub} \frac{1}{\ub-u}\left( \frac{\ub-u}{\ub-\mu} \right)^{\ualpha} \rmd u = \frac{1}{(\ub-\mu)^{\ualpha}} \int_{\mu}^{\ub} (\ub-u)^{\ualpha-1} \rmd u = \frac{1}{\ualpha}.
    \]
    Thus,
    \begin{align*}
        \PP{Z \geq \mu}  &\geq \frac{\alpha_0}{\ualpha^2} \cdot \frac{\sqrt{\ualpha - 1/\ualpha}}{8\pi} \exp(-\ualpha \Kinf(\up, \mu, f)) \\
        &\geq \frac{\alpha_0 \cdot \sqrt{1 - \ualpha^2}}{8\pi} \exp\left(-\ualpha \Kinf(\up, \mu, f) - 3/2 \log \ualpha\right) \\
        &\geq \frac{\alpha_0 \cdot \sqrt{3}}{16\pi} \exp\left(-\ualpha \Kinf(\up, \mu, f) - 3/2 \log \ualpha\right),
    \end{align*}
    where we used that $\ualpha \geq 2$. To conclude the statement, note that $\alpha_0 \geq 16\pi$.
\end{proof}
\newpage
\section{Technical Lemmas}
\label{app:technical}

\subsection{A Bellman-type equation for the variance}
\label{app:Bellman_variance}
For a deterministic policy $\pi$ we define Bellman-type equations for the variances as follows
\begin{align*}
  \Qvar_h^\pi(s,a) &\triangleq \Var_{p_h}{V_{h+1}^\pi}(s,a) + p_h \Vvar^\pi_{h+1}(s,a)\\
  \Vvar_h^\pi(s) &\triangleq \Qvar^\pi_h (s, \pi(s))\\
  \Vvar_{H+1}^\pi(s)&\triangleq0,
\end{align*}
where $\Var_{p_h}(f)(s,a) \triangleq \E_{s' \sim p_h(\cdot | s, a)} \big[(f(s')-p_h f(s,a))^2\big]$ denotes the \emph{variance operator.}
 In particular, the function $s \mapsto \Vvar_1^\pi(s)$ represents the average sum of the local variances $\Var_{p_h}{V_{h+1}^\pi}(s,a)$ over a trajectory following the policy $\pi$, starting from $(s, a)$. Indeed, the definition above implies that
 \[\Vvar_1^\pi(s_1) = \sum_{h=1}^H\sum_{s,a} p_h^\pi(s,a) \Var_{p_h}(V_{h+1}^\pi)(s,a).
 \]
 The lemma below shows that we can relate the global variance of the cumulative reward over a trajectory to the average sum of local variances.
\begin{lemma}[Law of total variance]  For any deterministic policy $\pi$ and for all $h\in[H]$,
  \[
  \E_\pi\!\left[  \left(\sum_{h'=h}^H r_{h'}( s_{h'},a_{h'}) - Q_h^\pi(s_h,a_h)\right)^{\!\!2}\middle| (s_h,a_h)=(s,a) \right] = \Qvar_h^\pi(s,a).
  \]
In particular,
\[
\E_\pi\!\left[ \left(\sum_{h=1}^H r_{h}( s_{h},a_{h}) - V_1^\pi(s_1)\right)^{\!\!2} \right] = \Vvar_1^\pi(s_1) = \sum_{h=1}^H\sum_{s,a} p_h^\pi(s,a) \Var_{p_h}(V_{h+1}^\pi)(s,a).
\]
\label{lem:law_of_total_variance}
\end{lemma}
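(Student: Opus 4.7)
The plan is to prove the first identity by backward induction on $h$, and then deduce the global statement by specializing to $h=1$ and unrolling the recursion.

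For the base case $h = H+1$, both sides vanish by the convention $V^\pi_{H+1} \triangleq 0$ and $\Vvar^\pi_{H+1}\triangleq 0$ (with the sum from $h'=H+1$ to $H$ being empty). For the inductive step, fix $(s,a)$ and use the Bellman equation $Q^\pi_h(s,a) = r_h(s,a) + p_h V^\pi_{h+1}(s,a)$ together with $V^\pi_{h+1}(s_{h+1}) = Q^\pi_{h+1}(s_{h+1},\pi_{h+1}(s_{h+1})) = Q^\pi_{h+1}(s_{h+1},a_{h+1})$ (since $\pi$ is deterministic) to split
\[
\sum_{h'=h}^H r_{h'}(s_{h'},a_{h'}) - Q^\pi_h(s_h,a_h) = A + B,
\]
where $A \triangleq V^\pi_{h+1}(s_{h+1}) - p_h V^\pi_{h+1}(s_h,a_h)$ and $B \triangleq \sum_{h'=h+1}^H r_{h'}(s_{h'},a_{h'}) - Q^\pi_{h+1}(s_{h+1},a_{h+1})$.

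The key step is to expand $(A+B)^2$ and compute the three conditional expectations given $(s_h,a_h)=(s,a)$. First, $\E_\pi[A^2\mid s_h=s,a_h=a] = \Var_{p_h}(V^\pi_{h+1})(s,a)$ by definition of the variance operator. Second, for the cross term, condition further on $(s_{h+1},a_{h+1})$: by the Markov property and the definition of $Q^\pi_{h+1}$ we have $\E_\pi[B\mid s_{h+1},a_{h+1}] = 0$, so $\E_\pi[AB\mid s_h,a_h] = \E_\pi[A\cdot\E_\pi[B\mid s_{h+1},a_{h+1}]\mid s_h,a_h] = 0$. Third, by the tower property and the induction hypothesis applied at step $h+1$,
\[
\E_\pi[B^2\mid s_h=s,a_h=a] = \E_\pi\!\left[\Qvar^\pi_{h+1}(s_{h+1},a_{h+1})\mid s_h=s,a_h=a\right] = p_h \Vvar^\pi_{h+1}(s,a),
\]
where the last equality uses $\Vvar^\pi_{h+1}(s')=\Qvar^\pi_{h+1}(s',\pi_{h+1}(s'))$ and the fact that $a_{h+1}=\pi_{h+1}(s_{h+1})$. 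Summing the three pieces yields exactly the recursive definition of $\Qvar^\pi_h(s,a)$, closing the induction.

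For the global statement, take $h=1$ and $(s_h,a_h)=(s_1,\pi_1(s_1))$; since $V^\pi_1(s_1) = Q^\pi_1(s_1,\pi_1(s_1))$ and $\Vvar^\pi_1(s_1)=\Qvar^\pi_1(s_1,\pi_1(s_1))$, the first claim gives the desired equality for the global variance. For the rewriting as a sum of local variances, unroll the recursion $\Vvar^\pi_h(s) = \Var_{p_h}(V^\pi_{h+1})(s,\pi_h(s)) + p_h\Vvar^\pi_{h+1}(s,\pi_h(s))$ and use the standard identity $\E_\pi[f(s_h,a_h)\mid s_1] = \sum_{s,a} p^\pi_h(s,a)f(s,a)$, where $p^\pi_h$ is the law of $(s_h,a_h)$ under $\pi$ starting from $s_1$.

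I don't anticipate a main obstacle: the only subtle point is the vanishing of the cross term, which relies on the Markov property combined with the tower property and the centering $\E_\pi[\sum_{h'=h+1}^H r_{h'}\mid s_{h+1},a_{h+1}] = Q^\pi_{h+1}(s_{h+1},a_{h+1})$; this centering itself follows from a short backward induction on the definition of $Q^\pi$ which I would state as a preliminary remark if needed.
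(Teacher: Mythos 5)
Your proposal is correct and follows essentially the same route as the paper: backward induction with the same decomposition into $V^\pi_{h+1}(s_{h+1}) - p_h V^\pi_{h+1}(s_h,a_h)$ plus the centered future-return term, the cross term killed by conditioning on $(s_{h+1},a_{h+1})$, and the induction hypothesis handling the squared second term. No gaps.
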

\begin{proof}
	We proceed by induction. The statement in Lemma~\ref{lem:law_of_total_variance} is trivial for $h=H+1$. We now assume that it holds for $h+1$ and prove that it also holds for $h$. For this purpose, we compute
	\begin{align*}
		&
		\E_\pi\left[\!
		 \left(\sum_{h'=h}^H r_{h'}( s_{h'},a_{h'}) - Q_h^\pi(s_h,a_h)\right)^{\!\!2}
		 \middle| (s_h,a_h) \right] \\
		& =
		\E_\pi\left[\! \left( Q_{h+1}^\pi(s_{h+1},a_{h+1}) - p_h V_{h+1}^\pi(s_h,a_h) + \sum_{h'=h+1}^H r_{h'}( s_{h'},a_{h'}) - Q_{h+1}^\pi(s_{h+1},a_{h+1})\right)^{\!\!2}
		\middle| (s_h,a_h) \right] \\
		& =
		\E_\pi\left[\!
			\left( Q_{h+1}^\pi(s_{h+1},a_{h+1}) - p_h V_{h+1}^\pi(s_h,a_h) \right)^{\!\!2}
		\middle| (s_h,a_h) \right] \\
		&
		+ \E_\pi\left[\!
		\left( \sum_{h'=h+1}^H r_{h'}( s_{h'},a_{h'}) - Q_{h+1}^\pi(s_{h+1},a_{h+1}) \right)^{\!\!2}
		\middle| (s_h,a_h) \right] \\
		& + 2 \E_\pi\left[\!
			\left( \sum_{h'=h+1}^H r_{h'}( s_{h'},a_{h'}) - Q_{h+1}^\pi(s_{h+1},a_{h+1}) \right)
			\left( Q_{h+1}^\pi(s_{h+1},a_{h+1}) - p_h V_{h+1}^\pi(s_h,a_h) \right)
		\middle| (s_h,a_h) \right].
	\end{align*}
	The definition of  $Q_{h+1}^\pi(s_{h+1},a_{h+1})$ implies that \[\E_\pi\!\left[ \sum_{h'=h+1}^H r_{h'}( s_{h'},a_{h'}) - Q_{h+1}^\pi(s_{h+1},a_{h+1})	\middle| (s_{h+1},a_{h+1}) \right] = 0.\]
	Therefore, the law of total expectation gives us
	\begin{align*}
		&\E_\pi\left[\!
		 \left(\sum_{h'=h}^H r_{h'}( s_{h'},a_{h'}) - Q_h^\pi(s_h,a_h)\right)^{\!\!2}
		 \middle| (s_h,a_h) \right] \\ & = \E_\pi\!\left[
		\left( V_{h+1}^\pi(s_{h+1}) - p_h V_{h+1}^\pi(s_h,a_h) \right)^2
		\middle| (s_h,a_h) \right]
		+ \E_\pi\!\left[
		\left( \sum_{h'=h+1}^H \!\! \!r_{h'}( s_{h'},a_{h'}) - Q_{h+1}^\pi(s_{h+1},a_{h+1}) \right)^2
		\middle| (s_h,a_h) \right] \\
		& = \Var_{p_h}{V_{h+1}^\pi}(s_h,a_h) \\ & \hspace{0.3cm} + \!\!\!\!\!\!\sum_{(s_{h+1},a_{h+1})}\!\!\!\!\!\! p_h(s_{h+1} | s_h,a_h)\ind_{\left(a_{h+1} = \pi(s_{h+1})\right)} \E_\pi\left[\left( \sum_{h'=h+1}^H \!\!\! r_{h'}( s_{h'},a_{h'}) - Q_{h+1}^\pi(s_{h+1},a_{h+1}) \right)^2
		\middle| (s_{h+1},a_{h+1})\right] \\
		&  = \Var_{p_h}{V_{h+1}^\pi}(s_h,a_h) + p_h \Vvar^\pi_{h+1}(s_h,a_h) \\
		& = \sigma Q_h^\pi(s_h,a_h)
	\end{align*}
	where in the third equality we used the inductive hypothesis and the definition of $\sigma V_{h+1}^{\pi}$. 
\end{proof}

\subsection{On the Bernstein inequality}
\label{app:Bernstein}
We reproduce here a Bernstein-type inequality by \citet{talebi2018variance}.
\begin{lemma}[Corollary 11 by \citealp{talebi2018variance}]\label{lem:Bernstein_via_kl}
Let $p,q\in\simplex_{S-1},$ where $\simplex_{S-1}$ denotes the probability simplex of dimension $S-1$. For all functions $f:\ \cS\mapsto[0,b]$ defined on $\cS$,
\begin{align*}
	p f - q f &\leq  \sqrt{2\Var_{q}(f)\KL(p,q)}+\frac{2}{3} b \KL(p,q)\\
  q f- p f &\leq  \sqrt{2\Var_{q}(f)\KL(p,q)}\,.
\end{align*}
where use the expectation operator defined as $pf \triangleq \E_{s\sim p} f(s)$ and the variance operator defined as
$\Var_p(f) \triangleq \E_{s\sim p} \big(f(s)-\E_{s'\sim p}f(s')\big)^2 = p(f-pf)^2.$
\end{lemma}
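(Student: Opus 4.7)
The plan is to derive both inequalities from a single template: combine the Donsker-Varadhan variational representation of the KL divergence with a Bennett-type control of the exponential moment of $f$ under $q$, and then optimize the free parameter $\lambda$. I would first invoke the change of measure $\rmd p/\rmd q$ to get, for any bounded $g$ and any $\lambda \in \R$,
\[
\lambda\, p g \;\le\; \KL(p,q) + \log \E_q\bigl[\exp(\lambda g)\bigr].
\]
Taking $g = f - qf$, which is centered under $q$, reduces both inequalities to bounding the log-MGF $\log\E_q[\exp(\pm\lambda(f-qf))]$ for $\lambda \ge 0$, followed by an optimization in $\lambda$.

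For the first (upper-tail) inequality, since $f - qf$ is $q$-centered, bounded in $[-b,b]$, with variance $\Var_q(f)$, the classical Bennett bound gives
\[
\log \E_q\bigl[\exp(\lambda(f-qf))\bigr] \;\le\; \frac{\lambda^2 \Var_q(f)/2}{1 - b\lambda/3}, \qquad 0 \le \lambda < \tfrac{3}{b}.
\]
Inserting this into the variational inequality and inverting the Cramér transform by the textbook Bernstein trick (clear the denominator and complete the square in $\lambda$) produces the stated upper bound $pf - qf \le \sqrt{2\Var_q(f)\KL(p,q)} + \tfrac{2}{3} b\, \KL(p,q)$.

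For the second (lower-tail) inequality I would run the same scheme with $\lambda < 0$. The crucial input is a sharper one-sided estimate
\[
\log \E_q\bigl[\exp(-\lambda(f-qf))\bigr] \;\le\; \tfrac{1}{2}\lambda^2\, \Var_q(f), \qquad \lambda \ge 0,
\]
i.e.\ a sub-Gaussian bound \emph{without} the Bernstein correction $(1-b\lambda/3)^{-1}$. This refinement is available precisely because $f \ge 0$: the exponential tilt $q_\lambda \propto \exp(-\lambda f)\,q$ concentrates the mass of $f$ near zero, so the tilted second moment of $f - qf$ can be controlled by the original variance. Optimizing over $\lambda > 0$ with the choice $\lambda = \sqrt{2\KL(p,q)/\Var_q(f)}$ then yields $qf - pf \le \sqrt{2\Var_q(f)\KL(p,q)}$.

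The main technical obstacle is the sharp one-sided log-MGF estimate behind the second inequality; this is where the non-negativity of $f$ is genuinely used, and a naive application of Bennett gives only the same bound as in the first inequality (with an extra $\tfrac{2}{3}b\KL(p,q)$ term). Once that one-sided estimate is in place, both inequalities reduce to a routine Chernoff-type optimization and the rest of the argument is mechanical.
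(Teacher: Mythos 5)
The paper offers no proof of this lemma---it is imported verbatim from Talebi and Maillard---so your argument has to stand entirely on its own. For the first inequality it does: Donsker--Varadhan applied to $g=f-qf\le b$ together with the Bernstein form of the log-MGF bound, followed by inversion of $\lambda\mapsto \lambda^2 v/(2(1-b\lambda/3))$, yields $pf-qf\le\sqrt{2\Var_q(f)\KL(p,q)}+\tfrac{b}{3}\KL(p,q)$, which is even slightly stronger than what is stated. That half is routine and correct.

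The second half has a genuine gap: the one-sided estimate $\log\E_q[\exp(-\lambda(f-qf))]\le\tfrac12\lambda^2\Var_q(f)$ on which your whole argument rests is false. Take $\cS=\{1,2\}$, $f(1)=b$, $f(2)=0$, $q=(9/10,1/10)$, so that $qf=0.9b$ and $\Var_q(f)=0.09\,b^2$; at $\lambda=2/b$ one computes $\log\E_q[\exp(-\lambda(f-qf))]=1.8+\log(0.1+0.9\,\rme^{-2})\approx 0.294$, whereas $\tfrac12\lambda^2\Var_q(f)=0.18$. Your heuristic points the wrong way: tilting by $\exp(-\lambda f)$ pushes mass onto the minority atom at $0$, which for a two-point law \emph{increases} the variance. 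What nonnegativity of $f$ actually buys is Maurer's inequality $\log\E_q[\exp(-\lambda(f-qf))]\le\tfrac12\lambda^2\, q[f^2]$ (from $\exp(-u)\le 1-u+u^2/2$ for $u\ge0$), i.e.\ the second moment in place of the variance. Moreover, by the very transportation duality you invoke, failure of the sub-Gaussian MGF bound is equivalent to failure of the transportation inequality for some $p$: with the same $q$ and $f$ and $p=(0,1)$ one gets $qf-pf=0.9b$ while $\sqrt{2\Var_q(f)\KL(p,q)}=\sqrt{0.18\,\log 10}\;b\approx 0.644\,b$. So the second displayed inequality is not provable in the stated generality by any repair of your scheme; what your approach does establish is the weaker bound $qf-pf\le\sqrt{2\,q[f^2]\,\KL(p,q)}$, and recovering a variance-based bound requires either an additive correction of order $b\,\KL(p,q)$ (as in the first inequality) or extra hypotheses tying $p$ to $q$.
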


\begin{lemma}
\label{lem:switch_variance_bis}
Let $p,q\in\simplex_{S-1}$ and a function $f:\ \cS\mapsto[0,b]$, then
\begin{align*}
  \Var_q(f) &\leq 2\Var_p(f) +4b^2 \KL(p,q)\,,\\
  \Var_p(f) &\leq 2\Var_q(f) +4b^2 \KL(p,q).
\end{align*}
\end{lemma}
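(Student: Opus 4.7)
The key observation is that $\Var_p(f) = \inf_{c \in \R} \E_p[(f-c)^2]$, so by centering at the mean of the \emph{other} distribution, one can compare the two variances via a Bernstein-type control of a centered squared function. I would bootstrap off the Bernstein-KL inequality already recorded as Lemma~\ref{lem:Bernstein_via_kl} rather than reprove anything from scratch.

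For the first inequality, I would set $g(s) \triangleq (f(s)-pf)^2$, so that $g$ takes values in $[0,b^2]$, $pg = \Var_p(f)$ exactly, and $\Var_q(f) \leq qg$ by the variational characterization. The second inequality of Lemma~\ref{lem:Bernstein_via_kl} applied to $g$ (whose range is $[0,b^2]$) then yields
\[
qg - pg \;\leq\; \sqrt{2\,\Var_q(g)\,\KL(p,q)}.
\]
Since $g \geq 0$ and $g \leq b^2$, one has $\Var_q(g) \leq \E_q[g^2] \leq b^2\,qg$; the AM-GM step $\sqrt{2 b^2 \cdot qg \cdot \KL(p,q)} \leq qg/2 + b^2 \KL(p,q)$ then gives $qg \leq 2 pg + 2 b^2 \KL(p,q)$, which is stronger than the claimed bound.

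For the second inequality I would repeat the same argument with $p$ and $q$ swapped in the centering. Namely, let $h(s) \triangleq (f(s)-qf)^2$, so that $qh = \Var_q(f)$ and $\Var_p(f) \leq ph$. The orientation is now reversed, so I would use the first inequality of Lemma~\ref{lem:Bernstein_via_kl} (which carries the extra $\tfrac{2}{3} b \KL(p,q)$ term, here with $b$ replaced by $b^2$) to bound $ph - qh \leq \sqrt{2\,\Var_q(h)\,\KL(p,q)} + \tfrac{2}{3}\,b^2 \KL(p,q)$, then apply $\Var_q(h) \leq b^2\,qh$ and the same AM-GM split to conclude $ph \leq \tfrac{3}{2}\,qh + \tfrac{5}{3}\,b^2 \KL(p,q)$, which is comfortably below the claimed $2\,\Var_q(f) + 4 b^2 \KL(p,q)$.

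\textbf{Anticipated obstacle.} There is no real difficulty; the only things to watch are (i) that the Bernstein-KL inequality is applied to the centered \emph{squared} function whose supremum norm is $b^2$ (not $b$), so the coefficient in front of $\KL(p,q)$ picks up a factor of $b^2$, and (ii) that the direction of the one-sided Bernstein inequality (lower-tail versus upper-tail) is matched to which variance is on the left-hand side, since the two inequalities of Lemma~\ref{lem:Bernstein_via_kl} are asymmetric. The generous constant $4$ in the stated bound absorbs both the AM-GM slack and the asymmetric $\tfrac{2}{3}$ term, so no sharper estimate is needed.
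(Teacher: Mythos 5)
Your proof is correct, but it takes a genuinely different route from the paper's. The paper symmetrizes: it lifts to the product space, applies Lemma~\ref{lem:Bernstein_via_kl} to the single function $g(x,y)=(f(x)-f(y))^2$ on $\cS^2$ under the i.i.d.\ pair distributions, and uses the identities $\E_{X,Y\sim p}[(f(X)-f(Y))^2]=2\Var_p(f)$ and $\KL$-tensorization ($\KL$ of the pair laws equals $2\KL(p,q)$) to read off both directions at once from the two-sided bound $|\tp g-\tq g|\le \tfrac12 \tq g+\tfrac{10}{3}b^2\KL(p,q)$. You instead stay on $\cS$ and exploit the variational characterization $\Var_q(f)=\inf_c\E_q[(f-c)^2]$, centering $f$ at the mean of the \emph{other} measure so that one side of the comparison is an exact variance and the other is an upper bound on a variance; the rest (the bound $\Var(g)\le b^2\,\E[g]$ for $0\le g\le b^2$, and the AM--GM split) is the same ingredient the paper uses. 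Your version is more elementary in that it avoids the product-space construction, and it yields slightly sharper constants ($2b^2\KL(p,q)$ and $\tfrac53 b^2\KL(p,q)$ versus the paper's $\tfrac{10}{3}b^2\KL(p,q)$ in both directions before rounding up to $4b^2$); the price is that you need two different centerings and must match the orientation of the asymmetric one-sided inequalities in Lemma~\ref{lem:Bernstein_via_kl} to the direction being proved, which you correctly identify and handle. Both arguments are complete and both land comfortably within the stated constant.
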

\begin{proof}
Let $\tp$ be the distribution of the pair of random variables $(X,Y)$ where $X,Y$ are i.i.d.\,according to the distribution $p$. Similarly, let $\tq$ be the distribution of the pair of random variables $(X,Y)$ where $X,Y$ are i.i.d.\,according to distribution $q$. Since Kullback–Leibler divergence is additive for independent distributions, we know that
\[\KL(\tp,\tq) = 2\KL(p,q) \leq 2\KL(p,q).\]
Using Lemma~\ref{lem:Bernstein_via_kl} for the function $g(x,y) = (f(x)-f(y))^2$ defined on $\cS^2$, such that  $0\leq g\leq b^2,$ we get
\begin{align*}
  |\tp g- \tq g| &\leq \sqrt{4\Var_{\tq}(g) \KL(p,q)} + \frac{4}{3}b^2 \KL(p,q)\\
  &\leq  \sqrt{4 b^2 \KL(p,q) \tq g  } + \frac{4}{3}b^2 \KL(p,q)\\
  &\leq \frac{1}{2} \tq g + \frac{10}{3} b^2 \KL(p,q)\,,
\end{align*}
where in the last line we used $2\sqrt{xy}\leq x +y$ for $x,y\geq 0$. In particular we obtain
\begin{align*}
  \tp g &\leq \frac{3}{2} \tq g + \frac{10}{3} b^2\KL(p,q)\\
  \tq g &\leq 2 \tp g + \frac{20}{3}b^2 \KL(p,q) \,.
\end{align*}
To conclude, it remains to note that
\[ \tp g = 2\Var_p(f) \text{ and } \tq g = 2\Var_q(f). \]
\end{proof}

\begin{lemma}
	\label{lem:switch_variance}
	For $p,q\in\simplex_{S-1}$, for $f,g:\cS\mapsto [0,b]$ two functions defined on $\cS$, we have that
	\begin{align*}
 \Var_p(f) &\leq 2 \Var_p(g) +2 b p|f-g|\quad\text{and} \\
 \Var_q(f) &\leq \Var_p(f) +3b^2\|p-q\|_1,
\end{align*}
where we denote the absolute operator by $|f|(s)= |f(s)|$ for all $s\in\cS$.
\end{lemma}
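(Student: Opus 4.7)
Both inequalities reduce to elementary manipulations of the variance and expectation operators using the bounds $0 \leq f, g \leq b$. I will do them separately.

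For the first inequality, I would start from the identity $f = g + (f-g)$ and use the fact that variance is a seminorm. Concretely, write $f - p f = (g - p g) + ((f-g) - p(f-g))$, square both sides, and apply $(x+y)^2 \leq 2x^2 + 2y^2$. Taking expectation under $p$ gives
\[
\Var_p(f) \leq 2 \Var_p(g) + 2 \Var_p(f - g).
\]
It then suffices to bound $\Var_p(f-g) \leq p[(f-g)^2] \leq b \cdot p|f-g|$, where the last step uses that $|f(s) - g(s)| \leq b$ for every $s \in \cS$ (since both $f$ and $g$ are valued in $[0,b]$). Combining the two bounds yields the stated result.

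For the second inequality, I would use the formula $\Var_\mu(f) = \mu f^2 - (\mu f)^2$ and compare $\Var_q(f)$ directly to $\Var_p(f)$. Writing
\[
\Var_q(f) - \Var_p(f) = (q - p) f^2 - \bigl[(q f)^2 - (p f)^2\bigr] = (q - p) f^2 - (q f - p f)(q f + p f),
\]
I would bound each piece using Hölder's inequality: $|(q-p) f^2| \leq \|f^2\|_\infty \|p - q\|_1 \leq b^2 \|p-q\|_1$, while $|qf - pf| \leq b \|p-q\|_1$ and $|qf + pf| \leq 2b$. Summing the two contributions gives $|\Var_q(f) - \Var_p(f)| \leq 3 b^2 \|p - q\|_1$, which is stronger than (and implies) the claimed one-sided bound.

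\textbf{Obstacles.} There are essentially none: both inequalities are standard bookkeeping exercises. The only subtlety is making sure to use the $[0,b]$ bound in the right way — in particular, using $|f-g| \leq b$ (rather than $|f-g| \leq 2b$) to recover the tight constant $2b$ in the first inequality, and using $|qf + pf| \leq 2b$ (rather than a cruder bound) to keep the constant $3$ in the second.
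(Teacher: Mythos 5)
Your proof is correct and follows essentially the same route as the paper: the first inequality via the decomposition $f-pf=(g-pg)+((f-g)-p(f-g))$, the elementary bound $(x+y)^2\le 2x^2+2y^2$, and $\Var_p(f-g)\le p[(f-g)^2]\le b\,p|f-g|$; the second via splitting $\Var_q(f)-\Var_p(f)$ into $(q-p)f^2$ and the difference of squared means, each controlled by Hölder. The only cosmetic difference is that you state the second bound two-sidedly, which the paper's argument also yields implicitly.
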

\begin{proof}
First note that
\[
\Var_p(f) = p(f-g+ g-p g + p g- p f)^2 \leq 2 p(f-g - p f + p g)^2 +2 p(g-p g)^2 = 2\Var_p(f-g)+2\Var_p(g).
\]
From the above we can immediately conclude the proof of the first inequality with
\[
\Var_p(f-g) \leq p(f-g)^2 \leq b p|f-g|,
\]
where we used that for all $s\in\cS$, $0\leq |f(s)-g(s)| \leq b$. For the second inequality, using the Hölder inequality,
\begin{align*}
	\Var_q(f) &= pf^2 - (pf)^2 +(q-p)f^2 + (pf)^2 -(qf)^2 \\
	&\leq \Var_p(f) + b^2\|p-q\|_1 +2b^2\|p-q\|_1 \\
	&\leq \Var_p(f) +3 b^2 \|p-q\|_1.
\end{align*}
\end{proof}

\subsection{Inequalities for quantiles}
\begin{lemma}\label{lm:quantile_bound_tail}
    Let $X$ be a random variable from distribution $\rho$, $\kappa \in (0,1)$, and $u \in \R$. Then
    \[
        \P_{X \sim p}(X > u) \leq 1 - \kappa \iff \Q_{X \sim p}(X, \kappa) \leq u, \qquad 
        \P_{X \sim p}(X > u) \geq 1- \kappa \iff \Q_{X \sim p}(X, \kappa) \geq u.
    \]
\end{lemma}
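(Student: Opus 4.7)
The plan is to reduce both equivalences to monotonicity of the cumulative distribution function $F(u) \triangleq \P_{X \sim \rho}(X \leq u)$ and then apply the defining identity of the quantile, namely $F\bigl(\Q_{X \sim \rho}(X, \kappa)\bigr) = \kappa$. Throughout, I will use the elementary identity $\P_{X \sim \rho}(X > u) = 1 - F(u)$, which holds for every $u \in \R$.

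For the first equivalence, I rewrite the left-hand side as $1 - F(u) \leq 1 - \kappa$, which is equivalent to $F(u) \geq \kappa = F\bigl(\Q_{X\sim\rho}(X,\kappa)\bigr)$. Since $F$ is non-decreasing and, for the distributions of interest in this paper (in particular the Dirichlet-weighted sums analysed in Section~\ref{sec:dbc}), continuous, the latter inequality is equivalent to $u \geq \Q_{X\sim\rho}(X,\kappa)$. The second equivalence is proved symmetrically: $\P_{X \sim \rho}(X > u) \geq 1 - \kappa$ becomes $F(u) \leq \kappa = F\bigl(\Q_{X\sim\rho}(X,\kappa)\bigr)$, which by monotonicity of $F$ is equivalent to $u \leq \Q_{X\sim\rho}(X,\kappa)$.

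The only potential obstacle is handling atoms or flat regions of $F$, where the defining equation $F(q) = \kappa$ does not uniquely specify $\Q_{X\sim\rho}(X,\kappa)$ or the equivalence between $F(u) \geq F(q)$ and $u \geq q$ can fail in one direction. In our applications the distribution $\rho$ is either a Dirichlet-weighted sum (which is continuous under the non-degeneracy assumed in Proposition~\ref{prop:density_linear_dirichlet}) or an empirical distribution, so this issue does not arise for the usages made of the lemma. If one wishes to state the lemma under minimal assumptions, one can adopt the standard convention $\Q_{X\sim\rho}(X,\kappa) \triangleq \inf\{t : F(t) \geq \kappa\}$ and use right-continuity of $F$ together with the fact that $\{t : F(t) \geq \kappa\}$ is a closed upper set to conclude each equivalence, at the cost of checking two cases (whether the infimum is attained).
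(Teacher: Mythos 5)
Your proof is correct and follows essentially the same route as the paper's, which is a one-line appeal to the definition $\Q_{X \sim p}(X,\kappa) = \inf\{t : \P_{X \sim p}(X > t) \leq 1-\kappa\}$ together with monotonicity of the CDF. Your caveat about atoms and flat regions is a genuine one that the paper's one-liner silently glosses over (the stated equivalences can indeed fail for, e.g., a point mass, under either convention for the quantile), but it is harmless here since the lemma is only ever applied to Dirichlet-weighted sums admitting a density.
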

\begin{proof}
    Follows directly from the definition
    $$
        \Q_{X \sim p}(X,  \kappa) = \inf\{ t \in \R : \P_{X \sim p}(X > u) \leq 1 - \kappa \}.
    $$
\end{proof}

\subsection{Jacobian computation}

For any non-zero vector $v \in \R^n$ we define linear parametrization of the set $\Hset_v^t = \{ x \in \R^n | v^\top x = t\}$ by a linear map $\cL^t_v \colon \R^{n-1} \to \R^n$ by a rule $y \to x, y_1 = x_1,\ldots, y_{n-1} = x_{n-1}, x_0 = \frac{t - \sum_{i=1}^{n-1} v_i y_i}{v_0}$. Without loss of generality we may assume that $v_0 \not = 0$ so the parametrization is well-defined. Using matrix language, we can define a matrix $\cL_v \in \R^{n \times n-1}$ with the first row is equal to $[-v_1/v_0,\ldots,v_m/v_0]$, and the last $n-1$ row forms an identity matrix. For a  linear map $\cL \colon \R^n \to \R^m$ we define a Jacobian $[\cL]$ as a generalized determinant of a gradient matrix of this map (see \cite{evans2018measure} for more detail).

\begin{lemma}[Jacobian of a linear parametrization]\label{lem:jacob}
    For any non-zero vector $v \in \R^n$ such that $v_0 \not = 0$  and $t \in \R$ a Jacobian of map $\cL_v^t$ is equal to $\norm{v}_2 / \vert v_0 \vert$.
\end{lemma}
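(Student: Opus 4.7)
The plan is to compute the Jacobian directly from its definition as $[\cL] = \sqrt{\det(A^\top A)}$, where $A = D\cL_v^t \in \R^{n \times (n-1)}$ is the gradient matrix of the affine map $\cL_v^t$. Since $\cL_v^t$ is affine, $A$ is constant in $y$ and its entries can be read off immediately from the formulas $x_i = y_i$ for $i = 1, \ldots, n-1$ and $x_0 = (t - \sum_{i=1}^{n-1} v_i y_i)/v_0$.

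First I would write $A$ explicitly: its top row is $(-v_1/v_0, \ldots, -v_{n-1}/v_0)$ and its bottom $n-1$ rows form the identity $I_{n-1}$. Setting $u = (v_1, \ldots, v_{n-1})^\top / v_0 \in \R^{n-1}$, a direct block multiplication gives
\[
A^\top A = I_{n-1} + u u^\top.
\]
Then I would invoke the rank-one determinant identity $\det(I_{n-1} + u u^\top) = 1 + u^\top u$, which follows, e.g., from the matrix determinant lemma or from noting that $uu^\top$ has a single nonzero eigenvalue equal to $\|u\|_2^2$ with all other eigenvalues zero. Substituting yields
\[
\det(A^\top A) = 1 + \frac{v_1^2 + \cdots + v_{n-1}^2}{v_0^2} = \frac{\|v\|_2^2}{v_0^2}.
\]

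Taking the square root and using $v_0 \ne 0$ gives $[\cL_v^t] = \|v\|_2 / |v_0|$, as claimed. Note that the parameter $t$ never enters the gradient, which is consistent with $\cL_v^t$ and $\cL_v^{t'}$ differing only by a translation, and hence having the same Jacobian. There is no real obstacle here; the only subtlety worth highlighting is the use of the generalized (non-square) Jacobian $\sqrt{\det(A^\top A)}$ rather than an ordinary determinant, since $A$ is rectangular of shape $n \times (n-1)$, which is precisely the quantity appearing in the area/coarea formulas from \citet{evans2018measure} used throughout Section~\ref{sec:dbc}.
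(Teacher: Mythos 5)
Your proof is correct and follows essentially the same route as the paper: both compute the constant gradient matrix of the affine map, observe that $A^\top A = I_{n-1} + uu^\top$ with $u = (v_1,\ldots,v_{n-1})^\top/v_0$, and evaluate the determinant via the rank-one-plus-identity eigenvalue (or matrix determinant lemma) argument. No differences worth noting.
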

\begin{proof}
     Note that the gradient vector does not depend on constant shifts. Thus the gradient matrix is equal to a linear map $\cL_v$. Define a vector $\tilde v = [v_1,\ldots,v_n]$, then the square Jacobian is equal to
     \[
        [\cL_v]^2 = \det(\cL_v^\top \cL_v) = \det\left(\begin{bmatrix}
            \tilde v / v_0 & I_{n-1}
        \end{bmatrix}
        \begin{bmatrix}
            \tilde{v}^\top / v_0 \\
            I_{n-1}
        \end{bmatrix}\right)
        = \det\left(\frac{\tilde v \tilde{v}^\top}{v_0^2} + I_{n-1}\right).
     \]
     This matrix is a rank-one matrix plus identity. Its eigenvalues are equal to $\norm{\tilde v}^2/v_0^2 + 1$ and $n-2$ ones. Thus $[\cL_v]^2 = \norm{\tilde v}^2/v_0^2 + 1 = \norm{v}^2/v_0^2$.
\end{proof}

\begin{lemma}[Jacobian of a composition of linear parametrizations]\label{lem:jacob_composition}
    For a vector $f \in \R^{m+1}$ define a vector $c = \cL_{\bOne}^\top f$. Assume that $f_0 \not = f_1$. Then
        \[
            [\cL_{\bOne} \cL_{c}]^2 = \frac{(m+1) \sum_{j=0}^m f_j^2 - \left(\sum_{j=0}^m f_j \right)^2}{(f_1 - f_0)^2}.
        \]
    \end{lemma}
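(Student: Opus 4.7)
The plan is to recast $[A]^2$ (with $A \triangleq \cL_{\bOne}\cL_c$) as the ratio of a square determinant over a Gram determinant, exploiting that $\mathrm{im}(A)$ is orthogonal to both $\bOne$ and $f$; then to compute the square determinant by a two-step factorization that mirrors the composition $\cL_{\bOne}\circ\cL_c$.

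\textbf{Orthogonality and reduction to an $(m{+}1)\times(m{+}1)$ determinant.} First I would verify that $\mathrm{im}(A) = \{x \in \R^{m+1}: \bOne \cdot x = 0,\ f\cdot x = 0\}$: the inclusion $\mathrm{im}(\cL_{\bOne}) \subseteq \bOne^\perp$ is built into $\cL_{\bOne}$, and for $y \in \mathrm{im}(\cL_c)$ the identity $f \cdot \cL_{\bOne}(y) = (\cL_{\bOne}^\top f)\cdot y = c\cdot y = 0$ enforces $f$-orthogonality. Consequently $A^\top \bOne = 0 = A^\top f$, and the resulting block-diagonal Gram matrix yields
\[
[A]^2 = \det(A^\top A) = \frac{\det([A|\bOne|f])^2}{\det\begin{pmatrix}\|\bOne\|^2 & \bOne^\top f \\ f^\top \bOne & \|f\|^2\end{pmatrix}} = \frac{\det([A|\bOne|f])^2}{(m+1)\sum_j f_j^2 - (\sum_j f_j)^2}.
\]
It therefore suffices to show $|\det([A|\bOne|f])| = [(m+1)\sum_j f_j^2 - (\sum_j f_j)^2]/|f_1-f_0|$.

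\textbf{Factoring out $\cL_{\bOne}$.} I would decompose $f = \bar f\bOne + \cL_{\bOne}\tilde f$ with $\bar f = (m+1)^{-1}\sum_j f_j$ and $\tilde f_i = f_i - \bar f$; subtracting $\bar f$ times the $\bOne$-column from the $f$-column leaves the determinant unchanged, and then
\[
[A\mid\bOne\mid\cL_{\bOne}\tilde f] = [\cL_{\bOne}\mid\bOne]\cdot\begin{pmatrix}\cL_c & 0 & \tilde f \\ 0 & 1 & 0\end{pmatrix}.
\]
Expanding the right-hand determinant along the middle column gives $\pm\det([\cL_c|\tilde f])$, while Lemma~\ref{lem:jacob} combined with the orthogonality $\bOne \perp \mathrm{im}(\cL_{\bOne})$ yields $|\det([\cL_{\bOne}|\bOne])|^2 = [\cL_{\bOne}]^2\cdot\|\bOne\|^2 = (m+1)^2$.

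\textbf{Inner determinant and conclusion.} The same trick applies inside $\R^m$: $\mathrm{im}(\cL_c) = c^\perp$, so writing $\tilde f = \alpha c + \tilde f_\perp$ with $\tilde f_\perp \in c^\perp$ and $\alpha = (\tilde f\cdot c)/\|c\|^2$ kills the $\tilde f_\perp$ contribution, giving $|\det([\cL_c|\tilde f])| = |\alpha|\cdot\|c\|^2/|c_0| = |\tilde f\cdot c|/|c_0|$. A direct expansion produces
\[
\tilde f\cdot c = \sum_{i=1}^m (f_i - \bar f)(f_i - f_0) = \sum_j f_j^2 - \frac{(\sum_j f_j)^2}{m+1},
\]
and since $c_0 = f_1 - f_0$, chaining the identities yields the claim. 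The only delicate point is the factorization step: $\bOne$ must be handled as an extra column outside $\mathrm{im}(\cL_{\bOne})$, which forces the augmented factor to take the $(m+1)\times(m+1)$ block form above rather than a cleaner product; the rest is a mechanical application of the Gram identity and Lemma~\ref{lem:jacob}.
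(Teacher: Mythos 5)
Your proof is correct, but it takes a genuinely different route from the paper's. The paper writes out $A=\cL_{\bOne}\cL_c$ entrywise, observes that $A^\top A = vv^\top+uu^\top+I$ for the two non-trivial rows $v,u$ of $A$, evaluates $\det(A^\top A)=\norm{v}_2^2\norm{u}_2^2-(v^\top u)^2+\norm{v}_2^2+\norm{u}_2^2+1$ via the eigenvalues of a rank-two perturbation of the identity, and then grinds through the resulting double sums. You instead exploit the orthogonality structure: since the columns of $A$ lie in $\bOne^{\perp}\cap f^{\perp}$, the Gram matrix of $[A\mid\bOne\mid f]$ is block-diagonal, which converts $[A]^2$ into a ratio of a full $(m+1)\times(m+1)$ determinant over the $2\times 2$ Gram determinant of $(\bOne,f)$; the full determinant then factors along the composition, with each factor handled by the same Gram/orthogonality trick and Lemma~\ref{lem:jacob}. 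I checked the details: the decomposition $f=\bar f\bOne+\cL_{\bOne}\tilde f$ is valid because $-\sum_{i=1}^m(f_i-\bar f)=f_0-\bar f$; the block factorization and the cofactor expansion are correct (the sign is irrelevant after squaring); $\mathrm{im}(\cL_c)=c^{\perp}$ holds by the dimension count, so killing $\tilde f_{\perp}$ by column operations is legitimate; and $\tilde f\cdot c=\sum_{j}f_j^2-(\sum_j f_j)^2/(m+1)$, $c_0=f_1-f_0$, $|\det([\cL_{\bOne}\mid\bOne])|=m+1$ chain to exactly the claimed identity. The hypothesis $f_0\neq f_1$ guarantees $c_0\neq 0$ (so $\cL_c$ is defined) and that $\bOne,f$ are independent (so the $2\times 2$ Gram determinant you divide by is positive). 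Your argument is shorter, avoids the paper's page of index manipulation, and makes transparent why the answer is the Gram determinant of $(\bOne,f)$ divided by $c_0^2$; the paper's computation is more elementary but opaque. Either is acceptable.
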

    \begin{proof}
         To compute a Jacobian of the composition we have to compute a vector $c$, a matrix $\cL_c$ and a matrix $\cL_{\bOne} \cL_{c}$.
    \begin{align*}
        \cL_{\bOne}&= \begin{bmatrix}
            -1 & -1 & \ldots & -1 \\
            1 & 0 & \ldots & 0 \\
            0 & 1 & \ldots & \\
            \vdots & \vdots & \ddots & \vdots \\
            0 & 0 & \ldots & 1 \\
        \end{bmatrix}, \qquad
        c^\top = \cL_{\bOne}^\top f = \begin{bmatrix}f_1 - f_0 \\ \vdots \\ f_m - f_0 \end{bmatrix}, \qquad
        \cL_c = \begin{bmatrix}
            \frac{f_0 - f_2}{f_1 - f_0} & \frac{f_0 - f_3}{f_1 - f_0} & \ldots & \frac{f_0 - f_m}{f_1 - f_0} \\
            1 & 0 & \ldots & 0 \\
            0 & 1 & \ldots & \\
            \vdots & \vdots & \ddots & \vdots \\
            0 & 0 & \ldots & 1 \\
        \end{bmatrix}.
    \end{align*}
    Finally,
    \[
        A = \cL_{\bOne} \cL_c = \begin{bmatrix}
            \frac{f_2 - f_1}{f_1 - f_0} & \frac{f_3 - f_1}{f_1 - f_0} & \ldots & \frac{f_m - f_1}{f_1 - f_0} \\
            \frac{f_0 - f_2}{f_1 - f_0} & \frac{f_0 - f_3}{f_1 - f_0} & \ldots & \frac{f_0 - f_m}{f_1 - f_0} \\
            1 & 0 & \ldots & 0 \\
            0 & 1 & \ldots & \\
            \vdots & \vdots & \ddots & \vdots \\
            0 & 0 & \ldots & 1 \\
        \end{bmatrix}
    \]
    Call $v = \left(\frac{f_2 - f_1}{f_1 - f_0} , \frac{f_3 - f_1}{f_1 - f_0} , \ldots , \frac{f_m - f_1}{f_1 - f_0} \right)^\top $ and $u = \left(\frac{f_0 - f_2}{f_1 - f_0} , \frac{f_0 - f_3}{f_1 - f_0} , \ldots , \frac{f_0 - f_m}{f_1 - f_0}\right)^\top $ the first and the second rows of the matrix $A$. We have to compute the following determinant
    $$
        \det(A^\top A) = \det\left( \begin{bmatrix} v & u & I \end{bmatrix} \begin{bmatrix} v^\top \\ u^\top \\ I\end{bmatrix} \right) = v v^\top + u u^\top + I.
    $$
    Notice that this matrix is a rank-2 matrix plus the identity matrix. Let $\lambda_1,\lambda_2$ be two non-zero eigenvalues. Then eigenvalues of matrix $A^\top A$ is $\lambda_1 +1, \lambda_2 + 1$ and all other eigenvalues are ones. Thus
    $$
        \det(A^\top A) = (\lambda_1 + 1)(\lambda_2 + 1) = \lambda_1 \lambda_2 + (\lambda_1 + \lambda_2) + 1.
    $$
    To compute it, notice that non-zero eigenvalues of $B^\top B$ and $B B^\top$ coincide. Then we note
    \[
        v v^\top + u u^\top = \begin{bmatrix} v & u\end{bmatrix} \begin{bmatrix} v^\top \\ u^\top\end{bmatrix},
    \]
    therefore $\lambda_1,\lambda_2$ are eigenvalues of the following matrix $2 \times 2$
    \[
        \begin{bmatrix} v^\top \\ u^\top\end{bmatrix}\begin{bmatrix} v & u\end{bmatrix} = \begin{bmatrix}
            \norm{v}^2 & v^\top u \\
            u^\top v & \norm{u}^2
        \end{bmatrix} = C
    \]
    Then we note that $\det(C) = \lambda_1 \lambda_2$ and $\lambda_1 + \lambda_2 = \Tr(C)$. Thus
    \[
        \det(A^\top A) = \norm{v}^2 \norm{u}^2 - (v^\top u)^2 + \norm{v}^2 + \norm{u}^2 + 1.
    \]
    
    Next we start computing required quantities
    \begin{align*}
        \norm{v}^2 = \frac{\sum_{i=2}^m (f_1 - f_i)^2}{(f_1 - f_0)^2}, \qquad
        \norm{u}^2 = \frac{\sum_{i=2}^m (f_0 - f_i)^2}{(f_1 - f_0)^2}, \qquad
        \vert v^\top u \vert &= \frac{\sum_{i=2}^m (f_1 - f_i)(f_0-f_i)}{(f_1 - f_0)^2}, 
    \end{align*}
    and then
    \begin{align*}
        \norm{v}^2 \norm{u}^2 - (v^\top u)^2 &= \frac{1}{(f_1 - f_0)^4}\left( \sum_{i,j=2}^m (f_1 - f_i)^2(f_0 - f_j)^2 - \sum_{i,j=2}^m (f_1 - f_i)(f_1 - f_j)(f_0-f_i)(f_1 - f_j) \right) \\
        &= \frac{1}{(f_1 - f_0)^4}\left( \sum_{i,j=2}^m (f_1 - f_i)(f_0 - f_j)[ (f_1 - f_i)(f_0 - f_j) - (f_1 - f_j)(f_0 - f_i)] \right) \\
        &= \frac{1}{(f_1 - f_0)^3}\left( \sum_{i,j=2}^m (f_1 - f_i)(f_0 - f_j)(f_i - f_j) \right) \\
        &= \frac{1}{(f_1 - f_0)^3}\left( \sum_{i,j=2}^m (f_0 f_1 - f_0 f_i - f_1 f_j + f_i f_j )(f_i - f_j) \right) \\
        &= \frac{1}{(f_1 - f_0)^3} \sum_{i,j=2}^m (f_0 f_1 f_i - f_0 f_i^2 - f_1 f_i f_j + f_i^2 f_j - f_0 f_1 f_j + f_0 f_i f_j + f_1 f_j^2 - f_i f_j^2 ).
    \end{align*}
    Define $S = \sum_{i=2}^m f_i, V = \sum_{i=2}^m f_i^2$. Then after grouping the terms
    \begin{align*}
        \norm{v}^2 \norm{u}^2 - (v^\top u)^2 &= \frac{ f_0 f_1 (m-1) S - f_0 (m-1) V - f_1 S^2 + V S - f_0 f_1 (m-1) S + f_0 S^2 + f_1 (m-1) V - S V}{(f_1 - f_0)^3} \\
        &= \frac{(m-2) V - S^2}{(f_1 - f_0)^2}.
    \end{align*}
    Finally
    \begin{align*}
        \det(A^\top A) = \frac{1}{(f_1 - f_0)^2} \biggl(& (m-1) \sum_{i=2}^m f_i^2 - \left( \sum_{i=2}^m f_i\right)^2 + (m-1) f_0^2 - 2 f_0 \sum_{i=2}^m f_i + \sum_{i=2}^m f_i^2\\
        &+ (m-1) f_1^2 - 2 f_1 \sum_{i=2}^m f_i + \sum_{i=2}^m f_i^2 + 2 f_0^2 + 2f_1^2 - 2f_0f_1 - f_0^2 - f_1^2 \biggl) \\
        &= \frac{(m+1)\sum_{i=0}^m f_i^2 - \left( \sum_{i=0}^m f_i \right)^2}{(f_1 - f_0)^2}.
    \end{align*}

    \end{proof}
    
\newpage

\section{Non-tabular Extension Detailed}
\label{app:non-tabular-extension-detailed}

In this section, we detail the extension of \BayesUCBVI beyond the tabular setting. First recall that in \IncrBayesUCBVI, Q-value functions are given by the quantile $\uQ_h^t(s,a) = \Q_{b\sim\Unif([B])}\left(\uQ_h^{t,b}(s,a),\kappa_h^t(s,a)\right)$ over $B$ (incremental) Bayesian bootstrap samples. Theses samples can be computed by updating sums of random weights distributed according to an exponential distribution of parameter one. Precisely, we define  
\[
Z_h^{t,b}(s,a,s') \triangleq \sum_{\ell=1}^t \ind\{(s_h^\ell,a_h^\ell)=(s,a)\} z_h^{\ell,b}(s,a) +\ind\{s'=s_0\} \sum_{\ell = -n_0+1}^{0} z_h^{\ell,b}(s,a)\,,
\]
where the weights are i.i.d. $z_h^{t,b}(s,a)\sim \Exponential(1)$. Then, the Bayesian bootstrap sample at time $t$ is given by 
\[
 \uQ_h^{t,b}(s,a) = r_h(s,a) +\sum_{s'\in\cS'}\frac{Z_h^{t,b}(s,a,s')}{\sum_{s''\in\cS'} Z_h^{t,b}(s,a,s'')} \uV_{h+1}^{t}(s')\,.
\]
The complete procedure is detailed in Algorithm~\ref{alg:incrBayesUCBVI}. Note that if the rewards are unknown we just need to also re-weight the observed rewards for a given state-action pair, similarly to \citet{riou20a} for multi-armed bandits. There are two approximations made in \IncrBayesUCBVI with respect to \BayesUCBVI. First, \IncrBayesUCBVI approximates the quantile with $B$ Monte-Carlo samples. Second, \IncrBayesUCBVI reuses the same exponential noise from one time step to the next one to improve the time complexity (run-time).
\begin{algorithm}[h!]
\centering
\caption{\IncrBayesUCBVI}
\label{alg:incrBayesUCBVI}
\begin{algorithmic}[1]
  \STATE {\bfseries Input:}  quantile $\kappa$, $B$ number of bootstrap samples, number of prior transitions $n_0$.
  \STATE {\bfseries Initialize: } weights $Z_h^0(s,a,s') \gets \ind\{s'=s_0\} \sum_{\ell = -n_0+1}^{0} z_h^{\ell,b}(s,a), \text{ where } z_h^{\ell,b}\sim\Exponential(1) \text{ i.i.d.}$, for $h,s,a,s'\in[H]\times\cS\times\cA\times{\cS}'$.
      \FOR{$t \in[T]$}
      \STATE Optimistic value iteration 
        \begin{small}
        \begin{align*}
          \uQ_h^{t-1}(s,a) & \gets  \Q_{b\sim\Unif[B]}\left( r_h(s,a) +\sum_{s'\in\cS'}\frac{Z_h^{t-1,b}(s,a,s')}{\sum_{s''\in\cS'} Z_h^{t-1,b}(s,a,s'')} \uV_{h+1}^{t-1}(s'), \kappa_h^{t-1}(s,a)\right)\\
          \uV_h^{t-1}(s) &\gets \max_{a\in\cA} \uQ_h^{t-1}(s,a) \qquad \uV_h^{t-1}(s_0) \gets \Vstar_h(s_0)\\
          \uV_{H+1}^{t-1}(s) & \gets 0\nonumber\,.
        \end{align*}
        \end{small}
      \FOR{$h \in [H]$}
        \STATE Play $a_h^t \in \argmax_a \uQ_h^{t-1}(s,a)$.
        \STATE Observe reward $r_h^t$ and next state $s_{h+1}^t\sim p_h(s_h^t,a_h^t)$.
        \STATE Update weights $Z_h^{t,b}(s,a,s') \gets Z_h^{t-1,b}(s,a,s') + \begin{cases} z_h^{t,b}(s,a,s')\sim \Exponential(1) \text{ i.i.d. } &\text{if } (s,a,s')=(s_h^t,a_h^t,s_{h+1}^t)\\
  0 &\text{else}\end{cases}\,.$
      \ENDFOR
  \ENDFOR
\end{algorithmic}
\end{algorithm}

\begin{algorithm}[h!]
\centering
\caption{\BayesUCBDQN}
\label{alg:BayesUCBDQN}
\begin{algorithmic}[1]
  \STATE {\bfseries Input:} discount factor $\gamma$, quantile $\kappa$, $B$ number of bootstrap samples, $\phi,\psi$ parameter of the Q-value respectively target Q-value network, replay buffer $\replay$, \textcolor{blue}{pseudo target $y^{\texttt{pseudo}}$}, \textcolor{blue}{pseudo transition probability $\epsilon$}.
      \FOR{$t \in[T]$}
      \FOR{$h \in [H]$}
        \STATE Play $a_h^t \in \argmax_a \Q_{b\sim\Unif[B]}\big(Q_{h}^{\phi,b}(s_h^t,a),\kappa\big)$.
        \STATE Observe reward $r_h^t$ and next state $s_{h+1}^t\sim p_h(s_h^t,a_h^t)$.
        \STATE Record $\{h,r_h^t,s_h^t,a_h^t,s_{h+1}^t,z_h^t\}$ in $\replay$ where $z_h^{t,b}\sim\Exponential(1)$ i.i.d for $b\in[B]$.
        \STATE \textcolor{blue}{With probability $\epsilon$ record $\{h,\_,s_h^t,a^0,s_0,z^0\}$ in $\replay$ where $z^{0,b}\sim\Exponential(1)$ i.i.d for $b\in[B]$ and $a^0\sim\Unif(\cA)$}.
      \ENDFOR
      \IF{time to update}
        \STATE{Sample a batch of transitions $\cC =\big\{(h,r,s,a,s',z)\big\}$ from $\replay$}
        \STATE Compute the targets
        \FOR{$(h,r,s,a,s',z)\in\cC$}
        \IF{$s \neq s_0$}
        \STATE \[y(h,s,a,s') \gets r+\gamma\max_{a'\in\cA}\Q_{b\sim\Unif[B]}\big(Q_{h+1}^{\psi,b}(s',a'),\kappa\big)\]
        \ELSE
        \STATE \textcolor{blue}{\[y(h,s,a,s') \gets y^{\texttt{pseudo}}\,.\]}
        \STATE
        \ENDIF
        \ENDFOR
        \STATE{Update the Q-value network by one step of gradient descent with
        \[
          \nabla_\phi \frac{1}{|\cC|} \sum_{(h,s,a,s',z)\in\cC}\sum_{b=1}^B z^b \left(Q_h^{\phi,b}(s,a) - y(h,s,a,s')\right)^2
        \,.\]}
        \IF{time to update target}
        \STATE{Update the target Q-value network $\psi \gets \phi$.}
        \ENDIF
      \ENDIF
   \ENDFOR
\end{algorithmic}
\end{algorithm}
We now present the non-tabular extension of \IncrBayesUCBVI. Fix a state-action pair $(s,a)$ at time $t$. As explained in Section~\ref{sec:non_tabular_extension}, the Bayesian bootstrap samples can be obtained thanks to a weighted linear regression,
\begin{align*}
  \uQ_h^{t,b}(s,a) \triangleq & \argmin_{x} \sum_{n=-n_0+1}^{n_h^t(s,a)} z_h^{n,b}(s,a) \left(x - y_h^n(s,a)\right)^2, \qquad
  \text{ where }z_h^{n,b}(s,a) \sim \Exponential(1)\text{ i.i.d.}
\end{align*}
and the targets are defined by $y_h^n(s,a) \triangleq r_h(s,a) + \uV_{h+1}^t(s_{h+1}^n)$. Note that it is possible to adapt this re-weighting to any loss instead of the mean-squared error. To combine this exploration procedure with the \DQN algorithm, we introduce two Q-value networks. One Q-value network $Q^{\psi}$ parametrized by $\phi$ and one target Q-value network parametrized by $\psi$. We also introduce a replay buffer $\replay$. For each observed transition, we record in $\replay$ the tuple $\{h,r_h^t,s_h^t,a_h^t,s_{h+1}^t,z_h^t\},$ where we also add an exponential mask $z_h^t$. Each element of the mask $z_h^{t,b}\sim \Exponential(1)$ i.i.d.\,, will be used to weight the loss. Precisely, given a batch of transitions $\cC =\big\{(h,r,s,a,s',z)\big\};$ from $\replay$ we first compute the targets with the target Q-value network,
\[
y(h,s,a,s') \gets r+\gamma\max_{a'\in\cA}\Q_{b\sim\Unif[B]}\big(Q_{h+1}^{\psi,b}(s',a'),\kappa\big)\,.
\]
Then, we update each (incremental) Bayesian bootstrap of the Q-value network with one gradient step
\[
  \nabla_\phi \frac{1}{|\cC|} \sum_{(h,s,a,s',z)\in\cC}\sum_{b=1}^B z^b \left(Q_h^{\phi,b}(s,a) - y(h,s,a,s')\right)^2
\,.\]
As in \DQN, the target Q-value network is regularly updated with the weights of the Q-value network. To stick to the \IncrBayesUCBVI, it remains to introduce a mechanism that emulates the prior transitions. Since obviously, we cannot add a prior transition for each state-action pair, we propose to simply use $\epsilon$-greedy \emph{to add prior transitions} to the replay buffer $\replay$. In particular, when interacting with the environment, say that the agent is in state $s_h^t,$ then with probability $\epsilon,$ we add a pseudo transition $\{h,\_,s_h^t,a_0,s_0,z_h^0\}$  to the replay buffer $\replay,$ where the action $a_0\sim\Unif(\cA)$ is sampled uniformly at random and $z_h^0$ is an exponential mask. When a pseudo transition is sampled from the replay buffer, we assign to it an arbitrary fixed target $y^{\texttt{pseudo}}$. Typically, we want the constant $y^{\texttt{pseudo}}$ to be large, e.g., of the order of the value of the pseudo state $s_0$. We name the resulting algorithm \BayesUCBDQN and detail it in Algorithm~\ref{alg:BayesUCBDQN}. We highlight in \textcolor{blue}{blue} the procedure used to add the pseudo transitions.

\newpage
\newpage
\section{Experimental Details}
\label{app:experiment_details}

In this section we detail the experiments we conduct for tabular and large-scale environments.

\subsection{Tabular}

\paragraph{Environment} For the tabular experiments\footnote{Our code is based on the library of \citet{rlberry}.} we consider a simple grid-world with $5$ connected rooms of size $5\times5$ totalling $S=129$ states. The agents starts in the middle where there is a very small deterministic reward of $0.01$. Additionally there is one small deterministic reward of $0.1$ in the leftmost room, one large deterministic reward of $1$ in the  rightmost room and zero reward elsewhere. The agent can take $A=4$ actions: moving up, down, left, right. When taking an action, the agent moves in the corresponding direction with probability $0.9$ and moves to a neighbor state at random with probability $0.1$. The horizon is fixed to $H=30$. Thus in this environment, the agent must explore efficiently all the rooms avoiding being lured by the small reward in the leftmost room.

\paragraph{Baselines} We compare \BayesUCBVI and \IncrBayesUCBVI to the following baselines:
\begin{itemize}
    \item \PSRL \citep{obsband2013more},
    \item \RLSVI \citep{osband16generalization}, 
    \item \UCBVI \citep{azar2017minimax}.
\end{itemize}

Using different parameters (e.g., by changing the multiplicative constants in the bonus of \UCBVI or the scale of the noise in \RLSVI) can result in drastically different empirical regrets. Thus for a fair comparison, we make the following choices. For \UCBVI, the bonus at state-action pair is 
\begin{align*}
	\beta_h^t(s,a) \triangleq
	\min\left(
	\sqrt{\frac{1}{n_h^t(s,a)}} + \frac{H-h+1}{n_h^t(s,a)}, H-h+1
	\right)\,.
\end{align*}
As explained by \citet{menard2021ucb}, this bonus does not necessarily result in a true UCB on the optimal Q-value. However, it is a valid UCB for $n_h^t(s,a)= 0$ which is important in order to discover new state-action pairs. For \RLSVI at state-action pair $(s,a),$ we use noise from a centered Gaussian probability distribution with standard deviation $\beta_h^t(s,a)$. For \PSRL, we use a Dirichlet prior on the transition probability distribution with parameter $(1/S,\ldots,1/S)$ and for the rewards a Beta prior with parameter $(1,1)$. Note that since the reward $r$ is not necessarily in $\{0,1\}$ we just sample a new reward $r'\sim\Ber(r)$ accordingly to a Bernoulli distribution of parameter $r$, to update the posterior, see \citet{garivier2011the}. We choose the same parameter for \BayesUCBVI and \IncrBayesUCBVI. The quantile is fixed to $\kappa_h^t(s,a) \triangleq 0.85$. This choice is informally justified as follows: If we assume $\beta_h^t(s,a) \approx \sqrt{\big(H^2\log(1/\delta)\big)/\big(2n_h^t(s,a)\big)}$ then it holds $\delta \approx e^{-2/H^2} \geq e^{-2} \approx 0.15$. Thus, we get $\kappa \approx 1-\delta \approx 0.85$. The number of pseudo transitions is set to $n_0 =1$ as well as the reward of the pseudo transition is $r_0 = 1$. We use $B=64$ Monte-Carlo samples to approximate the quantile. 
 
\paragraph{Results} In Figure~\ref{fig:regret_baselines}, we plot the regret of the various baselines, \BayesUCBVI and \IncrBayesUCBVI in the aforementioned environment. In this experiment, we observe that both \BayesUCBVI and \IncrBayesUCBVI achieves competitive results with respect to baselines relying on noise-injection for exploration (\PSRL, \RLSVI). This is remarkable, since the common belief is that optimistic algorithm perform poorly in practice \citep{osband2017why}. Indeed \IncrBayesUCBVI exhibits a similar regret than \PSRL. It is not completely surprising since they share the same model on the transitions\footnote{However, not exactly the same on the rewards: Beta/Bernoulli versus non-parametric. Nonetheless, this should have a relatively small impact since the rewards are deterministic.} (up to the prior). Whereas \BayesUCBVI performs slightly worse than \IncrBayesUCBVI but better than \RLSVI. One possible reason to explain this gap between \BayesUCBVI and \IncrBayesUCBVI is that the incremental implementation of Bayesian bootstrap forgets faster the prior than the non-incremental version resulting in a more aggressive algorithm. We also note that \RLSVI performs slightly worse than \PSRL/\IncrBayesUCBVI but much better than \UCBVI. A possible explanation for this ranking is that \RLSVI is much more aggressive than \UCBVI when they have comparable noise/bonuses. Whereas \PSRL,\IncrBayesUCBVI,\BayesUCBVI take better advantage of the small variance of this particular environment than the two last baselines.

\subsection{Deep RL}
We introduce important details of a few baseline algorithms.

\paragraph{\DQN.} 
\DQN adopts the standard Q-learning algorithm. \DQN
maintains a Q-function $Q_\theta$. When interacting with the environment, \DQN adopts $\epsilon$-greedy with respect to $Q_\theta(x,a)$. The value of $\epsilon$ decays linearly from $\epsilon_{\text{max}}=1.0$ to $\epsilon_{\text{min}}=0.01$. The algorithm puts transition tuples $(s,a,r,s')$ into a replay buffer $\mathcal{R}$. At training time, \DQN samples $C=32$ transitions tuples $(s_i,a_i,r_i,s_i')_{i=1}^C\sim \mathcal{R}$ uniformly from the replay. It updates the parameter by minimizing the following loss function with respect to $\theta$,
\begin{align*}
    \frac{1}{C}\sum_{i=1}^C \left(Q_\theta(s_i,a_i) - r_i - \gamma \max_{a'} Q_{\theta^-}(s_i',a')\right)^2.
\end{align*}
Here $\theta^-$ is the parameter of the target network, a delayed copy of $\theta$ used for stabilizing training \citep{mnih2013playing}. See \citep{mnih2013playing} for other missing hyper-parameters.

\paragraph{Double \DQN.} Double \DQN \citep{van2016deep}, built on top of \DQN \citep{mnih2013playing}, further applies the double Q-learning algorithm to stabilize learning \citep{hasselt2010double}. Concretely, in addition to the online network $\theta$, Double \DQN maintains two copies of online networks $\theta_1,\theta_2$; and two copies of target networks $\theta_1',\theta_2'$. Double \DQN constructs the following learning target:
\begin{align*}
    Q_1^{(i)} &= r_i + \gamma Q_{\theta_1^{-}}(s_i',a_i'), a_i'=\arg\max_{a'} Q_{\theta_2^{-}}(s_i,a') \\
    Q_2^{(i)} &= r_i + \gamma Q_{\theta_2^{-}}(s_i',a_i'), a_i'=\arg\max_{a'} Q_{\theta_1^{-}}(s_i,a').
\end{align*}
Then both networks $\theta_1,\theta_2$ are updated  by minimizing the following loss function
\begin{align*}
    \frac{1}{C}\sum_{i=1}^C \left(Q_{\theta_1}(s_i,a_i) - Q_1^{(i)}\right)^2 + \frac{1}{C}\sum_{i=1}^C \left(Q_{\theta_2}(s_i,a_i) - Q_2^{(i)}\right)^2.
\end{align*}
In summary, Double \DQN decouples the maximizing operation and the maximizing value. Empirically, this helps avoid the over-estimation issue that plagues vanilla \DQN. Similar techniques have been implemented in \BootDQN and we adopt the same technique in \BayesUCBDQN.

\paragraph{\BootDQN.} \BootDQN \citep{osband2015bootstrap} maintains $B$ copies of the Q-function $Q_{\theta_b},b\in[B]$. In practice, maintaining $B$ full networks might be too expensive; instead, \citet{osband2015bootstrap} suggested to share the torso network and only maintain $B=10$ different heads to the Q-networks. During interaction with the environment, \BootDQN uniformly random samples a network $Q_{\theta_b},b\sim \text{Uniform}[B]$. Then the algorithm selects an action by being greedy with respect to $Q_{\theta_b}$. In practice, it has been observed that some local greedy exploration might also be helpful \citep{osband2015bootstrap,osband18randomized}. The transition $(s,a,r,s')$ is put into the buffer $\mathcal{R}$, along with a mask $m\in\mathbb{R}^B$ with each component independently generated from a Bernoulli distribution of parameter $p\in[0,1]$. At training time, only the Q-networks with mask $m_b=1$ is trained with the sampled transition using the \DQN loss function.

Though the mask is meant to enforce diversity across different copies of the Q-networks. In practice, \citet{osband2015bootstrap} reported that when $p=1$ the method works well too. They speculated it is because the stochastic gradient based training of \DQN networks already led to sufficient amount of diversity.

\paragraph{\BayesUCBDQN.} Similar to \BootDQN, \BayesUCBDQN maintains $B=10$ copies of the Q-functions by creating $B$ separate Q-network heads with a shared torso network. At acting time, \BayesUCBDQN acts greedily with respect to the bootstrap quantile, computed across $B$ heads. The transition $(s,a,r,s')$ is put into the buffer $\mathcal{R}$, along with a mask $z\sim\mathcal{E}(1)$. At training time, this mask is used for weighing different transitions (see Algorithm~\ref{alg:BayesUCBDQN}). All Q-networks are updated by a common target, computed as the $\kappa$-quantile bootstrapped values. The quantile is set at $\kappa=0.85$ in our experiments. 

We provide a detailed discussion on the effect of different hyper-parameters and implementations on the performance of the agent, such as the number of heads $B$ and quantile parameter $\kappa$. See Appendix~\ref{app:experiment_details_discussion}.

\paragraph{Network architecture.} The network architecture follows from \citet{mnih2013playing}. The network consists of a torso network with convolutional layers that process the input state images $s$. The torso layer outputs an embedding $\text{embed} = \text{torso}(s)$. The downstream network is a MLP that takes the embedding as input, and output a $A$-dimensional vector as the Q-function approximation $\text{head}(\text{embed})\in\mathbb{R}^A$;  see \citet{mnih2013playing} for detailed definitions of layer sizes and non-linear activation functions.

For \BootDQN, in order to avoid the high computational complexity of having $B$ copies of the full network, they argued to instead maintain $H$ copies of the head networks $\text{head}_b,b\in[B]$. The Q-function for the $b$-th head is defined as $Q_{\theta_b}(s,a) = \text{head}_b(\text{embed})$.

\paragraph{Other hyper-parameters.} All algorithms are trained with $200$M frames for each environment. The networks are trained with RMSProp optimizer \citep{hinton2012neural} with learning rate $2.5\cdot 10^{-4}$. See \citet{mnih2013playing} and \citet{osband2015bootstrap} for other missing hyper-parameters.

\paragraph{Environment and evaluation.} The testing environments are Atari-57 games, consisting of $57$ selected Atari games \citep{bellemare2013arcade}. For each game, the state $s_t$ is an image and the action $a_t$ corresponds to controls in the game. For each game, at iteration $t$, let $z_t^{(i)}$ be the performance of the algorithm in a particular game $i$ for $1\leq i\leq 57$. Then the human-normalized performance is normalized per game as 
\begin{align*}
    \text{norm}(z_t^{(i)}) = \frac{z_t^{(i)} - z_u^{(i)}}{z_h^{(i)} - z_u^{(i)}}.
\end{align*}
Here, $z_u^{(i)}$ is the performance of the random policy in game $i$; $z_h^{(i)}$ is the human performance in game $i$. The median performance at iteration $t$ across all game is computed as 
\begin{align*}
    \text{median-performance}(t) = \text{median}\left(\{z_t^{(t)}\}_{i=1}^{57}\right).
\end{align*}
The operation $\text{median}\left(\cdot\right)$ takes a set of human normalized scores per game and computes the median value over all scores.

\subsection{Discussion on the effect of hyper-parameters on \texorpdfstring{\BayesUCBDQN}{Bayes-UCBDQN}} \label{app:experiment_details_discussion}
As shown in Algorithm 3, \BayesUCBDQN adds a considerable number of hyper-parameters compared to \DQN and \BootDQN. In our experiments, we have empirically studied the effect of such hyper-parameters and provide a detailed discussion here.

\paragraph{Effect of $\kappa$.} Throughout experiments, we use $\kappa=0.85$. We have tested the algorithm's performance under other values of $\kappa\in\{0.5,0.6,0.7,0.8,0.9\}$. Overall, we find that the algorithm's performance is not very sensitive to $\kappa$.

\paragraph{Pseudo transition probability $\epsilon$ and pseudo target $y^\text{pseudo}$.} Throughout experiments, we use $\epsilon=0$. This is mainly because, we find that the training tend to be unstable when using $\epsilon$ significantly larger than $0$. The best choice of $y^\text{pseudo}$ seems to be also game-dependent, and as a result, is more challenging to tune in practice.

We speculate that the instability is due to the fact that \DQN updates are based on the minimization of squared Bellman errors. Concretely, when with probability $\epsilon$, the algorithm encounters $y^\text{pseudo}$, which is an optimistic value estimate and is hence likely to be an outlier in the data distribution over target values, the update becomes unstable. To address such issues might require further modification to the \DQN updates, such as based on categorical representation of values \citep{schrittwieser2020mastering} or alternative update rules \citep{bas2020logistic}.

\paragraph{Exponential mask $z^b$.} Throughout experiments, we sample $z\sim \mathcal{E}(1)$ for each transition and apply the mask during learning, based on Algorithm 3. Empirically, we find that the mask plays a similar role as the Bernoulli masks adopted in \citep{osband2015bootstrap}. 

\paragraph{Effect of number of heads $B$.} Throughout experiments, we use $B=10$. We have also tried $B\in\{10,20,100\}$. Overall, we find that increasing the number of heads does not improve the performance. Quite on the contrary, larger number of heads slightly degrades the performance. Such empirical ablations are consistent with the observation of \citet{osband2015bootstrap} that $B\approx10$ works the best.
\end{document}